\newcommand{\colt}[1]{\iftoggle{coltformat}{#1}{}}
\newcommand{\arxiv}[1]{\iftoggle{coltformat}{}{#1}}
\newcommand\numberthis{\addtocounter{equation}{1}\tag{\theequation}} 
\DeclarePairedDelimiter{\abs}{\lvert}{\rvert} 
\DeclarePairedDelimiter{\brk}{[}{]}
\DeclarePairedDelimiter{\crl}{\{}{\}}
\DeclarePairedDelimiter{\prn}{(}{)}
\DeclarePairedDelimiter{\nrm}{\|}{\|}
\DeclarePairedDelimiter{\tri}{\langle}{\rangle}
\let\Pr\undefined
\DeclareMathOperator{\En}{\mathbb{E}}
\DeclareMathOperator{\Pr}{\bbP}
\DeclareMathOperator*{\argmin}{argmin} 
\DeclareMathOperator*{\argmax}{argmax}
\newcommand{\ind}[1]{\mathbbm{1}\crl*{#1}}    
\newcommand{\indd}[1]{\mathbbm{1}\crl{#1}}    
\newcommand{\eps}{\varepsilon}
\newcommand{\wt}[1]{\widetilde{#1}}
\newcommand{\wh}[1]{\widehat{#1}}
\def\ddefloop#1{\ifx\ddefloop#1\else\ddef{#1}\expandafter\ddefloop\fi}
\def\ddef#1{\expandafter\def\csname bb#1\endcsname{\ensuremath{\mathbb{#1}}}}
\def\ddefloop#1{\ifx\ddefloop#1\else\ddef{#1}\expandafter\ddefloop\fi}
\def\ddef#1{\expandafter\def\csname b#1\endcsname{\ensuremath{\mathbf{#1}}}}
\def\ddef#1{\expandafter\def\csname c#1\endcsname{\ensuremath{\mathcal{#1}}}}
\def\ddef#1{\expandafter\def\csname h#1\endcsname{\ensuremath{\widehat{#1}}}}
\def\ddef#1{\expandafter\def\csname hc#1\endcsname{\ensuremath{\widehat{\mathcal{#1}}}}}
\def\ddef#1{\expandafter\def\csname t#1\endcsname{\ensuremath{\widetilde{#1}}}}
\def\ddef#1{\expandafter\def\csname tc#1\endcsname{\ensuremath{\widetilde{\mathcal{#1}}}}}
\newcommand{\sign}{\mathrm{sign}}
\newtheorem*{rep@theorem}{\rep@title}
\newcommand{\newreptheorem}[2]{%
\newenvironment{rep#1}[1]{%
 \def\rep@title{#2 \ref*{##1}}%
 \begin{rep@theorem}}%
 {\end{rep@theorem}}}
\newtheorem{claim}{Claim}
\newtheorem{assumption}{Assumption}
\newtheorem{corollary}{Corollary}
\newtheorem{proposition}{Proposition}
\newtheorem{lemma}{Lemma}}
\newtheorem{theorem}{Theorem}}
\newtheorem{theorem*}{Theorem}
\newtheorem{definition}{Definition}
\newcommand{\pref}[1]{\prettyref{#1}}
\newcommand{\savehyperref}[2]{\texorpdfstring{\hyperref[#1]{#2}}{#2}}
\newcommand{\algcomment}[1]{\textcolor{blue!70!white}{\footnotesize{\texttt{\textbf{//
          #1}}}}}
\newcommand{\cmark}{\ding{51}}%
\newcommand{\xmark}{\ding{55}}%
\newcommand{\ccov}{C_\mathsf{cov}}
\newcommand{\cconc}{C_\mathsf{conc}}
\newcommand{\spancap}{C_\mathsf{span}}
\newcommand{\poly}{\mathrm{poly}}
\newcommand{\unif}{\mathrm{Unif}}
\newcommand{\ber}{\mathrm{Ber}}
\newcommand{\cpush}{C_\mathsf{push}}
\newcommand{\cpushcov}{C_\mathsf{push\_cov}}
\newcommand{\estpi}{{\wh{\pi}}}
\newcommand{\optpi}{\pi^\star}
\newcommand{\optlatp}{P_\mathsf{lat}}
\newcommand{\optlatr}{R_\mathsf{lat}}
\newcommand{\estlatp}{\wh{P}_\mathsf{lat}}
\newcommand{\estlatr}{\wh{R}_\mathsf{lat}}
\newcommand{\samplelatp}{\wt{P}_\mathsf{lat}}
    \newcommand{\detalg}{{\small \textsf{PLHR.D}}}
    \newcommand{\detdecoder}{{\small \textsf{Decoder.D}}}
    \newcommand{\detrefit}{{\small \textsf{Refit.D}}}
    \newcommand{\detalg}{{\textsf{PLHR.D}}}
    \newcommand{\detdecoder}{{\textsf{Decoder.D}}}
    \newcommand{\detrefit}{{\textsf{Refit.D}}}
\newcommand{\estlatentmdp}{\wh{M}_\mathsf{lat}}
\newcommand{\Pitest}{\Pi^{\mathsf{test}}}
\newcommand{\vestarg}[2]{V_\mathsf{mc}(#1 \mid #2)}
\newcommand{\qestarg}[2]{Q_\mathsf{mc}(#1 \mid #2)}
\newcommand{\taudec}{\epsilon_\mathsf{dec}}
\newcommand{\tauref}{\epsilon_\mathsf{tol}}
    \newcommand{\stochalg}{{\small \textsf{PLHR}}}
    \newcommand{\stochdecoder}{{\small \textsf{Decoder}}}
    \newcommand{\stochrefit}{{\small \textsf{Refit}}}
    \newcommand{\stochalg}{{\textsf{PLHR}}}
    \newcommand{\stochdecoder}{{\textsf{Decoder}}}
    \newcommand{\stochrefit}{{\textsf{Refit}}}
\newcommand{\estmdp}{\wh{M}}
\newcommand{\estp}{\wh{P}}
\newcommand{\empobs}{x} 
\newcommand{\push}{_\sharp}
\newcommand{\estmdpobsspace}[1]{\statesp_{#1}[\estmdp]}
\newcommand{\pitest}[2]{\pi_{#1, #2}}
\newcommand{\nreset}{n_\mathsf{reset}}
\newcommand{\ndec}{n_\mathsf{dec}}
\newcommand{\nmc}{n_\mathsf{mc}}
\newcommand{\eventemulator}{\cE^\mathsf{init}}
\newcommand{\eventdec}{\cE^\mathsf{D}}
\newcommand{\eventref}{\cE^\mathsf{R}}
\newcommand{\epsPRS}[1]{\cS_{#1}^{\eps\textsf{-push}}}
\newcommand{\cXL}{\cX^\mathsf{L}}
\newcommand{\cXR}{\cX^\mathsf{R}}
\newcommand{\gobs}{\cG_\mathsf{obs}}
\newcommand{\cc}{\bbC}
\newcommand{\ccL}{\bbC^\mathsf{L}}
\newcommand{\ccR}{\bbC^\mathsf{R}}
\newcommand{\optdec}{\phi}
\newcommand{\projm}[1]{\mathsf{Proj}_{#1}}
\newcommand{\projR}{\projm{\cXR}}
\newcommand{\nrch}{n_\mathsf{reach}}
\newcommand{\nurch}{n_\mathsf{unreach}}
\newcommand{\resetmodel}{Hybrid Resets}
    \newcommand{\psdp}{{\small \textsf{PSDP}}}
    \newcommand{\cpi}{{\small \textsf{CPI}}}
    \newcommand{\psdp}{{\textsf{PSDP}}}
    \newcommand{\cpi}{{\textsf{CPI}}} 
\newcommand{\statesp}{\cX}
\newcommand{\actionsp}{\cA}
\newcommand{\latentsp}{\cS}
\newcommand{\emission}{\psi}
\newcommand{\supp}{\mathrm{supp}}
\newcommand{\mcest}{\mathsf{MC}}
\newcommand{\violations}{\mathsf{Violations}}
\newcommand{\alg}{\mathsf{Alg}}
\newcommand{\sgood}{\mathsf{g}}
\newcommand{\sbad}{\mathsf{b}}
\newcommand{\sdis}{\mathsf{d}}
\newcommand{\dtv}{D_{\mathsf{TV}}}
\newcommand{\rootlayer}{\mathsf{RootLayer}}
\newcommand{\eventfresh}{\cE_F}
\newcommand{\eventnew}{\cE_N}
\newcommand{\cfnor}{\underline{\cF_{t,H}}}
    \newcommand{\epspc}{\eps_{\scriptscriptstyle \mathsf{PC}}}
    \newcommand{\epspc}{\eps_{\mathsf{PC}}}
\newcommand{\epsstat}{\eps_\mathsf{stat}}
\newcommand{\Piopen}{\Pi_\mathsf{OL}}
\algrenewcommand\algorithmicrequire{\textbf{Input:}}
    \title{The Role of Environment Access in\\Agnostic Reinforcement Learning\thanks{Authors are listed in alphabetical order of their last names.}}
    \author{%
      Akshay Krishnamurthy\\
      Microsoft Research
      \and
      Gene Li\\
      TTIC
      \and
      Ayush Sekhari\\
      MIT
    }
    \date{}
\begin{document}

\maketitle

\begin{abstract}
    \noindent We study Reinforcement Learning (RL) in environments with large state spaces, where function approximation is required for sample-efficient learning. Departing from a long history of prior work, we consider the weakest possible form of function approximation, called agnostic policy learning, where the learner seeks to find the best policy in a given class $\Pi$, with no guarantee that $\Pi$ contains an optimal policy for the underlying task.  
    Although it is known that sample-efficient agnostic policy learning is not possible in the standard online RL setting without further assumptions, we investigate the extent to which this can be overcome with stronger forms of access to the environment. Specifically, we show that:
    \begin{enumerate}%
    \item Agnostic policy learning remains statistically intractable when given access to a local simulator, from which one can reset to any previously seen state. This result holds even when the policy class is realizable, and stands in contrast to a positive result of~\citep{mhammedi2024power} showing that value-based learning under realizability is tractable with local simulator access. 
    \item Agnostic policy learning remains statistically intractable when given online access to a reset distribution with good coverage properties over the state space (the so-called $\mu$-reset setting).  We also study stronger forms of function approximation for policy learning, showing that PSDP~\citep{bagnell2003policy} and CPI~\citep{kakade2002approximately} provably fail in the absence of policy completeness. 
    \item On a positive note, agnostic policy learning is statistically tractable for Block MDPs with access to both of the above reset models. We establish this via a new algorithm that 
    carefully constructs a \emph{policy emulator}: a tabular MDP with a small state space that approximates the value functions of all policies $\pi \in \Pi$. These values are approximated \emph{without} any explicit value function class. 
    \end{enumerate}
 Taken together, our results contribute to a deeper understanding of the interplay between function approximation and environment access in RL. 
\end{abstract}

\arxiv{
    \clearpage
    \renewcommand{\contentsname}{Contents}
    \tableofcontents 
    \addtocontents{toc}{\protect\setcounter{tocdepth}{2}} 
    \clearpage 
}

\section{Introduction}

Reinforcement Learning (RL) is a widely studied framework for sequential decision-making, in which an agent interacts with an environment, and seeks to learn how to maximize a notion of long-term or cumulative reward \citep{sutton2018reinforcement}. However, due to the interactive and sequential nature of the problem, RL presents two significant challenges to learning agents: \emph{exploration}---the agent must deliberately explore the environment to gather information---and \emph{error amplification}---the agent must account for potential future errors when making decisions in the present. All RL algorithms must address these two challenges in some manner, and, in theory, almost all prior works do so by imposing stringent representational conditions on the function classes used by the learning  algorithm.  Accordingly, it is an important open question to understand the extent to which these representational conditions are necessary for sample-efficient learning. 

Consider, for example, the class of algorithms based on \emph{value function approximation} \cite{wang2020provably, jin2021bellman, xie2022role, foster2024model}. These methods typically address exploration via uncertainty quantification, exploration bonuses, and the optimism principle, and they address error amplification by optimizing surrogate objectives based on Bellman errors rather than directly optimizing policy performance. Unfortunately, obtaining guarantees for such reinforcement learning algorithms typically requires the function class to satisfy a representational condition called \emph{Bellman completeness}, which is much more stringent than what is required for supervised learning. Beyond this, \emph{all} methods based on value function approximation require a minimal assumption of value-function realizability---that the function class contains the optimal value function---which is already stronger than assumption-free/agnostic guarantees one can obtain in supervised learning \citep{dong2020expressivity}.

In this paper, we contribute to a growing body of work on understanding the role of representational conditions in RL \cite{chen2019information, xie2021batch, amortila2024harnessing, foster2021offline, jia2024offline, mhammedi2024power}. We focus on the setting of \emph{agnostic policy learning}, the most basic/fundamental setting in RL in which the learner is given a policy class $\Pi$ and is asked to find a policy $\wh{\pi}$ which performs nearly as well as the best policy in the class $\Pi$ \cite{kakade2003sample}. Policy learning methods are often viewed as more flexible than value- or model-based counterparts because they only model the main object of interest; however, these methods can be provably sample-inefficient because there are no algorithmic mechanisms to address exploration and error amplification \cite{agarwal2020pc, agarwal2021theory}. Accordingly, prior works on policy learning have imposed additional representational conditions to enable sample efficiency \cite{bagnell2003policy, kakade2002approximately, sekhari2021agnostic, jia2023agnostic}.

Rather than imposing representational conditions, we instead investigate whether stronger forms of environment access (beyond standard online RL), can circumvent the above algorithmic limitations and enable sample-efficient agnostic policy learning. 
This line of inquiry is motivated by practical applications where stronger forms of access to the environment are available---such as robotic control tasks with a simulator or game playing---as well as recent theoretical developments showing that value-based methods can benefit from such access~\citep{mhammedi2024power}. We consider several forms of environment access: \emph{generative model} (the learner can query the reward and next state on any state-action tuple), \emph{local simulator} (such queries can only be made on a previously observed state), \emph{\(\mu\)-resets} (the learner can rollout from a given exploratory distribution), and \emph{hybrid resets} (combining both local simulator access and $\mu$-resets); see \pref{sec:interaction_models} for details on these interaction models. We shed light on whether they can be leveraged to address the challenges of exploration and error amplification. Our key contributions, summarized in \pref{tab:results}, are:

    \vspace{-0.5pt}
    \begin{enumerate}
    \item{} Regarding the exploration challenge, we show that even with a strong function approximation assumption called \emph{policy completeness}, and \emph{generative access}---perhaps the strongest possible access to the MDP---policy learning methods cannot achieve sample complexity guarantees that scale with the intrinsic complexity of exploration, as measured via the \emph{coverability coefficient}~\cite{xie2022role} of the MDP---see~\pref{thm:lower-bound-coverability}. This resolves an open problem posed by \cite{jia2023agnostic} and shows, in a strong, information-theoretic sense, that policy learning methods cannot explore. %
    \item We next consider the \emph{error amplification} challenge. We study the $\mu$-reset setting, where the learner can rollout from an exploratory reset distribution $\mu$, and investigate whether error amplification can be controlled without policy completeness. Here, we show that agnostic policy learning is information-theoretically impossible---see~\pref{thm:lower-bound-policy-completeness}. We also show algorithm-specific lower bounds for \psdp{}~\citep{bagnell2003policy} and \cpi{}~\citep{kakade2002approximately}---algorithms that address error amplification under $\mu$-resets and policy completeness---when only realizability of the policy class is satisfied.
        \item In light of these lower bounds, we introduce a new model of access called \emph{hybrid resets}, which subsumes both local simulators (which is weaker than generative access) and $\mu$-resets. We show that under hybrid resets, and when the reset distribution satisfies \emph{pushforward concentrability} \cite{xie2021batch}, sample-efficient policy learning is possible in Block MDPs \cite{jiang2017contextual, du2019provably} via a new algorithm \stochalg{} (Policy Learning for Hybrid Resets)---see~\pref{thm:block-mdp-result}.  Since all of our lower bound constructions are Block MDPs, this indicates the significant power of hybrid reset access in agnostic policy learning. 

        On a technical level, we introduce a new algorithmic tool called \emph{policy emulator} that allows us to efficiently evaluate various policies within a  large class $\Pi$ (\pref{def:policy-emulator}). Informally speaking, a policy emulator is the ``minimal object'' useful for solving policy learning. Instead of learning the Block MDP in a traditional model-based sense (which would require samples scaling with the observation space size), \stochalg{} instead leverages hybrid resets to construct a policy emulator in a statistically efficient manner.
        
    \end{enumerate}
    Taken together, our results reveal intriguing interplays between function approximation and environment access in RL. Specifically, RL can remain tractable with extremely weak assumptions on the function approximation class, provided one has stronger environment access. We believe further investigation in this direction has potential to yield new algorithmic insights for complex RL settings.

\arxiv{
\paragraph{Paper Outline.} \pref{sec:preliminaries} introduces the problem setting and provides background on interaction models, coverage conditions, and \psdp{}. \pref{sec:technical-overview} gives a technical overview of our main results.  
\pref{sec:lower-bounds} gives intuition for the lower bounds. \pref{sec:warmup} presents a simplified algorithm for an easier setting, and \pref{sec:main-upper-bound} presents our main upper bound.  We close with discussion and open problems in \pref{sec:discussion}.
}

\begin{figure}
\begin{minipage}[b]{.64\linewidth}
    \centering
    \resizebox{1\textwidth}{!}{%
    \begin{tabular}{ p{0.33\linewidth} |c c c} 
             & Gen/Local Sim. & $\mu$-Resets & \resetmodel \\ 
          \hline
        Policy Completeness (\pref{def:policy-completeness}) & \makecell{\color{red}{\color{red}{\xmark}} \\ Thm.~\ref{thm:lower-bound-coverability}} & \makecell{\color{Green}{\cmark} \\ \psdp{} } & \makecell{\color{Green}{\cmark} \\ \psdp{} } \\[1em]

        Policy Realizability ($\optpi \in \Pi$) & \makecell{\color{red}{\xmark} \\ Thm.~\ref{thm:lower-bound-coverability}} & \textcolor{Dandelion}{\large \textbf{?}}$^\star$ & \makecell{\color{Green}{\cmark} \\ Thm.~\ref{thm:block-mdp-result} \\ (for BMDP)} \\[1em]

         Agnostic ($\pi^\star \notin \Pi)$ & \makecell{\color{red}{\xmark} \\ Thm.~\ref{thm:lower-bound-coverability}} & \makecell{\color{red}{\xmark} \\ Thm.~\ref{thm:lower-bound-policy-completeness}} & \makecell{\color{Green}{\cmark} \\ Thm.~\ref{thm:block-mdp-result} \\ (for BMDP) }
    \end{tabular}}
\end{minipage}
\begin{minipage}[b]{.35\linewidth}
\centering
\resizebox{1.05\textwidth}{!}{%
\raisebox{-4em}{
\includegraphics[scale=0.15, trim={0cm 18cm 40cm 0cm}, clip]{./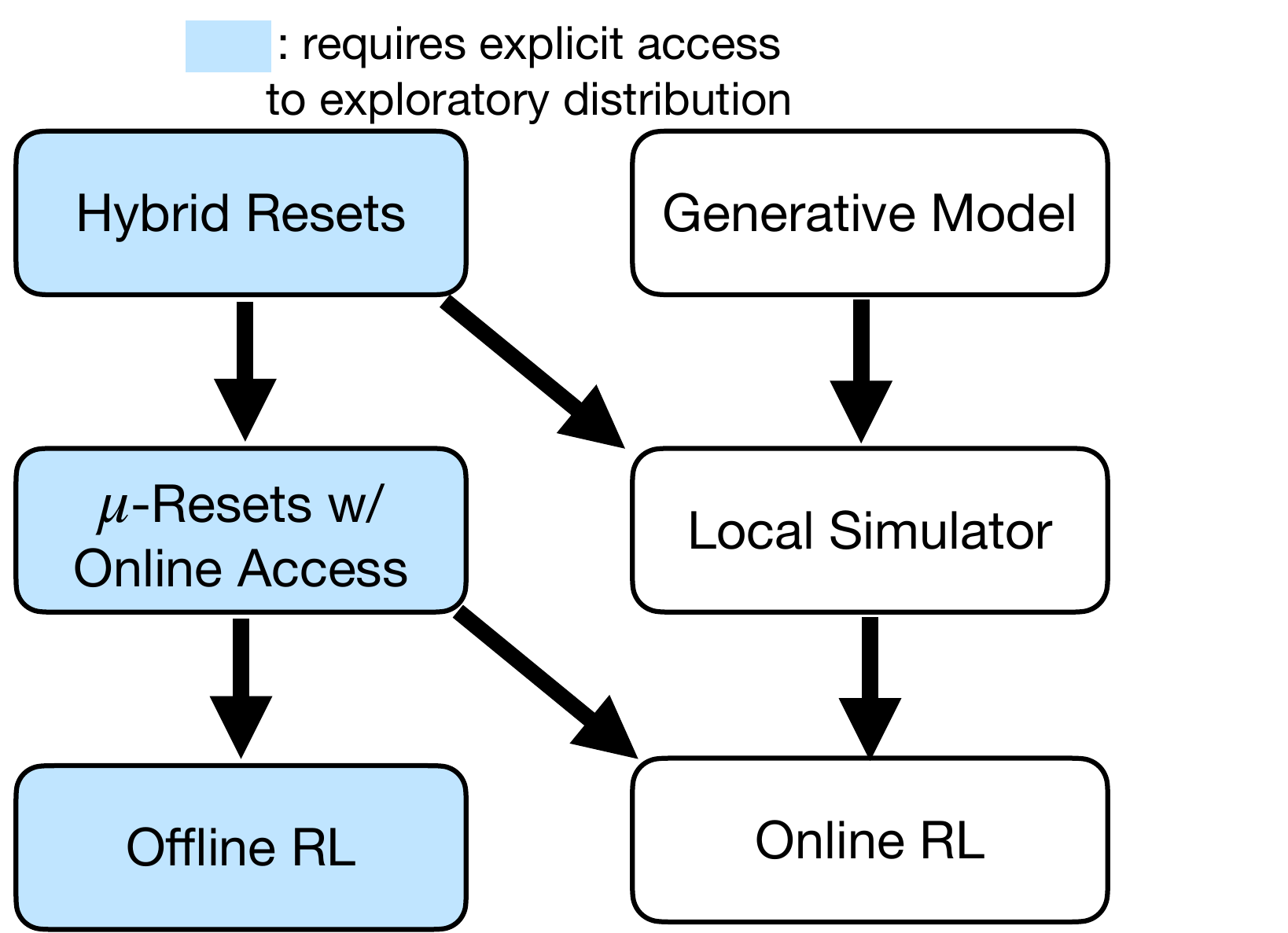} 
} }

\end{minipage}
  \captionof{table}{\textbf{Left.} Summary of results for policy learning under various forms of access to the MDP. A \textcolor{Green}{\cmark{}} indicates there exists an algorithm that adapts to coverage conditions, while \textcolor{red}{\xmark} indicates a lower bound showing impossibility. Remarks: For realizability + $\mu$-resets (\textcolor{Dandelion}{\textbf{?}}), we establish sample-inefficiency for \psdp{} and \cpi{} (\pref{sec:policy-completeness}), but impossibility remains open. Two settings are omitted: in online RL, adapting to coverability is impossible (implied by \pref{thm:lower-bound-coverability}); in offline RL, adapting to concentrability of the offline distribution is impossible \cite[Appendix G of][]{jia2024offline}. \textbf{Right.} Relationships between interaction models. An arrow $A \boldsymbol{\rightarrow} B$ implies that interaction model $B$ can be simulated using interaction model $A$.}
  \label{tab:results}
\end{figure}

\section{Preliminaries}\label{sec:preliminaries}

\arxiv{\subsection{Markov Decision Process}}

\paragraph{Markov Decision Process.} We study reinforcement learning (RL) in a finite horizon Markov Decision Process (MDP). We denote the MDP by the tuple $M = \prn{\statesp, \actionsp, P, R, H, d_1}$, which consists of a state space $\statesp$, action space $\cA$ with cardinality $A$, probability transition function $P: \statesp \times \actionsp \to \Delta(\statesp)$, reward function $R: \statesp \times \actionsp \to \Delta([0,1])$, horizon $H \in \bbN$, and initial state distribution $d_1 \in \Delta(\statesp)$. For simplicity we assume that the state space $\statesp$ is layered across time, i.e., $\statesp= \statesp_1 \cup \cdots \cup \statesp_H$ where $\statesp_i \cap \statesp_j = \emptyset$ for all $i \ne j$. Thus, given a state $x \in \statesp$ it can be inferred which layer $x$ belongs to, which we will overload as the function $h: \statesp \to [H]$. Beginning with $x_1 \sim d_1$, an episode proceeds in $H$ steps, where at each time step $h \in [H]$, the learner plays an action $a_h$, the reward is sampled as $r_h \sim R(x_h, a_h)$, and the next state is sampled as $x_{h+1} \sim  P(\cdot \mid x_h, a_h)$. We assume that the rewards are normalized so that $\sum_{h=1}^H r_h \in [0,1]$ a.s.

\paragraph{Policy-Based Reinforcement Learning.} A \emph{policy} is a function $\pi: \statesp \to \Delta(\actionsp)$. For any policy, $\pi(\cdot  \mid  x_h)$ denotes the distribution over actions that the policy takes when presented with state $x_h$. We denote $\En^\pi[\cdot]$ and $\Pr^\pi[\cdot]$ to denote the expectation and probability under the process of running $\pi$ in the MDP $M$. The value function and the $Q$ function for a given $\pi$ are defined such that for any \(x\) and \(a\), 
\begin{align*}
    V^\pi_h(x) = \En^\pi \brk*{ \sum_{h' = h}^H r_h  \mid  x_h = x}, \quad \text{and} \quad Q^\pi_h(x, a) = \En^\pi \brk*{ \sum_{h' = h}^H r_h  \mid  x_h = x, a_h = a }.
\end{align*}
We let $\pi^\star$ denote an optimal (deterministic) policy which maximizes $Q^\pi(x,a)$ for every $(x,a) \in \statesp \times \actionsp$ simulatenously. Furthermore when clear from the context we denote $V^\pi \coloneqq \En_{x_1 \sim d_1} V^\pi(x_1)$. We also define the occupancy measures $d^\pi_h(x,a) \coloneqq \Pr^\pi \brk*{x_h = x, a_h = a}$ and $d^\pi_h(x) \coloneqq \Pr^\pi \brk*{x_h = x}$.

We assume the learner is given a policy class $\Pi \subseteq \Delta(\actionsp)^\statesp$. For any $h \in [H]$ we let $\Pi_h \subseteq \Delta(\actionsp)^{\statesp_h}$ be the restriction of the policy class to the states in layer $h$. We define a \emph{partial policy} to be one that is defined over a contiguous subset of layers $[l, \cdots, r] \subseteq [H]$, and use $\Pi_{l:r}$ to denote the set of partial policies defined by $\Pi$. We say the policy class $\Pi$ satisfies \emph{realizability} if $\pi^\star \in \Pi$. Otherwise, we say we are in the \emph{agnostic} \asedit{RL} setting.

\paragraph{Block MDPs.} Block MDPs \cite{jiang2017contextual, du2019provably} are a prototypical setting for RL with large state spaces but low intrinsic complexity. Formally, a Block MDP is given by the tuple $M = (\statesp, \latentsp, \actionsp, H, \optlatp, \optlatr, \psi)$. Compared to the definition of the MDP, we additionally specify a \emph{latent state space} $\latentsp$ and an \emph{emission function} $\psi: \latentsp \to \Delta(\statesp)$. To avoid confusion we refer to observed states $x \in \statesp$ as \emph{observations}. Typically, we assume the latent state space $\latentsp$ is finite, while the observation space $\statesp$ can be arbitrarily large or infinite. Without loss of generality, we will assume that the initial latent state $s_1$ is fixed and known to the learner. %

The dynamics of the Block MDP take the following form: Starting from an initial latent state $s_1$, an emission $x_1 \sim \emission(s_1)$ is generated. For every layer $h \in [H]$, the latent state evolves according to $s_{h+1} \sim \optlatp(\cdot  \mid  s_h, a_h)$ and the reward is sampled as $r_h \sim \optlatr(s_h, a_h)$. The latent state $s_h$ is never observed by the learner, and instead the learner only receives the observation $x_h \sim \emission(s_h)$.

The emission function $\psi$ satisfies the property of \emph{decodability}, which asserts that for every pair $s \ne s'$, we have $\supp(\emission(s)) \cap \supp(\emission(s')) = \emptyset$. Therefore, we can define the ground-truth decoder function $\optdec: \statesp \to \latentsp$ which maps every observation $x$ to the corresponding latent $s$ from which it was been emitted. Under decodability, the observation-level transition function (resp.~reward function) can be written as $P(\cdot  \mid  x_h, a_h) = \emission \circ \optlatp(\cdot \mid  \optdec(x_h), a_h)$ (resp.~$R(x_h, a_h) = \optlatr(\optdec(x_h), a_h)$). A priori, both the emission $\psi$ and the decoder $\optdec$ are unknown to the learner and, in a departure from prior work on Block MDPs~\citep[e.g.,][]{misra2020kinematic}, in policy learning the learner does not have access to a decoder class $\Phi$ containing the true decoder $\optdec$, or an emission class \(\Psi\) containing \(\psi\).  

\subsection{Interaction Models and Sample Complexity} \label{sec:interaction_models}

\paragraph{Interaction Models.} We consider various models for a learner to access the unknown MDP $M$. First, we recall the standard online reinforcement learning framework, where the learner accesses $M$ through the following protocol: in every episode, it can submit any policy $\pi$ and receive a trajectory sampled by running $\pi$ from the initial state distribution $x_1 \sim d_1$. We consider stronger models of interaction which augment the standard online RL framework.

\begin{itemize}
    \item \textbf{Generative Model.} Also known as a \emph{global simulator}. The learner can query any tuple $(x,a)$ and receive a sample $(x', r)$ where $x' \sim P(\cdot \mid x,a)$ and $r \sim R(x,a)$.
    \item \textbf{Local Simulator.} In addition to starting from a random initial state $x_1 \sim d_1$, the learner can choose to reset the MDP to any state $x_h$ which has been previously encountered and then generate a (partial) trajectory starting from this state.
    \item \textbf{$\mu$-Resets.} The learner has access to an exploratory reset distribution $\mu = \crl{\mu_h}_{h=1}^H$ with \(\mu_h \in \Delta(\cX_h)\), and can choose to either receive trajectories sampled by running policies from the initial state distribution $d_1$ or any of the exploratory distributions $\mu_h$.\footnote{A related, weaker setting is \emph{offline RL} \cite{levine2020offline, chen2019information}, where instead of on-demand sampling access to $M$, the learner receives a dataset $\cD = \crl{\cD_h}_{h=1}^H$ where each $\cD_h$ is comprised of tuples $(x_h,a_h,x'_{h+1},r_h)$ where $(x_h,a_h)$ are i.i.d.~from a distribution $\mu_h \in \Delta(\statesp_h \times \actionsp)$ and $(x', r)$ are sampled as $x'_{h+1} \sim P(\cdot \mid x_h,a_h)$ and $r_h \sim R(x_h,a_h)$.}
    \item \textbf{\resetmodel.} The learner has access to the exploratory reset distribution $\mu = \crl{\mu_h}_{h=1}^H$ and local simulator access. This is the strongest form of access, subsuming both the local simulator and $\mu$-resets. To the best of our knowledge, this setting has not been considered in prior work.
\end{itemize}

To summarize the connections between different problem settings, we refer the reader to the figure on the right side of \pref{tab:results} as well as \pref{app:related-works} for further discussion. 

\paragraph{Objective: Sample-Efficient PAC Learning.} A \emph{sample} from any of these interaction models is a single episode of interaction with $M$, i.e., a partial trajectory $\tau_{h:H} = (x_h, a_h, r_h, \cdots, x_{H}, a_{H}, r_{H})$ that is obtained by running some policy in $M$. Up to a factor of $H$, this is equivalent to other notions of sample complexity studied in the literature. We study the standard agnostic PAC learning objective: 
\textit{How many samples are  needed to learn a policy 
$\estpi$ such that with probability at least $1-\delta$, $\estpi$ competes with the best policy in the class $\Pi$:} $$V^\estpi \ge \max_{\pi \in \Pi} V^\pi - \eps ?$$

\vspace{-0.1cm}

\subsection{Policy Search By Dynamic Programming}

Policy Search By Dynamic Programming (\psdp{}) is a widely studied policy learning algorithm \cite{bagnell2003policy} that relies on $\mu$-reset access.  \psdp{} constructs partial policies $\estpi_{h:H} \in \Pi_{h:H}$, starting from layer $H$, and returns the estimated policy $\estpi_{1:H}$. We provide pseudocode and analysis of \psdp{} in \pref{app:psdp}. The classic analysis of \psdp{} requires two key assumptions: (1) an exploration condition called \emph{concentrability}; (2) a representation condition called \emph{policy completeness}.

\paragraph{Concentrability.}One can measure the quality of the reset distribution $\mu$ by how well it covers the state space. These so-called \emph{coverage conditions} are well-studied in RL (see \pref{app:related-works}). Roughly speaking, coverage conditions are intrinsic properties of the underlying MDP which measure the expansiveness of the set of state-occupancy measures for policies in a given class $\Pi$. We state a classical notion called concentrability, which depends on the reset distribution, MDP, and policy class. Here and throughout, we use $\nrm{p/q}_\infty$ to denote $\sup_{x \in \statesp} p(x)/q(x)$ for distributions $p,q \in \Delta(\statesp)$.

\begin{definition}[Concentrability] The concentrability coefficient for a distribution $\mu = \crl{\mu_h}_{h=1}^H$ with respect to class $\Pi$ and MDP $M$  is defined as
\arxiv{\begin{align*}
    \cconc(\mu; \Pi, M) \coloneqq \sup_{\pi \in \Pi, h \in [H]} \nrm*{\frac{d^\pi_h}{\mu_h}}_\infty. 
\end{align*}}
When clear from the context we denote the concentrability coefficient by just $\cconc$.  
\end{definition} 

\paragraph{Policy Completeness.} Completeness assumptions on the function approximator class are often assumed in the study of RL algorithms (c.f.~\pref{app:related-works}). \psdp{} requires a notion called \emph{policy completeness}, which ensures that the policy class is closed under the policy improvement operator \cite{dann2018oracle, misra2020kinematic}. %

\begin{definition}[Policy Completeness]\label{def:policy-completeness}
A policy class $\Pi$ satisfies policy completeness if for every $\pi \in \Pi$ and $h \in [H]$, there exists a policy $\wt{\pi} \in \Pi$ such that:
\arxiv{\begin{align*}
    \text{for all $x \in \statesp_h$}: \quad \wt{\pi}_h(x) = \argmax_{a \in \actionsp} Q^\pi(x,a).
\end{align*}}
\end{definition}
Here, we state a worst-case variant of policy completeness, but the analysis of \psdp{} only requires a weaker $\ell_1$ variant of policy completeness, see \pref{app:psdp} for more details. Policy realizability (which asserts that such a $\wt{\pi}$ exists for $\optpi_{h+1:H}$ at every $h \in [H]$) is implied by policy completeness. 

\paragraph{Sample Complexity Guarantee for \psdp.} As a prototypical classical result on policy learning, we now state the guarantee for \psdp{}.

\begin{theorem}\label{thm:psdp-ub}
Suppose the policy class $\Pi$ satisfies policy completeness (\pref{def:policy-completeness}), and the reset distribution $\mu$ satisfies concentrability with parameter $\cconc$. With  probability \(1 - \delta\), \psdp{} finds an $\eps$-optimal policy using $\poly( \cconc, A, H, \eps^{-1}, \log \abs{\Pi}, \log \delta^{-1})$ samples from the reset distribution.
\end{theorem}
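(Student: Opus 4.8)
The plan is to run the classical backward-induction analysis of \psdp{}, combining the performance-difference decomposition with a change-of-measure step (concentrability) and an ``in-class greedy policy'' step (policy completeness). Recall \psdp{} processes layers $h=H,H-1,\dots,1$: having committed to the partial policy $\estpi_{h+1:H}$, at layer $h$ it draws $x_h^{(1)},\dots,x_h^{(m)}\sim\mu_h$ and, for each sampled state and each $a\in\actionsp$, performs one rollout of $\estpi_{h+1:H}$ after playing $a$, producing an unbiased $[0,1]$-valued Monte Carlo estimate $\qest(x_h^{(i)},a)$ of $Q^{\estpi_{h+1:H}}_h(x_h^{(i)},a)$. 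It then sets $\estpi_h\in\argmax_{\pi\in\Pi_h}\wh{J}_h(\pi)$, where $\wh{J}_h(\pi)\coloneqq\frac1m\sum_{i=1}^m\qest\bigl(x_h^{(i)},\pi(x_h^{(i)})\bigr)$ is the empirical proxy for the population surrogate $J_h(\pi)\coloneqq\En_{x_h\sim\mu_h}\bigl[Q^{\estpi_{h+1:H}}_h(x_h,\pi(x_h))\bigr]$.

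Let $\bar\pi\in\argmax_{\pi\in\Pi}V^\pi$ (note policy completeness implies realizability, so $V^{\bar\pi}=V^{\optpi}$). For $h\in[H+1]$ let $\pi^{(h)}$ run $\bar\pi$ on layers $1,\dots,h-1$ and $\estpi$ on layers $h,\dots,H$, so $\pi^{(1)}=\estpi$, $\pi^{(H+1)}=\bar\pi$, and $\pi^{(h)},\pi^{(h+1)}$ induce the same state law $d^{\bar\pi}_h$ at layer $h$ while differing only in the action played there. Telescoping,
\begin{align*}
V^{\bar\pi}-V^{\estpi}\;=\;\sum_{h=1}^H\En_{x_h\sim d^{\bar\pi}_h}\brk*{Q^{\estpi_{h+1:H}}_h\bigl(x_h,\bar\pi_h(x_h)\bigr)-Q^{\estpi_{h+1:H}}_h\bigl(x_h,\estpi_h(x_h)\bigr)}.
\end{align*}
Upper bounding $Q^{\estpi_{h+1:H}}_h(x_h,\bar\pi_h(x_h))\le\max_aQ^{\estpi_{h+1:H}}_h(x_h,a)$, the $h$-th summand is at most $\En_{d^{\bar\pi}_h}\bigl[\max_aQ^{\estpi_{h+1:H}}_h(x_h,a)-Q^{\estpi_{h+1:H}}_h(x_h,\estpi_h(x_h))\bigr]$, which is \emph{nonnegative}; this is exactly what lets us change measure, so by concentrability ($d^{\bar\pi}_h\le\cconc\,\mu_h$ pointwise) it is at most $\cconc\bigl(\En_{\mu_h}[\max_aQ^{\estpi_{h+1:H}}_h]-J_h(\estpi_h)\bigr)$. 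Since $Q^{\estpi_{h+1:H}}_h$ depends only on $\estpi_{h+1:H}$, policy completeness (\pref{def:policy-completeness}, applied to a full policy in $\Pi$ extending $\estpi_{h+1:H}$) supplies $\pi_h^+\in\Pi_h$ greedy for $Q^{\estpi_{h+1:H}}_h$, i.e. $J_h(\pi_h^+)=\En_{\mu_h}[\max_aQ^{\estpi_{h+1:H}}_h]$; with the near-optimality of $\estpi_h$ for $\wh J_h$ (Step below), $J_h(\estpi_h)\ge J_h(\pi_h^+)-\epsstat$, so the summand is $\le\cconc\epsstat$ and $V^{\bar\pi}-V^{\estpi}\le\cconc\sum_{h=1}^H\epsstat$. (Under only the $\ell_1$-variant of policy completeness one gets instead $J_h(\tilde\pi_h)\ge\En_{\mu_h}[\max_aQ^{\estpi_{h+1:H}}_h]-\epspc$ for some $\tilde\pi_h\in\Pi_h$, adding $\cconc H\epspc$, which vanishes here.)

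For the statistical error, use fresh samples each layer and condition on $\estpi_{h+1:H}$, which is then independent of the layer-$h$ data; for fixed $\pi\in\Pi_h$, $\wh J_h(\pi)$ is an average of $m$ i.i.d. $[0,1]$-valued unbiased estimates of $J_h(\pi)$, so Hoeffding plus a union bound over $\Pi_h$ gives $\sup_{\pi\in\Pi_h}\abs{\wh J_h(\pi)-J_h(\pi)}\lesssim\sqrt{\log(\abs{\Pi}H/\delta)/m}$ with probability $1-\delta/H$, and since $\estpi_h$ maximizes $\wh J_h$ over $\Pi_h\ni\pi_h^+$ this yields $\epsstat\lesssim\sqrt{\log(\abs{\Pi}H/\delta)/m}$. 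A union bound over the $H$ layers makes everything hold with probability $1-\delta$; taking $m\gtrsim\cconc^2H^2\eps^{-2}\log(\abs{\Pi}H/\delta)$ states per layer — hence $O(AHm)$ trajectories overall, as each state needs $A$ rollouts — gives $V^{\bar\pi}-V^{\estpi}\le\eps$, i.e. the claimed $\poly(\cconc,A,H,\eps^{-1},\log\abs{\Pi},\log\delta^{-1})$ sample complexity.

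\emph{Main obstacle.} The delicate step is the per-layer bound: one must replace the comparator's action $\bar\pi_h$ by the \emph{in-class} greedy policy $\pi_h^+$ \emph{before} invoking concentrability, so that change of measure is applied only to a nonnegative integrand and the resulting quantity is precisely what \psdp{}'s empirical maximization over $\Pi_h$ controls. This is exactly where policy completeness is needed, and where realizability alone fails. The only other technical wrinkle — the data-dependence of the rollout policy $\estpi_{h+1:H}$ — is handled routinely by drawing fresh samples per layer and conditioning.
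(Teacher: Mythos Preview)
Your proposal is correct and follows essentially the same route as the paper: performance-difference decomposition with respect to the optimal (comparator) policy, upper-bounding the comparator's action by $\max_a$, change of measure via concentrability on the resulting nonnegative integrand, policy completeness to supply an in-class greedy policy, and a contextual-bandit concentration bound for the per-layer maximization. The only cosmetic differences are that the paper's \psdp{} draws a uniform action and uses importance weighting (so its $\epsstat$ carries an extra $\sqrt{A}$) rather than doing one rollout per action as you describe, and the paper compares directly to $\optpi$ rather than to $\bar\pi=\argmax_{\pi\in\Pi}V^\pi$; neither affects the argument or the final polynomial bound.
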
 

\section{Technical Overview of Results}\label{sec:technical-overview}

Our paper studies whether the classical results on policy learning (e.g., \pref{thm:psdp-ub}) can be improved: can we avoid requiring access to a reset distribution or the stringent policy completeness assumption? %

\subsection{Question 1: Do we need a reset distribution?} 

First, we study if sample-efficient learning is possible without requiring explicit access to the exploratory distribution $\mu$. In this setting, a popular notion is \emph{coverability}, which posits merely the existence of a good reset distribution, and thus lower bounds concentrability coefficient for any distribution $\mu$. 

\begin{definition}[Coverability \cite{xie2022role}]\label{def:ccov}
The coverability coefficient for a policy class $\Pi$ and MDP $M$ is defined as %
\arxiv{
    \begin{align*}
        \ccov(\Pi, M)\coloneqq 
        \max_{h \in [H]} \inf_{\mu_h \in \Delta(\statesp)}\sup_{\pi \in \Pi} ~ \nrm*{\frac{d^\pi_h}{\mu_h}}_\infty.
    \end{align*} 
}When clear from the context we denote the coverability coefficient as $\ccov$.
\end{definition}

Coverability is an intrinsic property that depends on the underlying MDP and the policy class. Recent work also defined  \emph{spanning capacity} which is the worst case (over all MDPs defined over fixed state/action spaces and horizon) value of coverability and is solely a structural property of the policy class $\Pi$ itself.
\begin{definition}[Spanning Capacity \cite{jia2023agnostic}]
The spanning capacity of a policy class $\Pi$ is defined as
\arxiv{\begin{align*}
    \spancap(\Pi) \coloneq \sup_{M} \ccov(\Pi, M).
\end{align*}}
\end{definition}
We ask whether a sample complexity that scales polynomially with the problem parameters can be achieved without access to the reset distribution. Prior work provides some partial answers: 
\begin{itemize}
    \item In online RL, \cite{jia2023agnostic} show that polynomial sample complexity in terms of the spanning capacity is not possible in general. Since spanning capacity upper bounds coverability for any MDP, their lower bound also rules out a sample complexity upper bound in terms of coverability.\footnote{\cite{jia2023agnostic} show that when $\Pi$ additionally satisfies the sunflower property, it is possible to achieve a bound which depends polynomially on coverability and the parameters of the sunflower property. However, it is not known if the sunflower property is a fundamental structural property required for agnostic policy learning in online RL.} 
    \item With local simulator access, \cite{jia2023agnostic} show that the minimax (i.e., worst case over all MDPs) sample complexity for any class  $\Pi$ is $\Theta(\spancap(\Pi))$ (ignoring dependence on other parameters). Unfortunately, spanning capacity is exponentially large for many policy classes of interest (such as linear policies) and can be arbitrarily larger than coverability. \cite{jia2023agnostic} leave it as an open question whether there exists an instance-dependent algorithm that adapts to coverability, finding a near-optimal policy using sample complexity scaling with $\ccov(\Pi, M)$ instead $\spancap(\Pi)$. %
\end{itemize}

\subsection*{\raisebox{0.25ex}{$\blacktriangleright~$} Result 1: Impossibility of Adapting to Coverability} We resolve the question raised by \cite{jia2023agnostic}, showing that it is not possible to adapt to coverability, even with generative access.

\begin{theorem}\label{thm:lower-bound-coverability} For any $H \in \bbN$, there exists a policy class $\Pi$ of size $2^H$ and a family of MDPs $\cM$ over a  state space of size $2^{O(H)}$, binary action space, and horizon $H$ such that every $M \in \cM$ satisfies (A) $\ccov(\Pi, M) = 2$ and  (B) $\Pi$ is policy complete for $M$, so that any proper deterministic algorithm that returns a $1/8$-optimal policy must use at least $2^{\Omega(H)}$ generative access samples for some MDP in $\cM$. 
\end{theorem}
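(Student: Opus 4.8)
The plan is to lift a ``combination lock'' to a Block MDP so that the hidden combination cannot be read off from any single observation, and then argue that even a generative model degenerates to blindly guessing the combination. For each $\theta\in\{0,1\}^H$ define a latent MDP $M_\theta$ with two latent states per layer $h\ge 2$ --- an on-path state $\mathsf{g}_h$ and an off-path state $\mathsf{b}_h$ --- and a single root latent $\mathsf{g}_1$. Transitions: from $\mathsf{g}_h$, action $\theta_h$ goes to $\mathsf{g}_{h+1}$ and the other action to $\mathsf{b}_{h+1}$; from $\mathsf{b}_h$ both actions go to $\mathsf{b}_{h+1}$. The only reward is $r_H=\ind{s_H=\mathsf{g}_H\text{ and }a_H=\theta_H}$. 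Now lift to a Block MDP: each latent emits the uniform distribution over its own private block of $K=2^{\Theta(H)}$ observations, with all blocks disjoint; since the learner has no decoder class, with high probability it only ever sees observations it has never seen before. Let $\Pi=\{\pi_v:v\in\{0,1\}^H\}$ be the ``open-loop'' policies, $\pi_v$ playing action $v_h$ at \emph{every} observation in layer $h$; then $|\Pi|=2^H$, the state space has size $2^{O(H)}$, $\pi_\theta$ is optimal with $V^{\pi_\theta}=1$ (realizability), and $V^{\pi_v}=0$ for $v\ne\theta$ (it falls off-path and never recovers, or matches through layer $H-1$ but plays the wrong final action), so the only $1/8$-optimal policy is $\pi_\theta$.

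\textbf{Verifying (A) and (B).} Using the identity $\ccov=\max_h\sum_{x\in\cX_h}\sup_\pi d^\pi_h(x)$: for $h\ge 2$, every observation in $\mathsf{g}_h$'s block has $\sup_\pi d^\pi_h(x)=1/K$ (attained by any $\pi_v$ with $v_{1:h-1}=\theta_{1:h-1}$, which reaches $\mathsf{g}_h$ with probability $1$ and then mixes uniformly over the block), and likewise $1/K$ for $\mathsf{b}_h$'s block; summing over the $2K$ observations gives $L_h=2$, while $L_1=1$, so $\ccov(\Pi,M_\theta)=2$. For policy completeness, one computes $Q^{\pi_v}_h$: at an on-path observation it equals $\ind{v_{h+1:H}=\theta_{h+1:H}}$ if the action is $\theta_h$ and $0$ otherwise (so $\argmax_a Q^{\pi_v}_h$ is either $\{\theta_h\}$ or all of $\cA$), at an on-path observation in layer $H$ it is $\ind{a=\theta_H}$, and at any off-path observation $Q^{\pi_v}_h\equiv 0$. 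In every case the layer-$h$ greedy action is \emph{constant} over $\cX_h$ (it is $\theta_h$, or arbitrary), hence realized by some layer-constant $\pi_w\in\Pi$; so $\Pi$ is policy complete for $M_\theta$.

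\textbf{Lower bound.} Put the uniform prior on $\theta$ and fix a deterministic algorithm issuing $T$ generative queries. Generate the interaction by \emph{lazy sampling}: each returned observation is drawn uniformly from the appropriate layer's observation universe and only afterwards (invisibly) assigned to the block of the relevant latent. Let $\mathsf{Coll}$ be the event that some observation is returned twice; $\Pr[\mathsf{Coll}]\le T^2/K=2^{-\Omega(H)}$. On $\mathsf{Coll}^c$ the entire $\theta$-dependence of the environment lives in the hidden latent bookkeeping and surfaces only through the reward, which is $0$ until the learner issues a query $(x,a)$ whose action-chain back to $x_1$ concatenated with $a$ equals $\theta_{1:H}$ (the ``win'' event). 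A coupling across values of $\theta$ then shows that, on $\mathsf{Coll}^c$, the transcript --- hence the output --- coincides with the $\theta=\mathbf 0$ transcript up to the first win in either run; and since the learner is blind until it wins, the action-chains it completes form a $\theta$-independent process, so each fixed length-$H$ chain equals $\theta_{1:H}$ with probability $2^{-H}$, giving $\Pr[\text{win within }T\text{ queries}]\le T\,2^{-H}$ by a union bound. Therefore
\[
\Pr_{\theta}\brk*{\text{output is }1/8\text{-optimal}}\;\le\;\Pr[\mathsf{Coll}]+\Pr[\text{win}]+2^{-H}\;\le\;T^2\,2^{-\Omega(H)}+T\,2^{-H}+2^{-H},
\]
where the last $2^{-H}$ is because on $\mathsf{Coll}^c\cap\{\text{no win}\}$ the output is independent of the uniform $\theta$. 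For the left side to be at least a constant one needs $T=2^{\Omega(H)}$; since this bounds the average over $\cM$, some $M_\theta\in\cM$ forces $2^{\Omega(H)}$ samples.

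\textbf{Main obstacle.} The delicate point is explaining why generative access --- which cracks a tabular combination lock in $O(H)$ queries --- is useless here. The answer is the Block MDP lift together with the lazy-sampling/coupling argument: because observations are fresh random tokens and the decoder is hidden, no statistical test on query responses can identify which observations are on-path; any such test (e.g.\ detecting that the two actions from an observation induce the same versus disjoint next-observation laws) needs $\Omega(\sqrt K)=2^{\Omega(H)}$ samples and is automatically subsumed by ruling out $\mathsf{Coll}$. The real work is engineering the construction so that $\ccov=2$ (which forces each latent to be diluted across many observations), policy completeness (which forces the on-path structure to be layer-homogeneous so greedy policies stay inside $\Pi$), and information-theoretic blindness hold \emph{simultaneously}, and then verifying rigorously that the coupling keeps the transcript free of $\theta$ until the low-probability win event.
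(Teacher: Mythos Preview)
Your construction has a genuine gap that breaks the lower bound under generative access. The issue is the \emph{balanced} block sizes: you give $\mathsf{g}_h$ and $\mathsf{b}_h$ blocks of equal size $K$. Even granting that the decoder is part of the random instance (which your description of $\cM=\{M_\theta\}$ does not state, though your lazy-sampling argument implicitly assumes it), a fresh observation $x\in\cX_H$ queried via the generative model satisfies $\Pr[\phi(x)=\mathsf{g}_H]=1/2$. Hence the following attack learns $\theta$ in $O(H)$ expected queries: repeatedly pick a fresh $x\in\cX_H$ and query both $(x,0)$ and $(x,1)$; with probability $1/2$ you see a reward of $1$ and read off $\theta_H$. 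Then for layer $H-1$, pick fresh $y\in\cX_{H-1}$, query both actions to obtain $z_0,z_1\in\cX_H$, and query $(z_0,\theta_H),(z_1,\theta_H)$; with probability $1/2$ you have $\phi(y)=\mathsf{g}_{H-1}$, one of the $z_a$ is a good observation, and the nonzero reward reveals $\theta_{H-1}$. Iterate backward. Your ``win event'' analysis assumes the only way to get reward $1$ is for $x$ to sit at the end of an action-chain from layer $1$ labeled by $\theta_{1:H}$; this is true for online or local-simulator access but \emph{false} for generative access, where the learner may query an arbitrary fresh $x$ at layer $H$ and receive $r=\ind{\phi(x)=\mathsf{g}_H,\,a=\theta_H}$ with no chain attached.

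The paper's fix is to make the emission supports \emph{unbalanced}: the good block at each layer has size $2^H$ while the bad block has size $2^{2H}-2^H$, so a fresh generative query at any layer $h\ge 2$ lands in the good block with probability only $\approx 2^{-H}$ under the random decoder. This is what drives their key event $\eventfresh$ (``every freshly queried observation has a bad label'') to hold with high probability, and it is precisely the mechanism that neutralizes the generative model. Note that $\ccov=2$ still holds, because $\sum_x \sup_\pi d^\pi_h(x)=2^H\cdot 2^{-H}+(2^{2H}-2^H)\cdot(2^{2H}-2^H)^{-1}=2$. Your coupling/lazy-sampling idea is in the right spirit for the \emph{transition} part of the transcript, but it cannot be completed without the unbalanced supports to control the reward signal from fresh generative queries.
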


Key ideas behind the lower bound construction are found in \pref{sec:lower-bounds}, and the proof is given in \pref{app:lower-bound-generative-access}. 

\pref{thm:lower-bound-coverability} shows that even under the strongest model of interaction to $M$ and the strongest representational condition on $\Pi$, the mere existence of a good exploratory distribution $\mu$ is insufficient for policy learning. In other words, it formalizes the folklore intuition that ``policy learning methods cannot explore''. Prior work \cite[Proposition 4.1 of][]{agarwal2020pc} suggests that policy gradient methods may fail to explore due to vanishing gradients; \pref{thm:lower-bound-coverability} shows that this is not an algorithmic limitation of policy gradient methods but an information theoretic barrier. Furthermore, there is no contradiction between \pref{thm:lower-bound-coverability} and works which imbue policy gradient methods with exploration capabilities \cite{agarwal2020pc, zanette2021cautiously, liu2024optimistic}, since the latter impose stronger dynamics and/or function approximation assumptions.

Additionally, \pref{thm:lower-bound-coverability} reveals a strict separation between policy-based RL and value-based RL with a local simulator. Under the stronger assumption that the learner has access to a $Q$-function class $\cF \in [0,1]^{\statesp \times \actionsp}$ satisfying value function realizability ($Q^\star \in \cF$), \cite[Theorem 3.1 of ][]{mhammedi2024power} gives an algorithm that achieves sample complexity $\poly(\ccov, H, \log \abs{\cF}, 1/\eps, \log 1/\delta)$. Again, this is not in contradiction with our result because in \pref{thm:lower-bound-coverability}, the implicitly defined value function class $\cF$ has cardinality which is double-exponential in $H$.

\subsection{Question 2: Do we need policy completeness?}\label{sec:policy-completeness}
Granting the learner access to an exploratory reset distribution via $\mu$-resets---as is done in \psdp{}---is a natural way to overcome the lower bound in~\pref{thm:lower-bound-coverability}.
Next, we investigate if the policy completeness assumption can be removed if the learner has access to $\mu$-resets. 

\subsection*{\raisebox{0.25ex}{$\blacktriangleright~$} Result 2: Impossibility of Agnostic Policy Learning for $\mu$-Resets}
We show that the policy completeness assumption cannot be removed in general. Specifically, one cannot achieve sample-efficient agnostic policy learning under $\mu$-reset access. 

\begin{theorem}\label{thm:lower-bound-policy-completeness}
For any $H \in \bbN$, there exists a policy class $\Pi$ of size $2^H$, a family of MDPs $\cM$ over a state space of size $2^{O(H)}$, binary action space, horizon $H$, and a reset distribution $\mu$  satisfying $\cconc(\mu; \Pi, M) = 6$ for all $M \in \cM$, so that any proper deterministic algorithm that returns a $1/16$-optimal policy must use at least $2^{\Omega(H)}$ samples from $\mu$-reset access for some MDP in $\cM$.
\end{theorem}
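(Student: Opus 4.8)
The plan is to build on the combination-lock construction behind \pref{thm:lower-bound-coverability}, modifying it so that policy completeness is \emph{broken} (indeed $\optpi \notin \Pi$ on every instance) and so that a single explicit reset distribution $\mu$ with $\cconc(\mu;\Pi,M) = 6$ serves the whole family. Concretely, I would take a family $\cM = \crl{M_v}$ of Block MDPs indexed by a hidden leaf $v$ of a depth-$\Theta(H)$ binary tree (so $\abs{\cM} = 2^{\Omega(H)}$), all with a common binary action space, horizon $H$, observation space of size $2^{O(H)}$, policy class $\Pi$ of size $2^H$, and reset distribution $\mu$. An episode has an ``addressing'' phase whose action sequence descends the tree, followed by a ``payoff'' phase: along the branch selected by $v$, reward is dispensed (and only if the learner then plays a further $v$-dependent continuation string), while every other branch pays a fixed baseline that makes all policies in $\Pi$ other than a designated $\pi^\dagger_v$ essentially tied. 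The agnostic structure is imposed by hand: the unique policy that both addresses the $v$-branch and plays its continuation optimally is placed outside $\Pi$ (this is $\optpi$), whereas $\Pi$ still contains $\pi^\dagger_v$, which reaches the $v$-branch but is forced to act suboptimally afterward, so that $V^{\pi^\dagger_v} = \Omega(1)$ yet at least $1/8$ below the best policy a learner could name without first locating $v$.

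First I would pin down the latent dynamics and $\Pi$ --- a ``decoy'' family of policies whose layer-$h$ occupancies are spread over many observations, together with the policies $\crl{\pi^\dagger_v}_v$ --- and then define $\mu_h$ as (essentially) the uniform mixture over $\Pi$ of layer-$h$ occupancies, verifying by a counting argument that $\nrm{d^\pi_h/\mu_h}_\infty \le 6$ for every $\pi\in\Pi$, uniformly in $v$. The crucial quantitative point is that the reward-bearing observations are split into $2^{\Omega(H)}$ groups, one per branch, so that $\pi^\dagger_v$ places only $2^{-\Omega(H)}$ mass on its own group at each layer; hence $\max_{\pi\in\Pi} d^\pi_h$ is $2^{-\Omega(H)}$ on those observations, and $\mu_h$ may be that small there too, even though $\pi^\dagger_v$ still collects $\Omega(1)$ total reward once inside the correct group. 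One also checks decodability, so each $M_v$ is a genuine Block MDP.

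The core is an indistinguishability argument: for any proper deterministic algorithm interacting through $d_1$-rollouts and $\mu$-resets with a budget of fewer than $2^{\Omega(H)}$ samples, with probability $1 - 2^{-\Omega(H)}$ over $v \sim \unif$ the entire transcript of observations and rewards coincides with what the algorithm would see on a fixed reference instance $M_\perp$ carrying no reward bonus on any branch. This holds because (a) a $d_1$-rollout of any fixed policy hits the $v$-branch and plays its $\Omega(H)$-bit continuation with probability only $2^{-\Omega(H)}$, and (b) a $\mu_h$-reset lands on a reward-informative observation with probability only $2^{-\Omega(H)}$, by the concentrability calculation above; a union bound over the budget finishes it. Conditioned on the transcript being the $M_\perp$-transcript --- which does not depend on $v$ --- the algorithm's output is independent of $v$, so for uniformly random $v$ it fails to be $1/16$-optimal with constant probability; averaging over $v$ then produces a single hard instance $M_v \in \cM$ with the claimed size, coverage, and (failed) policy-completeness properties.

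The step I expect to be the main obstacle is reconciling the two opposing demands on $\mu$: concentrability is measured against $\Pi$, so $\mu$ must cover the occupancy of \emph{every} $\pi \in \Pi$ --- in particular $\pi^\dagger_v$, which does route mass onto the $v$-branch's reward observations --- yet $\mu$-resets must reveal essentially nothing about $v$. The reconciliation is purely quantitative (splitting the reward observations into exponentially many tiny groups so that $\max_{\pi\in\Pi} d^\pi_h$ is exponentially small there while $\pi^\dagger_v$'s total reward stays $\Omega(1)$), and making all the constraints hold at once --- $\cconc = 6$, suboptimality gap $\ge 1/8$, $\abs{\Pi} = 2^H$, $\abs{\cM} = 2^{\Omega(H)}$, and decodability --- is where the real work lies; the information-theoretic wrapper (a Le Cam / averaging argument over $v$) is then routine.
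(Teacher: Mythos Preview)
Your proposed reconciliation of the two opposing demands on $\mu$ does not work. You assert that $\mu_h$ can place only $2^{-\Omega(H)}$ mass on the ``reward-informative'' $v$-group while $\pi^\dagger_v$ still harvests an $\Omega(1)$ value advantage from that group. But concentrability is a pointwise bound, so summing $d^{\pi^\dagger_v}_h(x)\le 6\,\mu_h(x)$ over all $x$ in the $v$-group gives $d^{\pi^\dagger_v}_h(v\text{-group})\le 6\,\mu_h(v\text{-group})=2^{-\Omega(H)}$; with rewards in $[0,1]$, the value $\pi^\dagger_v$ can extract from that group is then at most $H\cdot 2^{-\Omega(H)}=o(1)$, not $1/8$. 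Splitting the group into many tiny per-observation pieces does nothing to evade this, since the inequality aggregates over any set. In short, bounded concentrability forces $\mu$ to put $\Omega(1/H)$ mass on whatever set is responsible for the best-in-class policy's value gap, so $\mu$-resets will land there with polynomial probability and your indistinguishability claim (b) fails.

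The paper does not try to keep $\mu$ off the informative states; it does the opposite. It takes $\mu_h=\unif(\cX_h)$, which puts $\Theta(1)$ mass on the good chain, and then adds a \emph{distractor} latent state $s_h^{\mathsf d}$ at every layer that is unreachable from $d_1$ but receives $\Theta(1)$ mass under $\mu_h$. The distractor chain has the same transitions as the good chain, but the reward at $s_H^{\mathsf d}$ is \emph{flipped} relative to $s_H^{\mathsf g}$. The effect is that for \emph{every} open-loop rollout from $\mu_h$ (any $h\ge 2$), the reward distribution is exactly $\ber(1/2)$ regardless of $\optpi$: the good and distractor contributions cancel, and the remaining bad-state mass gives $\ber(1/2)$ anyway. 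This cancellation, not rarity, is what makes $\mu$-resets uninformative, and it is also precisely what forces $\optpi\notin\Pi$ (the optimal policy must play opposite actions on $s_H^{\mathsf g}$ and $s_H^{\mathsf d}$, which no open-loop policy can do). Your proposal has no analogue of this mechanism; without it, the lower bound argument does not go through.
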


Key ideas of the construction are found in \pref{sec:lower-bounds}, and the proof is given in \pref{app:lower-bound-policy-completeness}. 

It is interesting to ask what happens if the policy class satisfies realizability, which lies between policy completeness and the agnostic setting. The construction of \pref{thm:lower-bound-policy-completeness} critically relies on the fact that the policy class is \emph{not realizable}, and we do not have an information-theoretic lower bound with a realizable policy class. However, it is easy to see that policy realizability is insufficient for \psdp{} even for horizon $H=2$, as shown in \pref{fig:psdp-lower-bound-simple}. Similar to lower bounds for offline RL \cite{foster2021offline}, the construction relies on \emph{overcoverage}, as $\mu$ has nonzero mass on a nonreachable state $\bar{s}_{1}$, which is somewhat unnatural. Therefore, in \pref{app:psdp} we study \psdp{} when the exploratory distribution is \emph{admissible} (can be realized as a mixture of policies in $\Pi$ \cite{jia2024offline}). Here, we tightly characterize the worst case sample complexity of \psdp{} as $(\cconc)^{O(H)}$ by giving (1) a substantially more involved lower bound construction with compounding errors, and (2) a new analysis for \psdp{} which accounts for the recursive structure of policy completeness errors when $\mu$ is admissible. 

Lastly, we remark that our lower bound constructions against \psdp{} also apply to similar algorithms based on policy iteration, e.g., the classic Conservative Policy Iteration (\cpi{}) \cite{kakade2002approximately}, which also requires policy completeness for global optimality.%

\begin{figure}[!t]
    \centering
\includegraphics[scale=0.30, trim={0cm 24cm 21cm 0cm}, clip]{./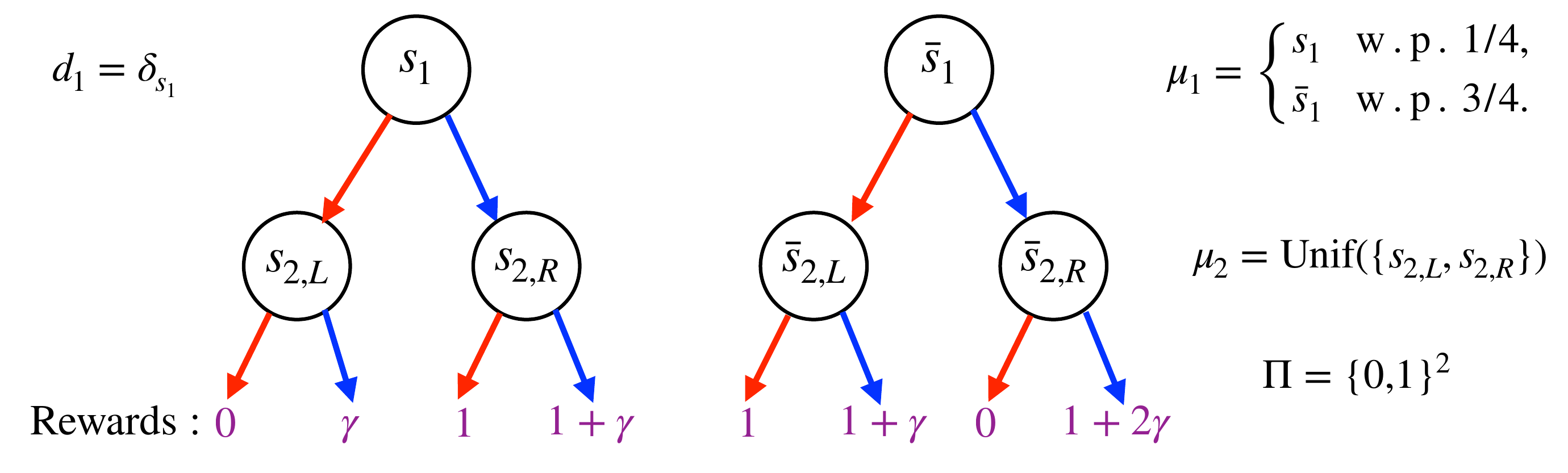}
    \caption{Lower bound for \psdp{} without policy completeness. \textcolor{red}{Red} arrows represent action $0$ and \textcolor{blue}{blue} arrows represent action $1$. In \textcolor{Purple}{purple} we denote the expectation of the stochastic reward. Let $\gamma > 0$ be an arbitrarily small constant. At layer $h=2$, with constant probability, \psdp{} selects $\textcolor{red}{\estpi^{(2)} \gets 0}$ since $\En_{x\sim \mu_2} V^{\pi_0}(x) = 1/2$ and $\En_{x\sim \mu_2} V^{\pi_1}(x) = 1/2 + \gamma$. Conditioned on $\estpi^{(2)} = 0$, we have $\En_{x \sim \mu_1} V^{\pi_0 \circ \estpi^{(2)}}(x) = 3/4$ while $\En_{x \sim \mu_1} V^{\pi_1 \circ \estpi^{(2)}}(x) = 1/4$, so therefore \psdp{} selects $\textcolor{red}{\estpi^{(1)} \gets 0}$. The returned policy $\estpi^{(1)} \circ \estpi^{(2)}$ is $(1+\gamma)$-suboptimal on $d_1$. Note that $\mu = \crl{\mu_1, \mu_2}$ satisfies $\cconc = 4$, and that $\Pi$ satisfies realizability.} 
    \label{fig:psdp-lower-bound-simple}
\end{figure}   

\subsection*{\raisebox{0.25ex}{$\blacktriangleright~$}   Result 3: Positive Result under Hybrid Resets}

The previous negative results motivate us to consider hybrid reset access, where we handle the \emph{exploration challenge} via exploratory resets, and the \emph{error amplification challenge} via local simulator access. For value-based learning, \cite{mhammedi2024power} show that local simulator access can overcome the notorious \emph{double sampling problem}, which leads to error amplification. %
Furthermore, local simulator access circumvents the lower bound construction used to prove~\pref{thm:lower-bound-policy-completeness}. Given this, it is conceivable that local simulators might provide significant power in agnostic policy learning. 

Our main positive result formalizes this intuition, where we provide a  new algorithm that leverages hybrid resets for sample-efficient learning in Block MDPs. %
 Block MDPs are perhaps the simplest setting with large state spaces for developing RL algorithms, as well as a stepping stone to more challenging settings such as low-rank MDPs or coverable MDPs (the \psdp{}/\cpi{} setting). Since our lower bound constructions are all Block MDPs, a positive result here already indicates the significant power of hybrid resets. 
As a caveat, we require the exploratory distribution $\mu$ to satisfy \emph{pushforward concentrability}, a strengthened version of concentrability introduced by \cite{xie2021batch}.

\begin{definition}[Pushforward Concentrability]\label{def:exploratory-pushforward-distribution}
The pushforward concentrability coefficient for a distribution $\mu = \crl{\mu_h}_{h\in[H]}$ with respect to MDP $M$ is \arxiv{
\begin{align*}
    \cpush(\mu; M) \coloneqq \max_{h \in [H]} \sup_{(x,a,x') \in \statesp_{h-1} \times \actionsp \times \statesp_{h}} \frac{P(x' \mid x,a)}{\mu_h(x')}.
\end{align*}
When clear from the context we denote the pushforward concentrability coefficient as $\cpush$.
}\end{definition}
Note that unlike concentrability, pushforward concentrability only depends on the distribution $\mu$ and the MDP $M$, and does not depend on the policy class $\Pi$. It is known that the pushforward concentrability coefficient is always an upper bound on the concentrability coefficient for any distribution $\mu$, but concentrability can be arbitrarily smaller \cite{xie2021batch}. However, it can be checked that in the lower bounds in this paper (namely \pref{thm:lower-bound-policy-completeness} and \pref{thm:psdp-lower-bound}), the constructed resets $\mu$ indeed satisfy bounded pushforward concentrability.

\begin{theorem}\label{thm:block-mdp-result}
    Let $M$ be a Block MDP of horizon $H$ with $S$ states and $A$ actions. Let $\Pi$ be any policy class. Suppose we are given an exploratory reset distribution $\mu = \crl{\mu_h}_{h=1}^H$ which satisfies pushforward concentrability with parameter $\cpush$ and can be factorized as $\mu_h = \emission \circ \nu_h$ for some $\nu_h \in \Delta(\latentsp_h)$ for all $h\in[H]$.\footnote{The factorization assumption is made for technical convenience, and can be removed (see \pref{app:upper-bound-preliminaries}).} With probability at least $1-\delta$, \stochalg{} (\pref{alg:stochastic-bmdp-solver-v2}) returns an $\eps$-optimal policy using 
    \arxiv{
    \begin{align*}
        \poly\prn*{ \cpush, S, A, H, \frac{1}{\eps}, \log \abs{\Pi}, \log \frac{1}{\delta}} \quad \text{samples from hybrid resets.}
    \end{align*}    
    }
\end{theorem}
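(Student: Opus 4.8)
The plan is to reduce \pref{thm:block-mdp-result} to the construction of a \emph{policy emulator} (\pref{def:policy-emulator}): a tabular MDP $\estmdp$ with $\poly(S,A,H)$ states, equipped with an efficiently computable map $\pi \mapsto \pi_{\estmdp}$ sending each $\pi \in \Pi$ to a policy on $\estmdp$, such that $\abs{V^{\pi_{\estmdp}}_{\estmdp} - V^{\pi}_{M}} \le \eps/2$ for \emph{all} $\pi \in \Pi$ simultaneously. Granting this, \stochalg{} returns the $\pi \in \Pi$ maximizing $V^{\pi_{\estmdp}}_{\estmdp}$ (computed by tabular dynamic programming on $\estmdp$ plus one pass over $\Pi$), and the usual two-sided comparison shows this policy is $\eps$-optimal in $M$. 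The conceptual starting point is that in a Block MDP a policy's value depends on it only through its latent-averaged version $\bar\pi(\cdot\mid s) \coloneqq \En_{x \sim \emission(s)}\brk{\pi(\cdot \mid x)}$, since a fresh emission is drawn at each step; hence the emulator need only (i) recover the latent transition and reward structure on a recovered latent state space, and (ii) retain enough sampled emissions per recovered latent state to reconstruct $\bar\pi$ for a queried $\pi$. One clean way to realize (ii) without explicitly estimating every $\bar\pi$ is to build $\estmdp$ as a \emph{re-sampling} MDP: each emulator state carries an actual observation together with a ``latent label,'' and a transition out of a state with label $\wh s$ under action $a$ moves to a \emph{freshly drawn} observation $x' \sim P(\cdot \mid x^\star_{\wh s}, a)$ — pre-sampled via local-simulator resets to a representative $x^\star_{\wh s}$ — together with its own label, while a queried $\pi$ simply acts on whatever observation the emulator exposes. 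Provided the labels capture the equivalence ``same outgoing latent dynamics and rewards,'' the trajectory law of $\pi_{\estmdp}$ on $\estmdp$ matches that of $\pi$ on $M$ up to estimation error, so $V^{\pi_{\estmdp}}_{\estmdp} \approx V^\pi_M$.

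Building the emulator proceeds in two phases. First, using $\mu$-resets and the factorization $\mu_h = \emission \circ \nu_h$, draw $\poly(\cpush, S, H, \eps^{-1}, \log\abs{\Pi}, \log\delta^{-1})$ observations at each layer: pushforward concentrability ensures every latent state with $\nu_h(s)$ above a threshold $\tau \asymp \eps/(\cpush S H)$ is sampled many times, while latent states below the threshold satisfy $d^\pi_h(s) \le \cpush\nu_h(s) \le \cpush\tau$ for \emph{all} $\pi$ and so may be discarded at total value cost $\le \eps/4$. Since each layer has at most $S$ latent states this yields a polynomial-size candidate set. Second, \stochdecoder{} assigns latent labels by backward induction on $h = H, H-1, \dots, 1$: two layer-$h$ observations get the same label iff, for every action $a$, their Monte-Carlo estimated immediate rewards agree and — using the already-computed layer-$(h+1)$ labels — their estimated transition distributions over layer-$(h+1)$ labels agree, all from local-simulator resets. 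Then \stochrefit{} estimates, for each label $\wh s$ and action $a$, the reward $\wh R(\wh s,a)$ and the transition $\wh P(\cdot\mid\wh s,a)$ over next-layer labels, and pre-samples the fresh observations used by the re-sampling rule. Every estimate uses $\poly(A,H,\eps^{-1},\log(SHA\abs{\Pi}/\delta))$ rollouts, and the total sample count, a product of (layers) $\times$ (observations per layer) $\times$ (actions) $\times$ (rollouts), is $\poly(\cpush, S, A, H, \eps^{-1}, \log\abs{\Pi}, \log\delta^{-1})$.

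The analysis has three ingredients. \emph{(1) Uniform concentration}: a union bound over the at-most-$SH$ labels, $A$ actions, $H$ layers, and — crucially for reconstructing $\bar\pi$ and evaluating the return of each queried policy — the class $\Pi$, shows that with probability $1-\delta$ all empirical rewards, transitions, and latent-averaged action distributions are accurate to $\eps/\poly(S,A,H)$; this is the only place $\log\abs{\Pi}$ enters. \emph{(2) Correctness of the labeling}: one shows the backward induction recovers the intended ``same outgoing dynamics and rewards'' equivalence on the sampled observations, so that replacing any observation by its label's representative changes neither next-observation distributions nor rewards. \emph{(3) Simulation lemma}: propagating the per-layer transition and reward errors through the re-sampling emulator by backward induction over the $H$ layers yields $\abs{V^{\pi_{\estmdp}}_{\estmdp} - V^\pi_M} \le \eps/2$ uniformly over $\pi \in \Pi$ (the $\eps/4$ thresholding cost included), from which the PAC guarantee is immediate.

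The step I expect to be the main obstacle is ingredient (2) together with its interface to the re-sampling construction: one must pin down exactly which equivalence on observations the decoder should target so that it is simultaneously \emph{testable} from $\poly$ many local-simulator rollouts (which forces it to be defined only through the already-recovered next-layer labels, hence coarser than ``same latent state'') and yet \emph{fine enough} that merging two observations never corrupts the value of any $\pi \in \Pi$ — delicate precisely because two distinct latent states may have identical dynamics but different emissions, so a policy can still distinguish them. It is the re-sampling of fresh observations (rather than reuse of a fixed representative) that reconciles these requirements, and making this precise — along with controlling how the layer-$h$ labeling error feeds into the layer-$(h-1)$ step and ultimately into the $H$-fold error amplification of the simulation lemma — is the technical heart of the argument. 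A secondary nuisance is the overcoverage bookkeeping: transitions into discarded low-probability latent states must be rerouted (e.g.\ to an absorbing zero-reward state) without shifting the value of any $\pi \in \Pi$ by more than $O(\eps)$.
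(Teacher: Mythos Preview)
Your plan has the right skeleton—backward induction to build a policy emulator, then a two-sided comparison—but the two ingredients you flag as ``to be worked out'' are exactly where the content lies, and as written the plan would fail at both.

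\textbf{Decoding fresh transitions.} Your labeling rule at layer $h$ compares ``transition distributions over layer-$(h{+}1)$ labels.'' But a local-simulator sample $x'\sim P(\cdot\mid x,a)$ is a \emph{fresh} observation, not one of the $\mu_{h+1}$-samples you have already labeled; you never say how to assign a label to such an $x'$. Recursing (sample transitions out of $x'$, label those, \dots) compounds multiplicatively across layers and costs $\exp(H)$. The paper's answer is not to label $x'$ at all: it maintains, for each pair $\bar x,\bar x'\in\estmdpobsspace{h+1}$, a single \emph{test policy} $\pi_{\bar x,\bar x'}\in\argmax_{\pi}|\wh V^\pi(\bar x)-\wh V^\pi(\bar x')|$, rolls each from $x'$ via the local simulator, and compares the Monte-Carlo return to the emulator value $\wh V^{\pi_{\bar x,\bar x'}}(\bar x)$. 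There are only $|\estmdpobsspace{h+1}|^2$ such policies, and because each is maximally distinguishing in the emulator, closeness on all of them transfers to closeness for every $\pi\in\Pi$. The resulting decoder graph and its connected components then feed into a confidence set for $\wh P(\cdot\mid x,a)$ carrying both a marginal constraint and a pushforward constraint—the latter is what actually handles the ``same dynamics, different emissions'' issue you raise, not the re-sampling idea (which also needs labels on its fresh draws and so runs into the same gap). None of this machinery appears in your proposal.

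\textbf{Error amplification is not $H$-fold.} The decoding feeds on itself: the emulator values at layer $h{+}1$ carry error $\Gamma_{h+1}$, so decoding at layer $h$ via those values incurs transition error $O(\Gamma_{h+1})$, whence $\Gamma_h\approx 2\Gamma_{h+1}$—doubling, not adding (\pref{fig:error-amplification} gives an explicit instance). The paper's fix is a verify-and-refit loop: after computing test policies at layer $h$, \stochrefit{} checks via fresh rollouts that each attains accuracy $\tauref$ at a fixed level \emph{independent of $h$}; if any fails, the discrepancy is traced to a bad transition $\wh P(\cdot\mid\bar x,\bar a)$ at some layer $\ell\ge h$, that transition is updated by an OMD step inside its confidence set, and the outer loop restarts at $\ell$. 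The OMD regret against the (never-computed) projected measure $\projm{\estmdpobsspace{h+1}}(\optlatp)$ is the potential that bounds the number of restarts. Without this mechanism, a single backward pass gives exponential, not polynomial, error—so your ``$H$-fold'' simulation-lemma accounting cannot close.
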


To support the presentation of our main result, we first present a simplified algorithm called \detalg{} for an easier setting in \pref{sec:warmup}, then present \stochalg{} in \pref{sec:main-upper-bound}.  We now discuss several implications of \pref{thm:block-mdp-result}.

\begin{itemize} 
    \item Hybrid resets enables new statistical guarantees which are impossible with just local simulator access (cf.~\pref{thm:lower-bound-coverability}) and $\mu$-resets (cf.~\pref{thm:lower-bound-policy-completeness}).
    \item As previously discussed, \psdp{} provably fails in the absence of policy completeness, and even policy realizability does not help. In contrast, \pref{thm:block-mdp-result} achieves sample-efficient learning in the agnostic setting. Therefore, at least in Block MDPs, policy completeness is not an information theoretic barrier, only an algorithmic barrier. 
    \item Departing from prior work on Block MDPs, we do not require decoder realizability, namely that the learner is given a decoder class $\Phi \subseteq \latentsp^\statesp$ which satisfies $\optdec \in \Phi$. With decoder realizability, sample-efficient learning is possible with standard online RL access. Since an (approximately) realizable policy class of size $\log \abs{\Pi} \le \poly(S,A, \log \abs{\Phi}, 1/\eps)$ can be constructed from a decoder class by a standard covering argument, \pref{thm:block-mdp-result} provides substantially stronger guarantees than previously known (albeit under the stronger hybrid reset access). 
\end{itemize}

\paragraph{Key Technical Insights for the Upper Bound.} The fundamental challenge in agnostic policy learning is to simultaneously estimate the values of all policies $\crl{V^\pi}_{\pi \in \Pi}$ in a statistically efficient manner. In the absence of any structure, this can require $\Omega(\min \crl{A^H, \abs{\statesp} A, \abs{\Pi} })$ samples \cite{krishnamurthy2016pac, jia2023agnostic}. This bound is attained by adopting the best of: (a) rolling out with uniformly random actions and utilizing importance sampling, (b) learning via tabular methods, %
or (c) individually evaluating each policy using Monte Carlo methods. Unfortunately, this sample complexity is too large for most practical scenarios.

To improve upon this result, prior works in agnostic policy learning have identified \emph{additional structure} which facilitates the simultaneous estimation of $\crl{V^\pi}_{\pi \in \Pi}$. For example, \cite{sekhari2021agnostic} utilize autoregressive extrapolation when the MDP is low-rank, and \cite{jia2023agnostic} construct policy-specific Markov Reward Processes to take advantage of a so-called 
sunflower property of $\Pi$.

Our paper adds a new technical tool called the \emph{policy emulator} to this burgeoning toolbox (see \pref{def:policy-emulator}). A policy emulator, denoted $\wh{M}$, is a carefully constructed tabular MDP which for an $\eps > 0$ satisfies 
\begin{align*}
\text{for all \( \pi \in \Pi \):} \quad \abs{ V^\pi - \widehat{V}^\pi } \leq \epsilon. \numberthis\label{eq:informal-emulator}
\end{align*}
Here, $V^\pi$ denotes the value of $\pi$ in the underlying MDP, while $\wh{V}^\pi$ denotes the value of $\pi$ in the policy emulator $\wh{M}$. Once the policy emulator has been constructed, returning an $O(\eps)$-optimal policy is straightforward by simply returning $\argmax_{\pi \in \Pi} \wh{V}^\pi$. In this sense, the policy emulator is a ``minimal object'' for agnostic policy learning. In fact, we show in \pref{app:beyond-bmdp} that every pushforward-coverable MDP admits a policy emulator of bounded size. The remaining question is: how can we construct this policy emulator using few samples?

Our key contribution is to devise a statistically efficient method for constructing this policy emulator in a bottom-up manner, leveraging the power of hybrid resets. As a warmup, we first explore a simpler scenario in \pref{sec:warmup} where the latent dynamics of the Block MDP are deterministic and the learner has the capability to draw samples from the emission function $\emission(\cdot)$. Here, the emulator can directly be constructed over the latent state space $\latentsp$ in a model-based fashion. We then study the fully general setting in \pref{sec:main-upper-bound}. Here, we construct the emulator directly over $\poly(\cpush, S, A, H, \eps^{-1}, \log \abs{\Pi}, \log \delta^{-1})$ random observations sampled from the reset distributions $\mu_1, \cdots \mu_H$. We will prove that the transitions/rewards of this policy emulator can be accurately estimated so that the guarantee in \eqref{eq:informal-emulator} holds.

\arxiv{
    \section{Main Ideas for Lower Bounds}\label{sec:lower-bounds}

We now explain the main ideas for both of our information-theoretic lower bounds which show that sample-efficient learning is impossible with policy completeness + generative access (\pref{thm:lower-bound-coverability}), and with an agnostic policy class + $\mu$-resets (\pref{thm:lower-bound-policy-completeness}). The proofs are deferred to \pref{app:lower-bounds}.

\paragraph{Rich Observation Combination Lock.} Our lower bounds take the form of \emph{rich observation combination locks}, which are Block MDP variants of the classic combination lock construction \cite{sutton2018reinforcement}. At a high level, the latent transitions of these instances are given by a combination lock parameterized by an unknown open-loop policy $\optpi \in \Pi_\mathsf{open}$; taking the optimal policy $\optpi$ gives the learner reward of 1, while deviation from $\optpi$ at any layer gives the learner reward of zero. Also, the emission function $\emission$ for each state is supported on an exponentially large set which is a-priori unknown to the learner (hence the name ``rich observations''). Such constructions have appeared in previous lower bounds for online RL \cite{sekhari2021agnostic, jia2023agnostic}. The classic combination lock can easily be solved in $\poly(H)$ samples using tabular RL approaches which use the principle of \emph{optimism in the face of uncertainty}---when the learner sees a previously observed state $x_h$, they explore by trying out a new action $a_h$ since it could potentially lead to higher reward. However, the addition of rich observations makes the problem statistically intractable, since it is likely that the learner always sees new observations, so they cannot identify what latent state they are in or when they have deviated from $\optpi$ in a given episode.

Since the rich observation combination lock is a Block MDP, it naturally satisfies small coverability, and furthermore, exploratory distributions $\mu$ can be constructed which satisfy small concentrability. Therefore, it is a natural starting point for proving lower bounds in our setting. Our main technical contribution is to adapt the basic construction to prove information-theoretic lower bounds for the \emph{stronger forms of access} considered in this paper. Our proofs depart from prior results which relied on a complicated stopping time argument \cite{domingues2021episodic, sekhari2021agnostic, jia2023agnostic}; we instead leverage recently developed techniques for proving lower bounds in interactive learning \cite{chen2024assouad}. In particular, we use an interactive variant of Le Cam's Convex Hull Method (\pref{thm:interactive-lecam-cvx}), which follows as a corollary of \cite[Thm.~2 of][]{chen2024assouad}.

\begin{figure}[h]
    \centering
\includegraphics[scale=0.32, trim={0cm 19.5cm 32cm 0cm}, clip]{./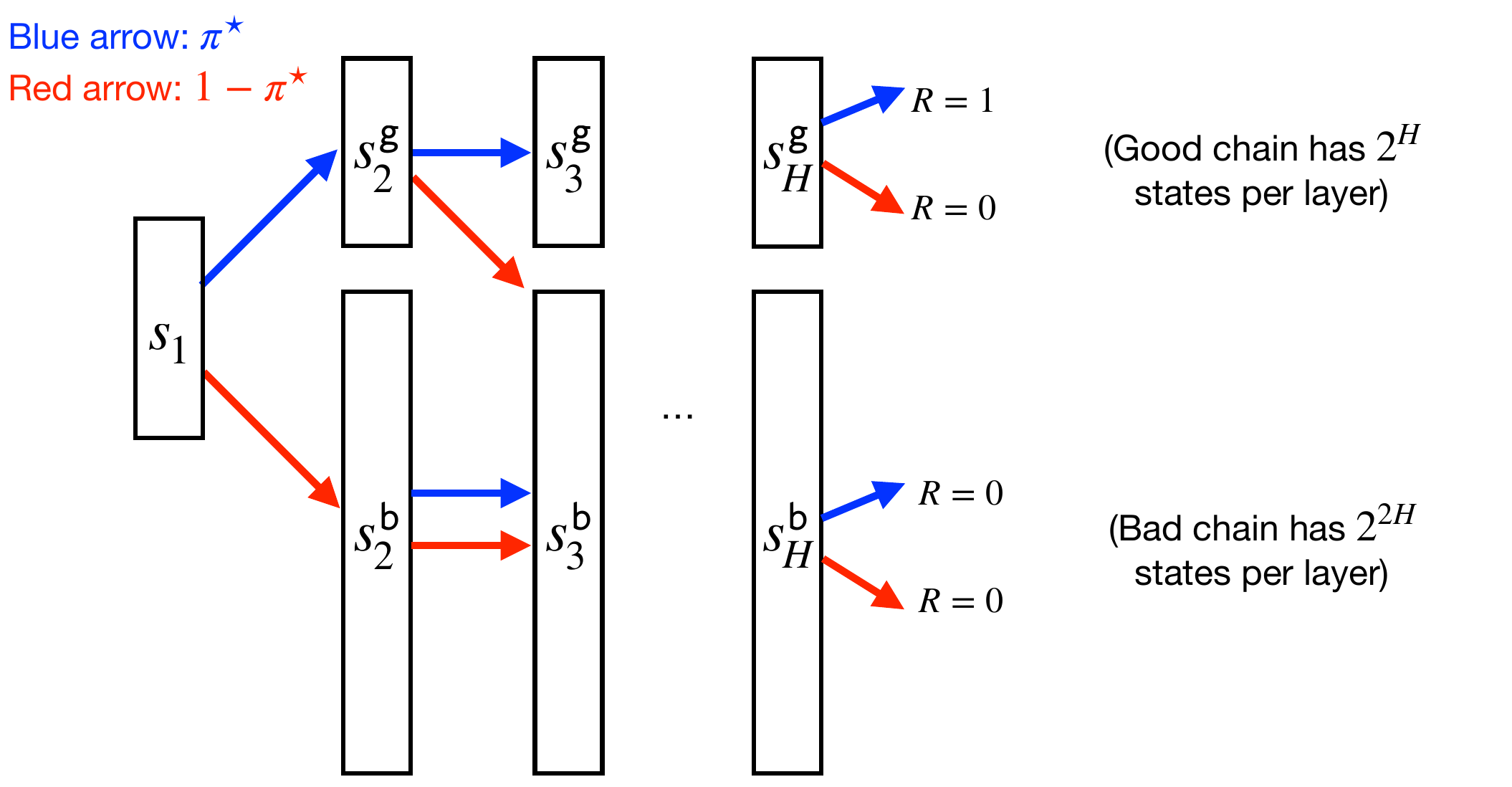}
    \caption{Construction used for proof of \pref{thm:lower-bound-coverability}.} 
    \label{fig:lb1}
\end{figure}

\paragraph{Construction for \pref{thm:lower-bound-coverability}.} An example can be found in \pref{fig:lb1}. In order to prevent the learner from using the more powerful generative model, the lower bound construction has unbalanced emission supports: namely for all $h \ge 2$, the support of $\emission(s_h^\mathsf{g})$ is of size $2^H$, while the support of $\emission(s_h^\mathsf{b})$ is of size $2^{2H}$. Intuitively, the learner receives little information unless they can sample from $(s_H^\mathsf{g}, \optpi_H)$ and receive reward of 1. Since the emission support for $s_h^\mathsf{g}$ is exponentially smaller than that of $s_h^\mathsf{b}$, unless the learner guesses $\exp(H)$ times with the generative model, it is likely that they only receive observations sampled from $s_h^\mathsf{b}$. Stated in a different way, it is not possible for the learner to construct an exploratory distribution $\mu$ which has $\cconc = \poly(H)$, even using $\poly(H)$ adaptive queries to the generative model. Thus, the generative model provides no real additional power over the online RL setting, for which we know $2^{\Omega(H)}$ lower bounds \cite{sekhari2021agnostic}.

\begin{figure}[h]
    \centering
\includegraphics[scale=0.32, trim={0cm 15cm 30cm 0cm}, clip]{./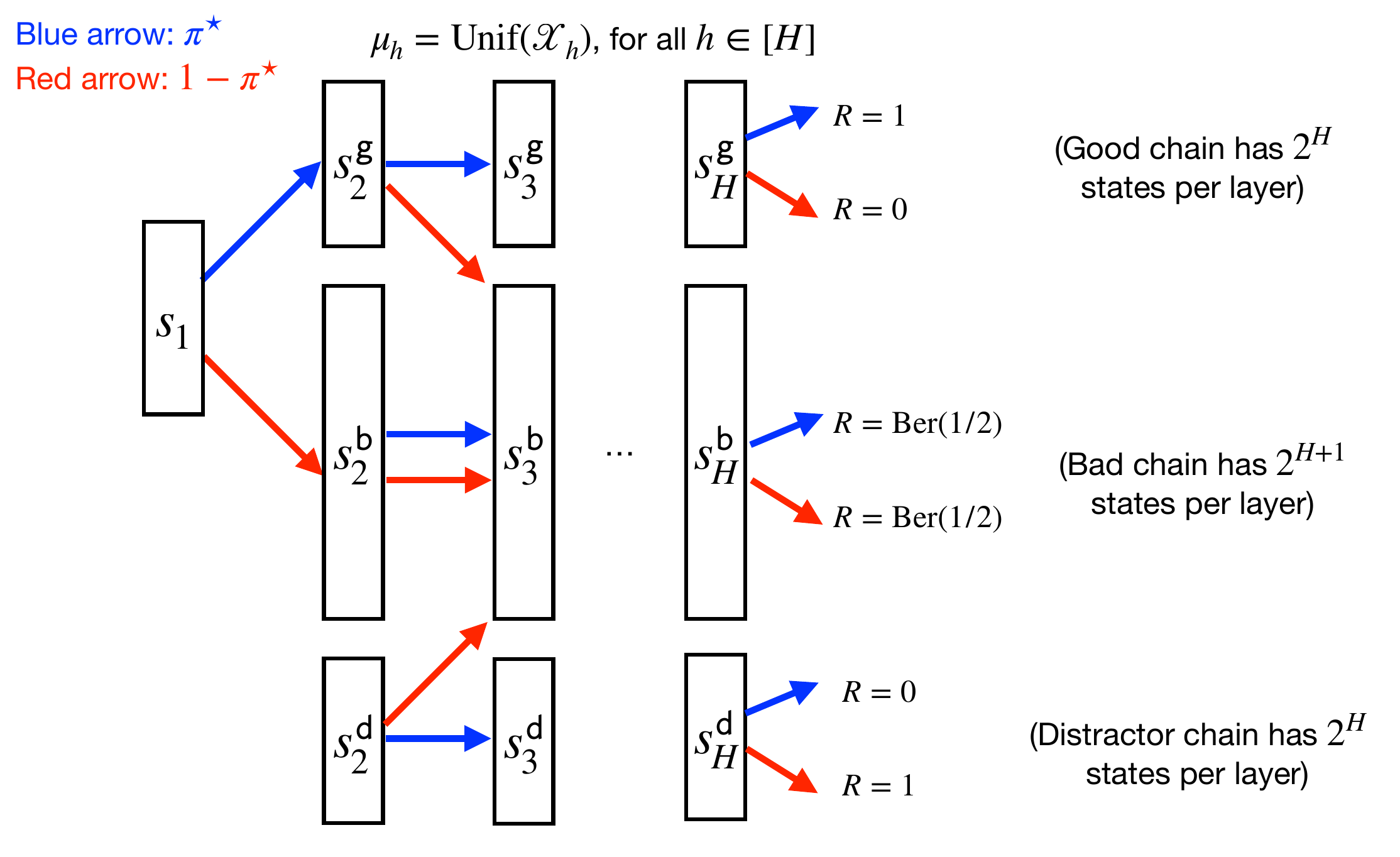}
    \caption{Construction used for \pref{thm:lower-bound-policy-completeness}.} 
    \label{fig:lb2}
\end{figure}

\paragraph{Construction for \pref{thm:lower-bound-policy-completeness}.} An example can be found in \pref{fig:lb2}. We introduce a set of \emph{distractor} latent states $\crl{s_h^\mathsf{d}}_{h \ge 2}$, which are not reachable from the initial distribution $d_1$, and we set $\mu_h$ to be the uniform distribution over all observations in layer $h$. Thus, the exploratory distribution $\mu$ has \emph{overcoverage} over these unreachable states. The distractor states have the same latent transitions as the good states, and the only difference is that the reward at $s_H^\mathsf{d}$ is flipped compared to the reward at $s_H^\mathsf{g}$. This causes rollouts from $\mu_h$ to be noninformative. As for some rough intuition, observe that the distribution of rewards for executing \emph{any} open-loop policy $\pi_{h:H}$ from $\mu_h$ with $h\ge 2$ is $\ber(1/2)$. This is shown by the following casework:
\begin{itemize}
    \item If $\pi_{h:H} = \pi^\star_{h:H}$, then we get a reward of 1 by either sampling $x \sim \emission(s_h^\mathsf{g})$ with probability $1/4$ and getting reward 1 at $s_H^\mathsf{g}$ or sampling $x \sim \emission(s_h^\mathsf{b})$ with probability $1/2$ and getting reward $\ber\prn*{\frac12}$ at $s_h^\mathsf{b}$. Thus the distribution is $\ber\prn*{\frac12}$. 
    \item Similar reasoning holds if $\pi_{h:H} = \pi^\star_{h:H-1} \circ (1-\pi^\star_H)$, but with the reward of 1 coming from sampling the states $x \sim \emission(s_h^\mathsf{d})$.
    \item If $\pi_{h:H}$ is any other policy, then it always reaches $s_H^\mathsf{b}$ and it gets reward $\ber\prn*{\frac12}$.
\end{itemize}
Therefore, observing the reward distribution obtained by executing open-loop policies reveals \emph{no information} about $\optpi$; due to the rich observations, executing non-open loop policies does not really help, and the learner cannot really learn any information about the transition dynamics from the reset $\mu$. Again, the best the learner can do is online RL which requires $2^{\Omega(H)}$ samples. 

We remark that if the learner had local simulator access, then it could easily decode states starting from layer $H$, going backwards, since the reward distributions for a particular $(x_H,a_H)$ pair are different depending on the latent state $\optdec(x_H)$. This idea is precisely the intuition that motivates our main algorithm \stochalg{}.

}

\section{\detalg{}: Algorithm and Results for Warmup Setting}\label{sec:warmup}

We first study an easier setting 
and provide a simplified algorithm that illustrates the main approach that we will take in the general setting (in \pref{sec:main-upper-bound}).

\subsection{Warmup Setting: Deterministic Dynamics and Sampling Access to Emissions}
We make the following simplications: 

\begin{assumption}\label{ass:det-transitions} Assume that: %
\arxiv{
\begin{enumerate}[(1)]
    \item $M$ has deterministic latent transitions $\optlatp$ and (possibly) stochastic rewards $\optlatr$. 
    \item The learner is given both local simulator access and sampling access to the emission function $\emission$. 
\end{enumerate}}
\end{assumption}
Intuitively, \pref{ass:det-transitions} simplifies the problem considerably. Sampling access to the emission enables us to directly estimate the latent reward function $\optlatr$. Furthermore, we can associate a single observation $x \sim \psi(s)$ with each state allowing us to query for $x' \sim P(\cdot \mid s,a)$. 
However, the fundamental challenge of identifying the \emph{latent transition} $\optdec(x')$ remains, which is the main focus of \detalg{}. A few remarks are in order: 
\begin{itemize}
    \item Without loss of generality, we can restrict ourselves to the open-loop policy class $\Piopen = \crl{\pi: \forall x \in \statesp_h, \pi_h(x) \equiv a_h, (a_1, \cdots, a_H) \in \actionsp^H}$. The reasoning is as follows. The optimal policy $\optpi$ for $M$ is constant over $\supp(\emission(s))$ for every $s \in \latentsp$. Due to deterministic latents, there exists some $\wt{\pi} \in \Piopen$ which experiences the same (latent) trajectory $(s_1^\star, a_H^\star, \cdots, s_H^\star, a_H^\star)$ that $\optpi$ experiences. Such a policy $\wt{\pi}$ achieves the optimal value from the fixed starting latent state $s_1$, even though it may not be the optimal policy $\optpi$ that achieves the optimal value from \emph{every} state. 
    \item We implicitly require knowledge of the latent state space $\latentsp = \latentsp_1 \cup \cdots \cup \latentsp_H$ in order to sample from $\psi$. The main algorithm, \stochalg{}, will only require knowledge of a bound $\abs{\latentsp} \le S$.
    \item Sampling access to the emission is more powerful than $\mu$-reset access, since a reset distribution with $\cpush = S$ can be simulated for any $h \in [H]$ by first $s \sim \unif(\cS_h)$ then sampling $x \sim \emission(s)$.
\end{itemize}

\paragraph{Additional Notation: Monte Carlo Rollouts.} Our algorithms (both \detalg{} and \stochalg{}) interact with the environment primarily by collecting Monte Carlo rollouts from states (or distributions over states). For a partial policy $\pi_{h:h'}$, starting state $x \in \statesp_h$, and sample size $n \in \bbN$, we denote the algorithmic primitive $\mcest(x, \pi_{h:h'}, n)$ that:  
\begin{enumerate}[label=\((\arabic*)\)]   
    \item Collects $n$ rollouts $\crl{(x_h^{(t)}, a_h^{(t)}, r_h^{(t)}, \cdots, x_{h'}^{(t)}, a_{h'}^{(t)}, r_{h'}^{(t)})}_{t\in[n]}$ by running $\pi_{h:h'}$ starting from state $x$, \item Returns the estimate $\frac{1}{n}\sum_{t=1}^n \sum_{h\le k \le h'} r_k^{(t)}$. 
\end{enumerate}
 We overload the notation and use $\mcest(d, \pi_{h:h'}, n)$ for $d \in \Delta(\statesp_h)$ to denote a Monte Carlo estimate which first samples $x_h^{(t)} \sim d$ then rolls out with $\pi_{h:h'}$. 

\subsection{The \detalg{} Algorithm and Analysis Sketch}\label{sec:warmup-algorithm-analysis-sketch}
\begin{algorithm}[t]
    \caption{\detalg{} (Policy Learning for Hybrid Resets, Deterministic Version)} \label{alg:det-bmdp-solver}
        \begin{algorithmic}[1]
            \Require Sampling access to emission $\emission(\cdot)$, policy class $\Pi = \Piopen$, parameter $\eps > 0$.             \State Initialize $\estlatentmdp = \varnothing$, test policies $\crl{\Pitest_h}_{h\in[H]} = \crl{\varnothing}_{h \in [H]}$, and confidence sets $\cP = \crl{\latentsp}_{(s,a) \in \latentsp\times \actionsp}$. 

            \For {all $(s,a) \in \latentsp \times \actionsp$} \hfill \algcomment{Estimate all rewards.}
            \State Estimate $\estlatr(s, a)$ via Monte Carlo to precision $\eps/H^2$.\label{line:reward-est} 
            \EndFor
            \State Initialize current layer index $\ell \gets H$.
            \While {$\ell \ne 0$}
            \State \textbf{If} $\ell = H$ \textbf{then} go to line \ref*{line:refit}. 
            
            \For {all $(s_\ell, a_\ell) \in \latentsp_\ell \times \actionsp$} \hfill \algcomment{Construct transitions at layer $\ell$.}
            
            \State Set $\cP(s_\ell, a_\ell) \gets \detdecoder(s_\ell, a_\ell, \estlatentmdp, \cP, \Pitest_{\ell+1})$.\hfill \algcomment{Algorithm \ref{alg:decoder}}
            \State Set $\estlatp(s_\ell, a_\ell) \in \cP(s_\ell, a_\ell)$ arbitrarily. \label{line:est-transition}
            
            \hspace{-0.5em}\algcomment{Construct test policies and refit transitions.}
            \EndFor
            \State Set $(\ell_\mathsf{next}, \estlatentmdp, \cP, \crl{\Pitest_h}_{h\in[H]}) \gets \detrefit(\ell, \estlatentmdp, \cP, \crl{\Pitest_h}_{h\in[H]}, \eps)$. \label{line:refit}\hfill \algcomment{Algorithm \ref{alg:refit}} 
            \State Update current layer index $\ell \gets \ell_\mathsf{next}$.
            \EndWhile 
            \State \textbf{Return} $\estpi \gets \argmax_{\pi \in \Pi} \wh{V}^\pi(s_1)$.
        \end{algorithmic}
\end{algorithm}

\begin{algorithm}[h]
    \caption{\detdecoder{} (Decoder, Deterministic Version)}\label{alg:decoder} 
    \begin{algorithmic}[1]
        \Require Tuple $(s_{h}, a_{h})$, estimated MDP $\estlatentmdp$, confidence sets $\cP$, $\tauref$-valid test policies $\Pitest_{h+1}$.
        \State Sample an observation $x_{h+1} \sim P(\cdot  \mid s_{h}, a_{h})$. 
        \For {$(s,s') \in \cS_{h+1} \times \cS_{h+1}$} 
        \State Estimate $\vestarg{x_{h+1}}{\pi_{s, s'}} \gets \mcest(x_{h+1}, \pi_{s, s'}, \wt{O}(1/\tauref^2))$ to precision $\tauref/2$.  
        \EndFor  
        \State \textbf{Return} $\cP_\mathsf{out} \gets \cP(s_{h}, a_{h}) \cap \crl{s \in \cS_{h+1}: ~\forall s' \ne s,~\abs{\vestarg{x_{h+1}} {\pi_{s, s'}} - \wh{V}^{\pi_{s, s'}}(s)} \le 2\tauref}$.\label{line:decoderd-ret} 
    \end{algorithmic}
\end{algorithm}

\begin{algorithm}[ht] 
\caption{\detrefit{} (Refit, Deterministic Version)}\label{alg:refit}
        \addtolength{\abovedisplayskip}{-5pt}
        \addtolength{\belowdisplayskip}{-5pt}
    \begin{algorithmic}[1]
        \Require Layer $h$, estimated MDP $\estlatentmdp$, confidence sets $\cP$, test policies $\crl{\Pitest_h}_{h\in[H]}$, parameter $\eps > 0$. 
        \State Set tolerance $\tauref \coloneqq 2^5 \cdot \eps/H$.
        \For {\((s , s') \in \cS_h \times \cS_h\) } \hfill \algcomment{Compute candidate test policies at layer $h$}
        \State Let $\pi_{s,s'} \gets \argmax_{\pi \in \Pi} \abs{ \wh{V}^\pi(s) - \wh{V}^\pi(s') }$.\label{line:test-policy} 
        \State Estimate to precision $\eps/H$:\label{line:eval-policy}
        \begin{align*}
            \vestarg{s}{\pi_{s,s'}} \gets \mcest(\emission(s), \pi_{s,s'}, \wt{O}(H^2/\eps^2)) \quad \text{and} \quad  \vestarg{s'}{\pi_{s,s'}} \gets \mcest(\emission(s'), \pi_{s,s'}, \wt{O}(H^2/\eps^2)). 
        \end{align*} 
        \EndFor  
        \State Set \(\violations \gets \crl{(s,\pi) \text{~estimated in \pref{line:eval-policy} s.t.~} \abs{\vestarg{s}{\pi} - \wh{V}^\pi(s)} \ge \tauref -\eps/H}\). 
        \If { \(\violations = \varnothing\) } \hfill \algcomment{No violations found, so return test policies.}  
        \State Set $\Pitest_{h} = \cup_{s,s' \in \latentsp_h} \crl{\pi_{s,s'}}$, and  \textbf{return} $(h-1,\estlatentmdp, \cP, \crl{\Pitest_h}_{h\in[H]} )$. \label{line:great-success} 
        \Else  \hfill \algcomment{Refit transitions to handle violations}  
        \For { $(s, \pi) \in \violations$ } \label{line:violation-statement}
        \State Let $\tau = (\bar{s}_h = s, \cdots \bar{s}_H)$ be the sequence of states obtained by executing \(\pi\) from \(s\) in  $\estlatentmdp$. 
        \For {each \(\bar s \in \tau\)}
        \State Estimate $\vestarg{\bar{s}}{\pi} \gets \mcest(\bar{s}, \pi, \wt{O}(H^4/\eps^2))$ to precision $\eps/H^2$.\label{line:mc-additional}
        \EndFor
        \State \textbf{for} each \( \bar s \in \tau \) such that $\abs{ \vestarg{\bar{s}}{\pi} - \estlatr(\bar{s}, \pi) -  \vestarg{\estlatp(\bar{s}, \pi) } {\pi} } \ge 4\eps/H^2$:  
        \State \qquad Update $\cP(\bar{s}, \pi) \gets \cP(\bar{s}, \pi) ~\backslash~\estlatp(\bar{s}, \pi)$.\label{line:bad-state}           
        \EndFor 
        \State Reset $\estlatp(s, a) \in \cP(s, a)$ arbitrarily for all \((s, a)\) updated in \pref{line:bad-state}. \label{line:bad-state-2}
        \State \textbf{Return} $( \ell,\estlatentmdp, \cP,  \crl{\Pitest_h}_{h\in[H]} )$ where \( \ell \) is the max layer for which transitions were updated in \pref{line:bad-state-2}.\label{line:great-success-2}  
        \EndIf
    \end{algorithmic}
\end{algorithm}

Now, we present an algorithm \detalg{} (\pref{alg:det-bmdp-solver}), which achieves the following guarantee.

\begin{theorem}\label{thm:det-bmdp-solver-guarantee}
    Let $\eps, \delta \in (0,1)$ be given and suppose that \pref{ass:det-transitions} holds. Then, with probability at least $1-\delta$, \detalg{} (\pref{alg:det-bmdp-solver}) finds an $\eps$-optimal policy using 
    \begin{align*}
        \wt{O}\prn*{\frac{S^5A^2H^5}{\eps^2} \cdot \log\frac{1}{\delta}} \quad\text{samples.}
    \end{align*}
\end{theorem}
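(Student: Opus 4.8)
The plan is to show that the tabular latent MDP $\estlatentmdp$ maintained by \detalg{} is, upon termination, a policy emulator in the sense of \pref{eq:informal-emulator}: on a good event, $\abs{\wh{V}^\pi(s_1) - V^\pi(s_1)} = O(\eps)$ for every $\pi \in \Pi = \Piopen$, so that $\estpi = \argmax_{\pi \in \Pi}\wh{V}^\pi(s_1)$ is $O(\eps)$-optimal; rescaling the input precision by an absolute constant then yields exact $\eps$-optimality. Restricting to $\Piopen$ is without loss of generality under \pref{ass:det-transitions}: $\optpi(x)$ depends only on $\optdec(x)$, and because the latent transitions are deterministic there is an open-loop policy tracing the same latent trajectory from $s_1$, hence attaining the optimal value. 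I will condition throughout on a good event $\cE$ on which \emph{every} Monte Carlo call made during the run (reward estimates in \pref{alg:det-bmdp-solver}, value estimates in \pref{alg:decoder} and \pref{alg:refit}) is accurate to its stated precision. Each call uses $\wt{O}(\cdot)$ rollouts with the logarithmic factor set to $\log(N/\delta)$, where $N = \poly(S,A,H,\eps^{-1})$ is the (deterministic, on $\cE$) bound on the total number of calls established in the termination argument below; a Hoeffding bound plus a union bound then give $\Pr[\cE] \ge 1 - \delta$.

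\textbf{Invariants.} I would track three invariants as the while-loop runs. (I1) \emph{Confidence sets contain the truth:} $\optlatp(s,a) \in \cP(s,a)$ always. In \detdecoder{}, determinism makes $x_{h+1} \sim P(\cdot \mid s_h, a_h)$ decode with certainty to $s^\star := \optlatp(s_h,a_h)$, and the test policy $\pi_{s^\star,s'} \in \Pitest_{h+1}$ was certified accurate at $s^\star$ by the most recent \detrefit{} call; combining this with the MC precision $\tauref/2$ shows the decoder test passes for $s^\star$, which therefore survives. In \detrefit{}, a correct transition has Bellman residual at most $3\eps/H^2 < 4\eps/H^2$ (three Monte Carlo/reward errors), so it is never removed. (I2) \emph{Value-consistency of settled transitions:} for every transition $(s,a)$ at a layer $\ge \ell+1$ and every $\pi \in \Pi$, $\abs{\wh{V}^\pi(\estlatp(s,a)) - \wh{V}^\pi(\optlatp(s,a))} \le 3.5\tauref$. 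Indeed, for any survivor $s'' \in \cP(s,a)$, the decoder test ($\le 2\tauref$), the MC precision ($\tauref/2$), and test-policy validity at the next layer ($< \tauref$) give $\abs{\wh{V}^{\pi_{s'',s^\star}}(s'') - \wh{V}^{\pi_{s'',s^\star}}(s^\star)} \le 3.5\tauref$; since $\pi_{s'',s^\star}$ maximizes $\abs{\wh{V}^\pi(s'') - \wh{V}^\pi(s^\star)}$ over $\pi \in \Pi$ (and the relevant $\wh{V}$ has not changed since $\Pitest_{h+1}$ was built), the same bound holds for every $\pi$. (I3) \emph{Test-policy validity:} whenever \detdecoder{} runs at layer $\ell$, the current $\Pitest_{\ell+1}$ was produced by a \detrefit{} call that found no violations at layer $\ell+1$, and the transitions at layers $\ge \ell+1$ have not changed since, so $\abs{\wh{V}^{\pi_{t,t'}}(t) - V^{\pi_{t,t'}}(t)} < \tauref$ for the relevant states. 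This is preserved because a violation at layer $h$ only modifies transitions at layers in $[h, \ell_{\mathsf{next}}]$ and \detrefit{} returns $\ell_{\mathsf{next}}$, so the algorithm re-processes exactly the layers whose $\wh{V}$ and $\Pitest$ could have been affected.

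\textbf{From value-consistency to a global guarantee.} The crucial point is that (I2) yields a global accuracy bound \emph{without} error amplification, obtained by expanding along the \emph{true} $M$-trajectory rather than the estimated one. Fix $\pi$ and $s$, let $s = s^\star_{h(s)} \to s^\star_{h(s)+1} = \optlatp(s^\star_{h(s)},\pi) \to \cdots$ be the true latent trajectory of $\pi$, and set $v_k = \wh{V}^\pi(s^\star_k) - V^\pi(s^\star_k)$. Since $\wh{V}^\pi(s^\star_k) = \estlatr(s^\star_k,\pi) + \wh{V}^\pi(\estlatp(s^\star_k,\pi))$ and $V^\pi(s^\star_k) = \optlatr(s^\star_k,\pi) + V^\pi(s^\star_{k+1})$, one gets the telescoping identity $v_k - v_{k+1} = [\estlatr - \optlatr](s^\star_k,\pi) + [\wh{V}^\pi(\estlatp(s^\star_k,\pi)) - \wh{V}^\pi(\optlatp(s^\star_k,\pi))]$; the first bracket is at most $\eps/H^2$ and the second is a value-consistency gap bounded by $3.5\tauref$ via (I2). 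Summing the $\le H$ steps with $\abs{v_H} \le \eps/H^2$ gives $\abs{\wh{V}^\pi(s) - V^\pi(s)} \le \eps/H + 3.5H\tauref = O(\eps)$ for all $\pi$ and $s$ simultaneously — note there is no recursion, since each term references the $\wh{V}$-gap of a single transition, and no union bound over $\Pi$. At termination ($\ell = 0$) every transition is settled, so this holds at $s_1$, and $V^{\estpi}(s_1) \ge \wh{V}^{\estpi}(s_1) - O(\eps) \ge \max_{\pi\in\Pi}\wh{V}^\pi(s_1) - O(\eps) \ge \max_{\pi\in\Pi}V^\pi(s_1) - O(\eps)$.

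\textbf{Termination and sample count.} Each \detrefit{} violation performs at least one valid removal: if $\abs{\vestarg{s}{\pi} - \wh{V}^\pi(s)} \ge \tauref - \eps/H$, then telescoping $u_k = V^\pi(\bar s_k) - \wh{V}^\pi(\bar s_k)$ along the \emph{estimated} trajectory forces some step with $\abs{u_k - u_{k+1}} \gtrsim \tauref/H \gg \eps/H^2$, which makes the Bellman-residual test exceed $4\eps/H^2$ and removes $\estlatp(\bar s_k,\pi) \ne \optlatp(\bar s_k,\pi)$ from $\cP(\bar s_k,\pi)$. Since confidence sets only shrink and have total size $O(S^2 A)$, there are $O(S^2 A)$ violation events, and between consecutive violations the layer index only decreases, so the loop runs for $O(S^2 A H)$ iterations. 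Per iteration: the initial pass plus $O(SA)$ \detdecoder{} calls, each doing $O(S^2)$ estimates of $\wt{O}(\tauref^{-2}) = \wt{O}(H^2\eps^{-2})$ rollouts; one \detrefit{} call doing $O(S^2)$ estimates of $\wt{O}(H^2\eps^{-2})$ rollouts for the test-policy checks and $O(S^2 H)$ estimates of $\wt{O}(H^4\eps^{-2})$ rollouts for violation handling; plus $O(SA)$ reward estimates of $\wt{O}(H^4\eps^{-2})$ rollouts. Multiplying through and absorbing the $\log(N/\delta)$ factors into $\wt{O}$ gives the stated $\wt{O}(S^5 A^2 H^5 \eps^{-2}\log\delta^{-1})$ bound (with room to spare in the exponents). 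The main obstacle is the interlocking of (I2) and (I3): one must argue that the test policies inherited from the previously-settled layer, although guaranteed accurate only \emph{for themselves}, still certify value-consistency of the freshly decoded transitions for \emph{all} policies, and that this layerwise property telescopes along true trajectories with no amplification; the rest is careful but routine accounting of Monte Carlo error and of which layers are settled after each backtrack.
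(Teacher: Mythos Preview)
Your approach is correct and essentially the same as the paper's. Your invariants (I1)--(I3) correspond to the paper's properties (B), (A)+Lemma~5(2), and (C), and your telescoping along the \emph{true} trajectory is exactly the unrolling of the paper's recursion \eqref{eq:bellman-error-main} that yields $\Gamma_h = C(H-h+1)\eps/H$. The termination argument (each violation removes a wrong transition from a confidence set of total size $O(S^2A)$) matches the paper's Lemma~6.

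There is one genuine bookkeeping slip in the sample count. You write ``per iteration: \dots{} $O(S^2H)$ estimates of $\wt{O}(H^4\eps^{-2})$ rollouts for violation handling'' and then ``multiply through'' by the $O(S^2AH)$ iteration bound. Taken literally this gives a violation-handling cost of $\wt{O}(S^4AH^6/\eps^2)$, which exceeds $\wt{O}(S^5A^2H^5/\eps^2)$ whenever $H>SA$, so the claimed ``room to spare'' is not there. The fix is the one you already set up but did not use at this step: violation handling only occurs in the $O(S^2A)$ iterations where \detrefit{} enters the else branch, so the total violation-handling cost is $O(S^2A)\cdot O(S^2H)\cdot\wt{O}(H^4/\eps^2)=\wt{O}(S^4AH^5/\eps^2)$, which is within budget. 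This is how the paper accounts for \pref{line:mc-additional}: it charges the extra Monte Carlo estimates to deletions from the confidence sets rather than to while-loop iterations. (Similarly, the ``$O(SA)$ reward estimates'' are done once at initialization, not per iteration; this is harmless but also not ``per iteration''.)
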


The proof of \pref{thm:det-bmdp-solver-guarantee} is found in \pref{app:proof-warmup}. In the rest of this section, we will explain \detalg{} and illustrate the main ideas.

\detalg{} is an inductive algorithm that works from layer $H$ down to layer $1$. It maintains an estimated latent MDP $\estlatentmdp$, which approximates the ground truth latent transitions and rewards, as well as two other objects: transition confidence sets $\cP$, which assigns a set of plausible next states to each state-action pair, and a set of $S^2$ many test policies $\Pitest$, which it uses to estimate the latent transitions. 
In the pseudocode and analysis, we use $\wh{V}^\pi(\cdot)$ and $\wh{Q}^\pi(\cdot, \cdot)$ to denote the value function and $Q$-function on the estimated $\estlatentmdp$. Furthermore, we let  $\optlatp(s,a)$ (resp.~$\estlatp$) denote the latent state which $(s,a)$ transitions to in $M$ (resp.~$\estlatentmdp$).

At every layer $h \in [H]$, \detalg{} tries to enforce three invariant properties: 
\begin{enumerate}[(A)]
    \item \emph{Policy Evaluation Accuracy.} For all pairs $(s,a) \in \latentsp_h \times \actionsp$ and $\pi \in \Piopen$: $\abs{Q^\pi(s,a) - \wh{Q}^\pi(s,a)} \le \Gamma_h$, where the error bound $\Gamma_h$ grows linearly with $H-h$.
    \item \emph{Confidence Set Validity.} For all pairs $(s,a)\in \latentsp_h \times \actionsp$, we have $\optlatp(s,a) \in \cP(s,a)$. %
    \item \emph{Test Policy Validity.} The $S^2$ many test policies for layer $h$, i.e. $\Pitest_h := \{\pi_{s,s'}\}_{s,s' \in \cS_h} \subseteq \Piopen$, are defined for pairs of states $s, s' \in \latentsp_h$ and are \emph{valid} (maximally distinguishing and accurate):
    \begin{align}
        \pi_{s,s'} = \argmax_{\pi \in \Pi_{h:H}}~ \abs{\wh{V}^\pi(s) - \wh{V}^\pi(s')}, \quad \text{and} \quad \max_{\bar{s} \in \{s,s'\}}~ |V^{\pi_{s,s'}}(\bar{s}) - \wh{V}^{\pi_{s,s'}}(\bar{s})| \leq \tauref. \label{eq:validity-main}
    \end{align}
    Crucially, the accuracy level $\tauref$ \emph{does not grow} with $H-h$.
\end{enumerate}

\paragraph{Error Decomposition.} To motivate these three properties, we first state a standard error decomposition for $Q$-functions, and then show how \detalg{} controls each of terms separately. In what follows, fix some tuple $(s,a)$. We denote $\optlatr = \optlatr(s,a)$ and $\optlatp = \optlatp(s,a)$, as well as the estimated counterparts $\estlatr, \estlatp$ similarly. The Bellman error for $(s,a)$ can be decomposed as follows:
\begin{align*}
    \abs*{Q^\pi(s,a) - \wh{Q}^\pi(s,a)} \le \underbrace{\abs*{\optlatr - \estlatr}}_{\text{reward error}} + \underbrace{\abs*{\wh{V}^\pi(\optlatp) - \wh{V}^\pi(\estlatp)}}_{\text{transition error}} + \underbrace{\abs*{V^\pi(\optlatp) - \wh{V}^\pi(\optlatp)}}_{\text{policy eval.~error at next layer}}. \numberthis \label{eq:bellman-error-main} 
\end{align*}
Controlling the reward error is easy: we can simply collect i.i.d.~samples using sampling access to $\emission$ to estimate $\estlatr$ up to $\eps$ accuracy (see \pref{line:reward-est}). Furthermore, if (A) holds at layer $h+1$, then we can bound the last term of Eq.~\eqref{eq:bellman-error-main} by $\Gamma_{h+1}$. Controlling the transition error requires more work, since the learner only gets to see observations $x_{\mathsf{new}}\sim P(\cdot\mid{}s,a)$, but not the latent state $\phi(x_{\mathsf{new}})$.

\paragraph{Decoding via Test Policies.} Our main insight is to estimate the latent state $\optdec(x_\mathsf{new})$ by using rollouts from $x_\mathsf{new}$ to compare value functions with other latent states. Denoting $\vestarg{x_\mathsf{new}}{\pi}$ to be a Monte-Carlo estimate of $V^\pi(x_\mathsf{new})$, if we find some $s' \in \latentsp_{h+1}$ such that 
\begin{align*}
    \vestarg{x_\mathsf{new}}{\pi} \approx \wh{V}^\pi(s'), \quad \text{for all}~\pi \in \Piopen, \numberthis\label{eq:policy-estimation-all}
\end{align*}
then we declare the latent state of $x_\mathsf{new}$ to be $s'$. 
This allows us to bypass the statistical hardness of learning the decoder function $\optdec$ itself,
but, unfortunately, estimating $V^\pi(x_\mathsf{new})$ \emph{for all} $\pi \in \Piopen$ seems to require number of samples proportional to $\spancap(\Piopen) = A^H$ \cite{jia2023agnostic}. In other words, there is nothing better than just executing each policy one-by-one. However, in our algorithm, the test policies $\Pitest$ allow us to circumvent this. In \detdecoder{} (\pref{alg:decoder}), we use $\Pitest$ to run a ``tournament'' with only $S^2$ Monte Carlo rollouts from $x_\mathsf{new}$ to estimate the confidence set $\cP$ of plausible latent states. In \pref{line:decoderd-ret} of \detdecoder{}, the confidence set is updated to be 
\begin{align*}
    \cP(s, a) \gets \cP(s, a) \cap \Big\{s \in \cS_{h+1}: ~\forall s' \ne s,~\abs{ \vestarg{x_\mathsf{new}} {\pi_{s, s'}} - \wh{V}^{\pi_{s, s'}}(s)} \lesssim \tauref \Big\}. \numberthis\label{eq:confidence-set-warmup} 
\end{align*}
We show in \pref{lem:controlling-transition-error} that test policy validity (C) at layer $h+1$ implies that the confidence set \eqref{eq:confidence-set-warmup} is valid (B) for layer $h$ and furthermore, setting the transition to be any $\estlatp \in \cP$ allows us to extrapolate to statement \eqref{eq:policy-estimation-all}, thus giving us a bound on the transition error. As we have shown a bound for all three terms in Eq.~\eqref{eq:bellman-error-main}, we conclude that (A) also holds at layer $h$. %

\paragraph{Refitting Latent Dynamics.} \detrefit{} (\pref{alg:refit}) computes test policies for layer $h$ that satisfy (C) after we have estimated the transitions/rewards. It does so by solving the maximally distinguishing planning problem (\eqref{eq:validity-main}, left) in $\estlatentmdp$ for each $s,s'\in \cS_h$. Since (A) holds at layer $h$, these policies are guaranteed to be accurate; however, test policies are required to satisfy a higher level of accuracy $\tauref \ll \Gamma_{h}$ which \emph{does not increase with the horizon}. To provide intuition on why the higher level of accuracy is required for the test policies, we refer the reader to \pref{fig:error-amplification}.

Fortunately, since there are only $S^2$ test policies we can use Monte Carlo rollouts to check whether they are $\tauref$-accurate. If they are, we simply decrement to layer $h-1$ and continue (\pref{line:great-success}). If not, the rollouts will find a ``certificate of inaccuracy'': some tuple $(s, \pi)$ for which $\abs{\wh{V}^\pi(s) - V^\pi(s)}$ is large, which we can use to find and delete an erroneous transition in $\estlatp$ from a confidence set. Since this update can occur at some layer $\ell \gg h$, $\estlatentmdp$ may no longer satisfy the inductive hypotheses, so \detrefit{} restarts the outer loop of \detalg{} at the maximum layer $\ell$ for which some transition was updated (\pref{line:great-success-2}). Critically, we show in \pref{lem:refitting} that refitting never deletes the true $P_{\mathsf{lat}}$, so revisiting only happens $SA\cdot(S-1)$ times.

\begin{figure}[t]
    \centering
\includegraphics[scale=0.32, trim={0cm 18.5cm 32cm 0.1cm}, clip]{./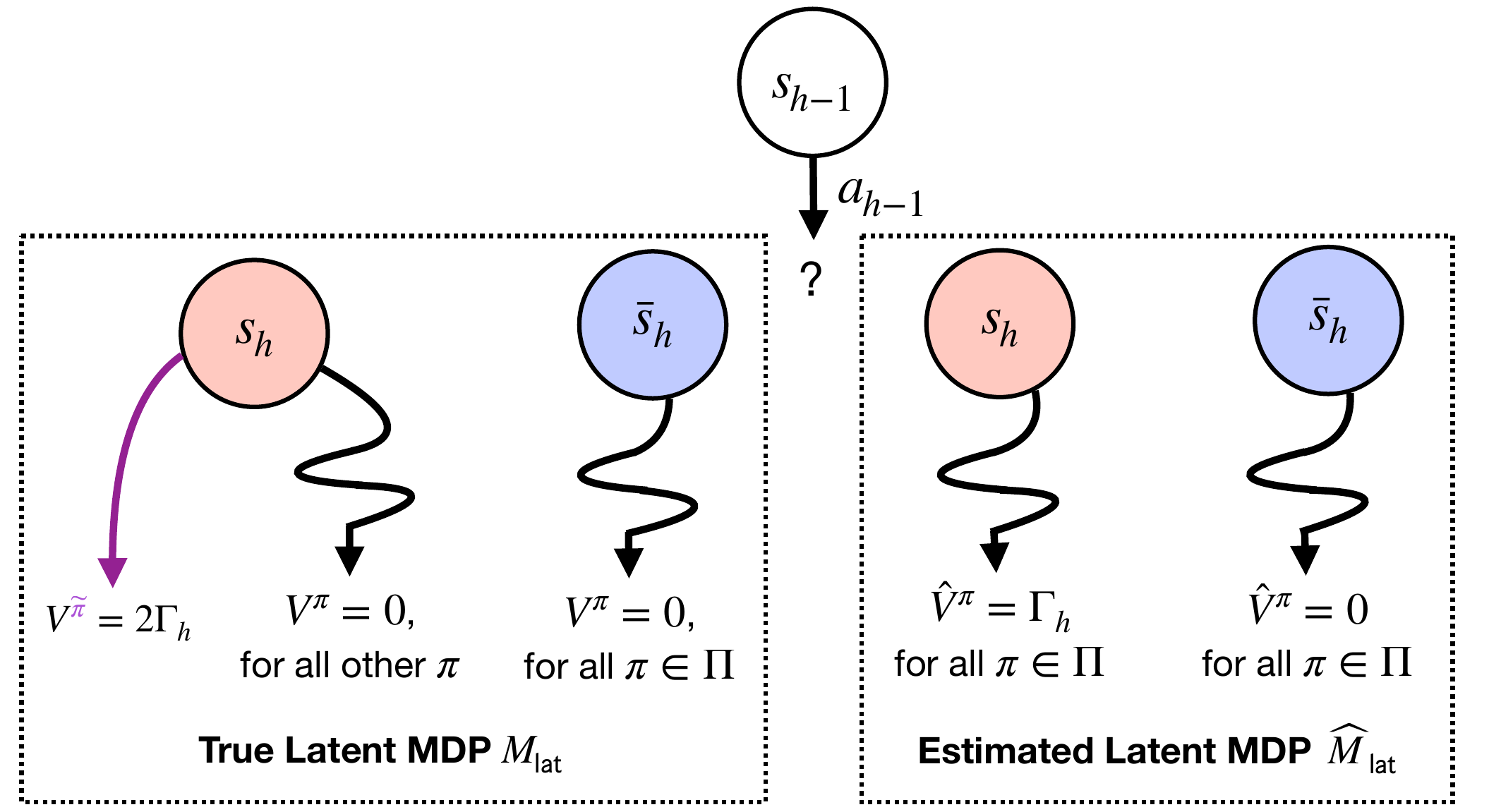}
\caption{Illustration of how certifying accuracy of test policies prevents error amplification. Suppose we want to learn the transition $\optlatp(s_{h-1}, a_{h-1}) = s_h$. In $M_\mathsf{lat}$, all policies get value 0 from both $s_h$ and $\bar{s}_h$, with the exception of a special $\color{Purple}{\wt{\pi}}$ that gets value $2\Gamma_h$ from $s_h$; in $\estlatentmdp$ all policies get value $\Gamma_h$ from $s_h$ and value 0 from $\bar{s}_h$. Thus, $\estlatentmdp$ satisfies $(A)$ but any test policy $\pi_{s_h, \bar{s}_h} \in \Pi$ will not satisfy (C). It is unlikely that $\pi_{s_h, \bar{s}_h} = \color{Purple}{\wt{\pi}}$ is selected, and if we execute any other $\pi$ from the true transition $s_h$, we will observe value $0$, and thus decode the transition to $\estlatp(s_{h-1}, a_{h-1}) = \bar{s}_h$. Therefore, $\abs{Q^{\pi}(s_{h-1}, a_{h-1}) - \wh{Q}^\pi(s_{h-1}, a_{h-1})} = 2\Gamma_{h}$, thus \emph{doubling} the policy evaluation error from layer $h$ to $h-1$. Unchecked, this could cause exponential (in $H$) error amplification. Certifying test policy accuracy prevents this, as \detrefit{} would detect the violation $\abs{V^\pi(s_{h}) -\wh{V}^\pi(s_{h})} = \Gamma_{h} \gg \tauref$ for any $\pi \in \Pi$ and refit $\estlatentmdp$ instead.}
\label{fig:error-amplification}
\end{figure}

\paragraph{Performance of Estimated Policy.} Eventually, \detalg{} will terminate at layer $h=1$. Thanks to (A), we can evaluate all $\pi \in \Piopen$ on the fully constructed $\estlatentmdp$ and return the policy $\estpi$ which achieves the highest value. The inductive argument we have outlined shows that $\estpi$ is an $\eps$-optimal policy and that \detalg{} uses $\poly(S,A,H, \eps^{-1})$ samples.

\arxiv{
    \section{\stochalg{}: Algorithm and Main Results}\arxiv{\label{sec:main-upper-bound}}%
In this section, we extend our result in \pref{thm:det-bmdp-solver-guarantee} to handle the general setting. We give our main algorithm, \stochalg{}, which takes inspiration from \detalg{}. We show how \stochalg{} leverages hybrid resets to solve agnostic policy learning, with sample complexity that scales with the pushforward concentrability $\cpush$ of the reset distribution $\mu$, a measure of the intrinsic difficulty of exploration.

First, we restate our main result of \pref{thm:block-mdp-result} with the precise dependence on the problem parameters. 

\begin{reptheorem}{thm:block-mdp-result} 
    Let $M$ be a Block MDP of horizon $H$ with $S$ states and $A$ actions, and let $\Pi$ be any policy class. Suppose we are given an exploratory reset distribution $\mu = \crl{\mu_h}_{h=1}^H$ which satisfies pushforward concentrability with parameter $\cpush$ and can be factorized as $\mu_h = \emission \circ \nu_h$ for some $\nu_h \in \Delta(\latentsp_h)$ for all $h\in[H]$. With probability at least $1-\delta$, the {\normalfont \stochalg{}} algorithm (\pref{alg:stochastic-bmdp-solver-v2}) returns an $\eps$-optimal policy using
\begin{align*}
    \frac{\cpush^4 S^{24} A^{30} H^{39} }{\eps^{18}} \cdot \mathrm{polylog} \prn*{\cpush, S, A, H, \abs{\Pi}, \eps^{-1}, \delta^{-1}} \quad \text{samples from hybrid resets.}
\end{align*}
\end{reptheorem}

The proof is deferred to \pref{app:main-upper-bound-proofs}. In the rest of this section, we discuss the main aspects of \stochalg{} and provide intuition for how it addresses new technical challenges once we relax \pref{ass:det-transitions}.

\subsection{Algorithm Overview}
We now present an overview of \stochalg{}, whose pseudocode can be found in \pref{alg:stochastic-bmdp-solver-v2}. Similar to \detalg{}, it uses two subroutines: \stochdecoder, found in \pref{alg:stochastic-decoder-v2}, and \stochrefit, found in \pref{alg:stochastic-refit-v2}. Overall, \stochalg{} has a similar structure to \detalg{}, but it requires several new ideas to address several  challenges to circumvent needing \pref{ass:det-transitions}:  

\begin{itemize}
    \item Under \pref{ass:det-transitions}, the learner had sampling access to the emission function $\emission$; as a consequence, we could construct an estimate of the latent MDP $\estlatentmdp$ which was defined over the latent state space $\latentsp$. Sampling access to $\emission$ was crucial since it allowed us to disambiguate observations. If the learner only has access to the reset distribution $\mu$, it is nontrivial even to estimate the latent reward function $\optlatr$, since we cannot access the decoder for observations $x \sim \mu$.
    \item In \detalg{}, even though we were supplied a policy class $\Pi$, we could instead use the open-loop policy class $\Pi_\mathsf{open}$ as a proxy, since we were guaranteed that $\max_{\pi \in \Pi_\mathsf{open}} V^\pi \ge \max_{\pi \in \Pi} V^\pi$. If the MDP has stochastic latent transitions, $\Pi_\mathsf{open}$ might not contain any good policy. Thus, we need to directly evaluate the given policies $\pi \in \Pi$ in order to solve the agnostic policy learning problem. 
\end{itemize}

\paragraph{Policy Emulators.} To address these challenges, we take the more straightforward approach: instead of trying to construct latent transitions/rewards, \emph{we directly construct an MDP $\estmdp$ over observations}. The MDP $\estmdp$ has a restricted state space $\estmdpobsspace{} \subseteq \cX$ but inherits the same action space $\cA$ and horizon $H$. Unlike the standard approach taken in tabular RL, we cannot hope to approximate the dynamics of the true MDP $M$ in an information theoretic sense, as the transition $P(\cdot \mid x,a)$ is an $\abs{\statesp}$-dimensional object (requiring $\Omega(\abs{\cX})$ samples to estimate). Taking a step back, all we need is that $\estmdp$ enables accurate policy evaluation, i.e., denoting $\wh{V}^\pi$ to be the value function of $\pi$ on $\estmdp$, we have $\max_{\pi \in \Pi}~\abs{V^\pi - \wh{V}^\pi} \le \eps$. In this sense $\estmdp$ is a ``minimal object'' which allows us to emulate the values of all policies $\pi \in \Pi$. This is formalized in the following definition.\footnote{Similar terminology of an \emph{emulator} is defined in \cite{golowich2024exploring}. Their definition formalizes what it means for estimated transitions to approximate certain Bellman backup operations, and is tailored to linear MDPs.} In the sequel, we denote $\estmdpobsspace{h}$ and $\estmdpobsspace{h:H}$ to be the restriction of the state space of $\estmdp$ to the given layer(s).  

\begin{definition}[Policy Emulator]\label{def:policy-emulator} Let $\Pi$ be a policy class and $M$ be an MDP. Fix any $\nu \in \Delta(\cX)$. We say $\estmdp$ is an $\eps$-accurate \emph{policy emulator} for $\nu$ if there exists $\wh{\nu} \in \Delta(\estmdpobsspace{})$ such that:
\begin{align*}
    \max_{\pi \in \Pi}~\abs*{\En_{x \sim \nu} \brk{V^\pi(x)} - \En_{x \sim \wh{\nu}} \brk{\wh{V}^\pi(x)} } \le \eps.  
\end{align*}
\end{definition}

\pref{def:policy-emulator} naturally extends the concept of \emph{uniform convergence} \cite{shalev2014understanding} to the interactive setting of policy learning. Clearly, if $\estmdp$ is an $\eps$-accurate policy emulator for the starting distribution $d_1$, we can find an $O(\eps)$-optimal policy. One inspiration for \pref{def:policy-emulator} is the Trajectory Tree algorithm \cite{kearns1999approximate}, which can be viewed as a way to use local simulator access to build a policy emulator with $\abs{\estmdpobsspace{}} = \wt{O}(H \spancap(\Pi)/\eps^2)$ states, requiring sample complexity scaling with the worst-case notion of complexity $\spancap(\Pi)$ \cite{jia2023agnostic}.

In contrast, \stochalg{} utilizes the reset distribution $\mu$ to construct a policy emulator with state space and sample complexity scaling with the instance-dependent notion of complexity $\cpush$. We do this in an inductive fashion, working back from layer $H$ to layer 1.
\begin{itemize}
    \item At every layer $h$, we sample $\poly(\cpush, S, A, H, \eps^{-1}, \log \abs{\Pi})$ states from $\mu_h$ to form the policy emulator's state space $\estmdpobsspace{h}$. The rewards of every tuple $(x_h,a_h) \in \estmdpobsspace{h} \times \actionsp$ are estimated via the local simulator.
    \item Once the transitions of $\estmdp$ has been constructed from layer $h+1$ onward, we call \stochdecoder{} on every $(x_h, a_h) \in \estmdpobsspace{h} \times \actionsp$. \stochdecoder{} first samples a dataset $\cD$ of transitions from $P(\cdot \mid x_h, a_h)$ (in \pref{line:sample-decode-dataset}) and then performs Monte Carlo rollouts over observations in $\cD$ using test policies $\Pitest_{h+1}$ (in \pref{line:monte-carlo}). In contrast with \detalg{}, since \stochalg{} directly works in observation space, the test policies are defined for pair of observations $x, x' \in \estmdpobsspace{h+1}$, not pairs of latent states. \stochdecoder{} estimates a transition function $\estp(\cdot \mid x_h, a_h) \in \Delta(\estmdpobsspace{h+1})$ as well as a confidence set $\cP(x_h, a_h) \subseteq \Delta(\estmdpobsspace{h+1})$.
    \item After transitions at layer $h$ are constructed, we call \stochrefit{} which tries to compute accurate test policies $\Pitest_h$ for layer $h$. If \stochrefit{} succeeds, then \stochalg{} continues the decoding/refitting loop at layer $h-1$. Otherwise, \stochrefit{} searches in the policy emulator $\estmdp$ for an inaccurate transition $\estp(\cdot| \bar{x}, \bar{a})$ and updates it. The layer index $\ell$ is set to the maximum layer for which an $(\bar{x}, \bar{a})$ is updated, and \stochalg{} restarts at that layer $\ell$. 
\end{itemize}
Eventually, \stochalg{} will reach layer 1, giving a fully-constructed policy emulator $\estmdp$. Returning the best policy in $\estmdp$ is guaranteed to be a near-optimal policy for the true MDP $M$.

\begin{algorithm}[t]
    \caption{\stochalg{} (Policy Learning for Hybrid Resets)}\label{alg:stochastic-bmdp-solver-v2}
        \begin{algorithmic}[1]
            \Require Reset distributions $\mu = \crl{\mu_h}_{h \in [H]}$, policy class $\Pi$, parameters $\eps >0$ and $\delta \in (0,1)$.
            \State Initalize policy emulator $\estmdp = \varnothing$, test policies $\crl{\Pitest_h}_{h\in[H]} = \crl{\varnothing}_{h\in[H]}$, transition confidence sets $\cP = \varnothing$.
            \State Set $\nreset \asymp \tfrac{\cpush SA^2}{\eps^3} \cdot \log \tfrac{SA \abs{\Pi}}{\delta} $.
            
            \For {$h = 1, \cdots, H$} \hfill \algcomment{Initialize policy emulator} \label{line:init-start} 
            \State Sample $\nreset$ observations from $\mu_h$ and add to $\estmdpobsspace{h}$. \label{line:sampling-from-reset}
            \For {every $(\empobs_h,a_h) \in \estmdpobsspace{h} \times \actionsp$}
            \State Estimate $\wh{R}(\empobs_h, a_h) \gets \mcest(x_h, a_h, \wt{O}(H^2/\eps^2))$.\label{line:reward-estimation}
            \State Initialize $\cP(\empobs_h, a_h) = \Delta(\estmdpobsspace{h+1})$.\label{line:init-end}
            \EndFor
            \EndFor
            \State Set current layer index $\ell \gets H$.
            \While {$\ell \ne 0$}
            
            \State \textbf{If} $\ell=H$: \textbf{go to line \ref*{line:refit-v2}.}

            \hspace{-0.5em}\algcomment{Construct transitions at layer $\ell$}

            \For {each $(\empobs_\ell, a_\ell) \in \estmdpobsspace{\ell} \times \cA$} 
            \State Set $\cP(\empobs_\ell, a_\ell) \gets \stochdecoder((\empobs_\ell, a_\ell), \estmdp, \cP, \Pitest_{\ell+1}, \eps, \delta)$ \hfill \algcomment{See \pref{alg:stochastic-decoder-v2}}
            \State Set $\estp(\cdot \mid \empobs_\ell, a_\ell) \in \cP(\empobs_\ell, a_\ell)$ arbitrarily.\label{line:est-transition-v2}
            \EndFor

            \hspace{-0.5em}\algcomment{Construct test policies and refit transitions.}

            \State Set $(\ell_\mathsf{next}, \estmdp, \crl{\Pitest_h}_{h\in[H]}, \cP) \gets \stochrefit(\ell, \estmdp, \cP, \crl{\Pitest_h}_{h\in[H]}, \eps, \delta)$. \label{line:refit-v2} \hfill \algcomment{See \pref{alg:stochastic-refit-v2}}
            \State Update current layer index $\ell \gets \ell_\mathsf{next}$.
            \EndWhile
            \State \textbf{Return} $\estpi \gets  \argmax_{\pi \in \Pi} \En_{x_1 \sim \unif(\estmdpobsspace{1})} \brk{\wh{V}^\pi(x_1)}$.
        \end{algorithmic}
\end{algorithm}

\subsection{\stochdecoder{} Subroutine}\label{sec:stochdecoder}

In this section, we explain \stochdecoder{}, which for a given $(x_h,a_h)$ pair computes a confidence set of transitions $\cP(x_h,a_h)$ over the policy emulator states in the next layer $\estmdpobsspace{h+1}$. The main salient difference with \detdecoder{} is that  we now adopt a more sophisticated confidence set construction to ensure that arbitrary policies $\pi \in \Pi$ can be emulated by $\estmdp$. 

\begin{algorithm}[h]
    \caption{\stochdecoder}\label{alg:stochastic-decoder-v2}
        \begin{algorithmic}[1]
            \addtolength{\abovedisplayskip}{-5pt}
            \addtolength{\belowdisplayskip}{-5pt}
            \Require Tuple~$(x_h, a_h)$, policy emulator $\estmdp$, confidence sets $\cP$, $\taudec$-valid test policies $\Pitest_{h+1}$, parameters $\eps >0$, $\delta \in (0,1)$. 
            \State Set $\ndec \asymp \tfrac{S^2 A^2}{\eps^2} \cdot \log \tfrac{\cpush SAH \abs{\Pi}}{\eps \delta}$, $\nmc \asymp \tfrac{1}{\eps^2} \cdot \log \tfrac{\cpush SAH \abs{\Pi}}{\eps \delta}$.
            \State Sample dataset of $\ndec$ observations $\cD \sim P(\cdot \mid x_h, a_h)$. \label{line:sample-decode-dataset}
            \For {every $x^{(i)} \in \cD$}
            \hfill \algcomment{Individually decode every observation}
            \For {every $(\empobs, \empobs') \in \estmdpobsspace{h+1} \times \estmdpobsspace{h+1}$}:
            \State Estimate $\vestarg{x^{(i)}}{\pitest{\empobs}{\empobs'}} \gets \mcest(x^{(i)}, \pitest{\empobs}{\empobs'}, \nmc)$. \label{line:monte-carlo} 
            \EndFor
            \State Define:
            \begin{align*}
                \cT[x^{(i)}] \gets \crl*{\empobs \in \estmdpobsspace{h+1}: ~\forall \empobs' \ne \empobs,~\abs*{\vestarg{x^{(i)}}{\pitest{\empobs}{\empobs'}} - \wh{V}^{\pitest{\empobs}{\empobs'}}(\empobs) } \le \taudec + 2\eps}. 
            \end{align*}
            \EndFor 
            \State Define $\gobs$ as the decoder graph with \hfill \algcomment{See \pref{def:decoder-graph-obs}}  
            \begin{align*} 
                \cXL \coloneqq \cD, \quad \cXR \coloneqq \estmdpobsspace{h+1}, \quad \text{and decoder function}~\cT.
            \end{align*}
            \State \textbf{Return}: $\cP$ defined using Eq.~\pref{eq:confidence-set-construction-v2}.
        \end{algorithmic}
\end{algorithm}

We first introduce an intermediate object, called the decoder graph.

\begin{definition}[Decoder Graph]\label{def:decoder-graph-obs} Let $\cXL,\cXR \subseteq \cX$, and let $\cT: \cXL \mapsto 2^{\cXR}$ be a decoder function. The \emph{decoder graph}, denoted $\gobs$, is defined as the bipartite graph with vertices $V = \cXL \cup \cXR$ and edges $E = \crl{(x_l, \empobs_r): x_l \in \cXL, \empobs_r \in \cT[x_l]}$. 
\end{definition}
In words, the decoder graph $\gobs$ draws an edge from every observation $x_l$ sampled from the transition to observations $x_r$ sampled from the reset if the value functions for all test policies are similar. Thus, the decoder graph $\gobs$ summarizes the similarity information encoded by individually decoding each observation.

The other ingredient is a notion of \emph{pushforward distribution}, which, when supplied a distribution over observations, collapses a policy $\pi$ to a distribution over actions.

\begin{definition}[Pushforward Distribution/Policy]\label{def:pushforward-policy} Let $\nu \in \Delta(\statesp)$ be a distribution over observations. For any policy $\pi: \statesp \to \Delta(\actionsp)$, define the \emph{pushforward distribution}, denoted $\pi \push \nu \in \Delta(\cA)$, as 
\begin{align*}
    \brk*{\pi \push \nu}(a) \coloneqq \En_{x \sim \nu} \brk*{\ind{\pi(x) = a} } \quad \text{for all}~a \in \cA.
\end{align*}
For any $\pi \in \Pi$, the emission $\emission: \cS \to \Delta(\cX)$ induces a pushforward distribution; we slightly abuse notation and call the function $\pi \push \emission: \cS \to \Delta(\cA)$ the \emph{pushforward policy}. 
\end{definition}

\paragraph{Confidence Set Construction.}
Now we are ready to specify the confidence set construction of \stochdecoder{}. Denote $\crl{\cc_j}_{j \ge 1}$ to be the connected components of $\gobs$. For any $\cc \in \crl{\cc_j}_{j \ge 1}$, denote $\ccL \subseteq \cXL$ and $\ccR \subseteq \cXR$ to be the left/right observation sets respectively. In what follows, we use $p(\cdot \mid \ccR)$ to denote the conditional distribution over $\ccR$, i.e., $p(\empobs \mid \ccR) = p(\empobs)/p(\ccR) \cdot \ind{\empobs \in \ccR}$. Given a decoder graph $\gobs$ and input confidence set $ \cP(x_h, a_h)$, the updated confidence set is defined for $\beta \coloneqq \wt{O}\prn{\prn{\sqrt{SA^2} + S}\eps}$ as 
\begin{align*}
    \cP \coloneqq \Big \{p \in \cP(x_h, a_h): ~ &\sum_{\cc \in \crl{\cc_j}} \abs*{p(\ccR) - \frac{\abs{\ccL}}{\abs{\cXL}} } \le 3\eps, \quad \max_{\pi \in \Pi}~\sum_{\cc \in \crl{\cc_j}} \frac{\abs{\ccL}}{\abs{\cXL}} \cdot \nrm*{\pi \push \unif(\ccL) - \pi \push p(\cdot \mid \ccR)}_1 \le \beta \Big\}. \numberthis \label{eq:confidence-set-construction-v2} 
\end{align*}
\paragraph{Intuition for \eqref{eq:confidence-set-construction-v2}.} We give some intuition for the construction in \eqref{eq:confidence-set-construction-v2}, and refer the reader to the example in \pref{fig:confidence-set}. The high level goal is to find a set of distributions $\cP(x_h, a_h)$ supported on $\estmdpobsspace{h+1}$ such that if we plug any $\estp \in \cP(x_h, a_h)$ into our policy emulator, the policy evaluation error is bounded, i.e., 
\begin{align*}
    Q^\pi(x_h, a_h) \approx \wh{R}(x_h, a_h) + \En_{x' \sim \estp} \brk{ \wh{V}^\pi(x')}, \quad \text{for all} \quad \pi \in \Pi.
\end{align*}
In particular, we need every $\estp \in \cP$ to witness accurate policy emulation for the distribution $P = P(\cdot\mid x_h, a_h)$, so we require $\estp$ to satisfy a bound on:
\begin{align*}
    \max_{\pi \in \Pi}~ \abs*{ \En_{x' \sim P} \brk{V^\pi(x)} - \En_{x' \sim \estp}\brk{\wh{V}^\pi(x')} }.  \numberthis \label{eq:policy-emulation-transition} 
\end{align*}
Now we discuss how the constraints for $\cP$ control this policy emulation error for every $\wh{P} \in \cP$. Intuitively, the connected components $\crl{\cc_j}_{j \ge 1}$ of $\gobs$ represent a ``soft'' clustering of observations, since all observations in a given connected component $\cc \in \crl{\cc_j}_{j \ge 1}$ have similar $Q$-functions for every test policy. We further prove that this implies that the $Q$-functions are similar within $\cc$ for every $\pi \in \Pi$. Now we discuss the constraints.

\begin{figure}[!t]
    \centering
\includegraphics[scale=0.24, trim={0cm 18cm 1cm 0cm}, clip]{./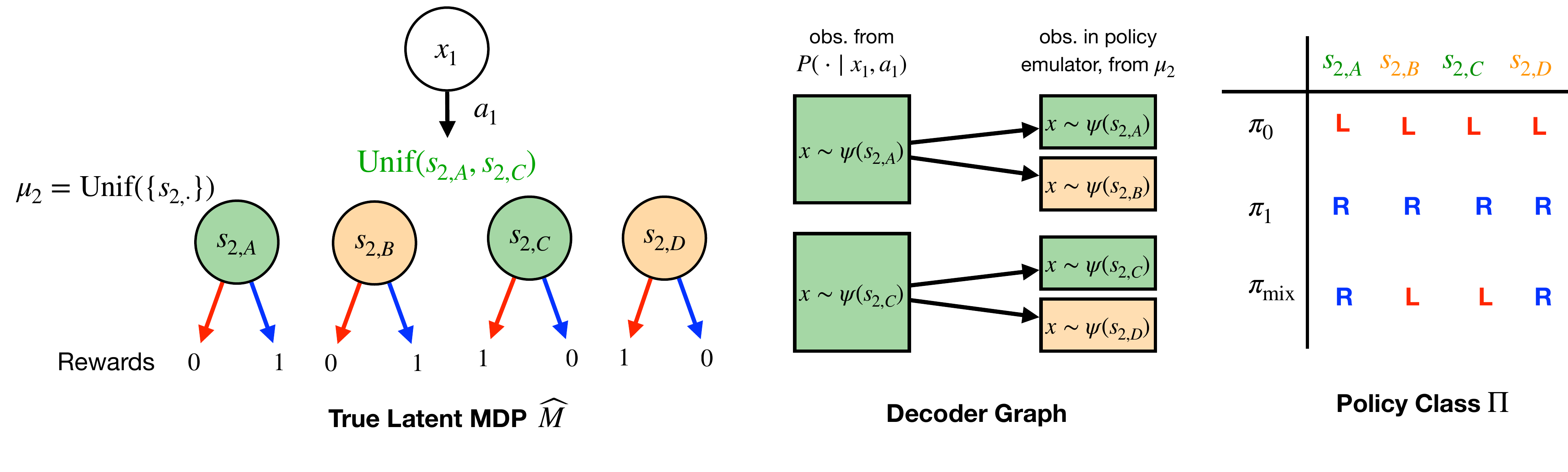} 
\caption{Confidence set construction example with $H=2$. At layer $2$, the MDP has 4 latent states, $s_{2,A}$, $s_{2,B}$, $s_{2,C}$, and $s_{2,D}$. Since $\mu_2$ has uniform mass, we sample representative observations from each latent state in our policy emulator $\estmdp$. Now consider using \stochdecoder{} to learn the transition $P(\cdot \mid x_1, a_1)$. We cannot disambiguate between observations from $s_{2,A}$ and $s_{2,B}$ via test policies (similarly for $s_{2,C}$ and $s_{2,D}$). Thus, the learned decoder graph $\gobs$ has the two connected components as shown. The \textbf{marginal constraint} enforces that every $\estp \in \cP$ must place half the mass on observations from $s_{2,A}$ and $s_{2,B}$ and the other half on observations from $s_{2,C}$ and $s_{2,D}$. This is enough to ensure that the policy evaluation error for $\pi_0$ and $\pi_1$ are controlled (cf.~Eq.~\eqref{eq:policy-emulation-transition}). However, it is not enough to ensure that policy evaluation error for $\pi_\mathrm{mix}$ is controlled, since $\pi_\mathrm{mix}$ is not constant over each connected component. As an example, consider the $\estp$ which puts uniform mass on the observations from $s_{2,B}$ and $s_{2,D}$ (the orange blocks). We have $Q^{\pi_\mathrm{mix}}(x_1, a_1) = 1$ while $\wh{Q}^{\pi_\mathrm{mix}}(x_1, a_1) = 0$. This explains why we need the \textbf{pushforward constraint}, which requires that the pushforward distribution of $\pi_\mathrm{mix}$ is matched on every connected component.} 
\label{fig:confidence-set}
\end{figure}

\begin{itemize}
    \item \textbf{Marginal Constraint}: The first condition expresses a TV distance constraint on the marginals over connected components: that is, the estimated distribution $\estp$ must place a similar amount of mass on each connected component as we observe in the samples from $P$. This ensures that for all $a \in \cA, \pi \in \Pi$:
    \begin{align*}
        \En_{x' \sim P} \brk{Q^\pi(x', a)} \approx \En_{x' \sim \estp}\brk{\wh{Q}^\pi(x', a)}.
    \end{align*}
    \item \textbf{Pushforward Constraint}: However, the marginal constraint is insufficient for accurate policy emulation because in general, policies in the given class $\Pi$ are \emph{not constant} over a given $\cc$. We give an example of this in \pref{fig:confidence-set}. To address this, we need to ensure that over each $\cc$, the pushforward distributions also match. This is precisely captured in an averaged sense by the second condition.
\end{itemize}
We show that the set of $\cP$ which satisfies both constraints yields a bound on the policy emulation error \eqref{eq:policy-emulation-transition}.

\paragraph{Technical Tool: Projected Measures.}
The technical challenge in establishing Eq.~\eqref{eq:policy-emulation-transition} is that the high-dimensional $P$ is supported on $\cX$, while we want to approximate it with $\estp$ supported on the states $\estmdpobsspace{h+1} \subseteq \cX$ of the policy emulator. To address this, we introduce a notion of \emph{projected measures} onto the state space $\estmdpobsspace{h+1}$, denoted $\projm{}: \Delta(\latentsp) \to \Delta(\estmdpobsspace{h+1})$ (see \pref{def:projected-measure} for a formal definition), which approximates $\emission \circ d$ for any distribution over latent states $d$. Using the triangle inequality on Eq.~\eqref{eq:policy-emulation-transition}, we can decompose the policy emulation error using the projected measure as an intermediary quantity: 
\begin{align*}
    \text{\eqref{eq:policy-emulation-transition}} &\le \underbrace{ \abs*{ \En_{x' \sim P} \brk{V^\pi(x')} - \En_{x' \sim \projm{}(\optlatp)} \brk{V^\pi(x')} } }_{\text{projection error}} ~+~ \underbrace{ \abs*{ \En_{x' \sim \projm{}(\optlatp)} \brk{V^\pi(x')}- \En_{x' \sim \projm{}(\optlatp) }\brk{\wh{V}^\pi(x')} } }_{\text{policy eval.~error at next layer}} \\
    &\qquad\qquad + ~ \underbrace{ \abs*{ \En_{x' \sim \projm{}(\optlatp)} \brk{\wh{V}^\pi(x')}- \En_{x' \sim \estp}\brk{\wh{V}^\pi(x')} } }_{\text{transition error}}
\end{align*}
This decomposition generalizes Eq.~\eqref{eq:confidence-set-warmup} to the stochastic BMDP setting. To obtain a bound on the projection error, we observe that pushforward concentrability implies that the observations sampled from $\mu$ are sufficiently representative of observations from the transition $P$, and therefore $\projm{}(\optlatp)$ approximates $P$ well. Similar to the analysis of \detdecoder{}, a bound on the policy evaluation error at the next layer can be shown via induction. Lastly, our analysis shows that the construction \eqref{eq:confidence-set-construction-v2} admits a bound on the transition error.

\begin{algorithm}[t]
    \caption{\stochrefit}\label{alg:stochastic-refit-v2}
        \begin{algorithmic}[1]
            \addtolength{\abovedisplayskip}{-5pt}
            \addtolength{\belowdisplayskip}{-5pt}
            \Require Layer $h$, policy emulator $\estmdp$, confidence sets $\cP$, test policies $\crl{\Pitest_h}_{h\in[H]}$, parameters $\eps > 0$ and $\delta \in (0,1)$.
            \State Set $\tauref \coloneqq 80 \cdot H \eps$, $\nmc \asymp \tfrac{1}{\eps^2} \cdot \log \tfrac{\cpush SAH \abs{\Pi}}{\eps \delta}$
            \For {every $(\empobs, \empobs') \in \estmdpobsspace{h} \times \estmdpobsspace{h}$}: \hfill \algcomment{Construct candidate test policies at layer $h$}
            \State Define $\pitest{\empobs}{\empobs'} \gets \argmax_{\pi \in \cA \circ \Pi_{h+1:H}} \abs{ \wh{V}^\pi(\empobs) - \wh{V}^\pi(\empobs') }$.
            
            \State Estimate:\hfill \algcomment{Verify accuracy of test policies}\label{line:mc1}
            \begin{align*}
                \vestarg{\empobs}{\pitest{\empobs}{\empobs'}} \gets \mcest(\empobs, \pitest{\empobs}{\empobs'}, \nmc), \quad \vestarg{\empobs'}{\pitest{\empobs}{\empobs'}} \gets \mcest(\empobs', \pitest{\empobs}{\empobs'}, \nmc)
            \end{align*}
            \EndFor
            \State Set \(\violations \gets \crl{(x,\pi) \text{~estimated in \pref{line:mc1} such that~} \abs{\vestarg{x}{\pi} - \wh{V}^\pi(x)} \ge \tauref }\). 
            \If { \(\violations = \varnothing\) } \hfill \algcomment{No violations found, so return test policies.}   
            \State Set $\Pitest_h = \cup_{x, x' \in \estmdpobsspace{h}} \crl{\pitest{\empobs}{\empobs'}}$ and \textbf{Return} $(h-1, \estmdp,\cP, \crl{\Pitest_h}_{h\in[H]})$.\label{line:great-success-stochastic}
            \Else \label{line:else-triggered} \hfill \algcomment{Refit transitions to handle violations}
            \For {every $(\empobs, \pi) \in \violations$}\label{line:for-every}
            \State \textbf{for} {each $(\bar{\empobs}, \bar{a}) \in \estmdpobsspace{h:H} \times \actionsp$}: Estimate $\qestarg{\bar{\empobs}, \bar{a}}{\pi} \gets \mcest(\bar{\empobs}, \bar{a} \circ \pi, \nmc)$. \label{line:mc2}
            \State Define for every $(\bar{\empobs}, \bar{a}) \in \estmdpobsspace{h:H} \times \actionsp$:
            \begin{align*}
                \Delta(\bar{\empobs}, \bar{a}) \coloneqq \wh{R}(\bar{\empobs},\bar{a}) + \En_{\empobs' \sim \estp(\cdot  \mid  \bar{\empobs},\bar{a})} \brk*{\qestarg{\empobs', \pi(\empobs')} {\pi} } - \qestarg{\bar{\empobs}, \bar{a}}{\pi}.
            \end{align*}
            \For {every $(\bar{\empobs}, \bar{a})$ such that $\abs{ \Delta(\bar{\empobs}, \bar{a}) } \ge \tauref/(8H)$}:\label{line:bad-obs-stochastic}

            \hspace{2.5em}\algcomment{Define loss vectors, overwriting if already defined.}

            \State Set $\ell_\mathrm{loss}(\bar{\empobs}, \bar{a}) \coloneqq \sign(\Delta(\bar{\empobs}, \bar{a})) \cdot \qestarg{\cdot , \pi(\cdot) }{\pi}\in [0,1]^{\cX_{h(\bar{\empobs})+1}[\estmdp]}$\label{line:loss-vector}
            \EndFor      
            \EndFor

            \hspace{-0.5em}\algcomment{OMD update with negative entropy Bregman Divergence on violations.}

            \State \textbf{for} every $(\bar{\empobs}, \bar{a})$ from  \pref{line:loss-vector}: Update 
            \begin{align*}
                \estp(\cdot  \mid  \bar{\empobs}, \bar{a}) \gets \argmin_{p \in \cP(\bar{\empobs}, \bar{a})}~ \tri*{p, \ell_\mathrm{loss}\prn*{\bar{\empobs}, \bar{a} } } + \frac{1}{\eps} \cdot D_\mathsf{ne}\prn*{p~\Vert~  \estp(\cdot  \mid  \bar{\empobs}, \bar{a})}
            \end{align*}\label{line:loss-vector-obs}
            \State \textbf{Return} $(\ell, \estmdp,\cP, \crl{\Pitest_h}_{h\in[H]})$ where $\ell$ is the maximum layer s.t.~$(\bar{\empobs}, \bar{a}) \in \statesp_\ell \times \cA$ was updated in \pref{line:loss-vector-obs}. \label{line:great-success-stochastic-else}
            \EndIf
        \end{algorithmic}
\end{algorithm}

\subsection{\stochrefit{} Subroutine}

Now we discuss \stochrefit{}. The skeleton is the same as in \detrefit{}: once the transition functions for $\estmdp$ have been estimated for a given layer $h$, \stochrefit{} attempts to compute a set of valid test policies $\Pitest_h$ for pairs of observations (see \pref{def:valid-test-policy}). If it cannot, this implies that at least one transition that we previously estimated in layer $h$ onward must have been incorrectly estimated, and we search for it starting in \pref{line:mc2}. In this case, we revisit the maximum layer where some transition was updated and restart the decoding procedure.

\paragraph{OMD Regret as a Potential Function.} Our main innovation to control the number of refitting iterations is to design the right potential function. In \detalg{}, since we were working with deterministic transitions, we used the size of $\cP(s,a)$ as the potential function.  Since we are now estimating $\wh{P}(\cdot \mid x,a)$ in a continuous space, this idea does not extend. 

Instead, we use the regret of online mirror descent (OMD)  against the competitor vector $\projm{}(\optlatp(\cdot \mid x,a))$ as the potential function. We show that every transition in \pref{line:loss-vector-obs} witnesses constant regret with respect to $\projm{}(P(\cdot \mid x,a))$. In our analysis, we maintain the invariant property that $\cP$ is just big enough so that $\projm{}(P(\cdot \mid x,a)) \in \cP$ throughout the execution of \stochalg{}. Therefore, the standard analysis of OMD \cite[see, e.g.,][]{bubeck2011introduction} gives us an upper bound on the cumulative regret. Letting $T_\mathrm{refit}$ denote the number of updates on a given $(x,a)$ pair, we have
\begin{align*} 
    \eps \cdot T_\mathrm{refit} \lesssim \text{Regret of OMD} \lesssim \sqrt{\log \abs{\estmdpobsspace{}} \cdot T_\mathrm{refit}}.
\end{align*}
Rearranging, we get a bound on the number of updates $T_\mathrm{refit}$ for any $(x,a)$, and since the total number of states in the policy emulator $\estmdp$ is bounded, we get a bound on the total number of updates made by \stochrefit{}.

}

\section{Discussion}\label{sec:discussion}  
Our results show interesting trade-offs between representational conditions and environment access for achieving sample-efficient policy learning. When the environment access is either the generative model or $\mu$-resets, we show lower bounds which illustrate the challenge of agnostic policy learning in MDPs with large state spaces. On the positive side, we give a new algorithm PLHR which leverages hybrid resets to efficiently learn Block MDPs; this is accomplished via a new technical tool called the policy emulator. We highlight several open problems: 

\begin{itemize}
    \item \textit{Extending the Positive Result:} Can \pref{thm:block-mdp-result} be extended to more general settings? While we establish that policy emulators of bounded size exist for pushforward coverable MDPs (\pref{app:beyond-bmdp}), we do not know how to efficiently construct them. One natural class of problems to study is the low-rank MDP, which generalizes the Block MDP and also satisfies low (pushforward) coverability. An algorithm achieving $\poly(d)$ sample complexity would showcase the power of hybrid resets, as prior work \cite{sekhari2021agnostic} shows that $\exp(d)$ sample complexity is necessary and sufficient for agnostic RL in low-rank MDPs with just online access.  
    Another direction for improving \pref{thm:block-mdp-result} is replacing the dependence on pushforward concentrability with the smaller concentrability. Unfortunately, our guarantee for \stochalg{} breaks down because it uses pushforward concentrability to enable accurate policy emulation of the transitions from every state in the emulator.
    \item \textit{Benefits of Realizability:} Is it possible to achieve positive results for the $\mu$-reset model with policy realizability (thus directly improving upon \psdp{} and contrasting with our lower bound \pref{thm:lower-bound-policy-completeness})? This question can be viewed as the policy-based analogue of the question raised by \cite{mhammedi2024power} on whether it is possible to achieve sample-efficient learning with standard online access if one assumes only coverability and value function realizability ($Q^\star \in \cF$). 
\end{itemize} 

\subsection*{Acknowledgements}
We thank Dylan Foster, Sasha Rakhlin, Zeyu Jia, Cong Ma, Nathan Srebro, and Wen Sun for helpful conversations. AS acknowledges support from ARO through award W911NF-21-1-0328, as well as Simons Foundation and the NSF through award DMS-2031883.

\bibliographystyle{alpha} 
\newcommand{\etalchar}[1]{$^{#1}$}

\clearpage
\appendix

\section{Additional Related Works}\label{app:related-works}
\paragraph{Access Models in RL.} The $\mu$-reset access setting was introduced in \cite{kakade2002approximately, kakade2003sample}, and is widely studied in the policy learning literature \cite{agarwal2021theory, brukhim2022boosting, agarwal2023variance}. We refer the reader to Appendix A of \cite{mhammedi2024power} for an exemplary survey of related works on local/global simulators, both theoretical and empirical. As a summary, in terms of theory, the study of local simulator access has mostly focused on linear function approximation settings, where it is shown that state revisiting enables one to circumvent statistical lower bounds for online RL, or enables computationally efficient approaches which are not known to exist for online RL. Generative model (or global simulator) access has mostly been studied for tabular or linear settings.

\paragraph{Algorithms for Policy Learning.} We highlight several algorithms for policy learning in large state spaces. For abstract policy classes, the predominant approaches are Policy Search by Dynamic Programming (\psdp{}) \cite{bagnell2003policy} and Conservative Policy Iteration (\cpi{}) \cite{kakade2002approximately} (see also \cite{scherrer2014approximate, scherrer2014local}). In particular, \psdp{} is a backbone of many contemporary theoretical works in RL \cite[see e.g.,][]{misra2020kinematic, uchendu2023jump, amortila2024scalable, mhammedi2023representation, mhammedi2024efficient}. Both \psdp{} and \cpi{} operate under the $\mu$-reset setting, assume policy completeness, and achieve similar guarantees (see discussion in \pref{app:psdp}). The agnostic policy learning setting (where representational conditions such as policy completeness are \emph{not} assumed) was initiated by \cite{kearns1999approximate, kakade2003sample} and has recently received more attention in the papers \cite{sekhari2021agnostic, jia2023agnostic}.

Specializing to smoothly-parameterized policy classes $\Pi = \crl{\pi_\theta}_{\theta \in \Theta}$, many works have studied policy gradient methods such as REINFORCE \cite{williams1992simple}, Policy Gradient \cite{sutton1999policy}, and Natural Policy Gradient \cite{kakade2001natural}. Empirically this has given rise to state-of-the-art algorithms for policy optimization \cite{schulman2017trustregionpolicyoptimization, schulman2017proximal}. In terms of theory, a line of work studies policy gradient methods \cite{agarwal2020pc, zanette2021cautiously, liu2024optimistic, sherman2023rate} for the restricted setting of linear MDPs \cite{jin2020provably}, designing algorithms which do not require $\mu$-reset access (note that policy completeness is naturally satisfied for linear MDPs). Going beyond linear MDPs, the papers \cite{bhandari2024global, huang2024occupancy} study policy gradient methods but require $\mu$-reset access as well some type of completeness/closure assumptions for global 
optimality guarantees.

\paragraph{Coverage Conditions.} Coverage conditions have been extensively studied in RL. In offline RL, many works study the concentrability coefficient \cite{munos2003error, munos2008finite, chen2019information, foster2021offline, jia2024offline} as well as weaker notions such as single-policy concentrability \cite{jin2021pessimism, rashidinejad2021bridging}, conditions based on value-function approximation \cite{chen2019information, xie2021bellman}, and approximate notions for continuous dynamics \cite{song2024rich}. In addition, under the $\mu$-reset model, the standard assumption made is on bounded concentrability coefficient, sometimes called the \emph{distribution mismatch coefficient} \cite{agarwal2021theory}. More recently, \cite{xie2022role} introduced the notion of \emph{coverability coefficient} and study it for standard online RL access with value function approximation. Coverability (and the related pushforward variant) is further studied in the papers \cite{amortila2024scalable, amortila2024harnessing, amortila2024reinforcement, jia2023agnostic, mhammedi2024power}.

\paragraph{Block MDPs.} Block MDPs are a canonical model for studying reinforcement learning with large state spaces but low intrinsic complexity. In particular, Block MDPs are known to satisfy low (pushforward) coverability \cite{mhammedi2024power}, implying that reset distributions exist which satisfy low (pushforward) concentrability. They have been studied in a long line of work \cite{jiang2017contextual, du2019provably, misra2020kinematic, zhang2022efficient, uehara2021representation, mhammedi2023representation}. Recently, \cite{amortila2024reinforcement} study a more general setting of RL with latent dynamics which covers the Block MDP as a special case. A common theme among these works is that standard online access to $M$ is assumed, and the assumption of \emph{decoder realizability} is made, i.e., that the learner is given access to a class $\Phi$ such that $\optdec \in \Phi$, with the achievable bounds scaling with $\log \abs{\Phi}$. Under standard online access, a minimax lower bound of $\log \abs{\Phi}$ can be obtained by reduction to supervised learning. In contrast, our work studies how to achieve sample-efficient learning without decoder realizability but with \emph{stronger forms of access} to $M$. Our bounds replace the dependence on $\log \abs{\Phi}$ (which in the worst case can scale with $\abs{\statesp}$) with dependence on $\log \abs{\Pi}$, which can be arbitrarily smaller. 

\section{Background and Additional Results for \psdp}\label{app:psdp}

In this section, we provide a description of the \psdp{} algorithm and analyze its sample complexity. We show the standard upper bound for \psdp{} which has appeared in prior works \cite[e.g.,][]{misra2020kinematic} in \pref{app:psdp-policy-completeness}. We also prove several new results about \psdp{} when only policy realizability is satisfied: namely if the reset distribution $\mu$ satisfies stronger properties beyond bounded concentrability, we show exponential in $H$ upper bounds in \pref{app:psdp-upper} as well as a matching lower bound in \pref{app:psdp-lower}. We also discuss in \pref{app:psdp-lower} how our lower bounds against \psdp{} also apply against the \cpi{} algorithm, as claimed in the main text.

\subsection{\psdp{} Guarantee Under Policy Completeness}\label{app:psdp-policy-completeness}

First, we define an averaged notion of policy completeness; compared to \pref{def:policy-completeness}, this notion is weaker since it only requires completeness to hold in an averaged sense over the reset $\mu$.

For readability, we slightly abuse notation: for $Q$-functions we denote $Q^\pi_h(x,\pi) \coloneqq Q^\pi_h(x, \pi(x))$. Similarly, we sometimes denote rewards as $R(x,\pi) \coloneqq R(x, \pi(x))$ and transitions as $P(\cdot\mid x, \pi) \coloneqq P(\cdot\mid x, \pi(x))$. 

\begin{definition}[Average Policy Completeness]\label{def:averaged-pc} Fix any policy class $\Pi$, as well as exploratory distribution $\mu = \crl{\mu_h}_{h\in[H]}$. For any layer $h \in [H]$ and policy $\estpi \coloneqq \estpi_{h+1:H} \in \Pi_{h+1:H}$ we define the (average) \emph{policy completeness error}, denoted $\epspc: \Pi_{h+1:H} \to \bbR$, as
\begin{align*}
    \epspc(\estpi) \coloneqq \min_{\pi_h \in \Pi_h} ~ \En_{x \sim \mu_h} \brk*{\max_{a \in \cA}~ Q^{\estpi}(x, a) - Q^{\estpi}(x, \pi_h)}.
\end{align*}
\end{definition}
\pref{def:averaged-pc} is similar to previously defined notions of policy completeness \cite{scherrer2014local, agarwal2023variance}. As a point of comparison, Definition 2 of \cite{agarwal2023variance} defines the average policy completeness to be the worst case over the convex hull of suffix policies $\estpi$, i.e. $\epspc \coloneqq \sup_{\estpi \in \mathsf{Conv}(\Pi)} \epspc(\estpi)$, while we define it as a function which takes as input a rollout policy $\estpi$.

We state the \psdp{} algorithm in \pref{alg:psdp} and then prove \pref{thm:psdp-ub}.

\begin{algorithm}[!htp]
\caption{\psdp{} \cite{bagnell2003policy}}\label{alg:psdp}
	\begin{algorithmic}[1]
        \Require Reset distributions $\mu = \crl{\mu_h}_{h\in [H]}$, policy class $\Pi$.
        \For {$h = H,\cdots, 1$} 
            \State Initialize dataset $\cD_h = \varnothing$.
            \For {$n$ times}: \hfill \algcomment{Collecting $(x_h, a_h, v_h)$ requires $\mu$-reset access.}
            \State Sample $(x_h, a_h)$ where $x_h \sim \mu_{h}$ and $a_h \sim \unif(\cA)$.
            \State Let $v_h \coloneqq \sum_{h'=h}^H r_{h'}$ be the value of executing $a_h \circ \estpi_{h+1:H}$ from $x_h$.
            \State Set $\cD_h \gets \cD_h \cup \crl{(x_h, a_h, v_h)}$.
        \EndFor
        \State Call CB oracle: $\estpi_h \coloneqq \argmax_{\pi \in \Pi} \frac{1}{n} \sum_{(x_h, a_h, v_h)\in \cD_h} \frac{\ind{a_h = \pi(x_h)}}{A} \cdot v_h$. \label{line:cb-oracle}

        \EndFor
        \State \textbf{Return} $\estpi_{1:H}$.
	\end{algorithmic}
\end{algorithm} 

\begin{proof}[Proof of \pref{thm:psdp-ub}]
First, we state a standard generalization bound on the contextual bandit oracle invoked in \pref{line:cb-oracle}. With probability at least $1-\delta$, for every $h \in [H]$ the returned policy $\estpi_h$ satisfies
\begin{align*}
    \En_{x \sim \mu_h} \brk*{Q^{\estpi}(x, \estpi_h)} \ge \max_{\pi_h \in \Pi_h} \En_{x \sim \mu_h}\brk*{Q^{\estpi}(x, \pi_h)} - \epsstat, \quad \text{where}\quad \epsstat \coloneqq O\prn*{\sqrt{\frac{A\log (\abs{\Pi}/\delta)}{n} }}.\numberthis\label{eq:cb-eq}
\end{align*}
For every $h \in [H]$, let us define:
\begin{align*}
    \wt{\pi}_h^\star(x) \coloneqq \argmax_{a \in \cA}~ Q^{\estpi}(x,a), \quad \text{and} \quad \wt{\pi}_h \coloneqq \argmax_{\pi_h \in \Pi_h}~ \En_{x \sim \mu_h}\brk*{Q^{\estpi}(x, \pi_h)}
\end{align*}
Then we calculate:
\begin{align*}
    V^\star - V^{\estpi} &\overset{(i)}{=} \sum_{h=1}^H \En_{x \sim d^{\optpi}_h} \brk*{Q^{\estpi}(x, \optpi) - Q^{\estpi}(x, \estpi_h) } \\
    &\overset{(ii)}{\le} \sum_{h=1}^H  \En_{x \sim d^{\optpi}_h} \brk*{Q^{\estpi}(x, \wt{\pi}_h^\star) - Q^{\estpi}(x, \estpi_h) } \\
    &\overset{(iii)}{\le} \sum_{h=1}^H \nrm*{\frac{d^{\optpi}_h}{\mu_h}}_\infty \En_{x \sim \mu_h} \brk*{ Q^{\estpi}(x, \wt{\pi}_h^\star) - Q^{\estpi}(x, \estpi_h) } \\
    &\overset{(iv)}{\le} \cconc \cdot \sum_{h=1}^H \En_{x \sim \mu_h} \brk*{ Q^{\estpi}(x, \wt{\pi}_h^\star) - Q^{\estpi}(x, \estpi_h) }  \\
    &= \cconc \cdot \sum_{h=1}^H \prn*{ \En_{x \sim \mu_h} \brk*{ Q^{\estpi}(x, \wt{\pi}_h^\star) - Q^{\estpi}(x, \wt{\pi}_h) } + \En_{x \sim \mu_h} \brk*{ Q^{\estpi}(x, \wt{\pi}_h) - Q^{\estpi}(x, \estpi_h) } }\\
    &\overset{(v)}{\le} H \cconc \epsstat + \cconc \sum_{h=1}^H  \epspc(\estpi_{h+1:H}).
\end{align*}
Here, $(i)$ follows by the Performance Difference Lemma, $(ii)$ is due to the optimality of $\wt{\pi}^\star_h$, $(iii)$ is due to nonnegativity of $Q^{\estpi}(x, \wt{\pi}_h^\star) - Q^{\estpi}(x, \estpi_h)$, $(iv)$ is due to the definition of $\cconc$, and $(v)$ follows by \pref{def:averaged-pc} and Eq.~\eqref{eq:cb-eq}. Therefore, if the policy completeness error is zero, then we have a bound which is at most $H \cconc \epsstat$, and therefore \psdp{} returns an $\eps$-optimal policy using $\poly(\cconc, A, H, \log\abs{\Pi}, \eps^{-1}, \log \delta^{-1})$ samples.
\end{proof}

\subsection{Upper Bounds for \psdp{} with Policy Realizability}\label{app:psdp-upper}
As shown by the example in \pref{fig:psdp-lower-bound-simple}, without policy completeness, \psdp{} may not even be consistent, since one can take $\gamma$ to be arbitrarily close to 0 so that with constant probability \psdp{} returns a $(1+\gamma)$-suboptimal policy. In this section, we circumvent the lower bound and show that if we make stronger assumptions on the reset distribution $\mu$, \psdp{} achieves consistency:
\begin{enumerate}
    \item If $\Pi$ is realizable and the reset $\mu$ has bounded pushforward concentrability, \pref{thm:psdp-ub-pushforward} achieves $(\cpush)^{O(H)}$ sample complexity. 
    \item If $\Pi$ is realizable and the reset $\mu$ is admissible (\pref{def:admissible}) and has bounded concentrability, \pref{thm:psdp-ub-admissible} achieves $(\cconc)^{O(H)}$ sample complexity.
\end{enumerate}
The two upper bounds are in general incomparable, as there exist settings in which one achieves a better guarantee than the other. In addition, to the best of our knowledge, neither result is implied by any known results for policy learning---note that the trivial bound of $A^H$ achieved by importance sampling \cite{kearns1999approximate, agarwal2019reinforcement} can be much larger when $\cpush \ll A$.

\subsubsection{Policy Realizability + Pushforward Concentrability}

\begin{theorem}\label{thm:psdp-ub-pushforward}
Suppose $\Pi$ is realizable, and the reset $\mu$ satisfies pushforward concentrability with parameter $\cpush$. With high probability, \psdp{} returns an $\eps$-optimal policy using \begin{align*}
    \poly((\cpush)^{H}, A, \log \abs{\Pi}, \eps^{-1}) \quad \text{samples.}
\end{align*}
\end{theorem}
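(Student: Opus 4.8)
\emph{Proof plan.} The plan is to run \psdp{} (\pref{alg:psdp}) essentially as stated---using the reset distributions $\mu_h$ for layers $h \ge 2$, but the (always available) initial distribution $d_1$ in place of $\mu_1$ at layer $1$---and to track, for $h = 2,\dots,H$, the quantity
\[
    e_h \;\coloneqq\; \En_{x \sim \mu_h}\brk*{V^{\optpi}(x) - V^{\estpi_{h:H}}(x)}.
\]
The first thing to note is that $e_h \ge 0$: since $\optpi$ simultaneously maximizes $Q^\pi(x,a)$ at every state-action pair, it also maximizes $V^\pi(x)$ at every state, so $V^{\optpi} \ge V^{\estpi_{h:H}}$ pointwise. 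I will then show that these errors obey the one-step recursion $e_h \le \cpush \cdot e_{h+1} + \epsstat$, where $\epsstat = O(\sqrt{A\log(\abs{\Pi}/\delta)/n})$ is the contextual-bandit statistical error from \eqref{eq:cb-eq} and $n$ is the per-layer sample size.

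To derive the recursion, I would start from \eqref{eq:cb-eq}, which guarantees that the fitted layer-$h$ policy satisfies $\En_{x\sim\mu_h}[Q^{\estpi_{h+1:H}}(x,\estpi_h)] \ge \max_{\pi_h \in \Pi_h}\En_{x\sim\mu_h}[Q^{\estpi_{h+1:H}}(x,\pi_h)] - \epsstat$. By \emph{realizability}, $\optpi_h \in \Pi_h$, so I may take $\pi_h = \optpi_h$ in the maximum. Combining this with the one-step identity
\[
    Q^{\estpi_{h+1:H}}(x,\optpi_h(x)) \;=\; V^{\optpi}(x) \;-\; \En_{x' \sim P(\cdot\mid x,\optpi_h(x))}\brk*{V^{\optpi}(x') - V^{\estpi_{h+1:H}}(x')}
\]
together with $\En_{x\sim\mu_h}[V^{\estpi_{h:H}}(x)] = \En_{x\sim\mu_h}\En_{a\sim\estpi_h(x)}[Q^{\estpi_{h+1:H}}(x,a)]$, a short rearrangement yields
\[
    e_h \;\le\; \En_{x\sim\mu_h}\,\En_{x' \sim P(\cdot\mid x,\optpi_h(x))}\brk*{V^{\optpi}(x') - V^{\estpi_{h+1:H}}(x')} \;+\; \epsstat.
\]
The inner expectation is with respect to the one-step pushforward $\wt{\mu}_{h+1}(\cdot) = \sum_{x} \mu_h(x) P(\cdot\mid x,\optpi_h(x))$, which by the definition of pushforward concentrability satisfies $\wt{\mu}_{h+1}(x') \le \cpush\,\mu_{h+1}(x')$ pointwise. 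Since the integrand $V^{\optpi} - V^{\estpi_{h+1:H}}$ is nonnegative, I can replace $\wt{\mu}_{h+1}$ by $\cpush\,\mu_{h+1}$ to obtain $e_h \le \cpush\, e_{h+1} + \epsstat$, with the convention $e_{H+1} = 0$. The identical computation at layer $1$, now with $d_1$ in place of $\mu_1$ and using $\sum_x d_1(x) P(\cdot\mid x,\optpi_1(x)) = d^{\optpi}_2(\cdot) \le \cpush\,\mu_2(\cdot)$, gives $V^\star - V^{\estpi_{1:H}} = \En_{x\sim d_1}[V^{\optpi}(x) - V^{\estpi_{1:H}}(x)] \le \cpush\, e_2 + \epsstat$.

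Unrolling the recursion gives $V^\star - V^{\estpi_{1:H}} \le \epsstat \sum_{j=0}^{H-1}\cpush^{j} = O\big(\cpush^{H}\,\epsstat\big)$. It then suffices to take $\epsstat \asymp \eps/\cpush^{H}$, i.e., $n \asymp \cpush^{2H} A \log(\abs{\Pi}/\delta)/\eps^2$ samples at each of the $H$ layers (with an extra $\log H$ in the logarithm from a union bound over layers), which is $\poly((\cpush)^H, A, H, \log\abs{\Pi}, \eps^{-1})$ in total. The crux of the argument---and the step I expect to require the most care---is the error recursion: it is precisely by competing against the \emph{globally} optimal $\optpi$ (which lies in $\Pi$ thanks to realizability) that each per-layer value gap $V^{\optpi} - V^{\estpi_{h+1:H}}$ is pointwise nonnegative, and this nonnegativity is what licenses bounding the one-step pushforward of $\mu_h$ by $\cpush\,\mu_{h+1}$. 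Ordinary concentrability $\cconc$ does not suffice in its place, since $\wt{\mu}_{h+1}$ is not the state-occupancy measure of any single policy in $\Pi$; this is exactly the gap between \pref{thm:psdp-ub} and the present statement, and the reason the bound degrades to $(\cpush)^{O(H)}$.
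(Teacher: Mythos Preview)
Your argument is correct and rests on the same core idea as the paper: track $e_h = \En_{x\sim\mu_h}[V^{\optpi}(x)-V^{\estpi_{h:H}}(x)]$, use realizability plus nonnegativity of $V^{\optpi}-V^{\estpi}$ together with pushforward concentrability to get a one-step recursion $e_h \le O(\cpush)\,e_{h+1} + \epsstat$, and unroll. Your execution is in fact a bit more direct than the paper's: instead of inserting $\max_a Q^{\estpi}(x,a)$ and bounding the resulting policy-completeness error via a separate lemma (\pref{lem:pc-bound-pushforward}), you compete against $\optpi_h$ directly in the CB step, which yields the recursion with constant $\cpush$ rather than $2\cpush$; and rather than first bounding $V^\star - V^{\estpi} \le \cconc \sum_h e_h$ via performance difference (which costs an extra $H\cdot\cconc$ factor), you simply treat layer $1$ with $d_1$ in place of $\mu_1$ and apply the same recursion once more.
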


The proof relies on the following lemma, which relates the policy completeness error to the pushforward concentrability coefficient of $\mu$.

\begin{lemma}\label{lem:pc-bound-pushforward}
Fix any layer $h \in [H]$. For any suffix policy $\estpi_{h+1:H}$ we have
\begin{align*}
    \epspc(\estpi_{h+1:H}) \le  \cpush \cdot \En_{x'\sim \mu_{h+1}} \brk*{ V^\star(x') - V^{\estpi}(x') }. 
\end{align*}
\end{lemma}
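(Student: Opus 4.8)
The plan is to exploit policy realizability to collapse the minimization over $\pi_h \in \Pi_h$ in \pref{def:averaged-pc} down to the single choice $\pi_h = \optpi_h$, then rewrite the resulting gap as a one-step Bellman backup difference whose reward terms cancel, and finally change measure from $\mu_h$ pushed one step forward under $\optpi_h$ to $\mu_{h+1}$ via pushforward concentrability. For the first step: since $\Pi$ is realizable we have $\optpi \in \Pi$, hence $\optpi_h \in \Pi_h$, so \pref{def:averaged-pc} immediately yields $\epspc(\estpi_{h+1:H}) \le \En_{x \sim \mu_h}\brk*{\max_{a} Q^{\estpi}(x,a) - Q^{\estpi}(x,\optpi_h)}$.

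Next I would bound the integrand pointwise. Fix $x \in \statesp_h$. For every action $a$, $Q^{\estpi}(x,a) = R(x,a) + \En_{x' \sim P(\cdot\mid x,a)}[V^{\estpi}(x')] \le R(x,a) + \En_{x' \sim P(\cdot\mid x,a)}[V^{\star}(x')] = Q^{\star}(x,a) \le V^{\star}(x)$, using $V^{\estpi} \le V^{\star}$ pointwise; hence $\max_a Q^{\estpi}(x,a) \le V^{\star}(x)$. On the other hand, optimality of $\optpi$ gives $V^{\star}(x) = Q^{\star}(x,\optpi_h(x)) = R(x,\optpi_h) + \En_{x' \sim P(\cdot\mid x,\optpi_h)}[V^{\star}(x')]$, while $Q^{\estpi}(x,\optpi_h) = R(x,\optpi_h) + \En_{x' \sim P(\cdot\mid x,\optpi_h)}[V^{\estpi}(x')]$ is an exact one-step backup. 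Subtracting, the reward terms cancel and we are left with $\max_a Q^{\estpi}(x,a) - Q^{\estpi}(x,\optpi_h) \le \En_{x' \sim P(\cdot\mid x,\optpi_h(x))}\brk*{V^{\star}(x') - V^{\estpi}(x')}$. Averaging this over $x \sim \mu_h$ produces the one-step pushforward distribution $q(x') \coloneqq \En_{x \sim \mu_h}[P(x'\mid x,\optpi_h(x))]$ on $\statesp_{h+1}$, so that $\epspc(\estpi_{h+1:H}) \le \En_{x' \sim q}\brk*{V^{\star}(x') - V^{\estpi}(x')}$. By \pref{def:exploratory-pushforward-distribution}, $P(x'\mid x,a) \le \cpush \cdot \mu_{h+1}(x')$ for every $(x,a,x')$, whence $q(x') \le \cpush \cdot \mu_{h+1}(x')$ pointwise; since also $V^{\star}(x') - V^{\estpi}(x') \ge 0$ (because $V^{\star}$ dominates the value of any policy from $x'$), we conclude $\En_{x' \sim q}[V^{\star} - V^{\estpi}] \le \cpush \cdot \En_{x' \sim \mu_{h+1}}[V^{\star} - V^{\estpi}]$, which is the claimed bound.

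The statement is not genuinely hard; the only point needing care is the pointwise step, where one must upper bound $\max_a Q^{\estpi}(x,a)$ by $V^{\star}(x)$ (rather than trying to relate the maximizing action to $\optpi_h$ directly) while simultaneously keeping $Q^{\estpi}(x,\optpi_h)$ as an \emph{exact} Bellman backup, so that the two reward terms cancel and only a next-layer value gap survives. The other subtlety worth making explicit is the nonnegativity of $V^{\star} - V^{\estpi}$, which is what lets the pointwise density comparison $q \le \cpush\,\mu_{h+1}$ translate into the correct direction of inequality for the expectations.
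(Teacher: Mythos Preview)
Your proof is correct and follows essentially the same route as the paper: use realizability to plug in $\optpi_h$, upper bound $\max_a Q^{\estpi}(x,a)$ by $Q^\star(x,\optpi)$ (you pass through $V^\star(x)$, which equals $Q^\star(x,\optpi_h(x))$, so this is the same step), cancel the reward terms in the Bellman backups, and then change measure via pushforward concentrability using the nonnegativity of $V^\star - V^{\estpi}$.
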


\begin{proof}[Proof of \pref{lem:pc-bound-pushforward}] 
We have the following computation: 
\begin{align*}
    \epspc(\estpi_{h+1:H}) &= \min_{\pi_h \in \Pi_h}~\En_{x \sim \mu_h} \brk*{\max_{a \in \cA}~ Q^{\estpi}(x,a) - Q^{\estpi}(x, \pi_h)} \\
    &\le \En_{x \sim \mu_h} \brk*{\max_{a \in \cA}~ Q^{\estpi}(x,a) - Q^{\estpi}(x, \optpi)} \\
    &\le \En_{x \sim \mu_h} \brk*{ Q^{\star}(x,\optpi) - Q^{\estpi}(x, \optpi)} \\
    &= \En_{x \sim \mu_h} \brk*{ r(x,\optpi) + \En_{x' \sim P(\cdot \mid x,\optpi)} V^\star(x')}   - \En_{x \sim \mu_h} \brk*{r(x, \optpi) + \En_{x' \sim P(\cdot \mid x,\optpi)} V^{\estpi}(x')} \\
    &= \En_{x \sim \mu_h, x' \sim P(\cdot \mid x , \optpi)} \brk*{ V^\star(x') - V^{\estpi}(x')}. \numberthis\label{eq:ub-policy-completeness-error} 
\end{align*}
The first inequality is due to the realizability $\optpi \in \Pi$, and the second one is due to the optimality of $\optpi$. 
Now we will perform a change of measure to relate the bound in Eq.~\eqref{eq:ub-policy-completeness-error} to the error of $\estpi$ on the layer $h+1$.
\begin{align*}
    \En_{x \sim \mu_h, x' \sim P(\cdot \mid x, \optpi)} \brk*{ V^\star(x') - V^{\estpi}(x')} &= \En_{x' \sim \mu_{h+1}} \brk*{ \frac{\En_{x \sim \mu_h} P(x' \mid x, \optpi)}{\mu_{h+1}(x')} \cdot \prn*{ V^\star(x') - V^{\estpi}(x')} } \\
    &\le \cpush \cdot \En_{x' \sim \mu_{h+1}} \brk*{  V^\star(x') - V^{\estpi}(x') }, 
\end{align*}
where the inequality uses the nonnegativity of $V^\star(x') - V^{\estpi}(x')$ and the definition of pushforward concentrability. Plugging this back into Eq.~\eqref{eq:ub-policy-completeness-error} proves \pref{lem:pc-bound-pushforward}.
\end{proof}

\begin{proof}[Proof of \pref{thm:psdp-ub-pushforward}]
Using Performance Difference Lemma we have for the learned policy $\estpi \in \Pi$:
\begin{align*}
    V^\star - V^{\estpi} &= \sum_{h=1}^H \En_{x \sim d^{\estpi}_h} \brk*{V^\star(x) - Q^{\star}(x, \estpi_h) } = \sum_{h=1}^H \En_{x \sim \mu_h} \brk*{\frac{d^{\estpi_h}(x)}{\mu_h(x)}\prn*{V^\star(x) - Q^{\star}(x, \estpi_h) }}\\
    &\le \cconc \cdot \sum_{h=1}^H \En_{x \sim \mu_h} \brk*{V^\star(x) - Q^\star(x, \estpi)} \le  \cconc \cdot \sum_{h=1}^H \En_{x \sim \mu_h} \brk*{V^\star(x) - V^{\estpi}(x)}.
\end{align*}
The first inequality uses the fact that $\estpi \in \Pi$ as well as $V^\star(x) \ge Q^{\star}(x, \estpi_h)$, and the second inequality uses the latter fact again. From here, we apply an inductive argument to bound the suboptimality $\En_{x \sim \mu_h} \brk{V^\star(x) - V^{\estpi}(x)}$ for all $h \in [H]$. Fix any $h \in [H]$. We have
\begin{align*}
    \En_{x \sim \mu_h} \brk*{V^\star(x) -  V^{\estpi}(x)}
    &= \En_{x \sim \mu_h} \brk*{Q^\star(x, \optpi) -  Q^{\estpi}(x,\estpi)}\\
    &\le \En_{x \sim \mu_h} \brk*{Q^\star(x, \optpi) - Q^{\estpi}(x,\optpi) + \max_{a} Q^{\estpi}(x,a) -  Q^{\estpi}(x,\estpi)} \\
    &= \En_{x \sim \mu_h, x' \sim P(\cdot \mid x,\optpi)} \brk*{V^\star(x') - V^{\estpi}(x')} + \En_{x \sim \mu_h} \brk*{\max_{a} Q^{\estpi}(x,a) - Q^{\estpi}(x,\estpi)} \\
    &\le \cpush \En_{x' \sim \mu_{h+1}} \brk*{V^\star(x') - V^{\estpi}(x')} + \epsstat + \epspc(\estpi_{h+1:H}) \\
    &\le 2C_\mathrm{push} \cdot \En_{x' \sim \mu_{h+1}} \brk*{ V^\star(x') - V^{\estpi}(x') } + \epsstat, \numberthis\label{eq:recursion-1}
\end{align*}
where the last inequality uses \pref{lem:pc-bound-pushforward}. Recursive application of \pref{eq:recursion-1} and the fact that $\En_{x \sim \mu_H} \brk{V^\star(x) - V^{\estpi}(x)} = \En_{x \sim \mu_H} \brk{r(x, \optpi) - r(x, \estpi_H)} \le \epsstat$ gives us
\begin{align*}
    \En_{x \sim \mu_h} \brk*{V^\star(x) -  V^{\estpi}(x)} \le H \cdot (2\cpush)^{H} \epsstat,
\end{align*}
so therefore the final suboptimality of \psdp{} is at most
\begin{align*}
    V^\star - V^{\estpi} \le \cconc \cdot \sum_{h=1}^H \En_{x \sim \mu_h} \brk*{V^\star(x) - V^{\estpi}(x)} \le H^2 \cdot (2 \cpush)^{H+1} \epsstat.
\end{align*}
Choosing $n = \poly((\cpush)^{H}, A, \log \abs{\Pi}, \eps^{-1})$ so that the right hand side is at most $\eps$ proves the final bound.
\end{proof}

\subsubsection{Policy Realizability + Admissibility + Concentrability}

\begin{definition}
\label{def:admissible}
We say a distribution $\mu$ is admissible if for every $h \in [H]$ there exists some $\pi_b \in \Delta(\Pi)$:
\begin{align*}
    \mu_h(x) = d^{\pi_b}_h(x) \quad \text{for all}~x \in \statesp_h.
\end{align*}

\end{definition}

\begin{theorem}\label{thm:psdp-ub-admissible}
Suppose $\Pi$ is realizable, and the reset $\mu$ (1) satisfies concentrability with parameter $\cconc$, and (2) is admissible. With high probability, \psdp{} finds an $\eps$-optimal policy using $\poly((\cconc)^{H}, A, \log \abs{\Pi}, \eps^{-1})$ samples.
\end{theorem}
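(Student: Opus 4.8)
The plan is to follow the template of the proof of \pref{thm:psdp-ub-pushforward}, substituting a new argument wherever that proof invoked pushforward concentrability. As there, the Performance Difference Lemma together with $\estpi \in \Pi$ and bounded concentrability reduces the task to controlling $\Delta_h \coloneqq \En_{x\sim\mu_h}\brk{V^\star(x) - V^{\estpi}(x)}$ by a backward recursion, since $V^\star - V^{\estpi} \le \cconc \sum_{h\in[H]} \Delta_h$; unrolling the recursion over the $H$ layers is what produces the $(\cconc)^{O(H)}$ rate. Decomposing $V^\star(x) - V^{\estpi}(x)$ through $\max_a Q^{\estpi}(x,a)$ and invoking the contextual-bandit oracle guarantee (as in the classical analysis, \pref{thm:psdp-ub}), the per-layer step reduces to bounding the average policy completeness error $\epspc(\estpi_{h+1:H})$, and realizability ($\optpi \in \Pi$, hence $\optpi_h \in \Pi_h$) gives, exactly as in the proof of \pref{lem:pc-bound-pushforward}, the inequality $\epspc(\estpi_{h+1:H}) \le \En_{x\sim\mu_h,\, x'\sim P(\cdot\mid x,\optpi)}\brk{V^\star(x') - V^{\estpi}(x')}$. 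So the crux is to relate this expectation---which lives under the one-step image of $\mu_h$ along $\optpi$---back to quantities the oracle actually controls at layers $h+1,\dots,H$, namely expectations under $\mu_{h'}$.

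The new ingredient replaces the bound $P(x'\mid x,\optpi) \le \cpush\,\mu_{h+1}(x')$ from the pushforward proof. Writing $\mu_h = d^{\pi_b^{(h)}}_h$ for a mixture $\pi_b^{(h)} \in \Delta(\Pi)$ (admissibility), the one-step image of $\mu_h$ along $\optpi$ is the layer-$(h+1)$ occupancy of the policy that rolls in with $\pi_b^{(h)}$ for layers $1,\dots,h-1$ and then plays $\optpi$; iterating the Performance Difference Lemma down through layers $h+1,\dots,H$ expands $\epspc(\estpi_{h+1:H})$ into $\sum_{h'>h}\En_{x\sim\rho^{(h\to h')}}\brk{\max_a Q^{\estpi}(x,a) - Q^{\estpi}(x,\estpi_{h'})}$, where $\rho^{(h\to h')}$ is the ``roll in with $\pi_b^{(h)}$, then follow $\optpi$'' state distribution at layer $h'$. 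For each $h'$ one then wants $\En_{x\sim\rho^{(h\to h')}}\brk{\max_a Q^{\estpi}(x,a) - Q^{\estpi}(x,\estpi_{h'})} \lesssim \cconc\prn{\epsstat + \epspc(\estpi_{h'+1:H})}$, using that $\cconc < \infty$ forces $\supp(\mu_{h'})$ to contain every in-class-reachable state, so that $\rho^{(h\to h')}$ restricted to that set is dominated by $\cconc\,\mu_{h'}$ and the oracle guarantee at layer $h'$ applies; the residual ``out-of-in-class-reachable'' mass of $\rho^{(h\to h')}$ is bounded separately. Granting such a bound, one obtains a recursion $\epspc(\estpi_{h+1:H}) \lesssim \cconc\sum_{h'>h}\prn{\epsstat + \epspc(\estpi_{h'+1:H})}$, which unrolls from the layer-$H$ base case to $\epspc(\estpi_{h+1:H}) \le (\cconc)^{O(H)}\epsstat$; plugging back gives $\Delta_h \le (\cconc)^{O(H)}\epsstat$ and finally $V^\star - V^{\estpi} \le (\cconc)^{O(H)}\epsstat$. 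Choosing the per-layer sample size $n = \poly\prn{(\cconc)^H, A, \log\abs{\Pi}, \eps^{-1}}$ so that $\epsstat$ is small enough completes the proof.

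The main obstacle is controlling the ``out-of-in-class-reachable'' mass of the roll-in distributions $\rho^{(h\to h')}$. In contrast with the pushforward setting, splicing an in-class roll-in $\pi_b^{(h)}$ with $\optpi$ can push probability onto states that no single policy in $\Pi$ ever reaches---where $\mu_{h'}$, being an in-class occupancy, places zero mass and the contextual-bandit oracle provides no guarantee on $\estpi_{h'}$. The technical work is to show that such excursions out of the in-class-reachable set originate at ``branch points'' where $\optpi$ plays differently than the roll-in policy, and that admissibility lets one re-anchor the trajectory onto an in-class occupancy at the cost of only a single factor of $\cconc$ (so the per-layer amplification stays polynomial in $\cconc$, with the dependence on $A$, $\log\abs{\Pi}$, and $\eps^{-1}$ entering only through $\epsstat$). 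Making this precise---and confirming that the resulting $(\cconc)^{O(H)}$ rate is tight via the compounding-error lower bound construction of \pref{thm:psdp-lower-bound}---is the heart of the argument; the remaining pieces are routine adaptations of the analyses behind \pref{thm:psdp-ub} and \pref{thm:psdp-ub-pushforward}, and the same reasoning applies to \cpi{} as claimed in the main text.
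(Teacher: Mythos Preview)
Your high-level plan is essentially the paper's: reduce via the Performance Difference Lemma and the CB-oracle guarantee to bounding the policy-completeness errors $\epspc(\estpi_{h+1:H})$, use realizability ($\optpi\in\Pi$) to expand each one by rolling forward along $\optpi$, change measure back to $\mu_{h'}$ at every future layer (paying a factor $\cconc$ each time), and unroll the resulting recursion $\epspc(\estpi_{h+1:H}) \lesssim \cconc\sum_{h'>h}(\epsstat + \epspc(\estpi_{h'+1:H}))$ to obtain $(1+\cconc)^H\epsstat$. The paper packages this into two lemmas---\pref{lem:transfer} (for any admissible $\nu$, $\max_{\pi\in\Pi_h}\En_{\nu}[Q^{\estpi}(x,\pi)-Q^{\estpi}(x,\estpi_h)] \le \cconc(\epsstat+\epspc(\estpi_{h+1:H}))$) and \pref{lem:pc-bound} (iterate the transfer lemma along $\nu,\ \nu\circ\optpi,\ \nu\circ\optpi\circ\optpi,\ldots$)---and then unrolls exactly as you describe.

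The gap is in your justification of the change of measure for the roll-forward distributions $\rho^{(h\to h')}$. You split $\rho^{(h\to h')}$ into in-class-reachable and out-of-reach parts, assert $\cconc$-domination on the former, and defer the latter to an unspecified ``re-anchoring'' argument. Neither step is actually established: the on-support assertion $\rho^{(h\to h')}(x)\le\cconc\,\mu_{h'}(x)$ for $x\in\supp(\mu_{h'})$ does not follow from the definition of $\cconc$ (which only controls $d_{h'}^\pi/\mu_{h'}$ for $\pi\in\Pi$, and your spliced roll-in is not a priori a policy in $\Pi$), and the re-anchoring is left entirely open. The paper sidesteps the split altogether by observing that admissibility is \emph{closed} under composition with $\optpi$: if $\nu=d_h^{\pi_b}$ with $\pi_b\in\Delta(\Pi)$ and $\optpi\in\Pi$, then $\nu\circ\optpi$ is the layer-$(h{+}1)$ occupancy of the policy ``$\pi_b$ on layers $1{:}h{-}1$, then $\optpi$,'' which is itself a mixture over $\Pi$, so $\nu\circ\optpi\le\cconc\,\mu_{h+1}$ holds everywhere and there is simply no out-of-reach mass to control. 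Realizability is thus used twice---once to bound $\max_a Q^{\estpi}(x,a)\le Q^\star(x,\optpi)$, and once (together with admissibility of $\mu$) to keep every roll-forward distribution admissible---and this closure observation is the idea your proposal is missing.
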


To prove \pref{thm:psdp-ub-admissible}, we first establish a few helper lemmas on the errors of the learned policy $\estpi$.

\begin{lemma}\label{lem:transfer} For any layer $h \in [H]$ and admissible distribution $\nu \in \Delta(\statesp_h)$, we have 
\begin{align*}
    \max_{\pi \in \Pi_h}~\En_{x \sim \nu} \brk*{Q^{\estpi}(x, \pi) - Q^{\estpi}(x, \estpi)} \le \cconc \prn*{\epsstat + \epspc(\estpi_{h+1:H})}.
\end{align*}
\end{lemma}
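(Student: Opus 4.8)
The plan is to transfer the guarantee that \psdp{} naturally supplies over its reset distribution $\mu_h$ to the arbitrary admissible distribution $\nu$ by a change of measure. First I would note that for any fixed $\pi \in \Pi_h$,
\begin{align*}
    \En_{x\sim\nu}\brk*{Q^{\estpi}(x,\pi) - Q^{\estpi}(x,\estpi)} \le \En_{x\sim\nu}\brk*{\max_{a\in\actionsp} Q^{\estpi}(x,a) - Q^{\estpi}(x,\estpi)},
\end{align*}
and, crucially, the right-hand integrand is \emph{pointwise nonnegative}. This nonnegativity is what licenses the change of measure: since $\nu$ is admissible we may write $\nu = d^{\pi_b}_h$ for some $\pi_b \in \Delta(\Pi)$, and then concentrability gives $\nu(x)/\mu_h(x) = \En_{\pi\sim\pi_b}\brk{d^\pi_h(x)/\mu_h(x)} \le \cconc$ for all $x$, whence
\begin{align*}
    \En_{x\sim\nu}\brk*{\max_{a}Q^{\estpi}(x,a) - Q^{\estpi}(x,\estpi)} \le \cconc\cdot\En_{x\sim\mu_h}\brk*{\max_{a}Q^{\estpi}(x,a) - Q^{\estpi}(x,\estpi)}.
\end{align*}

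It then remains to bound the quantity on $\mu_h$. I would introduce the minimizer $\wt{\pi}_h \in \argmin_{\pi_h\in\Pi_h}\En_{x\sim\mu_h}\brk{\max_a Q^{\estpi}(x,a) - Q^{\estpi}(x,\pi_h)}$ witnessing the policy completeness error of \pref{def:averaged-pc}, and split
\begin{align*}
    \En_{x\sim\mu_h}\brk*{\max_{a}Q^{\estpi}(x,a) - Q^{\estpi}(x,\estpi)} = \underbrace{\En_{x\sim\mu_h}\brk*{\max_{a}Q^{\estpi}(x,a) - Q^{\estpi}(x,\wt{\pi}_h)}}_{=\,\epspc(\estpi_{h+1:H})} + \underbrace{\En_{x\sim\mu_h}\brk*{Q^{\estpi}(x,\wt{\pi}_h) - Q^{\estpi}(x,\estpi)}}_{\le\,\epsstat}.
\end{align*}
The second term is at most $\epsstat$ directly from the contextual-bandit oracle guarantee \eqref{eq:cb-eq} instantiated at $\pi_h = \wt{\pi}_h$, which certifies $\En_{x\sim\mu_h}\brk{Q^{\estpi}(x,\estpi)} \ge \max_{\pi_h\in\Pi_h}\En_{x\sim\mu_h}\brk{Q^{\estpi}(x,\pi_h)} - \epsstat$. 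Chaining the three displays and taking the maximum over $\pi\in\Pi_h$ — the resulting bound $\cconc(\epsstat + \epspc(\estpi_{h+1:H}))$ does not depend on $\pi$ — gives the claim.

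The argument is short, and I do not anticipate a real obstacle; the only load-bearing observations are (i) that one must pass through the nonnegative surrogate $\max_a Q^{\estpi}(x,a) - Q^{\estpi}(x,\estpi)$ rather than $Q^{\estpi}(x,\pi) - Q^{\estpi}(x,\estpi)$ (which can be negative, so a raw density-ratio bound would be invalid), and (ii) that concentrability, stated only for $\pi\in\Pi$, extends to mixtures $\pi_b\in\Delta(\Pi)$ by linearity of occupancy measures — this is precisely where admissibility of $\nu$ enters. The remaining care is purely bookkeeping: keeping the conditioning in $Q^{\estpi}$ consistent with its meaning in the analysis of \psdp{}, namely $Q^{\estpi}(x,a) = r(x,a) + \En_{x'\sim P(\cdot\mid x,a)}V^{\estpi_{h+1:H}}(x')$.
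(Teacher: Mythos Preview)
Your proposal is correct and matches the paper's proof essentially step for step: both upper-bound $Q^{\estpi}(x,\pi)$ by $\max_a Q^{\estpi}(x,a)$ to obtain a nonnegative integrand, change measure from $\nu$ to $\mu_h$ via admissibility and concentrability, and then split the $\mu_h$-expectation into the policy-completeness error $\epspc(\estpi_{h+1:H})$ plus the CB-oracle error $\epsstat$. Your explicit remark that concentrability extends from $\Pi$ to $\Delta(\Pi)$ by linearity of occupancy measures is a detail the paper leaves implicit.
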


\begin{proof}
We calculate that
\begin{align*}
    \hspace{2em}&\hspace{-2em} \max_{\pi \in \Pi_h}~ \En_{x \sim \nu} \brk*{Q^{\estpi}(x, \pi) - Q^{\estpi}(x, \estpi_h)} \\
    &= \max_{\pi \in \Pi_h} \En_{x \sim \nu} \brk*{Q^{\estpi}(x, \pi) - \max_a Q^{\estpi}(x,a) } + \En_{x \sim \nu} \brk*{\max_a Q^{\estpi}(x,a) - Q^{\estpi}(x, \estpi_h)} \\
    &\le \cconc \cdot \En_{x \sim \mu_h} \brk*{\max_a Q^{\estpi}(x,a) - Q^{\estpi}(x, \estpi_h)} \\
    &= \cconc \cdot \Bigg( \En_{x \sim \mu_h} \brk*{\max_a Q^{\estpi}(x,a)} - \max_{\pi \in \Pi_h} \En_{x \sim \mu_h} \brk*{ Q^{\estpi}(x, \pi)}  \\
    &\hspace{8em} + \max_{\pi \in \Pi_h} \En_{x \sim \mu_h} \brk*{Q^{\estpi}(x, \pi)} - \En_{x \sim \mu_h} \brk*{Q^{\estpi}(x, \estpi_h)} \Bigg) \\
    &\le \cconc \prn*{\epsstat + \epspc(\estpi_{h+1:H})}.
\end{align*}
In the first inequality we use the fact that $\nu$ is admissible, so we can use concentrability to relate the density ratios $\nrm{\nu/\mu}_\infty$.
\end{proof}

\emph{Additional Notation.} In the subsequent analysis, for any distribution $\nu$ we denote $\epspc(\estpi, \nu)$ to be the policy completeness error under distribution $\nu$, i.e.,
\begin{align*}
    \epspc(\estpi, \nu) \coloneqq \min_{\pi_h \in \Pi_h} ~ \En_{x \sim \nu} \brk*{\max_{a \in \cA}~ Q^{\estpi}(x, a) - Q^{\estpi}(x, \pi)}.
\end{align*}
For any partial policy $\pi_{h:t-1}$, we also denote $\nu \circ \pi_{h:t-1} \in \Delta(\statesp_{t})$ to denote the distribution over states in layer $t$ which is achieved by first sampling a state $x_h \sim \nu$ then rolling out with partial policy $\pi_{h:t-1}$.

\begin{lemma}\label{lem:pc-bound}
For any layer $h \in [H]$ and admissible distribution $\nu \in \Delta(\statesp_h)$, we have
\begin{align*}
    \epspc(\estpi_{h+1:H}, \nu) \le (H- h) \cdot \cconc \epsstat + \cconc \cdot \sum_{h' = h+1}^H \epspc(\estpi_{h'+1:H}) 
\end{align*}
\end{lemma}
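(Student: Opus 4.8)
The plan is to bound $\epspc(\estpi_{h+1:H},\nu)$ by unrolling the suboptimality $V^\star-V^{\estpi}$ from layer $h+1$ down to layer $H$, extracting one policy-improvement gap per layer, and then transferring each gap back to the reset $\mu$ by the same admissibility-based change of measure that appears in the proofs of \pref{lem:transfer} and \pref{thm:psdp-ub}.

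First I would open the definition of $\epspc$ using realizability: since $\optpi_h\in\Pi_h$, plugging $\pi_h=\optpi_h$ into the minimization gives $\epspc(\estpi_{h+1:H},\nu)\le\En_{x\sim\nu}\brk*{\max_a Q^{\estpi}(x,a)-Q^{\estpi}(x,\optpi_h)}$. Because $Q^{\estpi}(x,a)\le Q^\star(x,a)$ for every $a$ (optimality of $\optpi$), we have $\max_a Q^{\estpi}(x,a)\le V^\star(x)$, while $V^\star(x)-Q^{\estpi}(x,\optpi_h)=Q^\star(x,\optpi)-Q^{\estpi}(x,\optpi)=\En_{x'\sim P(\cdot\mid x,\optpi)}\brk*{V^\star(x')-V^{\estpi}(x')}$; hence $\epspc(\estpi_{h+1:H},\nu)\le\En_{x'\sim\nu\circ\optpi_h}\brk*{V^\star(x')-V^{\estpi}(x')}$, where $\nu\circ\optpi_h\in\Delta(\statesp_{h+1})$ is obtained by sampling $x\sim\nu$ and rolling one step of $\optpi$. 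Next, for any $x$ in a layer $h'>h$ the same two inequalities give $V^\star(x)-V^{\estpi}(x)\le g_{h'}(x)+\En_{x'\sim P(\cdot\mid x,\optpi)}\brk*{V^\star(x')-V^{\estpi}(x')}$, where $g_{h'}(x)\coloneqq\max_a Q^{\estpi}(x,a)-Q^{\estpi}(x,\estpi_{h'})\ge 0$ and the remainder term vanishes at layer $H$. Iterating yields
\begin{align*}
    \epspc(\estpi_{h+1:H},\nu)\le\sum_{h'=h+1}^{H}\En_{x_{h'}\sim\nu\circ\optpi_{h:h'-1}}\brk*{g_{h'}(x_{h'})},
\end{align*}
where $\nu\circ\optpi_{h:h'-1}\in\Delta(\statesp_{h'})$ is the layer-$h'$ distribution obtained by starting at $\nu$ and rolling $\optpi$ forward.

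To finish, I would transfer each term to $\mu_{h'}$: each intermediate distribution $\nu\circ\optpi_{h:h'-1}$ is again admissible (a mixture of occupancy measures of policies in $\Pi$), so its density ratio against $\mu_{h'}$ is at most $\cconc$; since $g_{h'}\ge 0$, this one-sided change of measure gives $\En_{\nu\circ\optpi_{h:h'-1}}\brk*{g_{h'}}\le\cconc\,\En_{\mu_{h'}}\brk*{g_{h'}}$. Finally I would split
\begin{align*}
    \En_{x\sim\mu_{h'}}\brk*{g_{h'}(x)} &= \prn*{\En_{x\sim\mu_{h'}}\brk*{\max_a Q^{\estpi}(x,a)}-\max_{\pi\in\Pi_{h'}}\En_{x\sim\mu_{h'}}\brk*{Q^{\estpi}(x,\pi)}} \\
    &\qquad + \prn*{\max_{\pi\in\Pi_{h'}}\En_{x\sim\mu_{h'}}\brk*{Q^{\estpi}(x,\pi)}-\En_{x\sim\mu_{h'}}\brk*{Q^{\estpi}(x,\estpi_{h'})}},
\end{align*}
where the first bracket equals $\epspc(\estpi_{h'+1:H})$ by \pref{def:averaged-pc} and the second is at most $\epsstat$ by the contextual-bandit guarantee \eqref{eq:cb-eq}. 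Summing $\cconc\prn*{\epspc(\estpi_{h'+1:H})+\epsstat}$ over $h'=h+1,\dots,H$ gives the stated bound.

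The step I expect to be most delicate is the change of measure: one must verify that $\nu\circ\optpi_{h:h'-1}$ is admissible (equivalently, obeys the $\cconc$ density-ratio bound against $\mu_{h'}$), i.e.\ that admissibility is preserved under pushing forward by layer-restricted policies of $\Pi$; with this closure in hand the remaining manipulations are routine. The nonnegativity of the gaps $g_{h'}$ is exactly what makes the change of measure one-directional and rules out any within-layer $\ell_\infty$ blow-up.
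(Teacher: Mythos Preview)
Your proposal is correct and mirrors the paper's proof: both bound $\epspc(\estpi_{h+1:H},\nu)$ by $\En_{x\sim\nu}[Q^\star(x,\optpi)-Q^{\estpi}(x,\optpi)]$, unroll this along $\optpi$ while peeling off the nonnegative gaps $g_{h'}(x)=\max_a Q^{\estpi}(x,a)-Q^{\estpi}(x,\estpi_{h'})$, use admissibility of $\nu\circ\optpi_{h:h'-1}$ for the change of measure to $\mu_{h'}$, and split $\En_{\mu_{h'}}[g_{h'}]$ into $\epspc(\estpi_{h'+1:H})+\epsstat$. The only cosmetic difference is that the paper packages the change-of-measure-plus-split step as a separate lemma (\pref{lem:transfer}) and invokes it at each level of the recursion, whereas you unroll fully first and then apply the change of measure to every term at once; the admissibility closure you flag as delicate is exactly what the paper also relies on (and likewise does not prove).
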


\begin{proof}
Using the definition of policy completeness we have
\begin{align*}
    \epspc(\estpi_{h+1:H}, \nu) &\le \En_{x \sim \nu} \brk*{\max_a Q^{\estpi}(x,a) - Q^{\estpi}(x, \optpi)} \le \En_{x \sim \nu} \brk*{ Q^{\star}(x,\optpi) - Q^{\estpi}(x, \optpi)}. 
\end{align*}
Now, we apply a recursive argument, which gives us 
\begin{align*}
     \epspc(\estpi_{h+1:H}, \nu) &\le \En_{x \sim \nu} \brk*{ Q^{\star}(x,\optpi) - Q^{\estpi}(x, \optpi)} \\
    &= \En_{x' \sim \nu \circ \optpi} \brk*{ Q^{\star}(x',\optpi) - Q^{\estpi}(x', \estpi)} \\
    &= \En_{x' \sim  \nu \circ \optpi} \brk*{ Q^{\star}(x',\optpi) - Q^{\estpi}(x', \optpi) + Q^{\estpi}(x', \optpi) - Q^{\estpi}(x', \estpi) }
\end{align*}
Because $\nu$ is admissible, so is $\nu \circ \optpi$. Therefore, the second term in the sum is bounded using \pref{lem:transfer}:
\begin{align*}
    \En_{x' \sim  \nu \circ \optpi} \brk*{ Q^{\estpi}(x', \optpi) - Q^{\estpi}(x', \estpi) } &\le \cconc \prn*{ \epspc(\estpi_{h+2:H}) + \epsstat }.
\end{align*}
The first term in the sum can be rewritten as
\begin{align*}
    \En_{x' \sim  \nu \circ \optpi} \brk*{ Q^{\star}(x',\optpi) - Q^{\estpi}(x', \optpi)} = \En_{{x''} \sim  \nu \circ \optpi \circ \optpi} \brk*{ Q^{\star}(x'',\optpi) - Q^{\estpi}(x'', \estpi)}.
\end{align*}
Applying recursion, we get the final bound of
\begin{align*}
    \epspc(\estpi_{h+1:H}, \nu) &\le (H - h) \cdot \cconc \epsstat + \cconc \cdot \sum_{h' = h+1}^H \epspc(\estpi_{h'+1:H}).
\end{align*}
This concludes the proof of \pref{lem:pc-bound}.
\end{proof}

\begin{proof}[Proof of \pref{thm:psdp-ub-admissible}]
We compute the suboptimality as
\begin{align*}
    V^\star - V^{\estpi} &= \sum_{h=1}^H \En_{x \sim d^{\star}_h} \brk*{Q^{\estpi}(x, \optpi) - Q^{\estpi}(x, \estpi) } &\text{(Performance Difference Lemma)} \\
    &\le H \cconc \epsstat + \cconc \prn*{ \sum_{h=1}^H \epspc(\estpi_{h+1:H}) }. &\text{ (\pref{lem:transfer})}
\end{align*}
Now we apply \pref{lem:pc-bound} to show that the policy completeness error can be bounded by the downstream policy completeness errors, using the admissibility of $\mu$.
\begin{align*}
    V^\star - V^{\estpi} &\le H \cconc \epsstat +  \cconc \sum_{h=1}^H \epspc(\estpi_{h+1:H}) \\
    &= H \cconc \epsstat +  \cconc \cdot \prn*{\epspc(\estpi_{2:H}) + \sum_{h=2}^H \epspc(\estpi_{h+1:H}) } \\
    &\le H \cconc \epsstat +  \cconc \cdot \prn*{\prn{H - 1} \cconc \epsstat + (1+ \cconc) \cdot \sum_{h=2}^H \epspc(\estpi_{h+1:H}) } &\text{(\pref{lem:pc-bound})}\\
    &\lesssim H (\cconc)^2 \epsstat + \prn{1+\cconc}^2 \cdot \prn*{ \sum_{h=2}^H \epspc(\estpi_{h+1:H}) } \\
    &= H (\cconc)^2 \epsstat + \prn{1+\cconc}^2 \cdot \prn*{  \epspc(\estpi_{3:H}) + \sum_{h=3}^H \epspc(\estpi_{h+1:H}) } \\
    &\le H (\cconc)^2 \epsstat \\
    &\hspace{3em} + \prn{1+\cconc}^2 \cdot \prn*{  \prn{H - 2} \cconc \epsstat + \prn{1+\cconc} \sum_{h=3}^H \epspc(\estpi_{h+1:H}) } &\text{(\pref{lem:pc-bound})}\\
    &\lesssim H \prn*{1+\cconc}^3 \epsstat + \prn{1+\cconc}^3 \prn*{ \sum_{h=3}^H \epspc(\estpi_{h+1:H}) }.
\end{align*}
Continuing this way (and observing that $\epspc(\estpi_{H+1} = \varnothing) = 0$) we get a final bound of
\begin{align*}
    V^\star - V^{\estpi} \lesssim H \prn{1+\cconc}^H \epsstat.
\end{align*}
Setting $n = \poly((\cconc)^{H}, A, \log \abs{\Pi}, \eps^{-1})$ makes the RHS at most $\eps$, thus proving \pref{thm:psdp-ub-admissible}.
\end{proof}

\subsection{Lower Bounds for \psdp{} and \cpi{}}\label{app:psdp-lower}
Now we will show that exponential error compounding is unavoidable for \psdp{} in the absence of policy completeness. \psdp{} relies on a reduction to a contextual bandit oracle. For the lower bound statement, we will assume that $\epsstat > 0$ is a fixed constant and \psdp{} is equipped with a \emph{worst case} oracle $\mathsf{CB}_{\epsstat}$ which for every layer $h \in [H]$ always returns an \emph{arbitrary policy} $\estpi_h$ satisfying
\begin{align*}
    \En_{x \sim \mu_h} \brk*{Q^{\estpi}(x, \estpi_h)} \ge \max_{\pi \in \Pi} \En_{x \sim \mu_h}\brk*{Q^{\estpi}(x, \pi)} - \epsstat.
\end{align*}
Thus, the lower bound statement has the flavor of a statistical query lower bound \cite{kearns1998efficient}, which also assumes a worst-case response up to accuracy $\epsstat$.

\begin{theorem}\label{thm:psdp-lower-bound}
Let $H \ge 2$. Fix any $\epsstat > 0$ and parameter $\cpush \ge 5H$. There exists a tabular MDP $M$ with $S = O(H^2)$ states, $A = O(H)$ actions, and horizon $H$, realizable policy class $\Pi$ of size $2^{\wt{O}(H)}$, and exploratory distribution $\mu$ which is admissible and satisfies pushforward concentrability with parameter $\cpush$, so that \psdp{} equipped with oracle $\mathsf{CB}_{\epsstat}$ returns a policy $\estpi$:
\begin{align*}
    V^\star - V^{\estpi} \ge (\cpush)^{\Omega(H)} \epsstat.
\end{align*}
\end{theorem}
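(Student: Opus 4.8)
The plan is to build a layered tabular MDP by stacking $H$ geometrically-scaled copies of the two-layer gadget behind \pref{fig:psdp-lower-bound-simple}, engineered so that: the only place the worst-case oracle $\mathsf{CB}_{\epsstat}$ must be adversarial is the deepest layer, where it resolves an $\epsstat$-scale tie badly; at every shallower layer the bad suffix PSDP has already committed makes a ``decoy'' action the \emph{genuine} $\mu_h$-maximizer, so the bad commitment cascades on its own; and each commitment inflates the value loss realized along PSDP's own trajectory by a factor $\Theta(\cpush)$, because $\mu_h$ weights the critical layer-$h$ state by only $1/\cpush$ while the full loss there is incurred along the trajectory from $d_1$. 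After $H$ layers this produces suboptimality $\cpush^{\Omega(H)}\epsstat$.

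\emph{The construction.} Use deterministic transitions, $O(H)$ states per layer (so $S = O(H^2)$), and $A = O(H)$ actions. Layer $h$ has a distinguished ``spine'' state $\hat x_h$ on PSDP's eventual trajectory, plus a constant number of ``distractor'' states; $\mu_h$ places mass exactly $1/\cpush$ on $\hat x_h$ and spreads the rest over the distractors. Because every state in $\supp(\mu_h)$ is entered by a deterministic transition, each carries at least $1/\cpush$ mass, which pins the pushforward concentrability coefficient at $\cpush$ (the hypothesis $\cpush \ge 5H$ leaves room for the $O(H)$ states). The decoy action at $\hat x_h$ is given an immediate reward gap versus the optimal action there of order $(\cpush - 1)e_{h+1}$, where $e_h := V^\star(\hat x_h) - V^{\estpi_{h:H}}(\hat x_h)$ is the loss realized on the spine; this makes $e_h = \Theta(\cpush\, e_{h+1})$, and since $e_{h+1} \le 1/\cpush$ the gap is $< 1$, so everything fits the $[0,1]$ reward budget (the construction is thus set up for $\epsstat \lesssim \cpush^{-H}$, the meaningful regime). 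The distractor rewards are chosen so that the decoy's gain on them outweighs its $1/\cpush$-weighted loss at $\hat x_h$ under $\mu_h$. The policy class $\Pi$ contains $\optpi$ (realizability), one decoy per layer, and enough auxiliary policies that each state of $\supp(\mu_h)$ is a point-mass occupancy $d^\pi_h$, making $\mu$ admissible; altogether $\abs{\Pi} = 2^{\wt{O}(H)}$.

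\emph{The induction.} Running PSDP backward, argue by downward induction on $h = H, \ldots, 1$ that there is a legal run of $\mathsf{CB}_{\epsstat}$ along which $\estpi_h$ is the decoy and $e_h \ge c\,\cpush\, e_{h+1}$ for an absolute constant $c$. Base case $h = H$: since $\mu_H(\hat x_H) = 1/\cpush$, the $\mu_H$-averaged advantage of the optimal action over the decoy is $1/\cpush$ times the true gap at $\hat x_H$; choosing that gap $\Theta(\cpush\epsstat)$ makes it $\Theta(\epsstat)$ under $\mu_H$, within the oracle's slack, so $\mathsf{CB}_{\epsstat}$ may return the decoy and $e_H = \Theta(\cpush\epsstat)$. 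Inductive step: given the committed decoy suffix $\estpi_{h+1:H}$, the decoy at layer $h$ is the genuine maximizer of $\pi \mapsto \En_{x\sim\mu_h}\brk*{Q^{\estpi}(x, \pi)}$ over $\pi \in \Pi$ --- its gains on the distractors (mass $1 - 1/\cpush$) beat its $1/\cpush$-weighted loss at $\hat x_h$ --- so even an exact oracle returns it, while by the reward scaling the realized spine loss obeys $e_h = \Theta(\cpush\, e_{h+1})$. Unrolling, $e_1 \ge (c\cpush)^{H-1} e_H \ge \cpush^{\Omega(H)}\epsstat$, and since $d_1 = \delta_{\hat x_1}$ the returned $\estpi$ satisfies $V^\star - V^{\estpi} = e_1 \ge \cpush^{\Omega(H)}\epsstat$. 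The same MDP defeats \cpi{}, whose improvement step maximizes an estimated advantage over $\Pi$ under a state distribution mixing the current policy with $\mu$; a worst-case version of that oracle can select the decoy at each layer, so the identical cascade runs.

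\emph{The main obstacle.} The delicate point will be the joint calibration of the three reward scales in each gadget --- the spine gap, the distractor gains, and the optimal-path rewards --- so that simultaneously and at \emph{every} layer: the decoy is an $\epsstat$-maximizer under $\mu_h$ (a genuine maximizer for $h < H$); the realized spine loss grows by a full factor $\cpush$; $\mu$ stays admissible, which forces every distractor to be reachable by a policy in $\Pi$ and limits the off-spine mass $\mu$ may carry; and the total reward stays in $[0,1]$. Getting the per-layer inflation to be genuinely multiplicative in $\cpush$ rather than merely additive --- so that the bound matches the $\cpush^{O(H)}$ upper bounds of \pref{thm:psdp-ub-pushforward} and \pref{thm:psdp-ub-admissible} --- is the real crux, and is presumably why this construction is substantially more involved than the $H = 2$ picture.
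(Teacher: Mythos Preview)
Your plan is correct and matches the paper's approach: a single adversarial tie-break at layer $H$, followed by a genuine cascade where the bad suffix makes the decoy the true $\mu_h$-maximizer at every earlier layer, with each step amplifying the spine loss by a $\Theta(\cpush)$ factor. The paper's concrete instantiation realizes the geometric reward scaling via ``boring'' absorbing states with rewards $\tfrac{\cpush^{i}p^{i-1}}{i}\epsstat$ at the last layer (rather than immediate reward gaps along the spine), and handles admissibility via dedicated ``highway'' states and extra actions at layer $0$ only---this is exactly the joint calibration you flag as the main obstacle, and confirms your plan goes through.
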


\pref{thm:psdp-lower-bound} is a converse to the positive results of \pref{thm:psdp-ub-pushforward} and \ref{thm:psdp-ub-admissible}, showing that \psdp{} can have exponential in $H$ sample complexity. The lower bound construction in \pref{thm:psdp-lower-bound} as well as the earlier one from \pref{fig:psdp-lower-bound-simple} are given by tabular MDPs, which our main upper bound in this paper (\pref{thm:block-mdp-result}) can solve with polynomial number of samples with $\mu$-reset access. (Note that when the state space $\cX$ itself is bounded in size, \stochalg{} does not require local simulator access because it can perform resets directly using rejection sampling from $\mu$.) Thus, \pref{thm:psdp-lower-bound} indicates an \emph{algorithmic limitation} of using dynamic programming to solve policy learning.

We also remark that the constructions in \pref{fig:psdp-lower-bound-simple} and \pref{thm:psdp-lower-bound} also apply to \cpi{} \cite{kakade2002approximately}; we refer the reader to \cite[Section 14 of][]{agarwal2019reinforcement} for an exposition of the \cpi{} algorithm. At a high level, the \cpi{} algorithm generates a sequence of policy iterates $\pi^{(1)}, \pi^{(2)}, \cdots$ such that each policy iterate improves upon the previous one and terminates whenever:
\begin{align*}
    \max_{\wt{\pi} \in \Pi}~\En_{x \sim d^{\pi^{(t)}}_\mu} \brk*{Q^{\pi^{(t)}}(x, \wt{\pi}) - Q^{\pi^{(t)}}(x, \pi^{(t)})} \le \eps.
\end{align*}
where $d^{\pi^{(t)}}_\mu$ is the occupancy measure obtained by running the current iterate $\pi^{(t)}$ starting from the reset $\mu$ and $\eps> 0$ is some predefined threshold which represents the accuracy to which \cpi{} solves the policy improvement problem. Thus, if it is not possible to greatly improve (by at least $\eps$) the average $Q$-function by selecting a different policy $\wt{\pi}$, then \cpi{} will terminate. In our constructions, one can check that if we initialize to the all-zeros policy $\pi^{(1)} \equiv 0$, then \cpi{} will terminate immediately even though $\pi^{(1)}$ has constant suboptimality.

In the rest of this section, we will prove \pref{thm:psdp-lower-bound}.

\subsubsection{Lower Bound Construction}
\begin{figure}[!t]
    \centering
\includegraphics[scale=0.31, trim={0.4cm 0cm 15cm 0cm}, clip]{./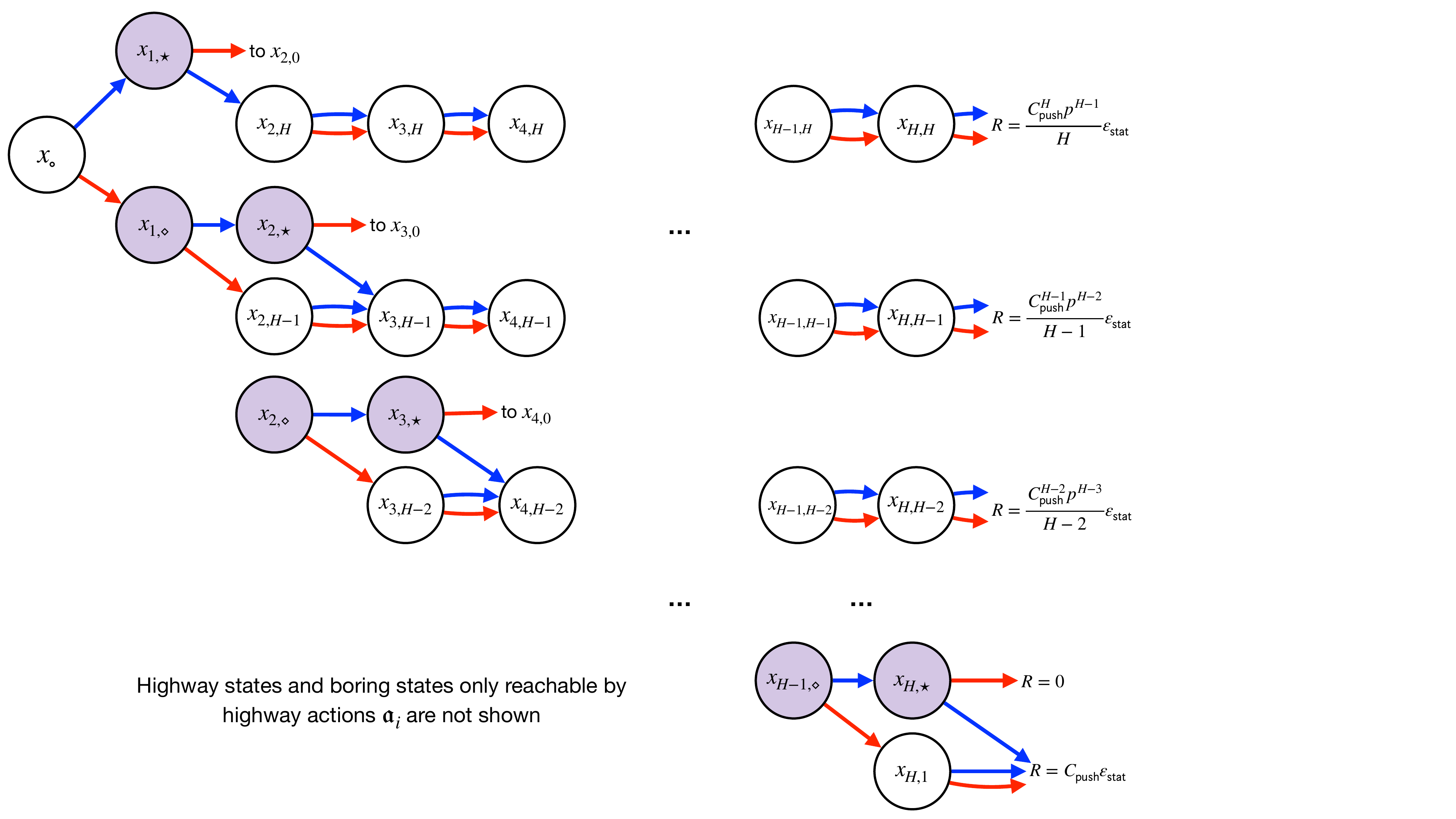}
    \caption{Lower bound construction for \pref{thm:psdp-lower-bound}. To avoid clutter, we do not illustrate the highway states as well as any boring states which are only reachable by taking highway actions $\mathfrak{a}_i$ at layer 0, since their role is only to make sure that the construction satisfies admissibility.} 
    \label{fig:psdp-lb-2}
\end{figure}

Our lower bound construction is illustrated in \pref{fig:psdp-lb-2}.

For notational convenience, we number the layers starting with $h=0$, so that there are $H+1$ layers.

\paragraph{State and Action Spaces.} At $h=0$ there is a single state $x_\circ$ and the action set is $\cA_0 = \crl{0, 1, \mathfrak{a}_1, \cdots, \mathfrak{a}_H}$. For $h \ge 1$, we have 
\begin{align*}
    \statesp_h = \underbrace{ \crl*{x_{h,0}, x_{h,1} \cdots x_{h,H}} }_{\text{$H+1$ boring states}} \cup \underbrace{\crl*{x_{h, \diamond}} \cup  \crl*{x_{h, \star}}  }_{\text{2 special states}} \cup  \underbrace{ \crl*{\bar{x}_{h \to h+1}, \bar{x}_{h\to h+2}, \cdots \bar{x}_{h \to H}} }_{\text{$H-h$ highway states}},
\end{align*}
except for $\cX_H$ which does not have the special state $x_{h, \diamond}$. The action set is $\cA_h = \crl{0,1}$.

\paragraph{Policy Class.} The policy class $\Pi$ is taken to be all open-loop policies over each layer's action space:
\begin{align*}
    \Pi \coloneqq \crl*{\pi: \forall x \in \statesp_h, \pi_h(x) \equiv a_h, (a_0, a_1, \cdots, a_H) \in \prod_{h=0}^H \cA_h}.
\end{align*}

\paragraph{Reset Distribution.} At layer $h=0$ we have $d_0 = \mu_0 = \delta_{x_\circ}$. At layer $h \ge 1$, the distribution $\mu_h$ puts $1/\cpush$ mass on each of the non-diamond states $ \crl{x_{h,0}, x_{h,1} \cdots x_{h,H}} \cup \crl{x_{h, \star}}  \cup \crl{\bar{x}_{h \to h+1}, \bar{x}_{h\to h+2}, \cdots \bar{x}_{h \to H}}$ and the rest on $x_{h, \diamond}$. Therefore $x_{h, \diamond}$ has mass \emph{at least} $p \coloneqq \prn{1 - \frac{2H+1}{\cpush}}$. We have $p > 1/2$ as long as $\cpush \ge 5H$.

\paragraph{Transitions.} At $h=0$, we have
\begin{align*}
    \Pr(\cdot \mid x_\circ, a) = \begin{cases}
        \delta_{x_{1, \diamond}} &\text{if}~a = 0\\
        \delta_{x_{1, \star}} &\text{if}~a = 1\\
        \mu_1 &\text{if}~a = \mathfrak{a}_{1}\\
        \delta_{\bar{x}_{1\to h'}} &\text{if}~a = \mathfrak{a}_{h'}, h' \ge 2.\\
    \end{cases}
\end{align*}
For $h \ge 1$, we have:
\begin{itemize}
    \item \textit{Boring States:} At the boring state $x_{h,i}$, we always transit to the corresponding boring state in the next layer $x_{h+1, i}$ regardless of the action.
    \item \textit{Highway States:} On the highway state $\bar{x}_{h \to h+1}$, we transit to $\mu_{h+1}$ regardless of the action. On highway states $\bar{x}_{h \to h'}$ for $h' > h+1$ we transit to $\bar{x}_{h+1, h'}$ regardless of the action.
    \item \textit{Special States:} We have
    \begin{align*}
        P(\cdot \mid x_{h, \diamond}, a) = \begin{cases}
            \delta_{x_{h+1, H-h}} &\text{if}~a= 0\\
            \delta_{x_{h+1, \star}} &\text{if}~a = 1,
        \end{cases} \quad \text{and} \quad P(\cdot \mid x_{h, \star}, a) = \begin{cases}
            \delta_{x_{h+1, 0}} &\text{if}~a= 0\\
            \delta_{x_{h+1, H-h+1}} &\text{if}~a = 1.
        \end{cases}
    \end{align*}
\end{itemize}

\paragraph{Rewards.} All the rewards are at layer $H$:
\begin{align*}
    R(x_{H, \star}, 0) = 0, \quad R(x_{H, \star}, 1) = \cpush \epsstat, \quad \text{and} \quad \forall~i \in \crl{0, 1, \cdots, H}:~ R(x_{H, i}, \cdot) = 
    \frac{\cpush^{i} p^{i-1}}{i} \epsstat,
\end{align*}

\paragraph{Properties of the Construction.} Now we list several properties of the construction which are more or less immediate to verify.
\begin{enumerate}[(1)]
    \item The state space is of size $O(H^2)$, the action space is of size $O(H)$, and the policy class is of size $(H+2) \cdot 2^{H}$.
    \item Due to the transitions for the highway states, the distribution $\mu$ is admissible at all layers $h \ge 0$.
    \item The minimum probability that $\mu_h$ places on any state $x \in \statesp_h$ is at least $\min\crl{1/\cpush, p} \ge 1/\cpush$, so therefore pushforward concentrability is satisfied with parameter $\cpush$.
    \item The optimal policy is the all-ones policy, $\optpi_h \equiv 1$ for all $h \ge 0$.  Therefore $\Pi$ is realizable.
\end{enumerate}

\subsubsection{Analysis of \psdp{}}
We will show inductively that \psdp{} returns the all-zeros policy $\estpi_h \equiv 0$ for all $h \in [H]$.
\begin{itemize}
    \item At layer $H$, the only state for which the value of taking $a_H=0$ and $a_H=1$ differ is on $x_{H, \star}$, which is sampled under $\mu_H$ with probability $1/\cpush$. The gap between values is $\En_{x \sim \mu_H} \brk*{r(x,1) - r(x,0)} = \epsstat$, so we set $\mathsf{CB}_{\epsstat}$ to return $\estpi_H \equiv 0$.
    \item At layer $H-1$, the two states for which there is a gap in value are the special states $x_{H-1, \diamond}$ and $x_{H-1, \star}$. We can compute that
    \begin{align*}
        \En_{x \sim \mu_{H-1}} \brk*{Q^{\estpi_H}(x, 0) - Q^{\estpi_H}(x, 1)} \ge p \cdot \cpush \epsstat - \frac{1}{\cpush} \cdot \frac{\cpush^2 p\epsstat}{2} = \frac{\cpush p \epsstat}{2} > \epsstat.
    \end{align*}
    Here we use the fact that $\mu_{H-1}(x_{H-1, \diamond}) \ge p > 1/2$, as well as the assumed lower bound on $\cpush$. Therefore, $\mathsf{CB}_{\epsstat}$ must return $\estpi_{H-1} \equiv 0$.
    \item Suppose we are at layer $h$ and for all $h' > h$ \psdp{} selects $\estpi_{h'} \equiv 0$. Then the gap in value between action 0 and action 1 is
    \begin{align*}
        \En_{x \sim \mu_{h}} \brk*{Q^{\estpi_{h+1:H}}(x, 0) - Q^{\estpi_{h+1:H}}(x, 1)} &\ge p \cdot \frac{\cpush^{H-h} p^{H-h-1} \epsstat}{H-h} - \frac{1}{\cpush} \cdot \frac{\cpush^{H-h+1} p^{H-h} \epsstat}{H-h+1} \\
        &= \frac{\cpush^{H-h} p^{H-h} \epsstat}{(H-h)(H-h+1)} > \epsstat.
    \end{align*}
    The last equality uses the fact that $\cpush p \ge 5H/2$.
    \item Continuing this way, we can see that for all $h \ge 1$, \psdp{} equipped with $\mathsf{CB}_{\epsstat}$ selects $\estpi_h \equiv 0$. We can calculate that:
    \begin{align*}
        Q^{\estpi_{1:H}}(x_\circ, a) \begin{cases}
            = 0 &\text{if}~a=1\\[1em]
            = \frac{\cpush^{H-1}p^{H-2}}{H} \epsstat &\text{if}~a=0\\[1em]
            \le \prn*{ \frac{\cpush^{H-1}p^{H-2}}{H} + \frac{\cpush^{H-1}p^{H-1}}{H-1} } \epsstat \le \frac{2\cpush^{H-1}p^{H-1}}{H-1} \epsstat.  &\text{if}~a = \mathfrak{a_i}~ \text{for any}~i \in [H]
        \end{cases}
    \end{align*}
    For the last case, we use the rough estimate that $\mu_h$ places $1/\cpush$ mass on $x_{h,H}$ and the rest elsewhere.
\end{itemize}
Plugging in the optimal value $V^\star$ we have that the suboptimality of \psdp{} is at least
\begin{align*}
    V^\star - V^{\estpi} \ge \prn*{\frac{\cpush^H p^{H-1}}{H} - \frac{2\cpush^{H-1} p^{H-1}}{H-1}} \epsstat = (\cpush)^{\Omega(H)} \epsstat.
\end{align*}
This concludes the proof of \pref{thm:psdp-lower-bound}. \qed

\clearpage
\section{Existence of Emulators Under Pushforward Coverability}\label{app:beyond-bmdp}
A natural question to ask is how to generalize \stochalg{} beyond the Block MDP setting. As a starting point for this future research direction,  we can show that every pushforward coverable MDP admits a policy emulator with a bounded state space size. We first define pushforward coverability which posits the existence of a good distribution satisfying pushforward concentrability (c.f.~\pref{def:exploratory-pushforward-distribution}).

\begin{definition}[Pushforward Coverability \cite{mhammedi2024power, amortila2024reinforcement}]\label{def:pushforward-coverability}
The pushforward coverabilty coefficient for an MDP $M$ is
\begin{align*}
    \cpushcov(M) \coloneqq \max_{h \in [H]} \inf_{\mu_h \in \Delta(\statesp_h)} \sup_{(x,a,x') \in \statesp_{h-1} \times \actionsp \times \statesp_{h}} \frac{P(x' \mid x,a)}{\mu_h(x')}.
\end{align*}
When clear from the context we denote the pushforward concentrability coefficient as $\cpushcov$.
\end{definition}

\begin{proposition}[Pushforward Coverable MDPs $\Rightarrow$ Small Policy Emulators]\label{prop:pushforward-coverable}
Let $M$ be an MDP with pushforward coverability coefficient $\cpushcov$ and $\Pi$ be any policy class. Then there exists a policy emulator $\estmdp$ with state space size
\begin{align*}
    \poly \prn*{ \cpushcov, A, H, \eps^{-1}, \log \abs{\Pi}, \log \delta^{-1} }.
\end{align*}
\end{proposition}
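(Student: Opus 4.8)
The plan is to exhibit $\estmdp$ through a \emph{randomized} construction and show it succeeds with probability at least $1-\delta$; since this probability is positive, a policy emulator of the claimed size exists. For each $h\in[H]$, let $\mu_h\in\Delta(\statesp_h)$ be a near-optimal distribution for pushforward coverability, so that $P(x'\mid x,a)\le 2\cpushcov\cdot\mu_h(x')$ for all $(x,a,x')\in\statesp_{h-1}\times\actionsp\times\statesp_h$; I also assume (or absorb into ``coverability at layer $1$'') that $d_1\ll\mu_1$ with $\nrm{d_1/\mu_1}_\infty\le 2\cpushcov$, which holds e.g.\ for Block MDPs with factored resets. Draw $N$ i.i.d.\ observations $\estmdpobsspace{h}=\crl{x_h^{(1)},\dots,x_h^{(N)}}$ from $\mu_h$ for each $h$, independently across layers, where $N$ is chosen so that $\cpushcov\sqrt{\log(NAH\abs{\Pi}/\delta)/N}\le\eps/(4H)$; this holds with $N=\poly(\cpushcov,H,\eps^{-1},\log(A\abs{\Pi}\delta^{-1}))$. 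Set the emulator rewards to the true rewards, $\wh R(x,a)=R(x,a)$ for $(x,a)\in\estmdpobsspace{h}\times\actionsp$, and define the emulator transition from $(x,a)$ by importance reweighting: with $w_i:=\tfrac{dP(\cdot\mid x,a)}{d\mu_{h+1}}(x_{h+1}^{(i)})\in[0,2\cpushcov]$, put $\estp(x_{h+1}^{(i)}\mid x,a):=w_i/\sum_j w_j$. Define $\wh\nu\in\Delta(\estmdpobsspace{1})$ analogously from the weights $\tfrac{dd_1}{d\mu_1}(x_1^{(i)})$.

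The analysis is a downward induction on $h$ establishing, with $\eps':=\eps/(2H)$, the invariant
\begin{align*}
\text{for all }\pi\in\Pi,\ x\in\estmdpobsspace{h}:\quad \abs{V^\pi(x)-\wh V^\pi(x)}\le (H-h)\,\eps'.
\end{align*}
The base case $h=H$ holds with zero error since rewards are exact and there is no transition. For the step at layer $h$, fix $\pi\in\Pi$, $x\in\estmdpobsspace{h}$, $a\in\actionsp$; since $\abs{V^\pi(x)-\wh V^\pi(x)}\le\max_{a'}\abs{Q^\pi(x,a')-\wh Q^\pi(x,a')}$ it suffices to bound the per-action Bellman error, and Eq.~\eqref{eq:bellman-error-main} (with zero reward error since $\wh R=R$) gives
\begin{align*}
\abs{Q^\pi(x,a)-\wh Q^\pi(x,a)}\le \abs*{\En_{x'\sim P(\cdot\mid x,a)}\brk{V^\pi(x')}-\En_{x'\sim\estp(\cdot\mid x,a)}\brk{V^\pi(x')}}+\En_{x'\sim\estp(\cdot\mid x,a)}\brk*{\abs{V^\pi(x')-\wh V^\pi(x')}}.
\end{align*}
The second term is at most $(H-h-1)\eps'$ by the inductive hypothesis, since $\estp(\cdot\mid x,a)$ is supported on $\estmdpobsspace{h+1}$. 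The first term -- the transition error -- involves only the \emph{true} value function $V^\pi\in[0,1]$; because $\estmdpobsspace{h+1}$ is drawn independently of $\estmdpobsspace{h}$, conditioning on $\estmdpobsspace{h}$ fixes $x$ and makes the $x_{h+1}^{(i)}$ i.i.d.\ from $\mu_{h+1}$, and a change of measure gives $\En_{\mu_{h+1}}[w]=1$ and $\En_{\mu_{h+1}}[w\,V^\pi]=\En_{P(\cdot\mid x,a)}[V^\pi]$. Applying Hoeffding to the numerator $\tfrac1N\sum_i w_iV^\pi(x_{h+1}^{(i)})$ and denominator $\tfrac1N\sum_i w_i$ (sums of $[0,2\cpushcov]$-bounded terms) and a union bound over the $NAH\abs{\Pi}$ tuples $(h,(x,a),\pi)$ shows that, on an event of probability $\ge 1-\delta$, the self-normalized estimate $\En_{\estp(\cdot\mid x,a)}[V^\pi]$ is within $\eps'$ of $\En_{P(\cdot\mid x,a)}[V^\pi]$ for all tuples simultaneously -- this is exactly where the choice of $N$ enters. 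Hence the transition error is at most $\eps'$ and the induction closes. One further application of the same reweighting argument to $\wh\nu$ versus $d_1$ gives $\max_{\pi\in\Pi}\abs{\En_{x\sim d_1}[V^\pi(x)]-\En_{x\sim\wh\nu}[\wh V^\pi(x)]}\le \eps'+(H-1)\eps'\le\eps$, so $\estmdp$ is an $\eps$-accurate policy emulator (for $d_1$) with state space of size $NH=\poly(\cpushcov,A,H,\eps^{-1},\log\abs{\Pi},\log\delta^{-1})$.

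The main obstacle -- and the reason the proposition asserts only \emph{existence} -- is that the emulator transitions are defined through the Radon--Nikodym derivatives $dP(\cdot\mid x,a)/d\mu_{h+1}$, i.e.\ density ratios against a potentially continuous reset distribution, which are statistically intractable to estimate from samples without further structure (unlike the Block MDP case, where decodability lets \stochdecoder{}/\stochrefit{} bypass density estimation entirely). Thus the argument does not yield a sample-efficient algorithm, matching the open problem flagged in \pref{sec:discussion}. A secondary, purely technical point is the measure-theoretic bookkeeping when $\statesp$ is uncountable -- all displays above are to be read with densities w.r.t.\ $\mu_h$ -- together with the mild requirement $d_1\ll\mu_1$ with bounded ratio, which should be stated explicitly or folded into the definition of coverability at the first layer.
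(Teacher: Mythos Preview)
Your proof is correct and follows essentially the same approach as the paper's: sample states from the coverability-witnessing distributions $\mu_h$, define emulator transitions via importance weights $P(\cdot\mid x,a)/\mu_{h+1}(\cdot)$, and control the transition error (which involves only the \emph{true} $V^\pi$) by Hoeffding plus a union bound over the $NAH\abs{\Pi}$ tuples, combined with a backward induction on layers. The only minor difference is that you normalize the importance weights to obtain a bona fide transition distribution, whereas the paper leaves $\estp$ unnormalized and remarks that the emulator is not a true MDP; both variants are correct and share the limitation you identify, namely that the construction requires the unknown density ratios $dP/d\mu$ and is therefore purely existential.
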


A few remarks:
\begin{itemize}
    \item Strictly speaking, the policy emulator we construct in \pref{prop:pushforward-coverable} is not a true MDP, since our construction requires the ``transition'' $\estp(\cdot\mid x_{h-1}, a_{h-1})$ to be an unnormalized measure over the the states in the next layer $\estmdpobsspace{h}$, which may sum to $\cpushcov \ge 1$. Thus, we slightly abuse the notation for expectation:
    \begin{align*}
        \En_{x \sim \estp(\cdot\mid x_{h-1}, a_{h-1})}\brk*{V^\pi(x)} \coloneqq \sum_{ x \in \estmdpobsspace{h}} \estp( \cdot\mid x_{h-1}, a_{h-1}) V^\pi(x).
    \end{align*}
    As discussed, the policy emulator anyways is not guaranteed to be a reasonable approximation of the underlying MDP $M$, just an object which enables uniform policy evaluation, so this issue is minor.
    \item Lemma 3.1 of \cite{amortila2024reinforcement} give a result of similar flavor, which shows that pushforward coverable MDPs are approximately \emph{low-rank}. Their proof, however, seems to be quite different. It relies on the Johnson-Lindenstrauss lemma to construct random embeddings which enable approximation of the Bellman backup operator for any arbitrary value function class $\cF$. 
    \item Unfortunately, we do not know how to leverage hybrid resets to construct such a policy emulator in a statistically efficient manner---the naive way to do so requires sample complexity scaling with $\spancap(\Pi)$ (which could be much larger than $\cpushcov$). We believe this is an interesting direction for future work.
\end{itemize}

\begin{proof}[Proof of \pref{prop:pushforward-coverable}]
We will prove this by explicitly constructing the policy emulator using the same algorithmic template as in \stochalg{}. To construct the state space of the policy emulator, we sample
$\wt{O}(\cpush/\eps^2)$ observations per layer from the distributions $\mu_1, \cdots, \mu_H$, respectively, that witnesses pushforward coverability at every $h \in [H]$. As shown before, the instantaneous rewards $\wh{R}(x,a)$ for every $x \in \estmdpobsspace{} \times \actionsp$ of the emulator can be learned via the local simulator up to $\eps$ accuracy. Now we show that it is possible to define the transition functions $\estp(\cdot \mid x,a)$ for every $x \in \estmdpobsspace{} \times \actionsp$ so that the resulting $\estmdp$ is an $O(\eps)$-accurate policy emulator for $d_1$. We do this inductively:

\begin{claim}
    Let $\Gamma_h > 0$. Suppose that at layer $h \in [H]$:
    \begin{align*}
        \forall~x \in \estmdpobsspace{h},~\forall~\pi \in \Pi:\quad \abs*{V^\pi(x) - \wh{V}^\pi(x)} \le \Gamma_h.
    \end{align*}
    Then for every $(x_{h-1},a_{h-1}) \in \estmdpobsspace{h-1} \times \actionsp$, there exists some $\estp \in \Delta(\estmdpobsspace{h})$ such that
    \begin{align*}
        \forall~\pi \in \Pi:\quad \abs*{Q^\pi(x_{h-1},a_{h-1}) - \wh{R}(x_{h-1}, a_{h-1}) - \En_{x \sim \estp} \brk{\wh{V}^\pi(x)} } \le \Gamma_h + 2 \eps.
    \end{align*}
\end{claim}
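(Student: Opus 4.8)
The plan is to exhibit $\estp$ explicitly as a normalized importance-reweighting of $\mu_h$ supported on the sampled points $\estmdpobsspace{h}$, and then to bound the Bellman-backup error for all $\pi \in \Pi$ at once by a triangle inequality whose only nontrivial ingredient is a Bernstein concentration bound. Write $\estmdpobsspace{h} = \crl{x^{(1)},\dots,x^{(N)}}$ for the $N = \wt{O}(\cpushcov/\eps^2)$ i.i.d.\ draws from $\mu_h$, fix $(x_{h-1},a_{h-1}) \in \estmdpobsspace{h-1}\times\actionsp$, and set the importance weights $\wh{w}_i \coloneqq \tfrac1N \cdot \tfrac{P(x^{(i)} \mid x_{h-1},a_{h-1})}{\mu_h(x^{(i)})}$. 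These are well defined and satisfy $\wh{w}_i \le \cpushcov/N$ by the definition of pushforward coverability (which in particular forces $\mu_h$ to dominate $P(\cdot\mid x_{h-1},a_{h-1})$). Put $W \coloneqq \sum_i \wh{w}_i$ and $\estp(x^{(i)}) \coloneqq \wh{w}_i/W$, a genuine element of $\Delta(\estmdpobsspace{h})$ whenever $W > 0$.

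First I would establish the statistical fact underlying everything: conditioning on the draws of the state sets in layers $\le h-1$ (so $(x_{h-1},a_{h-1})$ is fixed and $\estmdpobsspace{h}$ is fresh), for any fixed $g : \statesp_h \to [0,1]$ the reweighted sum $\sum_i \wh{w}_i\, g(x^{(i)})$ is unbiased for $\En_{x'\sim P(\cdot\mid x_{h-1},a_{h-1})}[g(x')]$ and has variance at most $\cpushcov/N$, since $\En[\wh{w}_i^2] \le \tfrac1{N^2}\sum_{x'} \tfrac{P(x')^2}{\mu_h(x')} \le \tfrac{\cpushcov}{N^2}$. Bernstein's inequality then gives $\abs*{\sum_i \wh{w}_i g(x^{(i)}) - \En_{x'\sim P}[g(x')]} \le \eps$ as soon as $N = \wt{O}(\cpushcov/\eps^2)$; crucially the variance bound (not just the range) is what makes this scale with $\cpushcov$ rather than $\cpushcov^2$, and makes $N$ independent of $\abs{\statesp}$. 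Applying this with $g = V^\pi$ (the \emph{true} value function of $M$, a fixed object) and with $g \equiv 1$ (yielding $\abs{W-1}\le\eps$), and union bounding over $\pi\in\Pi$, over all tuples $(x_{h-1},a_{h-1})\in\estmdpobsspace{h-1}\times\actionsp$, and over all layers, produces a single good event — at the advertised sample complexity — on which all the estimates below hold. Note I never concentrate a sum involving $\wh{V}^\pi$, nor compare $V^\pi$ with $\wh{V}^\pi$ outside $\estmdpobsspace{h}$ (where $\wh{V}^\pi$ is not even defined).

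On the good event, the claim follows by expanding $Q^\pi(x_{h-1},a_{h-1})$ as the expected immediate reward plus $\En_{x'\sim P(\cdot\mid x_{h-1},a_{h-1})}[V^\pi(x')]$ and decomposing $Q^\pi(x_{h-1},a_{h-1}) - \wh{R}(x_{h-1},a_{h-1}) - \En_{x\sim\estp}[\wh{V}^\pi(x)]$ into four pieces: (i) the reward-estimation error, at most $\eps$ because $\wh{R}$ estimates the mean reward to precision $\eps$ via the local simulator; (ii) the importance-sampling error $\abs*{\En_{x'\sim P}[V^\pi(x')] - \sum_i \wh{w}_i V^\pi(x^{(i)})} \le \eps$; (iii) the inductive error $\abs*{\sum_i \wh{w}_i\prn*{V^\pi(x^{(i)}) - \wh{V}^\pi(x^{(i)})}} \le \Gamma_h \sum_i \wh{w}_i = \Gamma_h W \le \Gamma_h(1+\eps)$, which invokes the hypothesis $\abs{V^\pi(x)-\wh{V}^\pi(x)}\le\Gamma_h$ only pointwise at the $x^{(i)}\in\estmdpobsspace{h}$; and (iv) the normalization error $\abs{W-1}\cdot\tfrac1W\abs*{\sum_i\wh{w}_i\wh{V}^\pi(x^{(i)})} = O(\eps)$, using $\sum_i \wh{w}_i \wh{V}^\pi(x^{(i)}) \le (1+\Gamma_h)W$ (as $\wh{V}^\pi(x^{(i)}) \le V^\pi(x^{(i)}) + \Gamma_h \le 1 + \Gamma_h$) and $\abs{W-1}\le\eps$. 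Adding these and running the internal accuracy parameters at $\eps/c$ for a suitable absolute constant $c$ (absorbing the $\Gamma_h\eps$ term, since $\Gamma_h\le 1$) gives the bound $\Gamma_h + 2\eps$. If one does not normalize and instead uses the raw measure $\wh{w}$ as the emulator's ``transition'', the same estimate goes through, matching the earlier remark that these transitions need not be probability distributions.

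The main obstacle — and the conceptual point — is steps (iii)--(iv): $\wh{V}^\pi$ lives only on the finite set $\estmdpobsspace{h}$ whereas $P(\cdot\mid x_{h-1},a_{h-1})$ is a measure on the full, possibly enormous, $\statesp_h$, so one cannot change measure from $P$ to $\estp$ directly at the level of $\wh{V}^\pi$. The importance reweighting resolves this, but only because pushforward coverability caps the weights by $\cpushcov$, which is exactly what keeps the concentration cost (and hence $N$ and $\abs{\estmdpobsspace{h}}$) polynomial in $\cpushcov$ and independent of the observation-space size; the remaining work is bookkeeping and union bounds.
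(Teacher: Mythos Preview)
Your proof is correct and follows the same core idea as the paper: define importance weights $\wh{w}_i = \tfrac{1}{N}\cdot P(x^{(i)}\mid x_{h-1},a_{h-1})/\mu_h(x^{(i)})$, use pushforward coverability to bound them by $\cpushcov/N$, and concentrate $\sum_i \wh{w}_i V^\pi(x^{(i)})$ around $\En_{x'\sim P}[V^\pi(x')]$ uniformly over $\pi\in\Pi$. The paper's decomposition is your pieces (i)--(iii), with $\estp$ taken to be the \emph{unnormalized} measure $\wh{w}$ itself; the paper explicitly remarks that its $\estp$ need not lie in $\Delta(\estmdpobsspace{h})$, so your normalized construction (and the extra piece (iv) you correctly control) is actually more faithful to the claim as stated. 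Your use of Bernstein to get the $\cpushcov/\eps^2$ rather than $\cpushcov^2/\eps^2$ sample count is also more explicit than the paper's ``standard uniform convergence bound.''
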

Applying this claim backwards from $h=H, \cdots, 1$ and using the fact that $\Gamma_H = \eps$ proves \pref{prop:pushforward-coverable}.

It remains to prove the claim. Let $\estp$ be an unnormalized measure over $\estmdpobsspace{}$ (to be defined later). First, we apply the decomposition
\begin{align*}
    &\abs*{Q^\pi(x_{h-1},a_{h-1}) - \wh{R}(x_{h-1}, a_{h-1}) - \En_{x \sim \estp} \brk{\wh{V}^\pi(x)} }\\
    &=~\abs*{R(x_{h-1}, a_{h-1}) - \wh{R}(x_{h-1}, a_{h-1})} + \abs*{ \En_{x \sim P} \brk{V^\pi(x)} - \En_{x \sim \estp} \brk*{V^\pi(x)} } + \abs*{ \En_{x \sim \estp} \brk{V^\pi(x)} - \En_{x \sim \estp} \brk{\wh{V}^\pi(x)} } \\[0.5em]
    &\le~ \Gamma_h + \eps + \abs*{ \En_{x \sim P} \brk{V^\pi(x)} - \En_{x \sim \estp} \brk{V^\pi(x)} }, \numberthis\label{eq:upperbound-pushforward-cov}
\end{align*}
where the last inequality uses the reward estimation accuracy and the assumption in the claim. To control the last term, we apply a change of measure:
\begin{align*}
    \En_{x \sim P} \brk*{V^\pi(x)} = \En_{x \sim \mu_h} \brk*{\frac{P(x \mid x_{h-1}, a_{h-1})}{\mu_h(x)} \cdot V^\pi(x)}.
\end{align*}
Observe that $\estmdpobsspace{h} = \crl{x_h^{(1)}, \cdots, x_h^{(n)} }$ are drawn i.i.d.~from $\mu_h$, and by pushforward coverability, the importance ratio $P(x \mid x_{h-1}, a_{h-1})/\mu_h(x) \le \cpushcov$. Via a standard uniform convergence bound, with probability at least $1-\delta$, for every $\pi \in \Pi$
\begin{align*}
    \bigg| \En_{x \sim \mu_h} \brk*{\frac{P(x \mid x_{h-1}, a_{h-1})}{\mu_h(x)} \cdot V^\pi(x)} - \sum_{i=1}^n \underbrace{ \frac{P(x_h^{(i)} \mid x_{h-1}, a_{h-1})}{n \cdot \mu_h(x_h^{(i)})} }_{\eqqcolon \estp(x_h^{(i)})} \cdot V^\pi(x_h^{(i)}) \bigg| \le \eps. 
\end{align*}
Plugging back our choice of $\estp$ into Eq.~\eqref{eq:upperbound-pushforward-cov} proves the claim.
\end{proof}

\clearpage
\section{Proof of Lower Bounds}\label{app:lower-bounds}

In this section, we prove our two main information theoretic lower bounds, \pref{thm:lower-bound-coverability} and \ref{thm:lower-bound-policy-completeness}.
\subsection{Lower Bound Preliminaries}

Our lower bounds are facilitated by recent developments that build a unified framework for \emph{interactive statistical decision making} (ISDM) \cite{chen2024assouad}. We will use an interactive version of Le Cam's convex hull method, which can be derived as a consequence of \cite[Thm.~2][]{chen2024assouad}. For completeness, we include the proof. It closely mirrors the proof of \cite[Prop.~4 of][]{chen2024assouad}, which shows how \cite[Thm.~2 of][]{chen2024assouad} recovers the noninteractive variant of Le Cam's convex hull method.

\begin{theorem}[Interactive Le Cam's Convex Hull Method]\label{thm:interactive-lecam-cvx}
For parameter space $\Theta$, let $\cM = \crl{M_{\theta} \mid \theta \in \Theta}$ be a class of models indexed by $\Theta$. Let $\cY$ be an observation space. For any fixed $\alg$ and distribution $\nu \in \Delta(\Theta)$, let $\Pr^{\nu, \alg} \in \Delta(\cY)$ be defined as the distribution over observations when (1) a parameter is drawn $\theta \sim \nu$, (2) the algorithm interacts with model $M_\theta$. Let $L:\Theta \times \cY \to \bbR_{+}$ be a loss function. Suppose that $\Theta_0 \subseteq \Theta$ and $\Theta_1 \subseteq \Theta$ are subsets that satisfy the separation condition
\begin{align*}
    L(\theta_0, y) + L(\theta_1, y) \ge 2\Delta, \quad \forall~y \in \cY, \theta_0 \in \Theta_0, \theta_1 \in \Theta_1.
\end{align*}
for some parameter $\Delta > 0$. Then it holds that for any $\alg$,
\begin{align*}
    \sup_{\theta \in \Theta} \En_{Y \sim \Pr^{M_\theta, \alg}} \brk*{L(\theta, Y)} \ge \frac{\Delta}{2} \max_{\nu_0 \in \Delta(\Theta_0), \nu_1 \in \Delta(\Theta_1)} \prn*{1 - \dtv\prn*{\Pr^{\nu_0, \alg}, \Pr^{\nu_1, \alg}}}. \numberthis\label{eq:le-cam-result}
\end{align*}
\end{theorem}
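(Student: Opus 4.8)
The plan is to derive this as a corollary of the interactive Fano/Assouad framework of \cite{chen2024assouad}, mimicking how the classical (non-interactive) Le Cam convex hull method follows from the two-point method after convexifying. The key observation is that an interactive lower bound only depends on the data-generating process through the induced distribution over observation transcripts $\Pr^{M_\theta,\alg}$, and that mixtures behave well: if we draw $\theta \sim \nu$ and then run $\alg$ against $M_\theta$, the resulting transcript law is exactly $\Pr^{\nu,\alg} = \En_{\theta \sim \nu}[\Pr^{M_\theta,\alg}]$. This is the only place interactivity enters, and it is automatic because $\alg$ is fixed before $\theta$ is drawn.

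First I would reduce the worst-case risk to a Bayes risk: for any priors $\nu_0 \in \Delta(\Theta_0)$, $\nu_1 \in \Delta(\Theta_1)$ and any $\lambda \in [0,1]$,
\begin{align*}
    \sup_{\theta \in \Theta} \En_{Y \sim \Pr^{M_\theta,\alg}}[L(\theta,Y)] \ge \lambda \En_{\theta \sim \nu_0}\En_{Y \sim \Pr^{M_\theta,\alg}}[L(\theta,Y)] + (1-\lambda) \En_{\theta \sim \nu_1}\En_{Y \sim \Pr^{M_\theta,\alg}}[L(\theta,Y)].
\end{align*}
Taking $\lambda = 1/2$ and using linearity of expectation, the right side equals $\tfrac12 \En_{Y \sim \Pr^{\nu_0,\alg}}[\bar L_0(Y)] + \tfrac12 \En_{Y \sim \Pr^{\nu_1,\alg}}[\bar L_1(Y)]$ where $\bar L_i(Y) := \En_{\theta \sim \nu_i}[L(\theta,Y)]$. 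The separation hypothesis $L(\theta_0,y)+L(\theta_1,y) \ge 2\Delta$ for all $y$ integrates to $\bar L_0(y) + \bar L_1(y) \ge 2\Delta$ pointwise. Now I would invoke the standard testing-to-estimation bound: for any two distributions $P_0,P_1$ on $\cY$ and any nonnegative $f_0,f_1$ with $f_0(y)+f_1(y) \ge 2\Delta$,
\begin{align*}
    \tfrac12 \En_{P_0}[f_0] + \tfrac12 \En_{P_1}[f_1] \ge \Delta \cdot \prn*{1 - \dtv(P_0,P_1)},
\end{align*}
which follows by writing $\dtv(P_0,P_1) = 1 - \int \min(dP_0,dP_1)$ and bounding $\En_{P_0}[f_0]+\En_{P_1}[f_1] \ge \int \min(dP_0,dP_1)(f_0+f_1) \ge 2\Delta \int\min(dP_0,dP_1)$. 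Applying this with $P_i = \Pr^{\nu_i,\alg}$, $f_i = \bar L_i$ gives the claimed bound for each fixed pair $(\nu_0,\nu_1)$, and taking the max over priors completes the proof. Alternatively, one can quote \cite[Thm.~2 of][]{chen2024assouad} directly with the singleton "Hellinger/TV localization" and observe the statement collapses to exactly \eqref{eq:le-cam-result}; I would present the self-contained argument above since it is short.

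The only mild subtlety — the "hard part," such as it is — is making sure the interactive mixture identity $\Pr^{\nu,\alg} = \En_{\theta\sim\nu}[\Pr^{M_\theta,\alg}]$ is stated cleanly, i.e., that the algorithm's internal randomness and the environment's stochasticity factor correctly when the environment index is itself random but drawn independently of $\alg$'s randomness seed. Since $\alg$ is a fixed (possibly randomized) interaction strategy chosen before $\theta$, Fubini applies and the identity holds by definition of $\Pr^{\nu,\alg}$ as given in the theorem statement; there is nothing to prove beyond unwinding definitions. Everything else is the routine Le Cam computation transported verbatim to the transcript space $\cY$.
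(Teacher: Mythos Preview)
Your argument has a genuine gap at the decoupling step. You claim that
\[
\En_{\theta \sim \nu_i}\En_{Y \sim \Pr^{M_\theta,\alg}}\bigl[L(\theta,Y)\bigr]
\;=\;
\En_{Y \sim \Pr^{\nu_i,\alg}}\bigl[\bar L_i(Y)\bigr],
\qquad \bar L_i(y):=\En_{\theta\sim\nu_i}[L(\theta,y)],
\]
but this is false: on the left-hand side $\theta$ and $Y$ are coupled (the law of $Y$ depends on $\theta$), whereas on the right you have replaced this by two \emph{independent} draws of $\theta$ --- one generating $Y$, one inside $\bar L_i$. A quick counterexample: let $\nu_0=\unif\{\theta_a,\theta_b\}$, let $M_{\theta_a}$ deterministically output $Y=a$ and $M_{\theta_b}$ output $Y=b$, and set $L(\theta_a,a)=L(\theta_b,b)=0$, $L(\theta_a,b)=L(\theta_b,a)=1$. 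Then the left side is $0$ while $\En_{Y\sim\Pr^{\nu_0,\alg}}[\bar L_0(Y)]=\tfrac12$. So your displayed equality is not an identity, and it is not even an inequality in the direction you need.

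The fix is exactly what the paper does: replace the prior-average $\bar L_i(y)$ by the infimum $f_i(y):=\inf_{\theta\in\Theta_i}L(\theta,y)$ (the paper writes this as $\overline L(M_{\nu_i},y)=\inf_{\theta\in\supp(\nu_i)}L(\theta,y)$). Then the pointwise bound $L(\theta,Y)\ge f_i(Y)$ for $\theta\in\supp(\nu_i)$ gives
\[
\En_{\theta\sim\nu_i,\,Y\sim\Pr^{M_\theta,\alg}}[L(\theta,Y)]
\;\ge\;
\En_{Y\sim\Pr^{\nu_i,\alg}}[f_i(Y)],
\]
and the separation hypothesis still yields $f_0(y)+f_1(y)\ge 2\Delta$ (take infima one variable at a time). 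From here your $\int\min(dP_0,dP_1)(f_0+f_1)$ computation goes through verbatim. Note that once corrected, your self-contained route actually produces $\Delta\,(1-\dtv)$, a factor of two sharper than the stated bound; the paper instead quotes \cite[Thm.~2]{chen2024assouad} and incurs the extra $\tfrac12$ through that machinery.
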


\begin{proof}
We will use \cite[Thm.~2][]{chen2024assouad} with total-variational (TV) distance $D_f \coloneqq \dtv$. Define the enlarged model class $\overline{\cM} \coloneqq \crl{M_\nu: \nu \in \Delta(\Theta)}$ as well as the loss function extension $\overline{L}: \overline{\cM} \times \cY \to \bbR_{+}$
\begin{align*}
    \overline{L}(M_\nu, y) \coloneqq \inf_{\theta \in \supp(\nu)} L(M_{\theta}, y).
\end{align*}
By the separation condition we have
\begin{align*}
    \overline{L}(M_{\nu_0}, y) + \overline{L}(M_{\nu_1}, y) \ge 2\Delta, \quad \forall~y \in \cY, \nu_0 \in \Delta(\Theta_0), \nu_1 \in \Delta(\Theta_1).
\end{align*}
We pick the prior $\mu \coloneqq \unif(\crl{M_{\nu_0}, M_{\nu_1}})$ and the reference distribution $\bbQ \coloneqq \En_{M\sim \mu} \brk{\Pr^{M, \alg}}$. Observe that
\begin{align*}
    \rho_{\Delta, \bbQ} \coloneqq \Pr_{M \sim \mu, Y \sim \bbQ} \brk*{\overline{L}(M, Y) < \Delta} \le \frac{1}{2}.
\end{align*}
Furthermore
\begin{align*}
    \En_{M \sim \mu} \brk*{\dtv\prn*{\Pr^{M, \alg} , \bbQ} } &= \frac{1}{2} \dtv\prn*{\Pr^{{\nu_0}, \alg} , \bbQ}  + \frac{1}{2} \dtv\prn*{\Pr^{{\nu_1}, \alg} , \bbQ}  \\
    &\le \frac{1}{2} \dtv\prn*{\Pr^{{\nu_0}, \alg} , \Pr^{{\nu_1}, \alg}  }.
\end{align*}
Therefore for any $\delta \in [0, \tfrac{1}{2} - \tfrac{1}{2} \dtv\prn{\Pr^{{\nu_0}, \alg} , \Pr^{{\nu_1}, \alg}  } )$ we have
\begin{align*}
    \En_{M \sim \mu} \brk*{\dtv\prn*{\Pr^{M, \alg} , \bbQ} } \le \begin{cases} 
        \dtv\prn*{ \ber(1-\delta), \ber(\rho_{\Delta, \bbQ}) } &\text{if}~\rho_{\Delta, \bbQ} \le 1-\delta \\
        0 &\text{otherwise.}
    \end{cases}
\end{align*}
Therefore \cite[Thm.~2][]{chen2024assouad} gives
\begin{align*}
    \En_{\theta \sim \frac{\nu_0 + \nu_1}{2}, Y \sim \Pr^{M_\theta, \alg}} \brk*{L(\theta, Y)} \ge \En_{M \sim \mu, Y \sim \Pr^{M, \alg}} \brk*{\overline{L}(M, Y)} \ge \frac{\Delta}{2} \cdot \prn*{ 1 - \dtv\prn*{\Pr^{{\nu_0}, \alg} , \Pr^{{\nu_1}, \alg}  } }.
\end{align*}
Taking supremum over $\nu_0$ and $\nu_1$ gives the result.
\end{proof}

In light of \pref{thm:interactive-lecam-cvx}, we need to analyze the TV distance between an algorithm $\alg$ interactions with two separate environments given by $\nu_0$ and $\nu_1$. The following chain rule lemma will be useful.

\begin{lemma}[Chain Rule for TV Distance, Exercise I.43 of \cite{polyanskiy2025information}]\label{lem:chain-rule-tv}
Let $\cZ$ be any observation space, let $\bbP^{Z_n}$ and $\bbQ^{Z_n}$ be distributions over $n$-tuples of $\cZ$. Then
\begin{align*}
    \dtv\prn*{ \bbP^{Z_n}, \bbQ^{Z_n} } \le \sum_{i=1}^n \En_{Z_{1:i-1} \sim \bbP^{Z_n}} \brk*{ \dtv \prn*{\bbP \brk*{Z_i \mid Z_{1:i-1}}, \bbQ \brk*{Z_i \mid Z_{1:i-1}}} }.
\end{align*}
\end{lemma}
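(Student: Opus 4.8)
The plan is to prove this by induction on $n$, reducing to a one-step subadditivity bound for the TV distance between joint laws.

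First I would establish the two-variable case: for any pair of joint distributions $\bbP^{XY}, \bbQ^{XY}$,
\[
\dtv\prn*{\bbP^{XY}, \bbQ^{XY}} \le \dtv\prn*{\bbP^X, \bbQ^X} + \En_{x \sim \bbP^X}\brk*{\dtv\prn*{\bbP^{Y \mid X = x},\, \bbQ^{Y \mid X = x}}}.
\]
Choosing a common $\sigma$-finite dominating measure (e.g.\ $(\bbP^{XY}+\bbQ^{XY})/2$) and writing densities as $p(x,y) = p(x)p(y\mid x)$ and $q(x,y) = q(x)q(y\mid x)$, I would insert the intermediate term $p(x)q(y\mid x)$ and apply the triangle inequality pointwise,
\[
\abs{p(x)p(y\mid x) - q(x)q(y\mid x)} \le p(x)\,\abs{p(y\mid x) - q(y\mid x)} + q(y\mid x)\,\abs{p(x) - q(x)}.
\]
Integrating over $y$ and then $x$: the first term gives $\int p(x)\cdot 2\dtv\prn{\bbP^{Y\mid X=x},\bbQ^{Y\mid X=x}}\,dx = 2\En_{x\sim\bbP^X}[\dtv(\cdots)]$, and the second term gives $\int \abs{p(x)-q(x)}\,dx = 2\dtv(\bbP^X,\bbQ^X)$ since $q(\cdot\mid x)$ is a probability density in $y$. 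Dividing by $2$ yields the claim. The only subtlety is to insert $p(x)q(y\mid x)$ rather than $q(x)p(y\mid x)$, so that the conditional-TV term is averaged under $\bbP^X$ and not $\bbQ^X$, matching the stated inequality.

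Then I would run the induction on $n$. The base case $n=1$ is an equality, since the right side reduces to its single $i=1$ term $\dtv(\bbP^{Z_1},\bbQ^{Z_1})$ (with $Z_{1:0}$ empty, the expectation is vacuous). For the inductive step, apply the two-variable bound with $X = Z_{1:n-1}$ and $Y = Z_n$: this bounds $\dtv\prn{\bbP^{Z_{1:n}},\bbQ^{Z_{1:n}}}$ by $\dtv\prn{\bbP^{Z_{1:n-1}},\bbQ^{Z_{1:n-1}}}$ plus exactly the $i=n$ term of the claimed sum; invoking the induction hypothesis on the first summand recovers the $i=1,\dots,n-1$ terms and completes the proof. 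No step here is a genuine obstacle --- this is a textbook fact (Exercise I.43 of \cite{polyanskiy2025information}) --- and the only care needed is tracking that the telescoped expectations are under $\bbP$, which is pinned down by the asymmetric choice in the add-and-subtract step above.
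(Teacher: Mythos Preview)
Your proof is correct and is the standard argument for this chain rule. Note that the paper does not actually supply its own proof of this lemma: it simply states the result and cites it as Exercise~I.43 of \cite{polyanskiy2025information}, so there is no ``paper's proof'' to compare against beyond that reference. Your add-and-subtract argument with the intermediate density $p(x)q(y\mid x)$, followed by induction, is exactly the intended textbook route, and your remark about choosing the insertion so that the conditional TV term is averaged under $\bbP^X$ (matching the asymmetry of the stated bound) is on point.
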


\paragraph{Additional Notation.} We use $\alg$ to denote a deterministic algorithm that collects $T$ samples, i.e., full-length episodes (from either the generative model or $\mu$-reset access). For any $t \in [T]$ we define $\cF_{t-1}$ to be the sigma-field of everything observed in the first $t-1$ episodes. We further define for any $h \in [H]$ the filtration $\cF_{t,h-1}$ to be the sigma-field of everything observed in the first $t-1$ episodes as well as the first $h-1$ steps of the $t$-th sample. To handle the difference in interaction models, $\cF_{t,h-1}$ is defined slightly differently:
\begin{itemize}
    \item For the generative model, $\cF_{t,h-1} \coloneqq \sigma( \cF_{t-1}, \crl{\prn{X_{t,i}, A_{t,i}, R_{t,i}, X'_{t,i}}}_{i \le h-1} )$, where $R_{t,i}$ and $X'_{t,i}$ are the reward and transition which is returned by the environment. The tuple $(X_{t,h}, A_{t,h})$ is measurable with respect to $\cF_{t,h-1}$ (since $\alg$ is deterministic).
    \item For the $\mu$-reset model, $\cF_{t,h-1} = \sigma( \cF_{t-1}, \crl{\prn{X_{t,i}, A_{t,i}, R_{t,i}}}_{h_\bot \le i \le h-1} )$. Here, $h_\bot$ is the starting layer of episode $t$, which is measurable with respect to $\cF_{t-1}$; furthermore, the action $A_{t,h}$ is measurable with respect to $\cF_{t,h-1} \cup \crl{X_{t,h}}$ (since $\alg$ is deterministic).
\end{itemize}

Lastly, we denote partial policies $\pi_{h_\bot:h_\top} \in \Pi_{h_\bot:h_\top}$ for some $1 \le h_\bot \le h_\top \le H$. We sometimes drop the subscript $h_\bot:h_\top$ if clear from context. We may also overload equality to compare partial policies $\pi_{h_\bot: h_\top}$ with complete policies $\pi'_{1:H}$, i.e., we write $\pi = \pi'$ iff $\pi_{h_\bot: h_\top} = \pi'_{h_\bot: h_\top}$.

\subsection{Proof of \pref{thm:lower-bound-coverability}}\label{app:lower-bound-generative-access}

The construction of \pref{thm:lower-bound-coverability} is given by the rich observation combination lock, which has appeared in previous lower bounds for RL \cite{sekhari2021agnostic, jia2023agnostic}. Since the rich observation combination lock is a Block MDP with $2$ latent states per layer, it satisfies $\ccov = 2$. The key intuition is that the set of observations associated with latent states on the good chain is much smaller than the set of observations associated with latent states on the bad chain. Therefore, even though coverability is small, the learner cannot effectively use the generative model to ``guess'' observations which are emitted from states on the good chain. In other words, they cannot sample from a distribution with low concentrability, which is crucial for learning $\optpi$.

\paragraph{Lower Bound Construction.} First, we define the policy class $\Pi$ to be open loop policies:
\begin{align*}
    \Pi \coloneqq \crl{\pi: \forall x \in \statesp_h, \pi_h(x) \equiv a_h, (a_1, \cdots, a_H) \in \actionsp^H}.
\end{align*}
We define a family of Block MDPs $\cM = \crl{M_{\optpi, \optdec}}_{\optpi \in \Pi, \optdec \in \Phi}$ which are parameterized by an optimal policy $\optpi \in \Pi$ and a decoding function $\optdec \in \Phi$ (to be described). An example is illustrated in \pref{fig:lb1}.
\begin{itemize}
    \item \textbf{Latent MDP:} The latent state space $\latentsp$ is layered where each $\latentsp_h \coloneqq \crl{ s_h^\sgood, s_h^\sbad}$ is comprised of a good and bad state. We abbreviate the state as $\crl{\sgood, \sbad}$ if the layer $h$ is clear from context. The action space $\actionsp = \crl{0,1}$. The starting state is always $\sgood$. Let $\optpi \in \Pi$ be any policy, which can be represented by a vector in $(\pi^\star_1, \cdots, \pi^\star_H) \in \crl{0,1}^H$. The latent transitions/rewards of an MDP parameterized by $\optpi \in \Pi$ are given by the standard combination lock. For every $h \in [H]$: 
    \begin{align*}
        \optlatp(\cdot  \mid  s, a) = \begin{cases}
            \delta_{s_{h+1}^\sgood} & \text{if}~s = s_h^\sgood ~\text{and}~a = \pi^\star_h\\
            \delta_{s_{h+1}^\sbad}& \text{otherwise}.
        \end{cases} \quad \text{and} \quad  \optlatr(s,a) = \ind{s = s_H^\sgood, a = \pi^\star_H}.
    \end{align*} 
    \item \textbf{Rich Observations:} The observation state space $\statesp$ is layered where each $\cX_h \coloneqq \crl{x_h^{(1)}, \cdots, x_h^{(m)}}$ with $m = 2^{2H}$. The decoding function class $\Phi$ is the collection of all decoders which for every $h \ge 2$ assigns $s_h^\sgood$ to a subset of $\statesp_h$ of size $2^H$ and $s_h^\sbad$ to the rest:
    \begin{align*}
        \Phi &\coloneqq \crl*{\optdec: \statesp \mapsto \latentsp :~\forall~ x_1 \in \statesp_1,~\optdec(x_1) = \sgood,  \text{ and } \forall~h \ge 2, \abs*{ \crl*{x \in \statesp_h: \optdec(x) = \sgood}} = 2^H},\\
        &\text{so that}~\abs{\Phi} = {2^{2H} \choose 2^H}^{H-1} = 2^{2^{\wt{O}(H)}}.
    \end{align*}
    In the MDP parameterized by $\optdec \in \Phi$, the emission for every $s \in \latentsp$ is $\emission(s) = \unif\prn*{\crl*{x \in \statesp_h: \optdec(x) = s}}$.
\end{itemize}

Now we establish several facts about the lower bound construction. Fix any $M = M_{\optpi, \optdec}$.
\begin{enumerate}
    \item Since $M$ is a Block MDP with 2 latent states per layer, $\ccov(\Pi, M) = 2$.
    \item The class $\Pi$ satisfies policy completeness with respect to $M$. To see this, fix any layer $h \in [H]$ and partial policy $\pi \in \Pi_{h+1:H}$. We have:
    \begin{align*}
        \forall~(x,a) \in \statesp_h \times \actionsp:\quad Q^\pi(x,a) = \ind{\optdec(x) = s_h^\sgood, a = \pi^\star_h, \pi = \pi^\star_{h+1:H}}.
    \end{align*}
    Therefore in \pref{def:policy-completeness} we can take $\wt{\pi}_h \coloneqq \pi^\star_h$, which satisfies $\wt{\pi}_h \in \argmax_{a \in \actionsp} Q^\pi(x,a)$ for all $x \in \statesp_h$.
\end{enumerate}

\paragraph{Sample Complexity Lower Bound.} We will use \pref{thm:interactive-lecam-cvx} to prove our lower bound. First we need to instantiate the parameter space. We will let $\Theta \coloneqq \crl{(\optpi, \optdec):~\optpi \in \Pi, \optdec \in \Phi}$ so that $\cM = \crl{M_\theta}_{\theta \in \Theta} = \crl{M_{\optpi, \optdec}}_{\optpi \in \Pi, \optdec \in \Phi}$. We further denote the subsets 
\begin{align*}
    \Theta_0 &\coloneqq \crl{(\optpi, \optdec):~\optpi \in \Pi~\text{s.t.}~\optpi_H = 0, \optdec \in \Phi} \\
    \Theta_1 &\coloneqq \crl{(\optpi, \optdec):~\optpi \in \Pi~\text{s.t.}~\optpi_H = 1, \optdec \in \Phi}
\end{align*}
The observation space $\cY$ is defined as the set of observations over $T$ rounds as well as returned proper policy for an algorithm interacting with the MDP, i.e.,
\begin{align*}
    \cY \coloneqq (\statesp \times \actionsp \times [0,1])^{HT} \times \Pi.
\end{align*}
For technical convenience, we will suppose that $\alg$ sequentially queries the generative model by looping over layers, i.e., it queries $(X_1, A_1) \in \statesp_1 \times \actionsp$, then $(X_2, A_2) \in \statesp_2 \times \actionsp$, etc. This only increases the sample complexity of $\alg$ by a factor of $H$, which is negligible since we will show that $\alg$ requires $\exp(H)$ samples.

For an observation $y \in \cY$ we define the final returned policy as $y^\pi$. The loss function is given by
\begin{align*}
    L((\optpi, \phi), y) \coloneqq \ind{\optpi \ne y^\pi}. 
\end{align*}
Then we have for any $y \in \cY$, $(\optpi_0, \phi_0) \in \Theta_0$, and $(\optpi_1, \phi_1) \in \Theta_1$ that
\begin{align*}
    L((\optpi_0, \phi_0), y) +  L((\optpi_1, \phi_1), y) \ge 1 \coloneqq 2\Delta,
\end{align*}
since the last bit of $y^\pi$ can be either 0 or 1, thus only matching exactly one of $\optpi_0$ and $\optpi_1$.

Now we are ready to apply \pref{thm:interactive-lecam-cvx}. We get that for any $\alg$, we must have
\begin{align*}
    \sup_{(\optpi, \optdec) \in \Pi \times \Phi} \En_{Y \sim \Pr^{M_{\optpi, \optdec}, \alg}} \brk*{V^\star - V^{\estpi}} &= \sup_{(\optpi, \optdec) \in \Pi \times \Phi} \En_{Y \sim \Pr^{M_{\optpi, \optdec}, \alg}} \brk*{1 - \ind{\optpi = Y^\pi} } \\
    &= \sup_{(\optpi, \optdec) \in \Pi \times \Phi} \En_{Y \sim \Pr^{M_{\optpi, \optdec}, \alg}} \brk*{ L((\optpi, \phi), Y) } \\
    &\ge \frac{1}{4} \cdot \max_{\nu_0 \in \Delta(\Theta_0), \nu_1 \in \Delta(\Theta_1)} \prn*{1 - \dtv\prn*{\Pr^{\nu_0, \alg}, \Pr^{\nu_1, \alg}}} \\
    &\ge \frac{1}{4} \cdot \prn*{1 - \dtv\prn*{\Pr^{\unif(\Theta_0), \alg}, \Pr^{\unif(\Theta_1), \alg}}}. 
\end{align*}
It remains to compute an upper bound $\dtv\prn*{\Pr^{\unif(\Theta_0), \alg}, \Pr^{\unif(\Theta_1), \alg}}$ which holds for any $\alg$. This is accomplished by the following lemma.

\begin{lemma}\label{lem:tv-bound-generative} Let $T = 2^{O(H)}$. For any deterministic $\alg$ that adaptively collects $HT$ samples via generative access, we have
\begin{align*}
    \dtv\prn*{\Pr^{\unif(\Theta_0), \alg}, \Pr^{\unif(\Theta_1), \alg}} \le \frac{T^4H}{2^{H-9}}.
\end{align*}
\end{lemma}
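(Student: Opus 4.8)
\textbf{Proof plan for \pref{lem:tv-bound-generative}.} The plan is to bound the TV distance by the probability of a single ``reaching'' event using the chain rule for TV distance (\pref{lem:chain-rule-tv}). Write the transcript of $\alg$ as the responses to its $HT$ generative queries (recall we assumed $\alg$ loops over layers, making one query $(X_{t,h},A_{t,h})$ at each layer $h$ within each episode $t$). The key structural observation is that $\unif(\Theta_0)$ and $\unif(\Theta_1)$ induce the \emph{same} prior on $(\optpi_{1:H-1},\optdec)$ and differ only in the bit $\optpi_H$, which enters $M_{\optpi,\optdec}$ only through the reward $\optlatr(s_H^{\mathsf g},\cdot)$. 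Hence, conditioned on any prefix, the posterior response distributions for a query $(X,A)$ coincide unless $h(X)=H$; and for $h(X)=H$ the reward is $\ber\bigl(p\cdot\mathbbm{1}\{A=0\}\bigr)$ under $\Theta_0$ and $\ber\bigl(p\cdot\mathbbm{1}\{A=1\}\bigr)$ under $\Theta_1$, where $p=\Pr[\optdec(X)=s_H^{\mathsf g}\mid\text{prefix}]$ is the same under both posteriors, so the conditional TV distance is exactly $p$. Plugging this into \pref{lem:chain-rule-tv} and keeping only the $T$ last-layer queries yields
\[
\dtv\prn*{\Pr^{\unif(\Theta_0),\alg},\Pr^{\unif(\Theta_1),\alg}}\;\le\;\sum_{t=1}^{T}\Pr^{\unif(\Theta_0),\alg}\brk*{\optdec(X_{t,H})=s_H^{\mathsf g}},
\]
so it suffices to bound, for each episode $t$, the probability that $\alg$'s last-layer query lands on an observation emitted by the good latent state.

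The heart of the argument is the claim that, until $\alg$ first queries such a ``good'' last-layer observation (call this stopping time $\tau$, and $\mathcal E=\{\tau\le T\}$), its transcript contains \emph{no information} about $(\optpi_1,\dots,\optpi_{H-1})$ or about which observations at layers $\ge 2$ are good. Indeed, all rewards at layers $h<H$ are identically $0$, and a transition out of $x$ only returns a fresh uniform sample from $\emission(s_{h(x)+1}^{\mathsf g})$ or $\emission(s_{h(x)+1}^{\mathsf b})$; since $\optdec$ restricted to each layer $\ge 2$ is a uniformly random balanced partition, independent across layers, these samples are useless until a rewarding state ``closes the loop.'' Granting this, conditionally on $\{\tau\ge t\}$ and the prefix: (i) a query to a fresh observation at any layer $\ge 2$ lands in the good set with probability at most $2^{H}/(2^{2H}-HT)\le 2^{1-H}$; and (ii) a query to a good observation at layer $h-1$ returns a good observation at layer $h$ with probability exactly $\tfrac12$. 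Every observation $X_{t,H}$ that $\alg$ can query at the last layer is the endpoint of a chain of such steps rooted either at a layer-$1$ observation (all good, but then $H-1$ successive actions must equal $\optpi_{1:H-1}$, probability $2^{-(H-1)}$) or at a lucky fresh guess at some layer $\ge 2$ (probability $\le 2^{1-H}$, only smaller after further steps), so $\Pr[\optdec(X_{t,H})=s_H^{\mathsf g}\mid\text{prefix},\,\tau\ge t]\le 2^{2-H}$. A union bound over episodes then gives $\Pr[\mathcal E]\le T\cdot 2^{2-H}$, and feeding this back, $\Pr^{\unif(\Theta_0),\alg}[\optdec(X_{t,H})=s_H^{\mathsf g}]\le \Pr[\tau<t]+2^{2-H}\le (T+1)\,2^{2-H}$, whence $\dtv\le T(T+1)\,2^{2-H}$, comfortably within the stated bound $T^4H/2^{H-9}$ (with room for the looser constants incurred by a less careful accounting).

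The main obstacle is making the ``no information about $\optpi$ before $\mathcal E$'' step fully rigorous, since $\alg$ is adaptive: it may query the same observation repeatedly, branch on both actions, and correlate queries across episodes, so one must verify that the joint posterior of $(\optpi_{1:H-1},\optdec)$ given the transcript, intersected with $\{\tau\ge t\}$, really does retain the product form (independent uniform $\optpi_h$'s; uniform balanced partition on the un-queried observations of each layer). I would prove this by induction over the sequence of $HT$ queries, maintaining this product posterior as an invariant and checking case by case (fresh observation; previously received-but-not-queried observation; previously queried observation) that each response preserves it, using that the only channel through which an action choice $A$ at layer $h$ influences the good/bad label of any later observation is the symmetric bit $\mathbbm{1}\{A=\optpi_h\}$. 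The remaining steps (the chain-rule reduction, the two elementary probability bounds, and the final union bound) are routine once this invariant is in place.
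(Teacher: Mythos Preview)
Your overall plan is close in spirit to the paper's, but two of your exactness claims break down and would need to be replaced by approximate versions.

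First, the chain-rule step ``the posterior response distributions coincide unless $h(X)=H$'' is not correct as stated. Once the prefix contains a layer-$H$ query $(x_H,a)$ with $R=0$, the posterior on $\optdec_H$ is updated \emph{differently} under $\nu_0$ and $\nu_1$: under $\nu_0$ (where $\optpi_H=0$) a zero reward with $a=0$ forces $\optdec(x_H)=\sbad$, whereas under $\nu_1$ it is uninformative; this feeds back into the conditional law of any subsequent layer-$(H{-}1)$ transition (which outputs an element of $\cX_H$). So the non-layer-$H$ terms in the chain rule are not exactly zero. Your first displayed inequality \emph{is} achievable, but via a direct coupling (sample $(\optpi_{1:H-1},\optdec)$ identically under both priors; transcripts then agree until the first layer-$H$ query hits a good observation), not via the per-step argument you wrote. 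The paper instead keeps the chain-rule decomposition and handles these transition terms explicitly by triangulating through $\unif(\cX_{h+1})$, bounding the deviation by $O(t/2^H)$ on both sides (\pref{lem:tv-distance-from-uniform-generative}).

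Second, your ``product posterior'' invariant is not maintained exactly, and your proposed inductive proof would fail. Two effects break it: (a) the decoder partition is \emph{not} balanced---$|\supp(\emission(s_h^{\mathsf g}))|=2^H$ while $|\supp(\emission(s_h^{\mathsf b}))|=2^{2H}-2^H$---so even a single transition sample tilts the posterior on $(\optdec_{h+1},\optpi_h)$ away from product form; (b) repeated observations (birthday collisions among transitions) carry real information about $\optpi$ since they constrain labels across branches. Consequently your claims ``a query to a good observation at layer $h{-}1$ returns good with probability exactly $\tfrac12$'' and ``all $H{-}1$ actions match $\optpi_{1:H-1}$ with probability $2^{-(H-1)}$'' are only approximately true. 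The paper makes precisely this approximation rigorous: it peels off low-probability bad events $\cE_F$ (fresh queries at layers $\ge 2$ are all bad; \pref{lem:fresh-states}) and $\cE_N$ (no repeated transitions; \pref{lem:repeated-transitions}), and then shows the posterior on $\optpi_{1:H-1}$ is within $O(T^2H/2^H)$ of uniform in TV (\pref{lem:posterior-of-optimal-generative}) by computing $\Pr[\cF_{t,H-1}\mid\optpi=\pi]$ up to multiplicative $(1\pm T/2^H)^{O(HT)}$ factors. With these pieces, the reward term is bounded exactly as you anticipate, but the final rate is the looser $T^4H/2^{H-9}$ because the approximation error from \pref{lem:posterior-of-optimal-generative} enters quadratically. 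Your cleaner $O(T^2/2^H)$ endpoint is plausible via the coupling route, but getting there still requires an approximate-posterior lemma of the same flavor; the exact product-form invariant you propose cannot be established.
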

Plugging in \pref{lem:tv-bound-generative}, we conclude that for any $\alg$ that collects $2^{cH}$ samples for sufficiently small constant $c > 0$ must be $1/8$-suboptimal in expectation. This concludes the proof of \pref{thm:lower-bound-coverability}.\qed

\subsection{Proof of \pref{lem:tv-bound-generative} (TV Distance Calculation for \pref{thm:lower-bound-coverability})}

Let us denote $\nu_0 \coloneqq \unif(\Theta_0)$ and $\nu_1 \coloneqq \unif(\Theta_1)$. By the TV distance chain rule (\pref{lem:chain-rule-tv}) we have
\begin{align*}
     \hspace{2em}&\hspace{-2em} \dtv\prn*{\Pr^{\nu_0, \alg}, \Pr^{\nu_1, \alg}} \\
     &\le \sum_{t=1}^T \sum_{h=1}^H \En^{\nu_0, \alg} \brk*{\dtv \prn*{\Pr^{\nu_0, \alg} \brk*{ X_{t,h}, A_{t,h} \mid \cF_{t,h-1} }, \Pr^{\nu_1, \alg} \brk*{ X_{t,h}, A_{t,h} \mid \cF_{t,h-1} }}} \\
     &\qquad + \En^{\nu_0, \alg} \brk*{\dtv \prn*{\Pr^{\nu_0, \alg} \brk*{ X_{t,h}', R_{t,h} \mid X_{t,h}, A_{t,h}, \cF_{t,h-1} }, \Pr^{\nu_1, \alg} \brk*{ X_{t,h}', R_{t,h} \mid X_{t,h}, A_{t,h}, \cF_{t,h-1} }}} \\
     &= \sum_{t=1}^T \sum_{h=1}^H \En^{\nu_0, \alg} \brk*{\dtv \prn*{\Pr^{\nu_0, \alg} \brk*{ X_{t,h}', R_{t,h} \mid \cF_{t,h-1} }, \Pr^{\nu_1, \alg} \brk*{ X_{t,h}', R_{t,h} \mid \cF_{t,h-1} }}} \\
     &= \underbrace{ \sum_{t=1}^T \sum_{h=1}^{H-1} \En^{\nu_0, \alg} \brk*{\dtv \prn*{\Pr^{\nu_0, \alg} \brk*{ X_{t,h}' \mid  \cF_{t,h-1} }, \Pr^{\nu_1, \alg} \brk*{ X_{t,h}' \mid  \cF_{t,h-1} }}} }_{\text{transition TV distance}} \\
     &\qquad + 
     \underbrace{ \sum_{t=1}^T \En^{\nu_0, \alg} \brk*{\dtv \prn*{\Pr^{\nu_0, \alg} \brk*{ R_{t,H} \mid  \cF_{t,H-1} }, \Pr^{\nu_1, \alg} \brk*{ R_{t,H} \mid  \cF_{t,H-1} }}} }_{\text{reward TV distance}}.
\end{align*}
The first equality follows from the fact that the TV distance for the distribution over state-action pairs $(X_{t,h}, A_{t,h})$ is zero since $(X_{t,h}, A_{t,h})$ is measurable with respect to $\cF_{t,h-1}$. The second equality follows because the rewards only come at the last layer in every MDP instance.

We now show how to bound each term separately.

\paragraph{Transition TV Distance.} For the transition TV distance, we have the following computation for all $t \in [T], h \in [H-1]$:
\begin{align*}
    \hspace{2em}&\hspace{-2em} \En^{\nu_0, \alg} \brk*{\dtv \prn*{ \Pr^{\nu_0, \alg} \brk*{ X_{t,h}' \mid  \cF_{t, h-1} }, \Pr^{\nu_1, \alg} \brk*{X_{t,h}' \mid  \cF_{t, h-1} } } } \\
    &\overset{(i)}{\le} \En^{\nu_0, \alg} \brk*{\dtv \prn*{ \Pr^{\nu_0, \alg} \brk*{ X_{t,h}' \mid  \cF_{t, h-1} }, \unif(\cX_{h+1}) } }  + \En^{\nu_0, \alg} \brk*{\dtv \prn*{ \Pr^{\nu_1, \alg} \brk*{ X_{t,h}' \mid  \cF_{t, h-1} }, \unif(\cX_{h+1}) } } \\ 
    &\overset{(ii)}{\le} \frac{t}{2^{H-3}}. \numberthis \label{eq:ub-transition-calculation}
\end{align*}
The inequality $(i)$ follows by triangle inequality and the inequality $(ii)$ uses \pref{lem:tv-distance-from-uniform-generative}.

\paragraph{Reward TV Distance.} 
We can compute that
\begin{align*}
    \hspace{2em}&\hspace{-2em} \En^{\nu_0, \alg} \brk*{\dtv \prn*{\Pr^{\nu_0, \alg} \brk*{ R_{t,H} \mid  \cF_{t,H-1} }, \Pr^{\nu_1, \alg} \brk*{ R_{t,H} \mid  \cF_{t,H-1} }}} \\
    &\overset{(i)}{\le} \En^{\nu_0, \alg} \brk*{\dtv \prn*{\Pr^{\nu_0, \alg} \brk*{ R_{t,H} \mid  \cF_{t,H-1} },  \delta_0 }} + \En^{\nu_0, \alg} \brk*{\dtv \prn*{\Pr^{\nu_1, \alg} \brk*{ R_{t,H} \mid  \cF_{t,H-1} },  \delta_0 }} \\
    &\overset{(ii)}{=} \En^{\nu_0, \alg} \brk*{\Pr^{\nu_0, \alg} \brk*{ R_{t,H} = 1 \mid  \cF_{t,H-1} } } + \En^{\nu_0, \alg} \brk*{ \Pr^{\nu_1, \alg} \brk*{ R_{t,H} = 1 \mid  \cF_{t,H-1} } } \overset{(iii)}{\le} t \cdot \frac{T^2H}{2^{H-8}}. \numberthis\label{eq:reward-tv}
\end{align*}
The inequality $(i)$ follows by triangle inequality, while $(ii)$ uses the fact that the rewards are in $\crl{0,1}$. Lastly, $(iii)$ follows by \pref{lem:reward-bound-lb1}.

\paragraph{Final Bound.} Thus, combining Eqs.~\eqref{eq:ub-transition-calculation} and \eqref{eq:reward-tv} we can conclude that:
\begin{align*}
    \dtv\prn*{\Pr^{\nu_0, \alg}, \Pr^{\nu_1, \alg}} \le \frac{T^2H}{2^{H-3}} + \frac{T^4H}{2^{H-8}} \le \frac{T^4H}{2^{H-9}}.
\end{align*}
This concludes the proof of \pref{lem:tv-bound-generative}.\qed

\begin{lemma}[Transition TV Distance for Construction in \pref{thm:lower-bound-coverability}]\label{lem:tv-distance-from-uniform-generative}
For any $t \in [T], h \in [H-1]$, we have
\begin{align*}
    \nrm*{\Pr^{\nu_0, \alg}\brk*{X_{t,h}'  \mid X_{t,h}, A_{t,h}, \cF_{t, h-1}} - \unif(\statesp_{h+1}) }_1 &\le \frac{t}{2^{H-2}}, \\
    \nrm*{\Pr^{\nu_1, \alg}\brk*{X_{t,h}'  \mid X_{t,h}, A_{t,h}, \cF_{t, h-1}} - \unif(\statesp_{h+1})}_1 &\le \frac{t}{2^{H-2}}.
\end{align*}
\end{lemma}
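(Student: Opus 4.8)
The plan is to prove the inequality pointwise in the conditioning information, i.e.\ to bound $\nrm{\Pr^{\nu_0, \alg}[X'_{t,h} \in \cdot \mid \mathcal{G}] - \unif(\statesp_{h+1})}_1$ for \emph{every} realization of $\mathcal{G} \coloneqq (X_{t,h}, A_{t,h}, \cF_{t,h-1})$, after which averaging over $\mathcal{G}$ gives the claim (everything is finite, so ``posterior'' just means an elementary conditional probability). The first observation is that few layer-$(h+1)$ observations have been revealed so far: since $\alg$ queries the generative model layer by layer, $\cF_{t,h-1}$ contains no layer-$(h+1)$ data from episode $t$, so the set $\mathcal{A} \subseteq \statesp_{h+1}$ of layer-$(h+1)$ observations occurring in $\mathcal{G}$ consists only of the query $X_{s,h+1}$ and the transition outcome $X'_{s,h}$ for each of the $t-1$ completed episodes; hence $\abs{\mathcal{A}} \le 2(t-1)$. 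I will carry out the argument for $\Pr^{\nu_0,\alg}$; the case $\Pr^{\nu_1,\alg}$ is verbatim the same, since the value of $\optpi_H$ is irrelevant to layers $\le h+1 \le H$.

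The heart of the proof is a symmetry claim: conditioned on $\mathcal{G}$, the map $x' \mapsto \Pr^{\nu_0, \alg}[X'_{t,h} = x' \mid \mathcal{G}]$ is constant on $\statesp_{h+1} \setminus \mathcal{A}$. Fix a permutation $\sigma$ of $\statesp_{h+1}$ fixing $\mathcal{A}$ pointwise and consider the bijection $\optdec \mapsto \optdec \circ \sigma^{-1}$ on $\Phi$ (it maps $\Phi$ to itself since $h+1 \ge 2$, so permuting layer $h+1$ preserves the size-$2^H$ good fiber). The prior $\unif(\Phi)$ is invariant under this bijection, and so is the likelihood of the observed transcript: the only places $\optdec$ restricted to $\statesp_{h+1}$ influences the law of the transcript are (i) the emission probability $\emission(\cdot)(X'_{s,h})$ of each transition outcome that lands at layer $h+1$, (ii) the latent state reached (hence the downstream observation and any reward) upon querying each $X_{s,h+1}$ at layer $h+1$, and (iii) when $h+1=H$, the terminal reward from $X_{s,H}$ --- and in all cases the only observations involved, $X'_{s,h}, X_{s,h+1}, X_{s,H}$, lie in $\mathcal{A}$, which $\sigma$ leaves unchanged. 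Meanwhile $\Pr[X'_{t,h} = x' \mid \optpi, \optdec, \mathcal{G}] = \emission(S_{t,h+1})(x')$, where the latent state $S_{t,h+1}$ reached by the episode-$t$ query is determined by $\optdec(X_{t,h})$ and $\optpi_h$ (layer $h$, untouched by $\sigma$), so $\Pr[X'_{t,h} = x' \mid \optpi, \optdec \circ \sigma^{-1}, \mathcal{G}] = \Pr[X'_{t,h} = \sigma^{-1}(x') \mid \optpi, \optdec, \mathcal{G}]$. Substituting $\optdec \mapsto \optdec \circ \sigma^{-1}$ in $\Pr^{\nu_0,\alg}[X'_{t,h} = x' \mid \mathcal{G}] = \En_{(\optpi, \optdec)}[\Pr[X'_{t,h} = x' \mid \optpi, \optdec, \mathcal{G}] \mid \mathcal{G}]$ and using invariance of the posterior then yields $\Pr^{\nu_0, \alg}[X'_{t,h} = x' \mid \mathcal{G}] = \Pr^{\nu_0, \alg}[X'_{t,h} = \sigma^{-1}(x') \mid \mathcal{G}]$; letting $\sigma$ range over transpositions inside $\statesp_{h+1} \setminus \mathcal{A}$ proves the claim. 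Consequently, writing $p \coloneqq \Pr^{\nu_0, \alg}[X'_{t,h} \in \mathcal{A} \mid \mathcal{G}]$, the conditional law of $X'_{t,h}$ equals $p\,q_{\mathcal{A}} + (1-p)\,\unif(\statesp_{h+1} \setminus \mathcal{A})$ for some distribution $q_{\mathcal{A}}$ supported on $\mathcal{A}$.

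To finish, note $X'_{t,h} \sim \emission(S_{t,h+1})$ and every emission in the construction places mass at most $2^{-H}$ on any single observation ($\emission(\sgood)$ is uniform over $2^H$ observations and $\emission(\sbad)$ is uniform over $2^{2H} - 2^H \ge 2^H$ observations), so $p \le \abs{\mathcal{A}} \cdot 2^{-H}$. A short computation from the mixture form above (splitting according to whether the overweighted observations lie in $\mathcal{A}$ or in its complement) gives $\nrm{\Pr^{\nu_0, \alg}[X'_{t,h} \in \cdot \mid \mathcal{G}] - \unif(\statesp_{h+1})}_1 \le \max\{\,2p,\ 2\abs{\mathcal{A}}/2^{2H}\,\} \le 2\abs{\mathcal{A}}\cdot 2^{-H} \le 4(t-1)\cdot 2^{-H} \le t/2^{H-2}$, and since this holds for every $\mathcal{G}$ it holds after conditioning; the same chain applies to $\Pr^{\nu_1, \alg}$.

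The step I expect to require the most care is the symmetry argument: one must enumerate \emph{every} way the layer-$(h+1)$ restriction of the decoder enters the law of the observed transcript (emissions into layer $h+1$, latent transitions out of layer $h+1$, and the terminal reward when $h+1=H$) and verify that each of them factors through $\optdec$ restricted to the revealed set $\mathcal{A}$. Once that is in place, the size bound $\abs{\mathcal{A}} \le 2(t-1)$ and the final total-variation estimate are routine bookkeeping.
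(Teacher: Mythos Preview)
Your argument is correct and reaches the stated bound, but it proceeds along a different line than the paper. The paper introduces an \emph{annotated} filtration $\cF'_{t,h-1}$ that additionally reveals the latent labels $\optdec(X)$ of all observed states and the indicators $\ind{A = \optpi(X)}$; under this enlarged conditioning the label of the next latent state is deterministic, and one can write down the conditional law of $X'_{t,h}$ by elementary counting over the remaining decoder assignments (separate formulas for $\ell = \sgood$ and $\ell = \sbad$), then bound the $\ell_1$ distance from uniform case by case, and finally pass back to the unannotated filtration via Jensen.

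You avoid the annotation entirely by exploiting a permutation symmetry: since the posterior over $(\optpi,\optdec)$ given $\mathcal{G}$ is invariant under relabelings of $\cX_{h+1}$ that fix the already-seen observations $\mathcal{A}$, the conditional law of $X'_{t,h}$ must be constant on $\cX_{h+1}\setminus\mathcal{A}$, and a crude per-state emission bound ($\le 2^{-H}$) controls the mass on $\mathcal{A}$. This is cleaner in that it dispenses with the casework on $\ell$ and with the annotated filtration altogether; the paper's route, in exchange, makes the exact form of the conditional law explicit, which can be handy if one later needs finer information than a TV bound. Your enumeration of how $\optdec|_{\cX_{h+1}}$ enters the likelihood (emissions into layer $h+1$, decoding of queries at layer $h+1$, terminal reward when $h+1=H$) is complete, and your count $|\mathcal{A}|\le 2(t-1)$ is in fact slightly more careful than the paper's.
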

\begin{proof}[Proof of \pref{lem:tv-distance-from-uniform-generative}]
We prove the bound for $\nu_0$, since the proof for $\nu_1$ is identical. Denote the ``annotated'' sigma-field
\begin{align*}
    \cF_{t, h-1}' = \sigma \Bigg( &\cF_{t, h-1}, X_{t,h}, A_{t,h}, \crl*{\optdec(X): X \in \cF_{t, h-1} \cup \crl{X_{t,h}} }, \\
    &\crl*{ \ind{A = \optpi(X)}: (X,A) \in \cF_{t, h-1} \cup \crl{X_{t,h}, A_{t,h}} } \Bigg) 
\end{align*}
to be the sigma-field which also includes the latent state labels for all of the seen observations as well as whether the actions taken followed $\optpi$ or not. Let us denote $\ell = \phi(X_{t,h}') \in \crl{\sgood, \sbad}$ to be the latent state of the next observation. Observe that the label $\ell$ is measurable with respect to $\cF_{t,h-1}'$ since the filtration $\cF_{t-1}'$ includes $\optdec(X_{t,h})$ as well as $\ind{A_{t,h} = \optpi(X_{t,h})}$. Furthermore denote $\cX_\mathsf{obs}$ to denote the total number of observations that we have encountered already in layer $h+1$ and $\cX_\mathsf{obs}^\ell$ to denote the observations we have encountered whose latent state is $\ell$.

Under the uniform distribution over decoders, the assignment of the remaining observations is equally likely. Therefore we can write the distribution of $X_{t,h}'$ as: 
\begin{align*}
    \text{if}~\ell = \sgood:&\quad \Pr^{\nu_0, \alg} \brk*{X_{t,h}' = x  \mid  \cF_{t,h-1}'} = \begin{cases}
        \frac{1}{2^H} &\text{if}~x \in \cX_\mathsf{obs}^\ell \\
        0 &\text{if}~x \in \cX_\mathsf{obs} - \cX_\mathsf{obs}^\ell \\
        \frac{1}{2^H} \cdot \frac{2^H - \abs{\cX_\mathsf{obs}^\ell}}{2^{2H} - \abs{\cX_\mathsf{obs}}} &\text{if}~x \in \cX_{h+1} - \cX_\mathsf{obs}
    \end{cases}\\
    \text{if}~\ell = \sbad:&\quad \Pr^{\nu_0, \alg} \brk*{X_{t,h}' = x  \mid  \cF_{t,h-1}'} = \begin{cases}
        \frac{1}{2^{2H}-2^{H}} &\text{if}~x \in \cX_\mathsf{obs}^\ell \\
        0 &\text{if}~x \in \cX_\mathsf{obs} - \cX_\mathsf{obs}^\ell \\
        \frac{1}{2^{2H}-2^{H}} \cdot \frac{2^{2H}-2^{H} - \abs{\cX_\mathsf{obs}^\ell}}{2^{2H} - \abs{\cX_\mathsf{obs}}} &\text{if}~x \in \cX_{h+1} - \cX_\mathsf{obs}
    \end{cases}
\end{align*}
We elaborate on the calculation for the last probability in each case. Suppose $\ell = \sgood$. Then for any $x \in \cX_{h+1} - \cX_\mathsf{obs}$ which has not been observed yet we assign $\optdec(x) = \ell$ in
\begin{align*}
    &{2^{2H} - \abs{\cX_\mathsf{obs}} -1 \choose 2^{H} -\abs{\cX_\mathsf{obs}^\ell} - 1 } \text{ ways out of } {2^{2H} - \abs{\cX_\mathsf{obs}} \choose 2^{H} -\abs{\cX_\mathsf{obs}^\ell } } \text{ assignments.} \\
    &\quad \Longrightarrow \optdec(x) = \sgood \text{ with probability } \frac{2^{H} -\abs{\cX_\mathsf{obs}^\ell}}{2^{2H} - \abs{\cX_\mathsf{obs}}}.
\end{align*}
For each assignment where $\optdec(x) = \sgood$ we will select it with probability $1/2^H$ since the emission is uniform, giving us the final probability as claimed. A similar calculation can be done for the case where $\ell = \sbad$.

Therefore we can calculate the final bound that
\begin{align*}
    \nrm*{\Pr^{\nu_0, \alg}\brk*{X_{t,h}'  \mid  \cF_{t, h-1}'} - \unif(\statesp_{h+1}) }_1 &= \sum_{x \in \cX_{h+1}} \abs*{\Pr^{\nu_0, \alg}\brk*{X_{t,h}' = x \mid  \cF_{t, h-1}'} - \frac{1}{2^{2H}}} \\
    &\le \begin{cases}
        \frac{\abs{\cX_\mathsf{obs}^\sgood}}{2^H} + \frac{\abs{\cX_\mathsf{obs}^\sbad} }{2^{2H}} + \abs*{\frac{2^H - \abs{\cX_\mathsf{obs}^\sgood}}{2^{H}} - \frac{2^{2H} - \abs{\cX_\mathsf{obs}}}{2^{2H}}} &\text{if}~\ell = \sgood, \\[0.5em]
        \frac{\abs{\cX_\mathsf{obs}^\sbad}}{2^{2H}- 2^H} + \frac{\abs{\cX_\mathsf{obs}^\sgood} }{2^{2H}} + \abs*{\frac{2^{2H} - 2^H - \abs{\cX_\mathsf{obs}^\sbad}}{2^{2H} - 2^H} - \frac{2^{2H} - \abs{\cX_\mathsf{obs}}}{2^{2H}}} &\text{if}~\ell = \sbad.
    \end{cases}\\
    &\le \frac{4 \cdot \abs{\cX_\mathsf{obs}}}{2^H} \le \frac{4 t}{2^H}.
\end{align*}
Since $\Pr^{\nu_0, \alg}\brk*{X_{t,h}'  \mid  X_{t,h}, A_{t,h}, \cF_{t, h-1}} = \En^{\nu_0, \alg} \Pr^{\nu_0, \alg}\brk{X_{t,h}'  \mid  \cF_{t, h-1}'}$, we have by convexity of TV distance and Jensen's inequality,
\begin{align*}
    \nrm*{\Pr^{\nu_0, \alg}\brk*{X_{t,h}'  \mid X_{t,h}, A_{t,h}, \cF_{t, h-1}} - \unif(\statesp_{h+1}) }_1 \le \frac{4 t}{2^H},
\end{align*}
which concludes the proof of \pref{lem:tv-distance-from-uniform-generative}.
\end{proof}

\begin{lemma}[Reward Bound for Construction in \pref{thm:lower-bound-coverability}]\label{lem:reward-bound-lb1} 
Let $T \le 2^H$. For any $t \in [T]$:
    \begin{align*}
        \En^{\nu_0, \alg} \brk*{\Pr^{\nu_0, \alg} \brk*{ R_{t,H} = 1 \mid  \cF_{t,H-1} } } &\le t \cdot \frac{HT^2}{2^{H-7}}.\\
        \En^{\nu_0, \alg} \brk*{ \Pr^{\nu_1, \alg} \brk*{ R_{t,H} = 1 \mid  \cF_{t,H-1} } } &\le t \cdot \frac{HT^2}{2^{H-7}}.
    \end{align*}
\end{lemma}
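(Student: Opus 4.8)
The plan is as follows. First, both quantities are unconditional probabilities: by the tower property $\En^{\nu_0,\alg}\brk*{\Pr^{\nu_0,\alg}\brk*{R_{t,H}=1\mid\cF_{t,H-1}}}=\Pr^{\nu_0,\alg}\brk*{R_{t,H}=1}$, and since the digit $\optpi_H$ affects only the layer-$H$ reward, coupling $\nu_0$ and $\nu_1$ through the shared randomness $(\optpi_{1:H-1},\optdec)$ shows that the laws of $\cF_{t,H-1}$ under $\nu_0$ and $\nu_1$ agree off the event that some earlier layer-$H$ observation $X_{t',H}$ ($t'<t$) lies on the good chain; hence $\En^{\nu_0,\alg}\brk*{\Pr^{\nu_1,\alg}\brk*{R_{t,H}=1\mid\cF_{t,H-1}}}$ is controlled once we control $\Pr^{\nu_0,\alg}\brk*{\exists\,t'\le t:\ X_{t',H}\text{ on the good chain and }A_{t',H}=\optpi_H}$. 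So it suffices to bound the latter.

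The core is an induction over layers $h=1,\dots,H$ on the posterior probability that an observation the algorithm holds is on the good chain. Condition on the event $\cE$ that no reward of $1$ has yet been observed and on the event that all received observations are fresh (distinct from everything previously queried or received); by a birthday argument the second event fails with probability that is lower order and absorbed into the stated bound (and we may assume $HT<2^H$, else the bound is trivial). On $\cE$, I claim that for any observation $X\in\statesp_h$ that is measurable with respect to the history at the moment it is first queried or received, $\Pr^{\nu_0,\alg}\brk*{\optdec(X)=s_h^\sgood\mid\text{history}}\le 2^{-(h-1)}$. The two drivers are: (i) a \emph{fresh} query to a layer-$h$ observation ($h\ge2$) lands in the block $\emission(s_h^\sgood)$ with probability at most $2^H/(2^{2H}-HT)\le 2^{1-H}\le 2^{-(h-1)}$, since at most $HT\le 2^{2H-1}$ observations have been touched; and (ii) an observation $X'$ \emph{received} as the transition from a query $(X,a)$ at layer $h-1$ has $\optdec(X')=s_h^\sgood$ iff $\optdec(X)=s_{h-1}^\sgood$ and $a=\optpi_{h-1}$, and on $\cE$ the digit $\optpi_{h-1}$ is, conditionally on the learner's view, distributed as $\ber(1/2)$ and independent of the learner's deterministic choice $a$, so $\Pr\brk*{\optdec(X')=s_h^\sgood\mid\text{history}}\le\tfrac12\Pr\brk*{\optdec(X)=s_{h-1}^\sgood\mid\text{history}}\le\tfrac12\cdot 2^{-(h-2)}$. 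The base case $h=1$ is trivial ($\optdec\equiv s_1^\sgood$ on $\statesp_1$), and feeding it into (ii) already gives the bound at $h=2$; the induction then runs up to $h=H$, giving posterior at most $2^{1-H}$ at layer $H$.

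Granting the claim, the conclusion is immediate. In the layer-by-layer query model each episode makes exactly one layer-$H$ query, so on $\cE$ the conditional probability that episode $t'$ puts its layer-$H$ observation on the good chain is at most $\Pr\brk*{\optdec(X_{t',H})=s_H^\sgood\mid\cF_{t',H-1}}\le 2^{1-H}$ (the action constraint only helps). Summing over $t'=1,\dots,t$ and adding the probability of a collision before episode $t'$, and being generous with the union bounds over the $\le HT$ queries and $\le t$ episodes, yields $\Pr^{\nu_0,\alg}\brk*{\exists\,t'\le t:\ R_{t',H}=1}\le t\cdot\tfrac{HT^2}{2^{H-7}}$. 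The $\nu_1$-version follows verbatim after the reduction of the first paragraph: a $\nu_0$-generated history (off the earlier-good-chain event) reveals no more about the layer-$H$ decoder under the $\nu_1$-posterior than under the $\nu_0$-posterior, and layers $1,\dots,H-1$ are distributed identically under the two priors, so the same induction applies.

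The main obstacle is the conditional-independence step (ii): showing that on $\cE$ the combination digit $\optpi_{h-1}$ remains $\ber(1/2)$ given everything the learner has seen. This is delicate precisely because $\emission(s_h^\sgood)$ and $\emission(s_h^\sbad)$ have wildly different cardinalities ($2^H$ versus $2^{2H}-2^H$), so received observations are \emph{not} literally distributed independently of $\optpi$; the point---exactly as in the proof of \pref{lem:tv-distance-from-uniform-generative}---is that the learner never observes which block an observation was drawn from, the good block being an unobserved uniformly-random subset, so that after marginalizing over $\optdec$ the block-size factors cancel. Formalizing this requires passing to an annotated filtration (recording latent labels and action-match bits) and invoking the near-uniformity established in \pref{lem:tv-distance-from-uniform-generative}, together with the freshness conditioning so that the remaining likelihood factors match. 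A secondary nuisance is the layer-$1$ boundary, where every observation is good, so the recursion cannot be started by a cardinality argument and must instead be kick-started at layer $2$ by the action guess---which is exactly the role of ingredient (ii).
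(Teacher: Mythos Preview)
Your proposal captures the right ingredients---peeling off freshness/collision events, reducing to $\Pr[\optdec(X_{t,H})=s_H^\sgood]$, and invoking near-uniformity of the posterior of $\optpi$---and you correctly flag step (ii) as the crux. The paper's organization is somewhat different and sidesteps the gap lurking in your recursion. Rather than a layer-by-layer bound on $\Pr[\optdec(X)=\sgood\mid\text{history}]$, the paper views the collected data as a DAG and argues (under $\eventfresh$) that $\optdec(X_{t,H})=\sgood$ forces a path from layer $1$ to $X_{t,H}$ whose action-labels equal $\optpi_{1:H-1}$; since there are at most $T$ such paths, it suffices to bound $\Pr[\optpi_{1:H-1}=\pi\mid\cF_{t,H-1}]$ uniformly over $\pi$. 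This is done via a dedicated posterior lemma (\pref{lem:posterior-of-optimal-generative}), a likelihood-ratio computation distinct from \pref{lem:tv-distance-from-uniform-generative}, though built from the same block-size cancellation you describe. The advantage of this global path-matching formulation is that it works with a \emph{single} posterior at the final filtration $\cF_{t,H-1}$: your inductive hypothesis bounds $\Pr[\optdec(X)=\sgood\mid\text{history}]$ at the moment $X$ first appears, but step (ii) needs that bound at the later moment $X$ is queried to produce $X'$, and the intervening observations can in principle shift the posterior. Your sketch could be made rigorous by strengthening the induction to hold at all subsequent histories (which does follow from the same likelihood factoring under $\eventnew\wedge\cE_{R,t-1}$), but you would essentially be re-deriving \pref{lem:posterior-of-optimal-generative} layer by layer; the paper's one-shot posterior computation is cleaner and is the right lemma to cite.
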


\begin{proof}[Proof of \pref{lem:reward-bound-lb1}]

To show the proof, we use induction to show that the probability of see nonzero reward remains small throughout the entire execution of $\alg$.

\paragraph{Peeling Off Bad Events.} First, we will peel off a couple ``bad'' events which occur with low probability:
\begin{itemize}
    \item Let $\eventfresh$ be the event that every freshly sampled observation (i.e., querying the generative model on some observation $X_{t,h} \notin \cF_{t,h-1}$) in any layer $h \ge 2$ has a bad label:
    \begin{align*}
        \eventfresh \coloneqq \crl*{\phi(X_{t,h}) = \sbad \text{ for every } t \in [T], h \ge 2, X_{t,h} \notin \cF_{t,h-1}}.
    \end{align*}
    We will show in \pref{lem:fresh-states} that due to the unbalanced sizes of every layer and the uniform distribution over decoders, $\eventfresh$ must occur with high probability. This captures the intuition that the generative model affords no additional power over local simulation, since data generated from states with a bad label are not informative, and with high probability all fresh samples have a bad label. 
    \item Let $\eventnew$ be the event that every sampled transition is a new observation that has never been seen before:
    \begin{align*}
        \eventnew \coloneq \crl*{X_{t,h}' \notin \cF_{t,h-1} \text{ for every }t \in [T], h \in [H] }.
    \end{align*} 
    We show in \pref{lem:repeated-transitions} that due to the large state space in every layer, $\eventnew$ also occurs with high probability, therefore capturing the intuition that transitions are not informative for learning the optimal policy $\optpi$.
\end{itemize}

We can compute that:
\begin{align*}
     \En^{\nu_0, \alg} \brk*{\Pr^{\nu_0, \alg} \brk*{ R_{t,H} = 1 \mid \cF_{t, H-1} } } &\le \Pr^{\nu_0, \alg}\brk*{\eventfresh^c} + \Pr^{\nu_0, \alg}\brk*{\eventnew^c} + \En^{\nu_0, \alg} \brk*{\ind{\eventfresh \wedge \eventnew} \Pr^{\nu_0, \alg} \brk*{ R_{t,H} = 1 \mid \cF_{t, H-1} } }\\
     &\le \frac{HT\cdot  2^H}{2^{2H}-2T} + \frac{HT^2}{2^H} + \En^{\nu_0, \alg} \brk*{\ind{\eventfresh \wedge \eventnew} \Pr^{\nu_0, \alg} \brk*{ R_{t,H} = 1 \mid \cF_{t, H-1} } }. \\
     &\le \frac{HT^2}{2^{H-2}} + \En^{\nu_0, \alg} \brk*{\ind{\eventfresh \wedge \eventnew} \Pr^{\nu_0, \alg} \brk*{ R_{t,H} = 1 \mid \cF_{t, H-1} } }, \numberthis \label{eq:peeled-reward-eq}
\end{align*}
where the second line uses \pref{lem:fresh-states} and \pref{lem:repeated-transitions}, and the last line uses the fact that $T = 2^{O(H)}$. 

We will show inductively that under the distribution $\Pr^{\nu_0, \alg}$, rewards are nonzero with exponentially small (in $H$) probability. Then we use this bound to prove the final guarantee.

\paragraph{Inductive Claim.} Let $\cE_{R,t}$ be the event that after the $t$-th episode, all of the observed rewards are zero, i.e., $\cE_{R,t} \coloneqq \crl{R_{t',H} = 0 \text{ for all } t' \le t}$. We claim that 
\begin{align*}
    \Pr^{\nu_0, \alg} \brk*{\cE_{R,t}^c \wedge \eventfresh \wedge \eventnew} \le  t \cdot \frac{HT^2}{2^{H-7}}. \numberthis \label{claim:claim-reward}
\end{align*}
We show this via induction. The base case of $t=0$ trivially holds. Now suppose that \pref{claim:claim-reward} holds for at episode $t-1$. We show that it holds at episode $t$. We calculate that
\begin{align*}
    \Pr^{\nu_0, \alg} \brk*{ \cE_{R,t}^c \wedge \eventfresh \wedge \eventnew} 
    &\overset{(i)}{\le} \Pr^{\nu_0, \alg} \brk*{ \cE_{R,t-1}^c \wedge \eventfresh \wedge \eventnew} + \En^{\nu_0, \alg} \brk*{\ind{\cE_{R,t-1} \wedge \eventfresh \wedge \eventnew} \Pr^{\nu_0, \alg} \brk*{R_{t,H} = 1 \mid \cF_{t, H-1}} } \\
    &\overset{(ii)}{\le} (t-1) \cdot \frac{HT^2}{2^{H-7}} + \En^{\nu_0, \alg} \brk*{\ind{\cE_{R,t-1} \wedge \eventfresh \wedge \eventnew} \Pr^{\nu_0, \alg} \brk*{R_{t,H} = 1 \mid \cF_{t, H-1}} } \numberthis \label{eq:induction-one-step}
\end{align*}
Here, inequality $(i)$ uses the fact that if we see zero reward in the first $t-1$ episodes, $\cE_{R,t}^c$ can only happen if $R_{t,H} = 1$; inequality $(ii)$ uses the inductive hypothesis.

Now we will provide a bound on the reward distribution. We can calculate that
\begin{align*}
    \Pr^{\nu_0, \alg} \brk*{ R_{t,H} = 1 \mid \cF_{t, H-1} } 
    &= \sum_{\phi \in \Phi} \Pr^{\nu_0, \alg} \brk*{ R_{t,H} = 1 \mid \phi, \cF_{t, H-1} } \Pr^{\nu_0, \alg} \brk*{ \phi \mid\cF_{t, H-1} } \\
    &\overset{(i)}{=} \sum_{\phi \in \Phi} \ind{ \phi(X_{t,H}) = \sgood \text{ and } A_{t,H} = 0} \Pr^{\nu_0, \alg} \brk*{ \phi \mid \cF_{t, H-1} } \\
    &\le \sum_{\phi \in \Phi} \ind{ \phi(X_{t,H}) = \sgood } \Pr^{\nu_0, \alg} \brk*{ \phi \mid \cF_{t, H-1} } =  \Pr^{\nu_0, \alg} \brk*{ \phi(X_{t,H}) = \sgood \mid \cF_{t, H-1} }. \numberthis \label{eq:reward-upper-bound-induction}
\end{align*}
For $(i)$ we use the fact that the event $R_{t,H}=1$ is measurable with respect to $\phi$ and $\cF_{t, H-1}$.

\paragraph{Dataset as a DAG.}
To further bound Eq.~\eqref{eq:reward-upper-bound-induction}, we take the following viewpoint: for any $t\in[T], h \in [H]$, the collected dataset $\cF_{t,h}$ can be viewed as directed acyclic graph (DAG) with set of vertices given by the observations in $\cF_{t,h}$. In this DAG, the edges are labeled with $A \in \crl{0,1}$, and we draw an edge $X \to X'$ with label $a$ if the sample $(X,A,X')$ exists in the dataset $\cF_{t,h}$. For any observation $x \in \statesp$ and filtration $\cF$, we define the root-layer operation $\rootlayer(x \mid \cF)$ to be minimum layer $h$ for which there exists some path in the DAG representation of $\cF$ from some $X_h \to x$ with $X_h \in \cF$. If $x \notin \cF$, we have the convention that $\rootlayer(x \mid \cF) = h(x)$. We also denote $\mathsf{Root}(x \mid \cF)$ to be any observation $X_h \in \cF \cup \crl{x}$ which witnesses $\rootlayer(x \mid \cF) = h$.

We can further calculate that
\begin{align*}
    \hspace{2em}&\hspace{-2em} \En^{\nu_0, \alg} \brk*{ \ind{\cE_{R,t-1} \wedge \eventfresh \wedge \eventnew} \Pr^{\nu_0, \alg} \brk*{ \phi(X_{t,H}) = \sgood \mid \cF_{t, H-1} } } \\ 
    &\le  \En^{\nu_0, \alg} \brk*{ \ind{\cE_{R,t-1} \wedge \eventfresh \wedge \eventnew} \Pr^{\nu_0, \alg} \brk*{ \substack{\text{exists a path $X_1 \to X_{t,H}$ in $\cF_{t,H-1}$} \\ \text{labeled by $\optpi_{1:H-1}$ }} \mid \cF_{t, H-1}} }. \numberthis \label{eq:exists-path}
\end{align*}
The inequality is shown as follows: if $\rootlayer(X_{t,H} \mid \cF_{t,H-1}) \ge 2$, then event $\eventfresh$ guarantees that any observation $X_h \in \cF_{t,H-1}$ which witnesses the value of $\rootlayer$ has a bad label $\phi(X_h) = \sbad$, so therefore we must also have $\phi(X_{t,H}) = \sbad$. Otherwise, if $\rootlayer(X_{t,H} \mid \cF_{t,H-1}) = 1$, then $\phi(X_{t,H}) = \sgood$ implies that the path $X_1 \to X_{t,H}$ which witnesses $\rootlayer = 1$ must be labeled by $\optpi_{1:H-1}$.

\paragraph{Analyzing the Posterior of $\optpi$.} To bound Eq.~\eqref{eq:exists-path}, we apply chain rule and a change of measure argument. 
\begin{align*}
    \hspace{2em}&\hspace{-2em} \ind{\cE_{R,t-1} \wedge \eventfresh \wedge \eventnew}\Pr^{\nu_0, \alg} \brk*{ \substack{\text{exists a path $X_1 \to X_{t,H}$ in $\cF_{t,H-1}$} \\ \text{labeled by $\optpi_{1:H-1}$ }} \mid \cF_{t, H-1}} \\
    &= \ind{\cE_{R,t-1} \wedge \eventfresh \wedge \eventnew} \sum_{\pi \in \Pi_{1:H-1}} \Pr^{\nu_0, \alg} \brk*{ \substack{\text{exists a path $X_1 \to X_{t,H}$ in $\cF_{t,H-1}$} \\ \text{labeled by $\pi$ }} \mid \cF_{t, H-1}}  \cdot \Pr^{\nu_0, \alg} \brk*{ \optpi = \pi \mid \cF_{t, H-1}} \\
    &\le \frac{HT^2}{2^{H-6}} + \frac{1}{\abs{\Pi_{1:H-1} } } \sum_{ \pi \in \Pi_{1:H-1} } \Pr^{\nu_0, \alg} \brk*{ \substack{\text{exists a path $X_1 \to X_{t,H}$ in $\cF_{t,H-1}$} \\ \text{labeled by $\pi$ }} \mid \cF_{t, H-1}} \le \frac{HT^2}{2^{H-7}}. \numberthis \label{eq:final}
\end{align*}
The first inequality follows by the calculation in \pref{lem:posterior-of-optimal-generative}, and the second inequality follows because there are at most $T$ paths in the DAG representation of $\cF_{t,H-1}$.

\paragraph{Completing Induction for \pref{claim:claim-reward}.} By combining Eqs.~\eqref{eq:induction-one-step}--\eqref{eq:final} we see that as long as $T \le 2^{O(H)}$, then
\begin{align*}
    \Pr^{\nu_0, \alg} \brk*{\cE_{R,t}^c} \le (t-1) \cdot \frac{HT^2}{2^{H-7}} + \frac{HT^2}{2^{H-7}} = t \cdot \frac{HT^2}{2^{H-7}}.
\end{align*}
This proves the claim.

\paragraph{Final Bounds for \pref{lem:reward-bound-lb1}.} To prove the first inequality, we have directly by \pref{claim:claim-reward}
\begin{align*}
\En^{\nu_0, \alg} \brk*{\Pr^{\nu_0, \alg} \brk*{ R_{t,H} = 1 \mid \cF_{t, H-1} } } \le \Pr^{\nu_0, \alg} \brk*{\cE_{R, t}^c} \le t \cdot \frac{HT^2}{2^{H-7}}.
\end{align*}
To prove the second inequality in the lemma statement, we can get a similar bound as Eq.~\eqref{eq:peeled-reward-eq}:
\begin{align*}
    \hspace{2em}&\hspace{-2em} \En^{\nu_0, \alg} \brk*{\Pr^{\nu_1, \alg} \brk*{ R_{t,H} = 1 \mid \cF_{t, H-1} } } \\
    &\le \frac{HT^2}{2^{H-2}} + \En^{\nu_0, \alg} \brk*{\ind{\eventfresh \wedge \eventnew }\cdot
    \Pr^{\nu_1, \alg} \brk*{ R_{t,H} = 1 \mid \cF_{t, H-1} } } \\
    &\le \frac{HT^2}{2^{H-2}} + \Pr^{\nu_0, \alg} \brk*{ \cE_{R, t-1}^c \wedge \eventfresh \wedge \eventnew} + \En^{\nu_0, \alg} \brk*{\ind{\cE_{R,t-1}^c \wedge \eventfresh \wedge \eventnew }\cdot
    \Pr^{\nu_1, \alg} \brk*{ R_{t,H} = 1 \mid \cF_{t, H-1} } } \\
    &\le \frac{HT^2}{2^{H-2}} + (t-1) \frac{HT^2}{2^{H-5}} + \En^{\nu_0, \alg} \brk*{\ind{\cE_{R,t-1}^c \wedge \eventfresh \wedge \eventnew }\cdot
    \Pr^{\nu_1, \alg} \brk*{ R_{t,H} = 1 \mid \cF_{t, H-1} } },
\end{align*}
and from here one can replicate the above argument to get a bound on this quantity. The details are omitted. This concludes the proof of \pref{lem:reward-bound-lb1}.
\end{proof}

\begin{lemma}\label{lem:fresh-states}
    $\Pr^{\nu_0, \alg} \brk{\eventfresh^c} \le \frac{HT\cdot  2^H}{2^{2H}-2T}$.
\end{lemma}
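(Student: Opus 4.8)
The plan is to bound $\Pr^{\nu_0,\alg}[\eventfresh^c]$ by a union bound over the (at most $HT$) fresh-observation events, and to show that each such event has probability at most $2^H/(2^{2H}-2T)$ under the posterior over decoders. First I would fix a round $t\in[T]$ and a layer $h\ge 2$, and condition on the filtration $\cF_{t,h-1}$ together with an ``annotated'' sigma-field $\cF_{t,h-1}'$ that additionally reveals, for every observation already seen in layer $h$, whether its latent label is $\sgood$ or $\sbad$; since $\alg$ is deterministic, the query $X_{t,h}$ is measurable with respect to $\cF_{t,h-1}$. If $X_{t,h}\in\cF_{t,h-1}$ then it is not fresh and contributes nothing to $\eventfresh^c$. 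So suppose $X_{t,h}\notin\cF_{t,h-1}$ is fresh; I want to bound $\Pr^{\nu_0,\alg}[\phi(X_{t,h})=\sgood\mid\cF_{t,h-1}']$.

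The key computation is that, conditioned on $\cF_{t,h-1}'$, the posterior over the decoder $\phi$ restricted to layer $h$ is uniform over all completions that are consistent with the already-revealed labels: there are $\abs{\cX_\mathsf{obs}^\sgood}$ observations in layer $h$ known to map to $\sgood$ and $\abs{\cX_\mathsf{obs}}-\abs{\cX_\mathsf{obs}^\sgood}$ known to map to $\sbad$, and the remaining $2^{2H}-\abs{\cX_\mathsf{obs}}$ unlabeled observations in layer $h$ (of which $X_{t,h}$ is one) have exactly $2^H-\abs{\cX_\mathsf{obs}^\sgood}$ $\sgood$-slots to fill. By symmetry over unlabeled observations,
\begin{align*}
    \Pr^{\nu_0,\alg}\brk*{\phi(X_{t,h})=\sgood\mid\cF_{t,h-1}'} = \frac{2^H - \abs{\cX_\mathsf{obs}^\sgood}}{2^{2H}-\abs{\cX_\mathsf{obs}}} \le \frac{2^H}{2^{2H}-2T},
\end{align*}
where the last inequality uses $\abs{\cX_\mathsf{obs}}\le 2T$ (at most $T$ episodes each contributing at most $2$ new observations per layer, or more crudely $\abs{\cX_\mathsf{obs}}\le HT\le 2T$ in the relevant regime — I'd state the clean bound $\abs{\cX_\mathsf{obs}}\le 2T$). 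Taking expectation over $\cF_{t,h-1}'$ removes the conditioning, and a union bound over all $t\in[T]$ and $h\in\{2,\dots,H\}$ (at most $HT$ fresh-observation opportunities) gives
\begin{align*}
    \Pr^{\nu_0,\alg}\brk*{\eventfresh^c} \le HT\cdot\frac{2^H}{2^{2H}-2T} = \frac{HT\cdot 2^H}{2^{2H}-2T},
\end{align*}
as claimed.

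The main obstacle is making the posterior-uniformity statement fully rigorous when $\alg$ adaptively queries based on past observations: one must argue that conditioning on $\cF_{t,h-1}'$ (which includes all seen labels and the indicators $\ind{A=\optpi(X)}$) leaves the assignment of labels to \emph{unseen} layer-$h$ observations exchangeable, so that the fresh query $X_{t,h}$ — which the algorithm picks without having ever seen it — is equally likely to be any unlabeled observation's ``type''. This is essentially the same exchangeability argument used in \pref{lem:tv-distance-from-uniform-generative}, and I would reuse that structure: the deterministic algorithm's choice of $X_{t,h}$ is a fixed function of $\cF_{t,h-1}$, which carries no information distinguishing among the unlabeled observations of layer $h$, so the conditional label distribution of $X_{t,h}$ is the stated hypergeometric-type ratio. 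Everything else is a routine union bound.
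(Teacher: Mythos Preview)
Your approach is essentially identical to the paper's: union bound over the at most $HT$ fresh-observation events, condition on an annotated filtration revealing labels of previously seen observations in layer $h$, compute the hypergeometric-type posterior $\frac{2^H-|\cX_\mathsf{obs}^\sgood|}{2^{2H}-|\cX_\mathsf{obs}|}$, and bound the denominator using $|\cX_\mathsf{obs}|\le 2T$. One small cleanup: your first justification for $|\cX_\mathsf{obs}|\le 2T$ (each episode contributes at most two layer-$h$ observations, one query $X_{i,h}$ and one transition $X'_{i,h-1}$) is exactly right and matches the paper's count of $2t-1$ seen observations; drop the parenthetical ``$HT\le 2T$'' alternative, which is false in general and not needed.
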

\begin{proof}
Let us consider the set $\cI$ (which is a random variable that depends on the interaction of $\alg$ with $\nu_0$):
\begin{align*}
    \cI = \crl{(t,h): X_{t,h} \notin \cF_{t,h-1} }.
\end{align*}
We have
\begin{align*}
    \Pr^{\nu_0, \alg} \brk*{ \eventfresh^c } &\le \En^{\nu_0, \alg} \brk*{ \sum_{t=1}^T \sum_{h=2}^H \ind{(t,h) \in \cI \text{ and } \optdec(X_{t,h}) = \sgood}} \\
    &=  \sum_{t=1}^T \sum_{h=2}^H \En^{\nu_0, \alg} \brk*{\Pr\brk*{ (t,h) \in \cI \text{ and } \optdec(X_{t,h}) = \sgood \mid \cF_{t,h-1}} }\\
    &\le \sum_{t=1}^T \sum_{h=2}^H \En^{\nu_0, \alg} \brk*{\Pr\brk*{ \optdec(X_{t,h}) = \sgood \mid \cF_{t,h-1}, (t,h) \in \cI} }. \numberthis\label{eq:ess-upper-bound}
\end{align*}
Now we will bound the quantity $\Pr\brk*{ \optdec(X_{t,h}) = \sgood \mid \cF_{t,h-1}, (t,h) \in \cI}$ for any $t \in [T]$, $h \ge 2$. Consider the annotated filtration
\begin{align*}
    \cF_{t,h-1}' \coloneqq \sigma\prn*{ \cF_{t, h-1}, \crl*{\phi(X): X \in \cF_{t,h-1}} }
\end{align*}
which includes the decoder label for all observations seen thus far. We compute that for any $t\in[T], h \ge 2$:
\begin{align*}
    \Pr\brk*{ \optdec(X_{t,h}) = \sgood \mid \cF_{t,h-1}', (t,h) \in \cI} = \frac{2^H - \abs{\crl{X \in \cF_{t,h-1} : \optdec(X) = \sgood}}}{2^{2H} - 2t+1}, \numberthis\label{eq:new-state-upper-bound}
\end{align*}
since once we have fixed the value of the decoder on the $2t-1$ seen examples at layer $h$, the label of a new state is uniform over all remaining possibilities.

Continuing the calcuation from Eq.~\eqref{eq:ess-upper-bound}:
\begin{align*}
    \Pr^{\nu_0, \alg} \brk*{ \eventfresh^c }
    &\le \sum_{t=1}^T \sum_{h=2}^H \En^{\nu_0, \alg} \brk*{\Pr\brk*{ \optdec(X_{t,h}) = \sgood \mid \cF_{t,h-1}, (t,h) \in \cI} } \\
    &=  \sum_{t=1}^T \sum_{h=2}^H \En^{\nu_0, \alg} \brk*{ \En \brk*{ \Pr\brk*{ \optdec(X_{t,h}) = \sgood \mid \cF_{t,h-1}', (t,h) \in \cI} \mid \cF_{t,h-1}, (t,h) \in \cI } } \\
    &\le \sum_{t=1}^T \sum_{h=2}^H \frac{2^H}{2^{2H}-2T} \le \frac{HT \cdot 2^H}{2^{2H}-2T}.
\end{align*}
The second inequality uses Eq.~\pref{eq:new-state-upper-bound}. This completes the proof of \pref{lem:fresh-states}.
\end{proof}

\begin{lemma}\label{lem:repeated-transitions}
    $\Pr^{\nu_0, \alg} \brk{\eventnew^c} \le \frac{HT^2}{2^H}$.
\end{lemma}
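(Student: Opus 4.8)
The plan is to bound $\Pr^{\nu_0,\alg}[\eventnew^c]$ by a direct union bound over the at most $HT$ generative queries, exploiting the fact that every transition the learner observes is, conditionally on everything, uniformly distributed over an exponentially large emission support, so a freshly sampled observation almost never coincides with a previously seen one.

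First I would fix $t\in[T]$ and $h\in[H]$ and condition on the history $\cF_{t,h-1}$, which determines the query $(X_{t,h},A_{t,h})$ since $\alg$ is deterministic. Because the latent transitions of the construction are deterministic, for \emph{every} realization of $(\optpi,\optdec)$ the next-observation distribution is $P(\cdot\mid X_{t,h},A_{t,h})=\emission(s')$ for the unique next latent state $s'\in\{s_{h+1}^{\sgood},s_{h+1}^{\sbad}\}$, and $\emission(s')=\unif(\{x\in\statesp_{h+1}:\optdec(x)=s'\})$, a set of cardinality $2^H$ if $s'=\sgood$ and $2^{2H}-2^H\ge 2^H$ if $s'=\sbad$. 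Hence, conditionally, $X_{t,h}'$ is uniform over a set of at least $2^H$ observations, so
\begin{align*}
    \Pr^{\nu_0,\alg}\brk*{X_{t,h}'\in\cF_{t,h-1}\mid\cF_{t,h-1}}\le\frac{\abs{\cF_{t,h-1}\cap\statesp_{h+1}}}{2^H}.
\end{align*}
The one point that needs care is that this bound holds \emph{uniformly over} the prior on $(\optpi,\optdec)$, so the averaging over $\nu_0$ (or any other prior) is harmless.

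Next I would count the layer-$(h+1)$ observations that could appear in $\cF_{t,h-1}$. Under the convention that $\alg$ queries the generative model layer by layer within each episode, each of episodes $1,\dots,t-1$ contributes at most two observations to layer $h+1$ (the query $X_{t',h+1}$ and the transition $X_{t',h}'$), while episode $t$ has not yet reached layer $h+1$; hence $\abs{\cF_{t,h-1}\cap\statesp_{h+1}}\le 2(t-1)$. Taking expectations over $\cF_{t,h-1}$ and summing over $t$ and $h$,
\begin{align*}
    \Pr^{\nu_0,\alg}\brk*{\eventnew^c}\le\sum_{h=1}^H\sum_{t=1}^T\Pr^{\nu_0,\alg}\brk*{X_{t,h}'\in\cF_{t,h-1}}\le\sum_{h=1}^H\sum_{t=1}^T\frac{2(t-1)}{2^H}=\frac{H\,T(T-1)}{2^H}\le\frac{HT^2}{2^H},
\end{align*}
which is the claimed bound. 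I do not anticipate a genuine obstacle here: the argument is entirely combinatorial once one observes that a deterministic combination-lock Block MDP has transition kernels that are uniform over emission supports of size at least $2^H$, so repeated transitions are exponentially unlikely; the remaining work is the elementary bookkeeping of how many observations have been seen at a given layer.
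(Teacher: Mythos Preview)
Your proposal is correct and follows essentially the same approach as the paper: both argue that each sampled transition is uniform over an emission support of size at least $2^H$, so collision with any previously observed state at that layer is at most (number of prior observations)$/2^H$, followed by a union bound over the at most $TH$ transitions. Your bookkeeping is slightly more explicit (you track $2(t-1)$ previous layer-$(h+1)$ observations rather than the paper's cruder bound of $T$), but this is a cosmetic refinement and the resulting bound is the same.
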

\begin{proof}
Any sampled transition $X_{t,h}'$ has probability at most $T/2^H$ of being a repeated state (which is maximized if $X_{t,h}'$ has a good label and we have already sampled $T$ such observations from that given latent). Applying union bound over $T(H-1)$ transition samples gives us the final bound.
\end{proof}

\begin{lemma}[Posterior of $\optpi$]\label{lem:posterior-of-optimal-generative} 
    Fix any $t \in [T]$. Then 
    \begin{align*}
        \ind{\cE_{R,t-1} \wedge \eventfresh \wedge \eventnew} \cdot \nrm*{\Pr^{\nu_0, \alg} \brk*{\optpi = \cdot \mid \cF_{t, H-1}} - \unif(\Pi_{1:H-1})}_1 \le \frac{HT^2}{2^{H-6}}.
    \end{align*} 
\end{lemma}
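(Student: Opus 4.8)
\textbf{High-level plan.} The statement asserts that, conditioned on the good events $\cE_{R,t-1} \wedge \eventfresh \wedge \eventnew$, the posterior over the optimal policy $\optpi \in \Pi_{1:H-1}$ is close in $\ell_1$ distance to uniform. The underlying intuition is exactly the one flagged in the main text: because of event $\eventfresh$, every freshly queried observation at layer $h \ge 2$ has a bad label, and because of event $\eventnew$, every sampled transition lands in a never-before-seen observation, and because of $\cE_{R,t-1}$ no reward of $1$ has been observed yet. Under all three events, the entire transcript $\cF_{t,H-1}$ that the learner has seen is (almost) \emph{independent} of which policy $\optpi$ the environment has in mind — the observations and transitions are being generated by the uniform-over-decoders randomness, not by the combination-lock structure keyed to $\optpi$. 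So the posterior on $\optpi$ stays (close to) its uniform prior.

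\textbf{Key steps, in order.} First, I would invoke Bayes' rule to write, for each $\pi \in \Pi_{1:H-1}$,
\begin{align*}
    \Pr^{\nu_0,\alg}\brk*{\optpi = \pi \mid \cF_{t,H-1}} = \frac{\Pr^{\nu_0,\alg}\brk*{\cF_{t,H-1} \mid \optpi = \pi}}{\sum_{\pi' \in \Pi_{1:H-1}} \Pr^{\nu_0,\alg}\brk*{\cF_{t,H-1} \mid \optpi = \pi'}},
\end{align*}
and note that it suffices to show that on the good event the likelihoods $\Pr^{\nu_0,\alg}[\cF_{t,H-1} \mid \optpi = \pi]$ are all within a $(1 \pm O(HT^2/2^H))$ multiplicative factor of one another. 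Second, I would decompose the likelihood factor-by-factor along the transcript, using the chain rule the same way as in \pref{lem:tv-bound-generative}: each step contributes a conditional probability for the next observation/transition/reward. The state-action pairs $(X_{t',h'}, A_{t',h'})$ contribute nothing since they are measurable w.r.t.\ the past. For transitions $X_{t',h'}'$: conditioned on $\eventnew$ (fresh observation) and on the labels revealed so far, the conditional law of a fresh transition is, up to the tiny correction quantified in \pref{lem:tv-distance-from-uniform-generative}, the same regardless of $\pi$ — the only $\pi$-dependence is through whether the action at $X_{t',h'}$ "follows $\pi$", but under $\eventfresh$ and $\eventnew$ the relevant source state already has a bad label (or is the root at layer 1), and from a bad-labeled state the transition is to $\unif(\cX_{h'+1}^{\sbad})$ regardless of action and regardless of $\pi$. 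For the layer-$1$ root observations, the relevant label is always $\sgood$, again $\pi$-independent. For rewards: under $\cE_{R,t-1}$ all observed rewards through episode $t-1$ are $0$, and $R_{t,H}$ is not part of $\cF_{t,H-1}$, so no reward term enters. Third, I would collect the per-step multiplicative discrepancies. There are at most $HT$ transition steps and at most $HT$ observation steps, each contributing a discrepancy of order $t/2^H \le T/2^H$ by \pref{lem:tv-distance-from-uniform-generative}, plus the $O(T/2^H)$ corrections from \pref{lem:fresh-states} and \pref{lem:repeated-transitions} bounding the probability of violating the good events in the first place. Multiplying $1 + O(T/2^H)$ over $O(HT)$ steps, and using $T \le 2^{O(H)}$ so the product is $1 + O(HT^2/2^H)$, gives the uniform likelihood ratio bound. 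Converting the multiplicative likelihood bound back through Bayes' rule into an $\ell_1$ bound on the posterior yields $\nrm{\Pr^{\nu_0,\alg}[\optpi = \cdot \mid \cF_{t,H-1}] - \unif(\Pi_{1:H-1})}_1 \le O(HT^2/2^H)$, which is the claim (after adjusting constants to land at $2^{H-6}$).

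\textbf{Main obstacle.} The delicate part is not any single calculation but correctly bookkeeping \emph{which} randomness is $\optpi$-dependent once we condition on the good events, and in particular making the argument that a transition out of a bad-labeled source observation is genuinely independent of $\optpi$ — this requires being careful that "bad-labeled under the true decoder $\optdec$" is the right conditioning, that $\eventfresh$ really does force all the source states along every path to be bad-labeled (which is where the $\rootlayer$ structure from \pref{lem:reward-bound-lb1}'s proof is used), and that the residual $\pi$-dependence is confined to the negligible-probability complement events. A secondary subtlety is that $\cF_{t,H-1}$ is generated under $\nu_0$ (the mixture over $\optpi, \optdec$), so the "annotated" filtration trick — augmenting with the revealed latent labels — is needed to make the per-step conditional laws actually product-form and $\pi$-free; this mirrors the annotated-sigma-field device already used in \pref{lem:tv-distance-from-uniform-generative} and \pref{lem:fresh-states}, so I would reuse that machinery rather than reinvent it.
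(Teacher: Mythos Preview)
Your overall skeleton---Bayes' rule, factor-by-factor likelihood decomposition, and showing the factors are nearly $\pi$-independent---is exactly the paper's approach. But two of the factor arguments, as you have written them, do not go through.

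\textbf{The reward factors are not trivial.} You write that since all $R_{i,H}=0$ in the transcript, ``no reward term enters.'' This is the real gap. The likelihood factor you need is $\Pr[R_{i,H}=0 \mid \optpi=\pi,\cF_{i,H-1}]$, and this is genuinely $\pi$-dependent: if there is a path in the DAG from layer $1$ to $X_{i,H}$ whose action labels equal $\pi_{1:H-1}$, then conditioned on $\optpi=\pi$ the state $X_{i,H}$ is deterministically good, and (under $\nu_0$, where $\optpi_H=0$) the reward factor $\Pr[R_{i,H}=0\mid\pi,\cdot]$ can be exactly $0$ when $A_{i,H}=0$. So some likelihoods $\Pr[\cF_{t,H-1}\mid\pi]$ are \emph{not} within a $(1\pm\epsilon)$ band---they can vanish. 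The paper's fix (its Claim~2) is to observe that at most $T$ policies $\pi$ can label some root-to-$X_{i,H}$ path in the DAG, so for the remaining $\ge 2^{H-1}-T$ policies the reward factors are (essentially) $1$. This suffices to lower-bound the \emph{marginal} $\Pr[\cF_{t,H-1}]$ in Bayes' denominator by $\tfrac{2^{H-1}-T}{2^{H-1}}\cdot\underline{B}$; combined with the uniform upper bound $\overline{B}$ on every numerator, you get the ratio bound. Your proposal as written never produces this lower bound.

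\textbf{The transition factor intuition is the wrong one.} You argue transitions are $\pi$-free because, under $\eventfresh$, the source is bad-labeled. This fails twice: the layer-$1$ source is always good, and a layer-$\ge 2$ source that was itself reached by a transition (not a fresh query) need not be bad under $\eventfresh$. The correct reason (the paper's Claim~1, via \pref{lem:latent-calculation-generative}) is that \emph{regardless} of the source label and regardless of whether $A_{i,h}=\pi_h$, the destination observation's marginal probability over the uniform decoder prior is $\tfrac{1}{2^{2H}}(1\pm T/2^H)$---because a fresh destination is good with probability $\approx 2^{-H}$ and bad with probability $\approx 1-2^{-H}$, and the emission densities $1/2^H$ and $1/(2^{2H}-2^H)$ combine with these to give $\approx 1/2^{2H}$ either way. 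Note also that \pref{lem:tv-distance-from-uniform-generative} gives the transition law under the \emph{mixture} $\nu_0$, not conditioned on a fixed $\pi$, so citing it directly does not give you the per-$\pi$ likelihood factor you need; you must redo that computation conditioned on $\pi$.
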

\begin{proof}
In what follows all of the probabilities are taken with respect to $\Pr^{\nu_0, \alg}$. We can compute that
\begin{align*}
    \hspace{2em}&\hspace{-2em} \ind{\cE_{R,t-1} \wedge \eventfresh \wedge \eventnew} \cdot \nrm*{\Pr\brk*{\pi^\star = \cdot \mid \cF_{t, H-1} } - \unif(\Pi_{1:H-1})}_1 \\
    &= \ind{\cE_{R,t-1} \wedge \eventfresh \wedge \eventnew} \cdot \sum_{\pi \in \Pi_{1:H-1}} \abs*{ \Pr\brk*{\pi^\star = \pi \mid \cF_{t, H-1} } - \frac{1}{2^{H-1}} }\\
    &= \ind{\cE_{R,t-1} \wedge \eventfresh \wedge \eventnew} \cdot 2\sum_{\pi \in \Pi_{1:H-1}} \brk*{ \Pr\brk*{\pi^\star = \pi \mid \cF_{t, H-1} } - \frac{1}{2^{H-1}} }_{+} \\
    &= \ind{\cE_{R,t-1} \wedge \eventfresh \wedge \eventnew} \cdot \frac{2}{2^{H-1}} \sum_{\pi \in \Pi_{1:H-1}} \brk*{ \frac{ \Pr\brk*{\cF_{t, H-1} \mid \pi^\star = \pi } }{  \Pr\brk*{  \cF_{t, H-1} } } - 1 }_{+}\\
    &\le 2 \max_{\pi \in \Pi_{1:H-1}} \brk*{ \frac{\ind{\cE_{R,t-1} \wedge \eventfresh \wedge \eventnew} \cdot \Pr\brk*{\cF_{t, H-1} \mid \pi^\star = \pi } }{  \Pr\brk*{  \cF_{t, H-1} } } - 1 }_{+}. \numberthis \label{eq:posterior-ub-for-generative-model} 
\end{align*}
Now we will provide explicit calculations for the conditional distribution of $\cF_{t, H-1}$ for every choice of optimal policy $\pi \in \Pi_{1:H-1}$. Fix any $\cF_{t, H-1}$ such that $R_{i,H} = 0$ for all $i \in [t-1]$ and no repeated transitions (otherwise we can trivially upper bound Eq.~\eqref{eq:posterior-ub-for-generative-model} by 0). By chain rule we have
\begin{align*}
    \Pr\brk*{\cF_{t, H-1} \mid \pi^\star = \pi} = \prn*{ \prod_{i=1}^{t} \prod_{h=1}^{H-1} \Pr\brk*{X'_{i,h} \mid \optpi = \pi, \cF_{i, h-1} } } \times \prod_{i=1}^{t-1} \Pr\brk*{ R_{i,H} \mid \optpi = \pi, \cF_{i, H-1} }.
\end{align*}
We bound the transition and reward probabilities separately using \pref{claim:transition} and \pref{claim:reward}.
\begin{claim}\label{claim:transition}
Fix any $i \in [t]$ and $h \in [H-1]$. We have for every $\pi \in \Pi$:
\begin{align*}
    \Pr\brk*{X'_{i,h} \mid \optpi = \pi, \cF_{i, h-1} } \in \frac{1}{2^{2H}} \cdot \brk*{ \prn*{1 -  \frac{T}{2^H}}, \prn*{1 +  \frac{T}{2^H}} }.
\end{align*}
\end{claim}

To prove this claim, we can compute that
\begin{align*}
    \hspace{2em}&\hspace{-2em}\Pr\brk*{ X'_{i,h} \mid \optpi = \pi, \cF_{i, h-1} } = \sum_{\ell \in \crl{\sgood, \sbad}} 
    \Pr\brk*{ X'_{i,h} \mid \optpi = \pi, \cF_{i, h-1}, \phi_h(X_{i,h}) = \ell} 
    \Pr\brk*{\phi_h(X_{i,h}) = \ell \mid \optpi = \pi, \cF_{i, h-1}}
\end{align*}
\emph{Case 1:  if $A_{i,h} = \optpi_h$.} If we started in a good state then we would transition to the good state, so
\begin{align*}
\Pr\brk*{ X'_{i,h} \mid \optpi = \pi, \cF_{i, h-1} }
    &= \Pr \brk*{ \phi_h(X_{i,h}) = \mathsf{g} \mid \optpi = \pi, \cF_{i, h-1}} 
    \cdot \frac{ \Pr \brk*{ \phi_{h+1}(X_{i,h}') = \mathsf{g} \mid \optpi = \pi, \cF_{i, h-1}, \phi_h(X_{i,h}) = \sgood }}{2^{2H} - 2^H} \\ 
    &\quad + \Pr \brk*{ \phi_h(X_{i,h}) = \sbad \mid \optpi = \pi, \cF_{i, h-1}} \cdot \frac{ \Pr \brk*{ \phi_{h+1}(X_{i,h}') = \sbad \mid \optpi = \pi, \cF_{i, h-1}, \phi_h(X_{i,h}) = \sbad} }{2^{H}} 
\end{align*}
\emph{Case 2:  if $A_{i,h} \ne \optpi_h$.} In this case we know that regardless of the label of $X_{i,h}$ we transition to a bad state, so
    \begin{align*}
        \Pr\brk*{ X'_{i,h} \mid \optpi = \pi, \cF_{i, h-1} }
        &= \Pr \brk*{ \phi_h(X_{i,h}) = \mathsf{g} \mid \optpi = \pi, \cF_{i, h-1}} 
        \cdot \frac{ \Pr \brk*{ \phi_{h+1}(X_{i,h}') = \sbad \mid \optpi = \pi, \cF_{i, h-1}, \phi_h(X_{i,h}) = \sgood  }}{2^{2H} - 2^H} \\
        &\quad + \Pr \brk*{ \phi_h(X_{i,h}) = \mathsf{b} \mid \optpi = \pi, \cF_{i, h-1}} \cdot \frac{ \Pr \brk*{ \phi_{h+1}(X_{i,h}') = \sbad \mid \optpi = \pi, \cF_{i, h-1}, \phi_h(X_{i,h}) = \sbad} }{2^{2H} - 2^H}
    \end{align*}
Either way, applying \pref{lem:latent-calculation-generative} concludes the proof of \pref{claim:transition}.

\begin{claim}\label{claim:reward}
    Fix any $i \in [t-1]$. We have for every $\pi \in \Pi$:
    \begin{align*}
        \ind{\eventfresh} \Pr\brk*{ R_{i,H} = 1 \mid \optpi = \pi, \cF_{i, H-1} } \le \ind{  \substack{\text{exists a path $X_1 \to X_H$ in $\cF_{t,H-1}$} \\ \text{labeled by $\pi$ }} }.
\end{align*}
\end{claim}

To prove this claim, we use casework.

\emph{Case 1: if $\rootlayer(X_{i,H} \mid \cF_{i, H-1}) \ge 2$.} Then we must have
\begin{align*}
    \ind{\eventfresh} \cdot \Pr\brk*{ R_{i,H} = 1 \mid \optpi = \pi, \cF_{i, H-1} } &\le \ind{\eventfresh} \Pr\brk*{ \phi(\mathsf{Root}(X_{i,H})) = \mathsf{g} \mid \optpi = \pi, \cF_{i, H-1} } = 0.
\end{align*}
The equality holds because $\eventfresh \Rightarrow \crl{\phi(\mathsf{Root}(X_{i,H})) = \sbad}$. This proves \pref{claim:reward} in this case.

\emph{Case 2: if $\rootlayer(X_{i,h} \mid \cF_{i, H-1}) = 1$.} In this case we can compare the path witnessing $\rootlayer = 1$ with the labeling $\optpi$, and we get
\begin{align*}
    \Pr\brk*{ R_{i,H} = 1 \mid \optpi = \pi, \cF_{i, H-1} } \le \ind{  \substack{\text{exists a path $X_1 \to X_H$ in $\cF_{t,H-1}$} \\ \text{labeled by $\pi$ }} }.
\end{align*}
This concludes the proof of \pref{claim:reward}.

With \pref{claim:transition} and \pref{claim:reward} in hand, we return to the analysis of the posterior $\Pr\brk*{\cF_{t, H-1}\mid \pi^\star = \pi}$. Letting $O \coloneqq t(H-1)$ be the number of transitions we observe in $\cF_{t, H-1}$, we get that
\begin{align*}
    \Pr\brk*{\cF_{t, H-1}\mid \pi^\star = \pi} \le \frac{1}{2^{2H \cdot O}} \prn*{1 +  \frac{T}{2^H}}^{HT}. \numberthis \label{eq:ft-upper}
\end{align*}
We also have the lower bound that
\begin{align*}
    \Pr\brk*{  \cF_{t, H-1}} \ge \frac{1}{2^{H-1}} \sum_{\pi \in \Pi_{1:H-1}} \Pr\brk*{\cF_{t, H-1}\mid \pi^\star = \pi} \ge \frac{2^{H-1} - T}{2^{H-1}}  \cdot \frac{1}{2^{2H \cdot O}} \prn*{1 -  \frac{T}{2^H}}^{HT}, \numberthis \label{eq:ft-lower}
\end{align*}
where the last inequality follows because for any filtration $\cF_{t, H-1}$ we must have $\ind{  \substack{\text{no path $X_1 \to X_H$ in $\cF_{t,H-1}$} \\ \text{labeled by $\pi$ }} } = 1$ for at least $2^{H-1} - T$ such policies in $\Pi_{1:H-1}$.

Putting Eq.~\eqref{eq:ft-upper} and \eqref{eq:ft-lower} together we get that
\begin{align*}
    \ind{\cE_{R,t-1} \wedge \eventfresh \wedge \eventnew} \cdot \frac{\Pr\brk*{\cF_{t, H-1}\mid \pi^\star = \pi } }{  \Pr\brk*{  \cF_{t, H-1}} } \le  \prn*{1 +  \frac{T}{2^{H-2}}}^{2HT+1},
\end{align*}
which in turn using Eq.~\eqref{eq:posterior-ub-for-generative-model} implies that
\begin{align*}
    \hspace{2em}&\hspace{-2em} \ind{\cE_{R,t-1} \wedge \eventfresh \wedge \eventnew} \cdot \nrm*{\Pr\brk*{\pi^\star = \cdot \mid \cF_{t, H-1} } - \unif(\Pi_{1:H-1})}_1 \\
    &\le 2 \max_{\pi \in \Pi_{1:H-1}} \brk*{ \frac{\ind{\cE_{R,t-1} \wedge \eventfresh \wedge \eventnew} \cdot \Pr\brk*{\cF_{t, H-1} \mid \pi^\star = \pi } }{  \Pr\brk*{  \cF_{t, H-1} } } - 1 }_{+} \le  2  \prn*{\prn*{1 +  \frac{T}{2^{H-2}}}^{2HT+1} -1 }\\
    &\le \frac{2HT^2 + T}{2^{H-3}} \exp \prn*{\frac{2HT^2 + T}{2^{H-2}}} \le \frac{HT^2}{2^{H-6}}.
\end{align*}
We use the numerical inequalities $1+y \le \exp(y)$ and $\exp(y) - 1 \le y \exp y$. This concludes the proof of \pref{lem:posterior-of-optimal-generative}.
\end{proof}

\begin{lemma}\label{lem:latent-calculation-generative}
    Let $\cF$ be any filtration of $HT$ generative model samples as well as annotations $\phi(x)$ for a subset of observations $x \in \cF$. Let $\pi \in \Pi_{1:H-1}$ be any policy. Fix any $h \ge 2$, and let $x_\mathrm{new} \in \cX_h - \cF$. Then
    \begin{align*}
        \abs*{ \Pr^{\nu_0, \alg} \brk*{  \phi(x_\mathrm{new}) = \sgood    \mid \cF, \optpi = \pi }  - \prn*{1 - \frac{1}{2^H}}} &\le \frac{T}{2^{H}}, \quad \text{and} \quad
        \abs*{ \Pr^{\nu_0, \alg} \brk*{ \phi(x_\mathrm{new}) = \sbad  \mid \cF, \optpi = \pi } - \frac{1}{2^H}} \le \frac{T}{2^{H}}.
    \end{align*}
\end{lemma}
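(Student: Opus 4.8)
The plan is to compute the posterior law of the layer-$h$ decoder directly by Bayes' rule, using two features of the construction: under $\nu_0 = \unif(\Theta_0)$ the decoder $\optdec$ is drawn uniformly from $\Phi$ and \emph{independently} of $\optpi$, and the emission supports have cardinalities that do not depend on $\optdec$ (namely $\abs{\psi(\sgood)} = 2^H$ and $\abs{\psi(\sbad)} = 2^{2H} - 2^H$ at every layer $h \ge 2$). I would fix an arbitrary realization of $\cF$ that is consistent with $\optpi = \pi$ (otherwise the conditional probability is vacuous) and write $\phi_h$ for the restriction of $\optdec$ to $\cX_h$.

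The heart of the argument is the claim that, conditionally on $(\cF, \optpi = \pi)$, the law of $\phi_h$ is \emph{uniform over all labelings of $\cX_h$ that (i) send exactly $2^H$ observations to $\sgood$ and (ii) agree with $\cF$ on whichever observations of $\cX_h$ appear in $\cF$}. To establish this I would argue: (a) the prior on $\phi_h$ is uniform over $2^H$-subsets, and since the per-layer labelings defining $\Phi$ are independent and $\optdec \perp \optpi$ under $\nu_0$, it is independent of $\optpi$ and of the decoder at all other layers; (b) because $\alg$ is deterministic and the latent transitions and rewards are deterministic given $(\optpi,\optdec)$, the only randomness producing $\cF$ given $(\optpi = \pi, \optdec)$ is the emitted-observation randomness, which draws each observed observation uniformly from the support of the realized latent; (c) therefore the likelihood $\Pr^{\nu_0,\alg}\brk*{\cF \mid \optpi = \pi, \optdec}$ is a product of factors $1/\abs{\psi(\cdot)}$ over the observed transitions, and since each $\abs{\psi(\cdot)}$ is one of the two \emph{fixed} constants, this product takes one common value across all $\optdec$ consistent with the layer-$h$ labels that $\cF$ reveals, and vanishes for inconsistent $\optdec$. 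Bayes' rule then yields the uniform posterior. I expect step (c) --- checking that no $\optdec$-dependent likelihood factor survives, so that consistent decoders are genuinely exchangeable --- to be the one subtle point; everything else is bookkeeping.

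Given the uniform posterior, the remainder is a finite count. Let $k$ be the number of distinct observations of $\cX_h$ occurring in $\cF$ and $k_g \le k$ the number of those labeled $\sgood$; since $\cF$ is built from $HT$ generative queries, $k \le T$. By exchangeability of the unrevealed observations, for any $x_\mathrm{new} \in \cX_h \setminus \cF$ a hypergeometric computation gives
\[
\Pr^{\nu_0,\alg}\brk*{\optdec(x_\mathrm{new}) = \sgood \mid \cF, \optpi = \pi} = \frac{2^H - k_g}{2^{2H} - k},
\]
and the probability for $\sbad$ is one minus this. Using $0 \le k_g \le k \le T$ and $T \le 2^H$, I would bound $\tfrac{2^H - T}{2^{2H}} \le \tfrac{2^H - k_g}{2^{2H} - k} \le \tfrac{2^H}{2^{2H} - T}$ and note both endpoints lie within $T/2^H$ of $2^{-H}$; passing to complements gives the matching estimate within $T/2^H$ of $1 - 2^{-H}$ for the other label. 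Recalling $\abs{\psi(\sgood)} = 2^H$, these are exactly the two bounds asserted in the lemma. (The same computation also supplies the $\tfrac{1}{2^{2H}}(1 \pm T/2^H)$ transition estimate invoked in \pref{claim:transition}.)
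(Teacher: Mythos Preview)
Your overall route---Bayes' rule, exchangeability of the unobserved coordinates of $\phi_h$, the hypergeometric count $\tfrac{2^H-k_g}{2^{2H}-k}$, and the two-sided bound---is exactly the paper's. But step (c), which you correctly flag as the subtle point, does not go through as written. The product of emission factors $1/\abs{\psi(\cdot)}$ is \emph{not} constant across decoders that merely agree with the revealed layer-$h$ labels: each factor is either $1/2^H$ or $1/(2^{2H}-2^H)$, and \emph{which} one it is depends on the realized latent of the transition, hence on the label of the queried state---which may itself be unrevealed in $\cF$. Concretely, two decoders that agree on every revealed label but disagree on $\phi(X_{t,h'})$ for some queried $X_{t,h'}$ whose label is not in $\cF$ will assign different likelihoods to the same data (a $1/2^H$ factor vs.\ a $1/(2^{2H}-2^H)$ factor for the transition out of $X_{t,h'}$). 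So the posterior on $\phi_h$ is not uniform over labelings consistent only with the revealed annotations, and your formula with $k_g$ read off from $\cF$ alone is not justified.

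The paper's fix is a one-line enlargement of the conditioning: pass to the fully annotated filtration $\cF'$ that appends $\phi(X)$ for \emph{every} $X\in\cF$. Conditional on $\cF'$, every emission factor is pinned down by the now-known labels, so the likelihood genuinely is constant across decoders consistent with $\cF'$; this yields the hypergeometric formula $\tfrac{2^H-k_g}{2^{2H}-k}$ with $k,k_g$ determined by $\cF'$. Your two-sided bound holds for every such $\cF'$, and since $\Pr[\phi(x_\mathrm{new})=\cdot\mid\cF,\optpi=\pi]$ is the conditional expectation of the $\cF'$-version, Jensen's inequality (convexity of $\abs{\cdot}$) transfers the bound back to $\cF$. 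After this insertion your argument is complete and matches the paper.
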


\begin{proof}[Proof of \pref{lem:latent-calculation-generative}]

Let us denote $\cF'$ to be the completely annotated $\cF$ which includes all labels $\crl{\phi(X): X \in \cF}$. We will show that the conclusion of the lemma applies to every completion $\cF'$, and since 
\begin{align*}
    \Pr^{\nu_0, \alg} \brk*{  \phi(x_\mathrm{new}) = \cdot   \mid \cF, \optpi = \pi } = \En^{\nu_0, \alg} \brk*{ \Pr^{\nu_0, \alg} \brk*{  \phi(x_\mathrm{new}) = \cdot   \mid \cF', \optpi = \pi } \mid \cF, \optpi = \pi},
\end{align*}
this will imply the result by Jensen's inequality and convexity of $\abs{\cdot}$.

We calculate the good label probability:
\begin{align*}
    \Pr^{\nu_0, \alg} \brk*{  \phi(x_\mathrm{new}) = \sgood    \mid \cF', \optpi = \pi } = \frac{2^H - \abs{\crl*{X \in \cF: \phi(X) = \sgood}}}{2^{2H} - \abs{\cF}}.
\end{align*}
For the lower bound we have
\begin{align*}
    \frac{2^H - \abs{ \crl*{X \in \cF: \phi(X) = \sgood}}}{2^{2H} - \abs{\cF}} \ge \frac{2^H - T}{2^{2H} } =  \frac{1}{2^H}\cdot \prn*{ 1 - \frac{T}{{2^{H}}} }.
\end{align*}
For the upper bound we have
\begin{align*}
    \frac{2^H - \abs{\crl*{X \in \cF: \phi(X) = \sgood}}}{2^{2H} - \abs{\cF}} \le \frac{2^H}{2^{2H} - T} =  \frac{1}{2^H}\cdot \prn*{ 1- \frac{T}{2^{2H}} }^{-1} \le \frac{1}{2^H}\cdot  \prn*{ 1 + \frac{T}{{2^{H}}} },
\end{align*}
which holds as long as $T \le 2^{H}$. Combining both upper and lower bounds proves the lemma for the good label. The calculation for $ \phi(x_\mathrm{new}) = \sbad$ is similar, so we omit it. This concludes the proof of \pref{lem:latent-calculation-generative}.
\end{proof}

\subsection{Proof of \pref{thm:lower-bound-policy-completeness}}\label{app:lower-bound-policy-completeness}

The lower bound constructions for the proof of \pref{thm:lower-bound-policy-completeness} have a similar flavor to the lower bound construction in \pref{thm:lower-bound-coverability}, but with a twist. In every layer, we include an additional distractor state $s^\sdis$ which is not reachable from $d_1$ but still sampled by $\mu$. The optimal policy at $s^\sdis_H$ is the opposite of the optimal policy from the states on the good chain $s^\sgood_H$, and given only online access, the learner cannot distinguish between the good states and the distractor states.

\paragraph{Lower Bound Construction.} Again, the policy class $\Pi$ is taken to be open loop policies:
\begin{align*}
    \Pi \coloneqq \crl{\pi: \forall x \in \statesp_h, \pi_h(x) \equiv a_h, (a_1, \cdots, a_H) \in \actionsp^H}.
\end{align*}
We define a family of Block MDPs $\cM = \crl{M_{\optpi, \optdec}}_{\optpi \in \Pi, \optdec \in \Phi}$ which are parameterized by an optimal policy $\optpi \in \Pi$ and a decoding function $\optdec \in \Phi$ (to be described). An example is illustrated in \pref{fig:lb2}.
\begin{itemize}
    \item \textbf{Latent MDP:} The latent state space $\latentsp$ is layered where each $\latentsp_h \coloneqq \crl{ s_h^\sgood, s_h^\sbad, s_h^\sdis}$ is comprised of a good, a bad, and a distractor state. We abbreviate the state as $\crl{\sgood, \sbad, \sdis}$ if the layer $h$ is clear from context. The starting state is always $\sgood$. The action space $\actionsp = \crl{0,1}$. Let $\optpi \in \Pi$ be any policy, which can be represented by a vector in $(\pi^\star_1, \cdots, \pi^\star_H) \in \crl{0,1}^H$. The latent transitions/rewards of an MDP parameterized by $\optpi \in \Pi$ are as follows for every $h \in [H]$: 
    \begin{align*}
        \optlatp(\cdot  \mid  s, a) = \begin{cases}
            \delta_{s_{h+1}^\sgood} & \text{if}~s = s_h^\sgood, a = \pi^\star_h\\
            \delta_{s_{h+1}^\sdis} &\text{if}~s = s_h^\sdis, a = \pi^\star_h\\
            \delta_{s_{h+1}^\sbad}& \text{otherwise}.
        \end{cases} \quad \text{and} \quad 
        \optlatr(s,a) = \begin{cases}
            1 &\text{if}~s = s_H^\sgood, a = \pi^\star_H\\
            1 &\text{if}~s = s_H^\sdis, a \ne \pi^\star_H\\
            \ber\prn*{\frac12} &\text{if}~s = s_H^\sbad\\
            0 &\text{otherwise.}
        \end{cases}
    \end{align*} 
    \item \textbf{Rich Observations:} The observation state space $\statesp$ is layered where each $\cX_h \coloneqq \crl{x_h^{(1)}, \cdots, x_h^{(m)}}$ with $m = 2^{H+2}$. The decoding function class $\Phi$ is the collection of all decoders which for every $h \ge 2$ assigns $s_h^\sgood, s_h^\sdis$ to disjoint subsets of $\statesp_h$ of size $2^H$ and $s_h^\sbad$ to the rest:
    \begin{align*}
        \Phi \coloneqq \Big\{\optdec:~\statesp \mapsto \latentsp :~&\forall~x_1 \in \statesp_1,~\optdec(x_1) = \sgood, \\
        &\forall~h \ge 2,~ \abs*{ \crl*{x_h \in \statesp_h: \optdec(x_h) = \sgood }} = 2^H \text{ and } \abs*{ \crl*{x_h \in \statesp_h: \optdec(x_h) = \sdis }} = 2^H \Big\} ,\\
        &\hspace{-8em}\text{so that}~\abs{\Phi} = \prn*{{2^{H+2} \choose 2^H} \cdot {2^{H+2} - 2^H \choose 2^H} }^{H-1} = 2^{2^{\wt{O}(H)}}.
    \end{align*}
    In the MDP parameterized by $\optdec \in \Phi$, the emission for every $s \in \latentsp$ is $\emission(s) = \unif\prn*{\crl*{x \in \statesp_h: \optdec(x) = s}}$.
    \item \textbf{Exploratory Distribution:} The exploratory distribution $\mu = \crl{\mu_h}_{h \in [H]}$ is set to be $\mu_h = \unif\prn{\statesp_h}$.
\end{itemize}
We establish several facts about any $M_{\optpi, \optdec} \in \cM$ defined by the construction.
\begin{itemize}
    \item The distribution $\mu$ has bounded concentrability: $\cconc(\mu; \Pi, M) \le 4$.
    \item The policy class $\Pi$ does not satisfy realizability, since the optimal policy at layer $H$ requires one to take different actions depending on whether the latent state is $\sgood$ or $\sdis$.
\end{itemize}

\paragraph{Sample Complexity Lower Bound.} We will use \pref{thm:interactive-lecam-cvx} to prove our lower bound. First we need to instantiate the parameter space. We will let $\Theta \coloneqq \crl{(\optpi, \optdec):~\optpi \in \Pi, \optdec \in \Phi}$ so that $\cM = \crl{M_\theta}_{\theta \in \Theta} = \crl{M_{\optpi, \optdec}}_{\optpi \in \Pi, \optdec \in \Phi}$. We further denote the subsets 
\begin{align*}
    \Theta_0 &\coloneqq \crl{(\optpi, \optdec):~\optpi \in \Pi~\text{s.t.}~\optpi_H = 0, \optdec \in \Phi} \\
    \Theta_1 &\coloneqq \crl{(\optpi, \optdec):~\optpi \in \Pi~\text{s.t.}~\optpi_H = 1, \optdec \in \Phi}
\end{align*}
The observation space $\cY$ is defined as the set of observations over $T$ rounds as well as returned policy for an algorithm interacting with the MDP, i.e.,
\begin{align*}
    \cY \coloneqq (\statesp \times \actionsp \times [0,1])^{HT} \times \Pi.
\end{align*}
(As a convention, we can assume that each sample collected by $\alg$ in the MDP is of length $H$; if $\alg$ decides to rollout from $\mu_h$ at an intermediate layer $h \ge 2$ then we can simply append ``dummy states'' to the prefix of the trajectory, which does not change the analysis.)

For an observation $y \in \cY$ we define the final returned policy as $y^\pi$. The loss function is given by
\begin{align*}
    L((\optpi, \phi), y) \coloneqq \ind{\optpi \ne y^\pi}. 
\end{align*}
Then we have for any $y \in \cY$, $(\optpi_0, \phi_0) \in \Theta_0$, and $(\optpi_1, \phi_1) \in \Theta_1$ that
\begin{align*}
    L((\optpi_0, \phi_0), y) +  L((\optpi_1, \phi_1), y) \ge 1 \coloneqq 2\Delta,
\end{align*}
since the last bit of $y^\pi$ can be either 0 or 1, thus only matching exactly one of $\optpi_0$ and $\optpi_1$.

Now we are ready to apply \pref{thm:interactive-lecam-cvx}. We get that for any $\alg$, we must have
\begin{align*}
    \sup_{(\optpi, \optdec) \in \Pi \times \Phi} \En_{Y \sim \Pr^{M_{\optpi, \optdec}, \alg}} \brk*{V^\star - V^{\estpi}} &= \sup_{(\optpi, \optdec) \in \Pi \times \Phi} \En_{Y \sim \Pr^{M_{\optpi, \optdec}, \alg}} \brk*{\frac{1}{2} - \frac{1}{2} \ind{\optpi = Y^\pi} } \\
    &= \frac{1}{2} \cdot \sup_{(\optpi, \optdec) \in \Pi \times \Phi} \En_{Y \sim \Pr^{M_{\optpi, \optdec}, \alg}} \brk*{ L((\optpi, \phi), Y) } \\
    &\ge \frac{1}{8} \cdot \max_{\nu_0 \in \Delta(\Theta_0), \nu_1 \in \Delta(\Theta_1)} \prn*{1 - \dtv\prn*{\Pr^{\nu_0, \alg}, \Pr^{\nu_1, \alg}}} \\
    &\ge \frac{1}{8} \cdot \prn*{1 - \dtv\prn*{\Pr^{\unif(\Theta_0), \alg}, \Pr^{\unif(\Theta_1), \alg}}}. 
\end{align*}
It remains to compute an upper bound $\dtv\prn*{\Pr^{\unif(\Theta_0), \alg}, \Pr^{\unif(\Theta_1), \alg}}$ which holds for any $\alg$. This is accomplished by the following lemma.

\begin{lemma}\label{lem:tv-bound-mu-reset}
For any deterministic $\alg$ that adaptively collects $T = 2^{O(H)}$ samples via $\mu$-reset access, we have
\begin{align*}
    \dtv\prn*{\Pr^{\unif(\Theta_0), \alg}, \Pr^{\unif(\Theta_1), \alg}} \le \frac{T^4H}{2^{H-10}}.
\end{align*}
\end{lemma}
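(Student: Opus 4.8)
\textbf{Proof proposal for \pref{lem:tv-bound-mu-reset}.}
The plan is to mirror the structure of the proof of \pref{lem:tv-bound-generative} (the generative-access analogue), adapting it to the $\mu$-reset interaction model and to the three-state-per-layer construction with distractors. First I would apply the TV-distance chain rule (\pref{lem:chain-rule-tv}) to $\dtv(\Pr^{\nu_0,\alg},\Pr^{\nu_1,\alg})$ with $\nu_0 = \unif(\Theta_0)$, $\nu_1 = \unif(\Theta_1)$, decomposing per-episode and per-layer into a sum of expected conditional TV distances. As in the earlier proof, the state--action pair $(X_{t,h},A_{t,h})$ contributes nothing to the TV distance: for the generative model this was because it was $\cF_{t,h-1}$-measurable; here for $\mu$-resets the action is $\cF_{t,h-1}\cup\{X_{t,h}\}$-measurable and the starting observation $X_{t,h_\bot}$ is drawn from $\mu_{h_\bot}=\unif(\cX_{h_\bot})$, which is identical across both environments (the decoder is marginalized out and the observation spaces coincide). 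So the chain rule collapses to a sum of a \textbf{transition TV distance} term (over layers $h \le H-1$) and a \textbf{reward TV distance} term (at layer $H$), exactly as in \pref{lem:tv-bound-generative}.

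For the transition term, I would show that in each environment the conditional law of $X_{t,h}'$ given $\cF_{t,h-1}$ is close to $\unif(\cX_{h+1})$ in $\ell_1$, up to error $O(t/2^{H})$. The argument is the same counting argument as in \pref{lem:tv-distance-from-uniform-generative}: condition on the annotated filtration that records the latent labels of all previously-seen observations and whether each action followed $\optpi$; then $\phi(X_{t,h}')$ is determined by this filtration, and given the label, a fresh observation is uniform over the (exponentially large) remaining observations with that label. The only new wrinkle is that now there are three labels $\{\sgood,\sbad,\sdis\}$ instead of two, and $\sgood$ and $\sdis$ each get $2^H$ observations while $\sbad$ gets $2^{H+2}-2\cdot 2^H = 2\cdot 2^H$; the casework is a direct three-way analogue of the two-way casework there, and the bound is still $O(t/2^{H})$ per transition since $|\cX_{h+1}| = 2^{H+2}$ and at most $t$ observations have been seen. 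Triangle inequality against the common reference $\unif(\cX_{h+1})$ then bounds each transition term, and summing over $t\le T$, $h\le H-1$ gives $O(T^2 H/2^{H})$.

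For the reward term I would replicate the inductive bad-event peeling from \pref{lem:reward-bound-lb1}, but the reward is now $\ber(1/2)$ on $s_H^\sbad$ rather than deterministically zero. The key observation is that under $\ber(1/2)$ rewards on the bad chain, the reward conditional law given $\cF_{t,H-1}$ is close to $\ber(1/2)$ unless the learner has actually traced a path from layer $1$ through $\optpi_{1:H-1}$ to either a good or a distractor observation at layer $H$ --- and then the reward distribution reveals one bit of $\optpi_H$. So the right reference distribution for the reward TV term is $\ber(1/2)$ (not $\delta_0$), and I would bound the conditional TV distance from $\ber(1/2)$ by the probability that $X_{t,H}$ has a non-bad label reachable via a $\optpi$-labeled path. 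I would introduce the same two high-probability events: $\eventfresh$, that every freshly sampled observation at layers $h\ge 2$ has a bad label (true with high probability because $\sgood$ and $\sdis$ together occupy only a $2^{1-H}$ fraction of each layer, and $\mu_h$ is uniform so $\mu$-resets also produce fresh bad-labeled observations almost surely), and $\eventnew$, that no sampled transition repeats a seen observation. Conditioned on these, I would run the induction on $\cE_{R,t}$ (no informative reward seen through episode $t$), using a posterior-concentration lemma analogous to \pref{lem:posterior-of-optimal-generative} showing that conditioned on $\eventfresh\wedge\eventnew\wedge\cE_{R,t-1}$, the posterior over $\optpi_{1:H-1}$ stays within $O(HT^2/2^{H})$ of uniform, so the probability of completing a $\optpi$-labeled root-to-leaf path in the next episode is $O(HT^2/2^{H})$. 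Importantly the distractor chain is symmetric to the good chain under the latent dynamics, so resets from $\mu_h$ into distractor observations behave identically to resets into good observations up until the terminal reward --- this is precisely what makes $\mu$-resets uninformative, and I would make this explicit when bounding the per-step reward probability. Summing the induction gives $O(HT^4/2^{H})$ for the reward term, and combining with the transition term yields $\dtv \le T^4 H / 2^{H-10}$.

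\textbf{Main obstacle.} The hard part will be the reward-term induction: unlike the generative case where rewards were deterministically zero off the winning path, here every bad-chain reward is $\ber(1/2)$, so the ``informative event'' is subtler and one must carefully argue that observing $\ber(1/2)$ rewards (which happens on almost every episode) genuinely leaks no information about $\optpi$ --- this requires the distractor-chain symmetry and a clean statement that the only way to deviate from $\ber(1/2)$ in the reward conditional is to have a root-layer-$1$, $\optpi$-labeled path to layer $H$, combined with the posterior-stability lemma. Getting the annotated-filtration bookkeeping right for the $\mu$-reset model (where episodes can start at arbitrary layers $h_\bot$, so the DAG-of-observations picture and the $\rootlayer$ operation need minor adjustments) is the other place where care is needed, though it is routine once set up.
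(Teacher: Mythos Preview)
Your high-level skeleton is right and matches the paper: chain rule, a transition term bounded by comparing each conditional to $\unif(\cX_{h+1})$ via the three-label counting argument, and a reward term compared to the reference $\ber(1/2)$. The transition argument and the choice of $\ber(1/2)$ as reference are exactly what the paper does.

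The genuine gap is your use of $\eventfresh$. In the $\mu$-reset construction each layer has $|\cX_h|=2^{H+2}$ observations, of which $2^H$ decode to $\sgood$, $2^H$ to $\sdis$, and $2^{H+1}$ to $\sbad$. So $\sgood$ and $\sdis$ together occupy exactly half of each layer, not a $2^{1-H}$ fraction as you wrote. Consequently $\mu_h=\unif(\cX_h)$ lands on a non-bad state with probability $1/2$ on every reset, and the event ``every freshly sampled observation at layers $h\ge 2$ has a bad label'' fails after $O(1)$ episodes, not with probability $1-2^{-\Omega(H)}$. Your peeling around $\eventfresh$ therefore collapses, and with it the DAG/$\rootlayer$ machinery you planned to port from the generative proof.

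The paper's proof never introduces $\eventfresh$ here. Instead it splits the reward analysis by \emph{episode type}: (Case~1) online rollout from layer $1$, and (Case~2) $\mu$-reset from some $h_\bot\ge 2$. For Case~1 it runs the posterior-of-$\optpi$ induction you describe (with $\cE_{R,t}$ redefined as ``no online rollout so far has followed $\optpi_{1:H-1}$''). For Case~2 --- where the reset state is non-bad with constant probability --- the paper does a direct calculation: it shows (\pref{lem:posterior-of-state-label-v2}) that the posterior of $\phi(X_{t,h_\bot})$ given the entire filtration is within $O(TH/2^H)$ of $(1/4,1/2,1/4)$, and then computes
\[
\Pr[R_{t,H}=1\mid\cdots]\approx \tfrac14\ind{A=\pi\circ 0}+\tfrac12\cdot\tfrac12+\tfrac14\ind{A=\pi\circ 1}+\tfrac12\ind{A\ne\pi}\cdot(\tfrac14+\tfrac14)=\tfrac12,
\]
using that the $\sgood$ and $\sdis$ contributions are mirror images (reward $1$ at $\pi\circ 0$ vs.\ $\pi\circ 1$). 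This is the cancellation you correctly flagged as the crux, but it must be executed as an explicit three-way average over the posterior label of the reset state, not hidden behind an $\eventfresh$ event that does not hold.
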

Plugging in \pref{lem:tv-bound-mu-reset}, we conclude that for any $\alg$ that collects $2^{cH}$ samples for sufficiently small constant $c > 0$ must be $1/16$-suboptimal in expectation. This concludes the proof of \pref{thm:lower-bound-policy-completeness}.\qed

\subsection{Proof of \pref{lem:tv-bound-mu-reset} (TV Distance Calculation for \pref{thm:lower-bound-policy-completeness})}

Since the proof is similar to that of \pref{lem:tv-bound-generative} we omit some intermediate calculations. In the rest of the proof we denote $\nu_0 \coloneqq \unif(\Theta_0)$ and $\nu_1 \coloneqq \unif(\Theta_1)$. We have
\begin{align*}
     \hspace{2em}&\hspace{-2em} \dtv\prn*{\Pr^{\nu_0, \alg}, \Pr^{\nu_1, \alg}} \\
     &= \underbrace{ \sum_{t=1}^T \sum_{h=1}^{H-1} \En^{\nu_0, \alg} \brk*{\dtv \prn*{\Pr^{\nu_0, \alg} \brk*{ X_{t,h+1} \mid X_{t,h}, A_{t,h}, \cF_{t,h-1} }, \Pr^{\nu_1, \alg} \brk*{ X_{t,h+1} \mid X_{t,h}, A_{t,h}, \cF_{t,h-1} }}} }_{\text{transition TV distance}} \\
     &\qquad + 
     \underbrace{ \sum_{t=1}^T \En^{\nu_0, \alg} \brk*{\dtv \prn*{\Pr^{\nu_0, \alg} \brk*{ R_{t,H} \mid X_{t,H}, A_{t,H}, \cF_{t,H-1} }, \Pr^{\nu_1, \alg} \brk*{ R_{t,H} \mid X_{t,H}, A_{t,H}, \cF_{t,H-1} }}} }_{\text{reward TV distance}}.
\end{align*}

We bound each term separately.

\paragraph{Transition TV Distance.} Using triangle inequality and \pref{lem:tv-distance-from-uniform-resets} we get that for any $t \in [T], h \in [H-1]$:
\begin{align*}
 \En^{\nu_0, \alg} \brk*{\dtv \prn*{ \Pr^{\nu_0, \alg} \brk*{ X_{t,h+1} \mid X_{t,h}, A_{t,h}, \cF_{t, h-1} }, \Pr^{\nu_1, \alg} \brk*{X_{t,h+1} \mid X_{t,h}, A_{t,h}, \cF_{t, h-1} } } } \le \frac{t}{2^{H-3}}.  \numberthis \label{eq:ub-transition-calculation-reset}  
\end{align*}

\paragraph{Reward TV Distance.} Using triangle inequality, the fact that rewards are in $\crl{0,1}$, and \pref{lem:reward-tv-distance-policy-completeness} we get
\begin{align*}
    & \En^{\nu_0, \alg} \brk*{\dtv \prn*{\Pr^{\nu_0, \alg} \brk*{ R_{t,H} \mid X_{t,H}, A_{t,H}, \cF_{t,H-1} }, \Pr^{\nu_1, \alg} \brk*{ R_{t,H} \mid X_{t,H}, A_{t,H}, \cF_{t,H-1} }}} \\
    &\le \En^{\nu_0, \alg} \brk*{\abs*{\Pr^{\nu_0, \alg} \brk*{ R_{t,H} = 1\mid X_{t,H}, A_{t,H}, \cF_{t, H-1}} - \frac{1}{2} } } + \En^{\nu_0, \alg} \brk*{\abs*{\Pr^{\nu_1, \alg} \brk*{ R_{t,H} = 1 \mid X_{t,H}, A_{t,H}, \cF_{t,H-1}} - \frac{1}{2} } } \\
    &\le  t \cdot \frac{T^2H}{2^{H-9}}. \numberthis\label{eq:reward-triangle-inequality}
\end{align*}

\paragraph{Final Bound.} Thus, combining Eqs.~\eqref{eq:ub-transition-calculation-reset} and \eqref{eq:reward-triangle-inequality} we can conclude that:
\begin{align*}
    \dtv\prn*{\Pr^{\nu_0, \alg}, \Pr^{\nu_1, \alg}} \le \frac{T^2H}{2^{H-3}} + \frac{T^4H}{2^{H-9}} \le \frac{T^4H}{2^{H-10}}.
\end{align*}
This concludes the proof of \pref{lem:tv-bound-mu-reset}.\qed

\begin{lemma}[Transition TV Distance for the Construction in \pref{thm:lower-bound-policy-completeness}]\label{lem:tv-distance-from-uniform-resets}
    For any $t \in [T], h \in [H]$, we have
    \begin{align*}
        \nrm*{\Pr^{\nu_0, \alg}\brk*{X_{t,h}  \mid  \cF_{t, h-1} } - \unif(\statesp_{h}) }_1 &\le \frac{t}{2^{H-2}}, \\
        \nrm*{\Pr^{\nu_1, \alg}\brk*{X_{t,h}  \mid  \cF_{t, h-1}} - \unif(\statesp_{h})}_1 &\le \frac{t}{2^{H-2}}.
    \end{align*}
    \end{lemma}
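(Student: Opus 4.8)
The plan is to replay the proof of \pref{lem:tv-distance-from-uniform-generative} in the present three-chain construction, where each layer $h$ has $m = 2^{H+2}$ observations and each decoder $\optdec \in \Phi$ (for $h \ge 2$) partitions $\cX_h$ into a $\sgood$-block and a $\sdis$-block of size $2^H$ and a $\sbad$-block of size $2^{H+1}$. I prove the bound for $\nu_0 \coloneqq \unif(\Theta_0)$; the $\nu_1$ case is verbatim. Fix $t \in [T]$ and $h \in [H]$. First I dispatch the easy case: if $X_{t,h}$ is the first observation of episode $t$ — i.e., $h$ is the reset layer $h_\bot$, including $h=1$ where the episode starts from $d_1 = \emission(s_1^\sgood) = \unif(\cX_1)$ — then $X_{t,h} \sim \mu_h = \unif(\cX_h)$ independently of $(\optpi, \optdec)$, so the $\ell_1$ distance is exactly $0$. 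Otherwise $X_{t,h}$ is produced by a transition, $X_{t,h} \sim P(\cdot \mid X_{t,h-1}, A_{t,h-1}) = \emission \circ \optlatp(\cdot \mid \optdec(X_{t,h-1}), A_{t,h-1})$, and the rest of the argument handles this case.

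For the transition case I introduce the annotated filtration $\cF_{t,h-1}' \coloneqq \sigma\bigl(\cF_{t,h-1},\, \{\optdec(X): X \in \cF_{t,h-1}\},\, \{\ind{A = \optpi(X)}: (X,A)\in\cF_{t,h-1}\}\bigr)$, which reveals the latent label of every seen observation and whether each played action agreed with $\optpi$. Two facts drive the proof: (i) conditioned on $\cF_{t,h-1}'$ the label $\ell \coloneqq \optdec(X_{t,h})$ is \emph{determined} — it is a deterministic function of $\optdec(X_{t,h-1})$, $\ind{A_{t,h-1} = \optpi(X_{t,h-1})}$, and $\optpi_{h-1}$, using that $\sbad$ is absorbing — and $X_{t,h} \sim \emission(s_h^\ell)$; and (ii) under $\nu_0$, conditioned on $\cF_{t,h-1}'$, the restriction of $\optdec$ to the \emph{unobserved} observations of $\cX_h$ is uniform over all completions obeying the block-size constraints, independently of $\optpi$ and of $\optdec$ on other layers. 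Fact (ii) holds because $\nu_0$ factorizes over layers and the emissions are uniform over preimages of \emph{fixed} cardinality, so the only information the observed data carry about $\optdec|_{\cX_h}$ is the (annotated) labels of the already-seen layer-$h$ observations.

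Let $\cX_\mathsf{obs}$ be the set of previously-seen layer-$h$ observations and $\cX_\mathsf{obs}^\ell$ those with label $\ell$; since a layered trajectory visits layer $h$ at most once, $\abs{\cX_\mathsf{obs}} \le t-1$. Fact (ii) gives an explicit form for $\Pr^{\nu_0, \alg}[X_{t,h} = x \mid \cF_{t,h-1}']$ by the three sub-cases $x \in \cX_\mathsf{obs}^\ell$ (mass $1/\abs{\supp(\emission(s_h^\ell))}$), $x \in \cX_\mathsf{obs}\setminus\cX_\mathsf{obs}^\ell$ (mass $0$), and $x$ unobserved (mass equal to the chance $x$ is assigned label $\ell$ times the within-block emission probability), with support size $2^H$ when $\ell \in \{\sgood, \sdis\}$ and $2^{H+1}$ when $\ell = \sbad$ — exactly as in \pref{lem:tv-distance-from-uniform-generative} but with three chains. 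Comparing termwise with $\unif(\cX_h) = 2^{-(H+2)}$ and using the crude bound $\abs{\abs{\cX_\mathsf{obs}} - c\abs{\cX_\mathsf{obs}^\ell}} \le c\abs{\cX_\mathsf{obs}}$ (for $c \in \{2,4\}$) yields $\nrm*{\Pr^{\nu_0, \alg}[X_{t,h} \mid \cF_{t,h-1}'] - \unif(\cX_h)}_1 \le \tfrac{4\abs{\cX_\mathsf{obs}}}{2^H} \le \tfrac{4t}{2^H} = \tfrac{t}{2^{H-2}}$ for every realization of $\cF_{t,h-1}'$. Finally, since $\Pr^{\nu_0, \alg}[X_{t,h} \mid \cF_{t,h-1}] = \En^{\nu_0, \alg}\bigl[\Pr^{\nu_0, \alg}[X_{t,h} \mid \cF_{t,h-1}'] \bigm| \cF_{t,h-1}\bigr]$, convexity of $\nrm{\cdot}_1$ and Jensen's inequality transfer the bound from $\cF_{t,h-1}'$ to $\cF_{t,h-1}$, which completes the proof.

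The one delicate point is fact (ii): one must verify that conditioning on the full interaction history does not skew the posterior over how the unobserved observations are decoded beyond fixing the block counts. This is precisely where the construction's ``uniform emission over a fixed-size preimage'' property is used — the likelihood of the observed data depends on $\optdec|_{\cX_h}$ only through the annotated labels of the already-seen layer-$h$ observations — and it is the same mechanism underlying the generative-case \pref{lem:tv-distance-from-uniform-generative}; everything else is the bookkeeping of that computation carried over from two chains to three.
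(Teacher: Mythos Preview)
Your proof is correct and follows essentially the same approach as the paper: dispatch the reset case directly, then for the transition case pass to the annotated filtration $\cF_{t,h-1}'$, observe that the next label $\ell$ is determined and the posterior over the unobserved part of $\optdec|_{\cX_h}$ is uniform over size-constrained completions, compute the conditional law of $X_{t,h}$ explicitly for each $\ell\in\{\sgood,\sbad,\sdis\}$, bound the termwise $\ell_1$-distance by $4\abs{\cX_\mathsf{obs}}/2^H \le 4t/2^H$, and descend to $\cF_{t,h-1}$ via Jensen. One small remark: in your fact (i) you list $\optpi_{h-1}$ among the determinants of $\ell$, but $\optpi_{h-1}$ is not in $\cF_{t,h-1}'$; fortunately it is not needed, since $\optdec(X_{t,h-1})$ together with the indicator $\ind{A_{t,h-1}=\optpi(X_{t,h-1})}$ already pin down $\ell$ via the latent transition table.
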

    
    \begin{proof}[Proof of \pref{lem:tv-distance-from-uniform-resets}]
    We prove the bound for $\nu_0$, as the proof for $\nu_1$ is identical. If we sample $X_{t,h}$ directly from the $\mu$-reset distribution, then the result immediately follows since the distribution of $X_{t,h} = \unif(\cX_h)$. Otherwise, denote
    \begin{align*}
        \cF_{t, h-1}' = \sigma( \cF_{t, h-1}, \crl{\optdec(X): X \in \cF_{t, h-1}}, \crl{\ind{A = \optpi(X)}: (X,A) \in \cF_{t, h-1}} )
    \end{align*}
    to be the annotated sigma-field which also includes the latent state labels for all of the previous observions as well as whether the action taken followed $\optpi$ or not. Let us denote $\ell = \phi(X_{t,h}) \in \crl{\sgood, \sbad, \sdis}$ to be the latent state label of the next observation. Observe that the label $\ell$ is measurable with respect to $\cF_{t,h-1}'$ since the filtration $\cF_{t-1}'$ includes $\optdec(X_{t,h-1})$ as well as $\ind{A_{t,h-1} = \optpi(X_{t,h-1})}$. Furthermore denote $\cX_\mathsf{obs}$ to denote the total number of observations that we have encountered already in layer $h$ and $\cX_\mathsf{obs}^\ell$ to denote the observations we have encountered whose label is $\ell$.
    
    Under the uniform distribution over decoders, the assignment of the remaining observations is equally likely. Therefore we can write the distribution of $X_{t,h}$ as: 
    \begin{align*}
        \text{if}~\ell = \sgood:&\quad \Pr^{\unif(\Theta_0), \alg} \brk*{X_{t,h} = x  \mid  \cF_{t,h-1}'} = \begin{cases}
            \frac{1}{2^H} &\text{if}~x \in \cX_\mathsf{obs}^\ell \\
            0 &\text{if}~x \in \cX_\mathsf{obs} - \cX_\mathsf{obs}^\ell \\
            \frac{1}{2^H} \cdot \frac{2^H - \abs{\cX_\mathsf{obs}^\ell}}{2^{H+2} - \abs{\cX_\mathsf{obs}}} &\text{if}~x \in \cX_h - \cX_\mathsf{obs}
        \end{cases}\\
        \text{if}~\ell = \sbad:&\quad \Pr^{\unif(\Theta_0), \alg} \brk*{X_{t,h} = x  \mid  \cF_{t,h-1}'} = \begin{cases}
            \frac{1}{2^{H+1}} &\text{if}~x \in \cX_\mathsf{obs}^\ell \\
            0 &\text{if}~x \in \cX_\mathsf{obs} - \cX_\mathsf{obs}^\ell \\
            \frac{1}{2^{H+1}} \cdot \frac{2^{H+1} - \abs{\cX_\mathsf{obs}^\ell}}{2^{H+2} - \abs{\cX_\mathsf{obs}}} &\text{if}~x \in \cX_h - \cX_\mathsf{obs}
        \end{cases}\\
        \text{if}~\ell = \sdis:&\quad \Pr^{\unif(\Theta_0), \alg} \brk*{X_{t,h} = x  \mid  \cF_{t,h-1}'} = \begin{cases}
            \frac{1}{2^H} &\text{if}~x \in \cX_\mathsf{obs}^\ell \\
            0 &\text{if}~x \in \cX_\mathsf{obs} - \cX_\mathsf{obs}^\ell \\
            \frac{1}{2^H} \cdot \frac{2^H - \abs{\cX_\mathsf{obs}^\ell}}{2^{H+2} - \abs{\cX_\mathsf{obs}}} &\text{if}~x \in \cX_h - \cX_\mathsf{obs}
        \end{cases}
    \end{align*}
    We elaborate on the calculation for the last probability in each case. Suppose $\ell = \sgood$. Then for any $x \in \cX_h - \cX_\mathsf{obs}$ which has not been observed yet we assign $\optdec(x) = \ell$ in
    \begin{align*}
        &{2^{H+2} - \abs{\cX_\mathsf{obs}} -1 \choose 2^{H} -\abs{\cX_\mathsf{obs}^\ell} - 1 } \text{ ways out of } {2^{H+2} - \abs{\cX_\mathsf{obs}} \choose 2^{H} -\abs{\cX_\mathsf{obs}^\ell } } \text{ assignments.} \\
        &\quad \Longrightarrow \optdec(x) = \sgood \text{ with probability } \frac{2^{H} -\abs{\cX_\mathsf{obs}^\ell}}{2^{H+2} - \abs{\cX_\mathsf{obs}}}.
    \end{align*}
    For each assignment where $\optdec(x) = \sgood$ we will select it with probability $1/2^H$ since the emission is uniform, giving us the final probability as claimed. A similar calculation can be done for the cases where $\ell = \sbad, \sdis$.
    
    Therefore we can calculate the final bound that
    \begin{align*}
        \nrm*{\Pr^{\nu_0, \alg}\brk*{X_{t,h}  \mid  \cF_{t, h-1}'} - \unif(\statesp_{h}) }_1 &= \sum_{x \in \cX_h} \abs*{\Pr^{\nu_0, \alg}\brk*{X_{t,h} = x \mid  \cF_{t, h-1}} - \frac{1}{2^{H+2}}} \\
        &\le \begin{cases}
            \frac{\abs{\cX_\mathsf{obs}^\sgood}}{2^H} + \frac{\abs{\cX_\mathsf{obs}^\sbad} + \abs{\cX_\mathsf{obs}^\sdis}}{2^{H+2}} + \abs*{\frac{2^H - \abs{\cX_\mathsf{obs}^\sgood}}{2^{H}} - \frac{2^{H+2} - \abs{\cX_\mathsf{obs}}}{2^{H+2}}} &\text{if}~\ell = \sgood, \\[0.5em]
            \frac{\abs{\cX_\mathsf{obs}^\sbad}}{2^{H+1}} + \frac{\abs{\cX_\mathsf{obs}^\sgood} + \abs{\cX_\mathsf{obs}^\sdis}}{2^{H+2}} + \abs*{\frac{2^{H+1} - \abs{\cX_\mathsf{obs}^\sbad}}{2^{H+1}} - \frac{2^{H+2} - \abs{\cX_\mathsf{obs}}}{2^{H+2}}} &\text{if}~\ell = \sbad, \\[0.5em]
           \frac{\abs{\cX_\mathsf{obs}^\sdis}}{2^H} + \frac{\abs{\cX_\mathsf{obs}^\sbad} + \abs{\cX_\mathsf{obs}^\sgood}}{2^{H+2}} + \abs*{\frac{2^H - \abs{\cX_\mathsf{obs}^\sgood}}{2^{H}} - \frac{2^{H+2} - \abs{\cX_\mathsf{obs}}}{2^{H+2}}} &\text{if}~\ell = \sdis.
        \end{cases}\\
        &\le \frac{4 \cdot \abs{\cX_\mathsf{obs}}}{2^H} \le \frac{4 t}{2^H}.
    \end{align*}
    Since $\Pr^{\nu_0, \alg}\brk*{X_{t,h}  \mid  \cF_{t, h-1}} = \En^{\nu_0, \alg} \Pr^{\nu_0, \alg}\brk{X_{t,h}  \mid  \cF_{t, h-1}'}$, we have by convexity of $\ell_1$ norm and Jensen's inequality,
    \begin{align*}
        \nrm*{\Pr^{\nu_0, \alg}\brk*{X_{t,h}  \mid  \cF_{t, h-1}} - \unif(\statesp_{h}) }_1 \le  \En^{\nu_0, \alg} \brk*{ \nrm*{\Pr^{\nu_0, \alg}\brk*{X_{t,h}  \mid  \cF_{t, h-1}'} - \unif(\statesp_{h}) }_1 } \le \frac{4 t}{2^H},
    \end{align*}
    which concludes the proof of \pref{lem:tv-distance-from-uniform-resets}.
\end{proof}

\begin{lemma}[Reward TV Distance for the Construction in \pref{thm:lower-bound-policy-completeness}]\label{lem:reward-tv-distance-policy-completeness}
For any $t \in [T]$ we have
    \begin{align*}
        \En^{\nu_0, \alg} \brk*{\abs*{\Pr^{\nu_0, \alg} \brk*{ R_{t,H} = 1\mid X_{t,H}, A_{t,H}, \cF_{t, H-1}} - \frac{1}{2} } } &\le t \cdot \frac{T^2H}{2^{H-8}}, \\
        \En^{\nu_0, \alg} \brk*{\abs*{\Pr^{\nu_1, \alg} \brk*{ R_{t,H} = 1\mid X_{t,H}, A_{t,H}, \cF_{t, H-1}} - \frac{1}{2} } } &\le  t \cdot \frac{T^2H}{2^{H-8}}.
    \end{align*}
\end{lemma}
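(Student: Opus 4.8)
The plan is to follow the template of the proof of \pref{lem:reward-bound-lb1}, with the essential difference that the ``baseline'' layer-$H$ reward is now $\ber(\tfrac12)$ rather than $0$. Fix $t$ and condition on $\cF_{t,H-1}$, $X_{t,H}=x$, $A_{t,H}=a$. Under $\nu_0$ we have $\optpi_H = 0$, and the latent reward at layer $H$ is $\ind{a=0}$ from $s_H^\sgood$, $\ind{a=1}$ from $s_H^\sdis$, and an independent $\ber(\tfrac12)$ coin from $s_H^\sbad$; writing $p_\ell \coloneqq \Pr^{\nu_0,\alg}[\phi(x)=s_H^\ell \mid \cF_{t,H-1},x,a]$ and using $p_\sgood+p_\sbad+p_\sdis=1$ one obtains the identity
\[
\Pr^{\nu_0,\alg}\!\brk*{R_{t,H}=1\mid \cF_{t,H-1},X_{t,H},A_{t,H}} \;=\; \tfrac12 \;+\; \tfrac12(-1)^{a}\prn*{p_\sgood - p_\sdis}.
\]
Thus the whole lemma reduces to bounding $\En^{\nu_0,\alg}\abs{p_\sgood-p_\sdis}$ (and the $\nu_1$ statement follows by exchanging the roles of $a=0$ and $a=1$). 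Intuitively $p_\sgood\approx p_\sdis$ because the good and distractor chains are identical except for the layer-$H$ reward and the fact that only the good chain is reachable from $d_1$; the work is to show the gap is $\poly(H,T)/2^H$.

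As in \pref{lem:reward-bound-lb1} I would peel off the low-probability events $\eventnew$ (every sampled transition lands on a previously unseen observation) and the reset-model analogue of $\eventfresh$ (every observation drawn from $\mu_h$, $h\ge 2$, is previously unseen); since $\abs{\cX_h}=2^{H+2}$ and at most $HT$ observations are ever drawn, union/birthday bounds give $\Pr^{\nu_0,\alg}[(\eventnew\wedge\eventfresh)^c]\le\poly(H,T)/2^H$. On $\eventnew\wedge\eventfresh$ the observation $X_{t,H}$ is fresh, and so is the observation $Y\coloneqq X_{t,h_\bot}$ at which episode $t$ begins ($h_\bot$ its starting layer). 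Split on $h_\bot$. In case (i), $h_\bot=1$: the chain emanates from a (known-good) layer-$1$ observation, so $s_H^\sdis$ is unreachable and $p_\sdis=0$, while $\phi(X_{t,H})=s_H^\sgood$ exactly when the executed prefix $A_{t,1:H-1}$ equals $\optpi_{1:H-1}$; hence $\abs{p_\sgood-p_\sdis}=p_\sgood=\Pr^{\nu_0,\alg}[\optpi_{1:H-1}=A_{t,1:H-1}\mid\cF_{t,H-1},X_{t,H},A_{t,H}]$. I would bound this by showing the posterior of $\optpi_{1:H-1}$ given $\cF_{t,H-1}$ is $\poly(H,T)/2^H$-close in $\ell_1$ to uniform, via a likelihood-ratio computation parallel to \pref{lem:posterior-of-optimal-generative}: transitions are uninformative by the fresh-observation/uniform-decoder estimate (\pref{lem:tv-distance-from-uniform-resets}), and the layer-$H$ rewards are uninformative about $\optpi_{1:H-1}$ because a reward arising from a reset-chain is, after marginalizing the $\{s^\sgood,s^\sdis\}$-symmetric label of its fresh starting reset observation, distributed exactly $\ber(\tfrac12)$ regardless of $\optpi_{1:H-1}$, whereas a reward arising from a layer-$1$ chain that actually matches $\optpi_{1:H-1}$ is an event of prior probability $\le\poly(H,T)/2^H$.

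In case (ii), $h_\bot\ge 2$ (either a direct reset at layer $H$, or a reset at $h_\bot\in[2,H-1]$ followed by a chain), I would instead condition on the annotation $\cG''_{t,H-1}$ revealing $\optpi$ in full together with the latent labels of all observations \emph{except} those lying on the episode-$t$ trajectory. Given $\cG''_{t,H-1}$ the chain dynamics from $Y$ to $X_{t,H}$ are determined, so $\phi(X_{t,H})=\phi(Y)$ if the chain follows $\optpi_{h_\bot:H-1}$ and $\phi(X_{t,H})=s_H^\sbad$ otherwise; and $\phi(Y)$, being a fresh reset observation at a layer $\ge 2$ whose label was \emph{not} revealed, has posterior within $\poly(H,T)/2^H$ of the emission-support proportions $(\tfrac14,\tfrac12,\tfrac14)$ on $(s^\sgood,s^\sbad,s^\sdis)$ — in particular $\poly(H,T)/2^H$-symmetric in $\{s^\sgood,s^\sdis\}$, since the only way the revealed labels correlate with $\phi(Y)$ is through the per-layer size-$2^H$ count constraints. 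Because $R_{t,H}$ is, given $\cG''_{t,H-1},X_{t,H},A_{t,H}$, a fixed function of $\phi(X_{t,H})$ and an independent $\ber(\tfrac12)$ coin, this yields $\abs{\Pr^{\nu_0,\alg}[R_{t,H}=1\mid\cG''_{t,H-1},X_{t,H},A_{t,H}]-\tfrac12}\le\poly(H,T)/2^H$ in both sub-cases, and a Jensen/tower step transfers the bound to the $\cF_{t,H-1}$-conditional probability on the event $\eventnew\wedge\eventfresh$. Combining cases (i) and (ii) with the bad-event probability gives $\En^{\nu_0,\alg}\abs{p_\sgood-p_\sdis}\le\poly(H,T)/2^H\le t\cdot T^2H/2^{H-9}$, and the $\nu_1$ estimate follows identically.

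The main obstacle is exactly the tension that forces the case split: revealing latent labels makes transitions informative about $\optpi$ (so one cannot simply condition on a fully annotated sigma-field), while \emph{not} revealing them leaves $\phi(X_{t,H})$ undetermined (so one cannot directly read off $p_\sgood-p_\sdis$). Case (ii) threads this by revealing everything \emph{off} the current trajectory plus $\optpi$, which determines the chain but keeps the pivotal fresh label $\phi(Y)$ hidden and symmetric; case (i) cannot afford to reveal $\optpi_{1:H-1}$ and must instead run the near-uniform-posterior argument. I expect the bulk of the technical work to be in case (i)'s likelihood-ratio computation — in particular in verifying that reset-chain rewards contribute exactly uninformative $\ber(\tfrac12)$ factors, so that the argument of \pref{lem:posterior-of-optimal-generative} carries over to the $\mu$-reset setting; an episode-by-episode accounting in the style of \pref{lem:reward-bound-lb1} would also suffice and is what produces the $t$-dependent form of the stated bound.
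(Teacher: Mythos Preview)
The proposal is correct and follows essentially the same route as the paper: the same case split on $h_\bot=1$ versus $h_\bot\ge 2$, handled respectively by a near-uniform-posterior-of-$\optpi$ argument (the paper's \pref{lem:posterior-of-optimal}) and a $\{\sgood,\sdis\}$-symmetry argument for the fresh starting label (the paper's \pref{lem:posterior-of-state-label-v2}). Your opening identity $\Pr[R_{t,H}=1\mid\cdots]=\tfrac12+\tfrac12(-1)^a(p_\sgood-p_\sdis)$ and your Case~(ii) conditioning on $\cG''$ are minor presentational variants, but the core calculation underlying both cases---that each fresh observation contributes $\approx 2^{-(H+2)}$ to the trajectory likelihood regardless of its latent label, so that neither $\optpi_{1:H-1}$ nor $\phi(Y)$ is pinned down by the data---is exactly the paper's.
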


\begin{proof}[Proof of \pref{lem:reward-tv-distance-policy-completeness}] Let us denote $\cfnor \coloneqq \sigma(X_{t,H}, A_{t,H}, \cF_{t, H-1})$. 

We will prove the first inequality of \pref{lem:reward-tv-distance-policy-completeness}; the second inequality is obtained using similar arguments.

\paragraph{Peeling Off Bad Event.} First, let us peel off the event that $\cfnor$ has repeated observations: denoting $\eventnew \coloneqq \crl{X_{t,h} \notin \cF_{t, h-1} ~\forall t \in [T], h \in [H]}$, we have
\begin{align*}
    \En^{\nu_0, \alg} \brk*{\abs*{\Pr^{\nu_0, \alg} \brk*{ R_{t,H} = 1\mid \cfnor } - \frac{1}{2} } } &\le \Pr^{\nu_0, \alg}[\eventnew^c] + \En^{\nu_0, \alg} \brk*{\ind{ \eventnew } \abs*{\Pr^{\nu_0, \alg} \brk*{ R_{t,H} = 1\mid \cfnor } - \frac{1}{2} } } \\
    &\le \frac{T^2 H}{2^H} + \En^{\nu_0, \alg} \brk*{ \ind{ \eventnew } \abs*{\Pr^{\nu_0, \alg} \brk*{ R_{t,H} = 1\mid \cfnor} - \frac{1}{2} } }, \numberthis\label{eq:peeled-eventnew}
\end{align*}
where the last inequality follows by an identical argument as \pref{lem:repeated-transitions}. Therefore, it suffices to bound the expectation only for the $\cfnor$ which have no repeated states. 

\paragraph{Inductive Claim.} Now we define the event $\cE_{R, t}$ to be the event that among the first $t$ episodes, $\alg$ never performs an online rollout (meaning it starts from layer 1) which follows $\optpi$, i.e.,
\begin{align*}
    \cE_{R,t} \coloneqq \crl*{ \forall t' \le t: A_{t, 1:H-1} \ne \optpi}.
\end{align*}
A subtle point is that unlike the reward TV distance calculation for \pref{thm:lower-bound-coverability}, the event $\cE_{R,t-1}$ is \emph{not measurable} with respect to $\cF_{t, H-1}$ (since there is still uncertainty as to what $\optpi$ is). This causes some technical complications in the proof. To remedy this, we can consider working with an augmented filtration which appends a special token $\top$ at the end of every online trajectory that $\alg$ takes if the sequence of actions $A_{t,1:H-1}$ matches $\optpi$; \emph{now} $\cE_{R, t-1}$ is measurable with respect to the augmented filtration (namely the event $\cE_{R,t-1}$ holds if the augmented contains no special tokens $\top$). This augmentation does not affect the overall argument, and for the rest of the proof we assume that $\cfnor$ has been augmented in this way. 

Central to our proof is the following claim that $\cE_{R,t} \cup \eventnew$ happens with high probability:
\begin{align*}
    \text{for all}~t \in [T]: \quad \Pr^{\nu_0, \alg} \brk*{ \cE_{R,t}^c \cup \eventnew} \le t \cdot \frac{T^2H}{2^{H-7}}. \numberthis \label{claim:inductive-online}
\end{align*}
Now we will establish \pref{claim:inductive-online} using an inductive argument. The base case of $t=0$ trivially holds. Now suppose that \pref{claim:inductive-online} holds at time $t-1$. Then
\begin{align*}
    \Pr^{\nu_0, \alg} \brk*{ \cE_{R,t}^c \cup \eventnew} &\le \Pr^{\nu_0, \alg} \brk*{ \cE_{R,t-1}^c \cup \eventnew} + \En^{\nu_0, \alg} \brk*{ \ind{\cE_{R,t-1}\cup \eventnew} \Pr^{\nu_0, \alg} \brk*{A_{t, 1:H-1} = \optpi \mid \cfnor} } \\
    &\le (t-1) \cdot \frac{T^2H}{2^{H-7}}+ \En^{\nu_0, \alg} \brk*{\ind{\cE_{R,t-1}\cup \eventnew} \Pr^{\nu_0, \alg} \brk*{A_{t, 1:H-1} = \optpi \mid \cfnor} } \\
    &\le (t-1) \cdot \frac{T^2H}{2^{H-7}} + \En^{\nu_0, \alg} \brk*{ \ind{\cE_{R,t-1}\cup \eventnew} \sum_{\pi \in \Pi_{1:H-1}} \ind{A_{t, 1:H-1} = \pi} \Pr^{\nu_0, \alg} \brk*{ \optpi = \pi \mid \cfnor} } \\
    &\le (t-1) \cdot \frac{T^2H}{2^{H-7}} + \frac{1}{2^{H-1}} + \frac{T^2H}{2^{H-6}} \le t \cdot \frac{T^2H}{2^{H-7}},
\end{align*}
Here, the second-to-last inequality uses \pref{lem:posterior-of-optimal} and the fact that $A_{t, H-1}$ can only match a single policy $\pi \in \Pi_{1:H-1}$.

\paragraph{Casework on Reward TV Distance.} Armed with \pref{claim:inductive-online}, we now return to the proof of the reward TV distance calculation. We consider two cases. In the first case, the $t$-th trajectory is generated by an online rollout from $h=1$ with the sequence of actions $A_{1:H}$. In the second case, the $t$-th trajectory is generated by first querying the $\mu$-reset model starting from $h_\bot \ge 2$, then rolling out with the sequence of actions $A_{h_\bot:H}$.

\emph{Case 1: Online Rollout from Layer 1.} First, we peel off the probability of $\cE_{R,t-1}$ occuring:
\begin{align*}
    \hspace{2em}&\hspace{-2em} \En^{\nu_0, \alg} \brk*{\ind{\eventnew} \abs*{\Pr^{\nu_0, \alg} \brk*{ R_{t,H} = 1\mid \cfnor} - \frac{1}{2} } } \\
    &\le \Pr^{\nu_0, \alg} \brk*{ \cE_{R,t-1}^c \cup \eventnew} + \En^{\nu_0, \alg} \brk*{\ind{\cE_{R,t-1}\cup \eventnew} \cdot \abs*{\Pr^{\nu_0, \alg} \brk*{ R_{t,H} = 1\mid \cfnor} - \frac{1}{2} } } \\
    &\le (t-1) \cdot \frac{T^2H}{2^{H-7}} + \En^{\nu_0, \alg} \brk*{\ind{\cE_{R,t-1}\cup \eventnew} \cdot \abs*{ \Pr^{\nu_0, \alg} \brk*{ R_{t,H} = 1\mid \cfnor} - \frac{1}{2} } }. 
    \numberthis \label{eq:reward-tv-induction-peel}
\end{align*}

Now we compute
\begin{align*}
    \hspace{2em}&\hspace{-2em} \ind{\cE_{R,t-1}\cup \eventnew} \Pr^{\nu_0, \alg} \brk*{ R_{t,H} = 1 \mid \cfnor } \\
    &= \ind{\cE_{R,t-1}\cup \eventnew} \En^{\nu_0, \alg} \brk*{ \ind{ \phi(X_{t,H}) = \sgood \wedge A_{t,H} = 0 } + \frac12 \ind{ \phi(X_{t,H}) = \sbad } \mid \cfnor  } \\
    &\overset{(i)}{=} \ind{\cE_{R,t-1}\cup \eventnew}  \prn*{ \Pr^{\nu_0, \alg} \brk*{ A_{t, 1:H} = \optpi \circ 0  \mid \cfnor  } + \frac12 \Pr^{\nu_0, \alg} \brk*{  A_{t,1:H-1} \ne \optpi  \mid \cfnor  } }\\
    &\overset{(ii)}{=} \sum_{\pi \in \Pi_{1:H-1}} \prn*{ \ind{A_{t, 1:H} = \pi \circ 0} + \frac{1}{2} \ind{A_{t, 1:H-1} \ne \pi} } \ind{\cE_{R,t-1}\cup \eventnew} \Pr^{\nu_0, \alg}\brk*{ \optpi = \pi \mid \cfnor,  \cE_{R,t-1} } \\
    &\overset{(iii)}{\le} \frac{T^2H}{2^{H-6}}  + \frac{\ind{\cE_{R,t-1}\cup \eventnew}}{2^{H-1}} \sum_{\pi \in \Pi_{1:H-1}} \prn*{ \ind{A_{t, 1:H} = \pi \circ 0} + \frac{1}{2} \ind{A_{t, 1:H-1} \ne \pi} } \\
    &\overset{(iv)}{\le} \frac{T^2H}{2^{H-7}} + \frac{\ind{\cE_{R,t-1}\cup \eventnew}}{2}. \numberthis \label{eq:case1-bound}
\end{align*}
For equality $(i)$ we use the fact that if the $X_{t,H}$ has a good label then we must have taken $\optpi$ for the first $H-1$ layers. For equality $(ii)$ we use the fact that the indicators are measurable with respect to $\cF_{t, H-1}$ and $\crl{\optpi = \pi}$. For $(iii)$ we apply a change-of-measure argument using \pref{lem:posterior-of-optimal}. For $(iv)$ we use the fact that the sequence of actions $A_{t,1:H-1}$ can match at exactly one of the policies in $\Pi_{1:H-1}$. 

Note that the other side of the inequality can be shown analogously. Therefore by plugging in Eq.~\eqref{eq:case1-bound} into \eqref{eq:reward-tv-induction-peel} we get the bound that
\begin{align*}
\En^{\nu_0, \alg} \brk*{\ind{\cE_{R,t-1}\cup \eventnew} \abs*{\Pr^{\nu_0, \alg} \brk*{ R_{t,H} = 1\mid \cfnor} - \frac{1}{2} } } \le t \cdot \frac{T^2H}{2^{H-7}}. \numberthis\label{eq:case1-bound-onreward}
\end{align*}

\emph{Case 2: $\mu$-Reset Rollout from Layer $h_\bot \ge 2$.} Let us analyze the second case. Using the construction details,
\begin{align*}
    &\Pr^{\nu_0, \alg} \brk*{ R_{t,H} = 1 \mid \cfnor} \\
    &= \En^{\nu_0, \alg} \brk*{ \ind{ \phi(X_{t,H}) = \sgood \wedge A_{t,H} = 0 } + \frac12 \ind{ \phi(X_{t,H}) = \sbad } + \ind{ \phi(X_{t,H}) = \sdis \wedge A_{t,H} = 1 } \mid \cfnor } \\
    &= \En^{\nu_0, \alg} \brk*{ \ind{ \phi(X_{t,h_\bot}) = \sgood \wedge A_{t, h_\bot:H} = \optpi \circ 0 } \mid \cfnor } \\
    &\qquad + \frac12 \En^{\nu_0, \alg} \brk*{ \ind{ A_{t,h_\bot:H-1} \ne \optpi } + \ind{\phi(X_{t,h_\bot}) = \sbad \wedge A_{t,h_\bot:H-1} = \optpi} \mid \cfnor }\\
    &\qquad + \En^{\nu_0, \alg} \brk*{ \ind{ \phi(X_{t,h_\bot}) = \sdis \wedge A_{t,h_\bot:H} = \optpi\circ 1 } \mid \cfnor } \\
    &= \sum_{\pi \in \Pi_{h_\bot:H-1}} \Pr^{\nu_0, \alg}\brk*{ \optpi = \pi \mid \cfnor } \bigg(  \ind{ A_{t, h_\bot:H} = \pi \circ 0 }
    \Pr^{\nu_0, \alg} \brk*{ \phi(X_{t,h_\bot}) = \sgood  \mid \cfnor, \optpi = \pi }  \\
    &\qquad +  \frac12 \cdot  \ind{ A_{t,h_\bot:H-1} \ne \pi } + \frac12 \cdot \ind{A_{t,h_\bot:H-1} = \pi} \Pr^{\nu_0, \alg} \brk*{  \phi(X_{t,h_\bot}) = \sbad  \mid \cfnor, \optpi = \pi } \\
    &\qquad +  \ind{A_{t,h_\bot:H} = \pi \circ 1} \Pr^{\nu_0, \alg} \brk*{ \phi(X_{t,h_\bot}) = \sdis \mid \cfnor, \optpi = \pi } \bigg).
\end{align*}
We apply \pref{lem:posterior-of-state-label-v2} separately to the terms inside the parentheses for every $\pi$. Then using a casework argument on the value of $A_{t, h_\bot:H}$ and then averaging over the posterior of $\optpi$ gives
\begin{align*}
    \ind{\eventnew} \abs*{ \Pr^{\nu_0, \alg} \brk*{ R_{t,H} = 1 \mid \cfnor} - \frac12 } \le \frac{TH}{2^{H-5}}. \numberthis\label{eq:case2-bound-onreward}
\end{align*}

\paragraph{Putting It Together.} To conclude, the worst-case TV distance is the maximum of the two bounds we have shown in Eqs.~\eqref{eq:case1-bound-onreward} and \eqref{eq:case2-bound-onreward}, so therefore plugging into Eq.~\eqref{eq:peeled-eventnew} we have
\begin{align*}
    \hspace{2em}&\hspace{-2em} \En^{\nu_0, \alg} \brk*{\abs*{\Pr^{\nu_0, \alg} \brk*{ R_{t,H} = 1\mid \cfnor} - \frac{1}{2} } } \le \frac{T^2 H}{2^H} + \max \crl*{ t \cdot \frac{T^2H}{2^{H-7}},  \frac{TH}{2^{H-5}} } \le t \cdot \frac{T^2H}{2^{H-8}}.
\end{align*}
The proof of second inequality is obtained similarly, as one just needs to change the law to be under $\Pr^{\nu_1, \alg}$ in the above argument. This concludes the proof of \pref{lem:reward-tv-distance-policy-completeness}.
\end{proof}

\begin{lemma}[Posterior of $\optpi$]\label{lem:posterior-of-optimal}
Fix any $t \in [T]$. Assume that $\cfnor$ contains no repeated states. Then 
\begin{align*}
    \ind{\cE_{R,t-1}} \cdot \nrm*{\Pr^{\nu_0, \alg} \brk*{\optpi = \cdot \mid \cfnor} - \unif(\Pi_{1:H-1})}_1 \le \frac{T^2H}{2^{H-6}}.
\end{align*}
\end{lemma}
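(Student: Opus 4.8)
\emph{Proof proposal.} The plan is to mirror the generative‑access analog \pref{lem:posterior-of-optimal-generative}, exploiting that under $\cE_{R,t-1}$ the reward channel of the $\mu$‑reset construction carries essentially no information about $\optpi$. Expand the $\ell_1$ distance as
\begin{align*}
\nrm*{\Pr^{\nu_0,\alg}\brk*{\optpi=\cdot\mid\cfnor}-\unif(\Pi_{1:H-1})}_1 = 2\sum_{\pi\in\Pi_{1:H-1}}\brk*{\Pr^{\nu_0,\alg}\brk*{\optpi=\pi\mid\cfnor}-2^{-(H-1)}}_{+},
\end{align*}
and apply Bayes' rule with the uniform prior on $\optpi_{1:H-1}$ (note $\optdec$ stays uniform over $\Phi$ after conditioning on $\optpi_{1:H-1}$), reducing the claim to the uniform likelihood‑ratio bound $\ind{\cE_{R,t-1}}\cdot\max_{\pi}\Pr^{\nu_0,\alg}(\cfnor\mid\optpi=\pi)/\Pr^{\nu_0,\alg}(\cfnor)\le 1+O(HT^2/2^H)$. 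As in the proof of \pref{lem:reward-tv-distance-policy-completeness}, to make $\cE_{R,t-1}$ measurable with respect to $\cfnor$ I would pass to the augmented filtration that appends a token to every layer‑$1$ online rollout whose action sequence agrees with $\optpi$ on layers $1{:}H-1$; on $\cE_{R,t-1}$ this only eliminates at most $T$ policies from the support of the posterior, costing an extra $O(T/2^H)$ in the final bound.

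Next I would factorize $\Pr^{\nu_0,\alg}(\cfnor\mid\optpi=\pi)$ by the chain rule over the ($\le T$) episodes and $H$ steps recorded in $\cfnor$. Each episode contributes: (a) a factor for its starting observation, drawn from $d_1=\unif(\cX_1)$ or from a reset $\mu_{h_\bot}=\unif(\cX_{h_\bot})$, hence independent of both $\pi$ and $\optdec$; (b) one transition factor $\Pr^{\nu_0,\alg}(X_{i,h+1}\mid\optpi=\pi,\cF_{i,h})$ per internal step; and (c), for episodes $i<t$, one terminal reward factor $\Pr^{\nu_0,\alg}(R_{i,H}\mid\optpi=\pi,\cF_{i,H-1})$ (rewards at layers $<H$ vanish). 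Thus $\Pr^{\nu_0,\alg}(\cfnor\mid\optpi=\pi)=C\cdot\prod(\text{transition factors})\cdot\prod(\text{reward factors})$ where $C$ and the \emph{number} of factors do not depend on $\pi$, so it suffices to trap every transition and reward factor in a $(1\pm O(T/2^H))$‑window whose error is bounded uniformly in $\pi$.

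For the transition factors I would show $\Pr^{\nu_0,\alg}(X_{i,h+1}=x\mid\optpi=\pi,\cF_{i,h})\in \tfrac{1}{2^{H+2}}\brk*{1-O(T/2^H),\,1+O(T/2^H)}$ for any $x$ fresh with respect to $\cfnor$ (the no‑repeated‑states hypothesis) and any $\pi$: conditioning on—or marginalizing over—the latent label of $X_{i,h}$, the (deterministic) transition lands on a latent state whose emission is uniform over a subset of $\cX_{h+1}$ of size $2^H$ or $2^{H+1}$, while the posterior probability that a fresh $x$ carries that label is the matching fraction $\tfrac14$ or $\tfrac12$ up to $O(T/2^H)$; the induced law on $X_{i,h+1}$ is therefore $\unif(\cX_{h+1})$ up to $O(T/2^H)$, \emph{regardless of which latent (and hence which $\pi$) governs the step}. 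This is the $3$‑state, $\mu$‑reset version of \pref{lem:latent-calculation-generative}. For the reward factors I would use short casework: if episode $i$ is a layer‑$1$ online rollout, then on $\cE_{R,t-1}$ its actions disagree with $\optpi$ on layers $1{:}H-1$, so it is in $s_H^\sbad$ and $R_{i,H}\sim\ber(\tfrac12)$ exactly; if episode $i$ resets at $h_\bot\ge2$, then averaging over the $(\tfrac14,\tfrac14,\tfrac12)$‑distributed starting latent (up to $O(T/2^H)$) and using that $s_H^\sgood$ and $s_H^\sdis$ carry complementary terminal rewards gives $\Pr^{\nu_0,\alg}(R_{i,H}=1\mid\optpi=\pi,\cF_{i,H-1})=\tfrac12\pm O(T/2^H)$ in every case, with the error again uniform in $\pi$.

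Multiplying the $\le HT$ transition factors and $\le T$ reward factors, $\Pr^{\nu_0,\alg}(\cfnor\mid\optpi=\pi)$ lies in a fixed multiplicative window $[\,M(1-O(T/2^H))^{O(HT)},\,M(1+O(T/2^H))^{O(HT)}\,]$ independent of $\pi$; since $\Pr^{\nu_0,\alg}(\cfnor)=2^{-(H-1)}\sum_{\pi'}\Pr^{\nu_0,\alg}(\cfnor\mid\optpi=\pi')\ge\min_{\pi'}\Pr^{\nu_0,\alg}(\cfnor\mid\optpi=\pi')$, the likelihood ratio is at most $\prn*{\tfrac{1+O(T/2^H)}{1-O(T/2^H)}}^{O(HT)}\le 1+O(HT^2/2^H)$ using $T=2^{O(H)}$. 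Feeding this back, $[\Pr^{\nu_0,\alg}(\optpi=\pi\mid\cfnor)-2^{-(H-1)}]_+\le 2^{-(H-1)}\cdot O(HT^2/2^H)$ for each $\pi$, so the $\ell_1$ distance is $O(HT^2/2^H)$, which after adjusting constants and absorbing the $O(T/2^H)$ augmentation correction is at most $T^2H/2^{H-6}$. I expect the main obstacle to be the transition‑factor estimate—tracking how the posterior over the latent labels of all previously seen observations, the deterministic latent dynamics, and the near‑uniform emissions combine so that $X_{i,h+1}$ is $\unif(\cX_{h+1})$ up to $O(T/2^H)$ \emph{uniformly in} $\pi$, particularly for reset‑initial observations whose own labels are uncertain; this is precisely the bookkeeping encapsulated by (the analogs of) \pref{lem:latent-calculation-generative} and \pref{lem:posterior-of-state-label-v2}, after which the remaining multiplicative error propagation is routine.
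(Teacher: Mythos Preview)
Your proposal is correct and follows essentially the same route as the paper. Both reduce to the likelihood-ratio bound $\ind{\cE_{R,t-1}}\Pr^{\nu_0,\alg}(\cfnor\mid\optpi=\pi)/\Pr^{\nu_0,\alg}(\cfnor)\le 1+O(HT^2/2^H)$ via a chain-rule factorization, using \pref{lem:posterior-of-state-label} to pin each transition factor to $\tfrac{1}{2^{H+2}}(1\pm O(T/2^H))$ uniformly in $\pi$, and the $s_H^\sgood$/$s_H^\sdis$ reward symmetry (together with \pref{lem:posterior-of-state-label-v2}) to show reward factors are $\tfrac12\pm O(TH/2^H)$; the only organizational difference is that the paper bundles each trajectory as a unit---summing over its initial latent label and proving three claims for $\mu$-reset, online, and the final partial trajectory---whereas you separate into per-step transition factors and per-episode reward factors, but the underlying estimates and final arithmetic are identical.
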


\begin{proof}
In what follows, all of the probabilities $\Pr[\cdot] \coloneqq \Pr^{\nu_0, \alg}[\cdot]$. Let $[x]_{+} \coloneqq \max\crl{x, 0}$. We can compute that
\begin{align*}
    \ind{\cE_{R,t-1}} \cdot \nrm*{\Pr\brk*{\pi^\star = \cdot \mid \cfnor } - \unif(\Pi_{1:H-1})}_1 &= \ind{\cE_{R,t-1}} \cdot \sum_{\pi \in \Pi_{1:H-1}} \abs*{ \Pr\brk*{\pi^\star = \pi \mid \cfnor } - \frac{1}{2^{H-1}} }\\
    &= \ind{\cE_{R,t-1}} \cdot 2\sum_{\pi \in \Pi_{1:H-1}} \brk*{ \Pr\brk*{\pi^\star = \pi \mid \cfnor } - \frac{1}{2^{H-1}} }_{+} \\
    &= \ind{\cE_{R,t-1}} \cdot \frac{2}{2^{H-1}} \sum_{\pi \in \Pi_{1:H-1}} \brk*{ \frac{ \Pr\brk*{\cfnor \mid \pi^\star = \pi } }{  \Pr\brk*{  \cfnor } } - 1 }_{+}\\
    &\le 2 \max_{\pi \in \Pi_{1:H-1}} \brk*{ \frac{\ind{\cE_{R,t-1}} \cdot \Pr\brk*{\cfnor \mid \pi^\star = \pi } }{  \Pr\brk*{  \cfnor } } - 1 }_{+}. \numberthis \label{eq:upper-bound-on-posterior-tv}
\end{align*}
We now proceed by explicitly calculating the conditional distribution of $\cfnor$ for every choice of optimal policy $\pi \in \Pi_{1:H-1}$. We will show that regardless of the choice $\pi \in \Pi_{1:H-1}$, the conditional distribution looks roughly like the uniform distribution over observations with a $\ber(1/2)$ reward at the end of every trajectory.

First, we will break up the distribution into trajectories:
\begin{align*}
    \Pr\brk*{\cfnor \mid \pi^\star = \pi } = \prn*{ \prod_{i < t} \Pr \brk*{ \tau_i \mid \pi^\star = \pi , \cF_{i-1} } } \cdot \Pr \brk*{ (X_{t,h_\bot:H}, A_{t,h_\bot:H}) \mid \pi^\star = \pi , \cF_{t-1}}. \numberthis\label{eq:prob-filtration-given-pi}
\end{align*}

\begin{claim}\label{claim:claim1}
Fix any $i \in [t]$. If $\tau_i$ is generated by sampling the $\mu$-reset distribution at some layer $h_\bot \ge 2$ and then rolling out, we have for every $\pi \in \Pi_{1:H-1}$,
\begin{align*}
    \Pr \brk*{ \tau_i \mid \pi^\star = \pi , \cF_{i-1} }  \in \frac{1}{2} \cdot \frac{1}{2^{(H+2) \cdot (H - h_\bot + 1)}} \cdot \brk*{ \prn*{1 - \frac{T}{2^H}}^{H-1}, \prn*{1 + \frac{T}{2^H}}^{H-1} }
\end{align*} 
\end{claim}

We start by showing \pref{claim:claim1}. In our proof, we assume that $h_\bot = 2$ (the proof is easy to adapt to any $h_\bot \ge 2$ with minor modification). Fix any index $i \in [t]$ and let $\tau_i = (X_{2:H}, A_{2:H}, R)$ be the $i$-th trajectory. Also fix any $\pi \in \Pi_{1:H-1}$. Then we can calculate
\begin{align*}
    \hspace{2em}&\hspace{-2em} \Pr\brk*{ (X_{2:H}, A_{2:H}, R) \mid \pi^\star = \pi , \cF_{i-1}} \\
    &= \sum_{\phi_2} \Pr \brk*{ (X_{2:H}, A_{2:H}, R) \mid \pi^\star = \pi , \cF_{i-1}, \phi_2} \Pr\brk*{\phi_2 \mid \pi^\star= \pi , \cF_{i-1}} \\
    &= \sum_{\ell \in \crl{\sgood, \sbad, \sdis}}\sum_{\phi_2: \phi_2(X_2) = \ell} \Pr \brk*{ (X_{2:H}, A_{2:H}, R) \mid \pi^\star = \pi , \cF_{i-1}, \phi_2} \Pr\brk*{\phi_2 \mid \pi^\star= \pi , \cF_{i-1}}.
\end{align*}
We can separately analyze the sum for the different choices of the label of the initial state $X_2$. First, we do the case where $\phi_2(X_2) = \sbad$:
\begin{align*}
    \hspace{2em}&\hspace{-2em} \sum_{\phi_2: \phi_2(X_2) = \sbad} \Pr  \brk*{ (X_{2:H}, A_{2:H}, R) \mid \pi^\star = \pi , \cF_{i-1}, \phi_2} \Pr\brk*{\phi_2 \mid \pi^\star= \pi , \cF_{i-1}} \\
    &= \sum_{\phi_2: \phi_2(X_2) = \sbad} \Pr \brk*{ (X_2, A_2) \mid \pi^\star = \pi , \cF_{i-1}, \phi_2} \Pr\brk*{ (X_{3:H}, A_{3:H}, R) \mid \pi^\star = \pi , \cF_{i-1}, \phi_2, X_2, A_2} \Pr\brk*{\phi_2 \mid \pi^\star= \pi , \cF_{i-1}} \\
    &\overset{(i)}{=} \frac{1}{2^{H+2}} \sum_{\phi_2: \phi_2(X_2) = \sbad} \Pr\brk*{ (X_{3:H}, A_{3:H}, R) \mid \pi^\star = \pi , \cF_{i-1}, \phi_2, X_2, A_2} \Pr\brk*{\phi_2 \mid \pi^\star= \pi , \cF_{i-1}}\\
    &\overset{(ii)}{=} \frac{1}{2^{H+2}} \sum_{\phi_2: \phi_2(X_2) = \sbad} \Pr\brk*{ (X_{3:H}, A_{3:H}, R) \mid \pi^\star = \pi , \cF_{i-1}, \phi_2(X_2) = \sbad} \Pr\brk*{\phi_2 \mid \pi^\star= \pi , \cF_{i-1}} \\
    &= \frac{ \Pr\brk*{\phi_2(X_2) = \sbad \mid \pi^\star= \pi , \cF_{i-1}} }{2^{H+2}} \Pr\brk*{ (X_{3:H}, A_{3:H}, R) \mid \pi^\star = \pi , \cF_{i-1}, \phi_2(X_2) = \sbad} \\
    &= \cdots \\[0.5em]
    &\overset{(iii)}{=} \frac{ \Pr\brk*{\phi_2(X_2) = \sbad \mid \pi^\star= \pi , \cF_{i-1}} }{2^{H+2}} \times \frac{\Pr \brk*{  \phi_3(X_3) = \sbad \mid \pi^\star = \pi , \cF_{i-1}, \phi_2(X_2) = \sbad}}{2^{H+1}} \\
    &\qquad \dots \times  \frac{\Pr\brk*{\phi_H(X_H) = \sbad \mid \pi^\star= \pi , \cF_{i-1}, \crl*{\phi_h(X_h)= \sbad, 2 \le h \le H-1} } }{2^{H+1}} \\
    &\qquad\qquad \times \Pr\brk*{ R \mid \pi^\star = \pi , \cF_{i-1}, \crl*{\phi_h(X_h) = \sbad, 2 \le h \le H}} \\[0.5em]
    &= \frac{ \Pr\brk*{\phi_2(X_2) = \sbad \mid \pi^\star= \pi , \cF_{i-1}} }{2^{H+2}} \times \frac{\Pr \brk*{  \phi_3(X_3) = \sbad \mid \pi^\star = \pi , \cF_{i-1}, \phi_2(X_2) = \sbad}}{2^{H+1}} \\
    &\qquad \dots \times  \frac{\Pr\brk*{\phi_H(X_H) = \sbad \mid \pi^\star= \pi , \cF_{i-1}, \crl*{\phi_h(X_h)= \sbad, 2 \le h \le H-1} } }{2^{H+1}}\times \frac12.
\end{align*}
The equality $(i)$ follows because the first state is chosen $\unif(\cX_2)$ and the action $A_2$ is the one selected by $\alg$ via the policy $\pi^{(i)}$ which is measurable with respect to $\cF_{i-1}$. The equality $(ii)$ follows because the distribution over the next state is only determined by $\cF_{i-1}$ (which includes some information about the decoder $\phi_3$) and the labeling $\phi_2(X_2) = \sbad$. Equality $(iii)$ follows by applying chain rule over and over, noting that since $\phi_2(X_2) = \sbad$ it must be the case that $\phi_h(X_h) = \sbad$ for all $h > 2$, and therefore the probability of observing any given observation with a bad label is $1/2^{H+1}$.

Now we apply the posterior state label calculation of \pref{lem:posterior-of-state-label} (using the fact that $\cfnor$ contains no repeated states) to each term in the previous display to get that:
\begin{align*}
    \hspace{2em}&\hspace{-2em} \sum_{\phi_2: \phi_2(X_2) = \sbad} \Pr  \brk*{ (X_{2:H}, A_{2:H}, R) \mid \pi^\star = \pi , \cF_{i-1}, \phi_2} \Pr\brk*{\phi_2 \mid \pi^\star= \pi , \cF_{i-1}} \\
    &\in \frac14 \cdot \prn*{ \frac{1}{2^{H+2}} }^{H-1} \cdot \brk*{ \prn*{1 - \frac{T}{2^H}}^{H-1}, \prn*{1 + \frac{T}{2^H}}^{H-1} }. \numberthis\label{eq:case-bad}
\end{align*}
Next, we consider other terms in the sum. To bound the quantity
\begin{align*}
    \sum_{\phi_2: \phi_2(X_2) = \sgood} \Pr  \brk*{ (X_{2:H}, A_{2:H}, R) \mid \pi^\star = \pi ,\cF_{i-1}, \phi_2} \Pr\brk*{\phi_2 \mid \pi^\star= \pi , \cF_{i-1}},
\end{align*}
a bit more care is required. In this case, we start off in the good latent state, and depending on whether the sequence of actions $A_{2:H}$ is equal to $\optpi$ we transit to the bad latent state. Let us denote $\overline{h}\ge 2$ denote the first layer at which $a_{\overline{h}}$ deviates from $\pi_{\overline{h}}$. Then using similar reasoning we have
\begin{align*}
    \hspace{2em}&\hspace{-2em}  \sum_{\phi_2: \phi_2(X_2) = \sgood} \Pr  \brk*{ (X_{2:H}, A_{2:H}, R) \mid \pi^\star = \pi , \cF_{i-1}, \phi_2} \Pr\brk*{\phi_2 \mid \pi^\star= \pi , \cF_{i-1}} \\
    &= \frac{ \Pr\brk*{\phi_2(X_2) = \sgood \mid \pi^\star= \pi , \cF_{i-1}} }{2^{H+2}} \times \cdots \times \frac{\Pr \brk*{  \phi_{\overline{h}}(X_{\overline{h}}) = \sgood \mid \pi^\star = \pi , \cF_{i-1}, \crl*{\phi_{h}(X_{h}) = \sgood, ~ \forall h < \overline{h}}  } }{2^{H}} \\
    &\qquad \times \frac{\Pr \brk*{  \phi_{\overline{h}+1}(X_{\overline{h}+1}) = \sbad \mid \pi^\star = \pi , \cF_{i-1}, \crl*{\phi_{h}(X_{h}) = \sgood, ~ \forall h \le \overline{h}}  } }{2^{H+1}} \\
    &\qquad \cdots \times \frac{\Pr\brk*{\phi_H(X_H) = \sbad \mid \pi^\star= \pi , \cF_{i-1},  \crl*{\phi_{h}(X_{h}) = \sgood, ~ \forall h \le \overline{h}}, \crl*{\phi_{h}(X_{h}) = \sbad, ~ \forall \overline{h}< h < H } } }{2^{H+1}} \\
    &\qquad \times \Pr\brk*{ R \mid  \pi^\star= \pi , \cF_{i-1},  \crl*{\phi_{h}(X_{h}) = \sgood, ~ \forall h \le \overline{h}}, \crl*{\phi_{h}(X_{h}) = \sbad, ~ \forall \overline{h}< h < H } }.
\end{align*}
Note that the conditional reward distribution given the latent state labels is
\begin{align*}
    \Pr\brk*{ R = 1 \mid \cdots } = \ind{A_{2:H} = \pi \circ 0} + \frac{1}{2} \ind{A_{2:H-1} \ne \pi}.
\end{align*}
Again by applying the posterior calculation of \pref{lem:posterior-of-state-label} we get
\begin{align*}
    \hspace{2em}&\hspace{-2em} \sum_{\phi_2: \phi_2(X_2) = \sgood} \Pr  \brk*{ (X_{2:H}, A_{2:H}, R) \mid \pi^\star = \pi , \cF_{i-1}, \phi_2} \Pr\brk*{\phi_2 \mid \pi^\star= \pi , \cF_{i-1}} \\
    &\in \begin{cases}
        \prn*{ \ind{A_{2:H} = \pi \circ 0} + \frac{1}{2} \ind{A_{2:H-1} \ne \pi } } \cdot \frac14 \prn*{ \frac{1}{2^{H+2}} }^{H-1} \cdot \brk*{ \prn*{1 - \frac{T}{2^H}}^{H-1}, \prn*{1 + \frac{T}{2^H}}^{H-1} } &\text{if}~ R = 1\\[0.5em]
        \prn*{ \ind{A_{2:H} = \pi \circ 1} + \frac{1}{2} \ind{A_{2:H-1} \ne \pi } } \cdot \frac14 \prn*{ \frac{1}{2^{H+2}} }^{H-1} \cdot \brk*{ \prn*{1 - \frac{T}{2^H}}^{H-1}, \prn*{1 + \frac{T}{2^H}}^{H-1} } &\text{if}~ R= 0
    \end{cases} \numberthis\label{eq:case-good}
\end{align*}
The last term in the sum, with $\phi_2(X_2) = \sdis$ is similar, so we get that
\begin{align*}
    \hspace{2em}&\hspace{-2em} \sum_{\phi_2: \phi_2(X_2) = \sdis} \Pr  \brk*{ (X_{2:H}, A_{2:H}, R) \mid \pi^\star = \pi , \cF_{i-1}, \phi_2} \Pr\brk*{\phi_2 \mid \pi^\star= \pi , \cF_{i-1}} \\
    &\in \begin{cases}
        \prn*{ \ind{A_{2:H} = \pi \circ 1} + \frac{1}{2} \ind{A_{2:H-1} \ne \pi } } \cdot \frac14  \prn*{ \frac{1}{2^{H+2}} }^{H-1} \cdot \brk*{ \prn*{1 - \frac{T}{2^H}}^{H-1}, \prn*{1 + \frac{T}{2^H}}^{H-1} } &\text{if}~ R = 1\\[0.5em]
        \prn*{ \ind{A_{2:H} = \pi \circ 0} + \frac{1}{2} \ind{A_{2:H-1} \ne \pi } } \cdot \frac14 \prn*{ \frac{1}{2^{H+2}} }^{H-1} \cdot \brk*{ \prn*{1 - \frac{T}{2^H}}^{H-1}, \prn*{1 + \frac{T}{2^H}}^{H-1} } &\text{if}~ R= 0
    \end{cases} \numberthis\label{eq:case-dis}
\end{align*}
(Note that the first indicators have been swapped in the previous display compared to Eq.~\eqref{eq:case-good}.)

Summing Eqs.~\eqref{eq:case-bad}, \eqref{eq:case-good}, and \eqref{eq:case-dis}, and applying casework on the different choices of $A_{2:H}$ we get that
\begin{align*}
    \Pr\brk*{ (X_{2:H}, A_{2:H}, R) \mid \pi^\star = \pi , \cF_{i-1}} \in \frac12 \cdot \prn*{ \frac{1}{2^{H+2}} }^{H-1} \cdot \brk*{ \prn*{1 - \frac{T}{2^H}}^{H-1}, \prn*{1 + \frac{T}{2^H}}^{H-1} },
\end{align*}
thus concluding the proof of \pref{claim:claim1}.

\begin{claim}\label{claim:claim2}
    If $\tau_i$ is generated by an online rollout, we have for every $\pi \in \Pi_{1:H-1}$,
    \begin{align*}
        \ind{\cE_{R, t-1}} \Pr \brk*{ \tau_i \mid \pi^\star = \pi, \cF_{i-1} }  \in \ind{\cE_{R, t-1}} \frac{1}{2} \cdot \frac{1}{2^{(H+2) \cdot H}} \cdot \brk*{ \prn*{1 - \frac{T}{2^H}}^{H}, \prn*{1 + \frac{T}{2^H}}^{H} }.
    \end{align*}
\end{claim}

Now we prove \pref{claim:claim2}. Most of the hard work has already been done in the proof of \pref{claim:claim1}.  Note that by construction $\phi_1(X_1) = \sgood$. Using a similar calculation we have
\begin{align*}
    \hspace{2em}&\hspace{-2em} \Pr  \brk*{ (X_{1:H}, A_{1:H}, r) \mid \pi^\star = \pi , \cF_{i-1}} \\
    &\in \begin{cases}
        \prn*{ \ind{A_{1:H} = \pi \circ 0} + \frac{1}{2} \ind{A_{1:H-1} \ne \pi } } \cdot \prn*{ \frac{1}{2^{H+2}} }^{H} \cdot \brk*{ \prn*{1 - \frac{T}{2^H}}^{H}, \prn*{1 + \frac{T}{2^H}}^{H} } &\text{if}~ R = 1\\[0.5em]
        \prn*{ \ind{A_{1:H} = \pi \circ 1} + \frac{1}{2} \ind{A_{1:H-1} \ne \pi } } \cdot \prn*{ \frac{1}{2^{H+2}} }^{H} \cdot \brk*{ \prn*{1 - \frac{T}{2^H}}^{H}, \prn*{1 + \frac{T}{2^H}}^{H} } &\text{if}~ R = 0
    \end{cases}
\end{align*}
However, observe that under the event $\cE_{R, t-1}$ we know that $A_{1:H-1} \ne \optpi$, so the first indicator cannot be $=1$ in either case; so multiplying both sides of the previous display by $\ind{\cE_{R, t-1}}$ gives us the result of \pref{claim:claim2}.

To tidy up, we also state the calculation on the last trajectory, which does not include the prefactor of $\frac12$ because there are no observed rewards at the end:
\begin{claim}\label{claim:claim3}
    \begin{align*}
        \Pr \brk*{ (X_{t, h_\bot:H}, A_{t,h_\bot:H}) \mid \pi^\star = \pi , \cF_{t-1}} \in \frac{1}{2^{(H+2) \cdot (H- h_\bot + 1)}} \brk*{ \prn*{1 - \frac{T}{2^H}}^{H}, \prn*{1 + \frac{T}{2^H}}^{H} }.
    \end{align*}
\end{claim}

Now with \pref{claim:claim1}, \ref{claim:claim2}, and \ref{claim:claim3} in hand, we can finally return to computing a bound on Eq.~\eqref{eq:upper-bound-on-posterior-tv}. Letting $O$ denote the total number of observations in $\cfnor$ (which can be at most $TH$), we have for any $\pi \in \Pi_{1:H-1}$,
\begin{align*}
    \hspace{2em}&\hspace{-2em}\ind{\cE_{R, t-1}} \Pr\brk*{\cfnor \mid \pi^\star = \pi} \\
    &\in \ind{\cE_{R, t-1}} \cdot \prn*{ \frac12 }^{t-1} \cdot \prn*{ \frac{1}{2^{H+2}} }^{O} \cdot \brk*{ \prn*{1 - \frac{T}{2^H}}^{TH}, \prn*{1 + \frac{T}{2^H}}^{TH} } \eqqcolon \ind{\cE_{R, t-1}} \cdot [\underline{B}, \overline{B}].
\end{align*}
Moreover, for any $\cfnor$ we have
\begin{align*}
    \Pr \brk*{ \cfnor} = \frac{1}{2^{H-1}} \sum_{\pi \in \Pi_{1:H-1}} \Pr\brk*{\cfnor \mid \pi^\star = \pi} \ge \frac{2^{H-1} - T}{2^{H-1}} \cdot \underline{B}.
\end{align*}
The last inequality follows because there are at most $T$ different action sequences which have been executed by online trajectories in $\cfnor$, so therefore for all but at most $T$ policies we have $\ind{\cE_{R, t-1}} \Pr\brk*{\cfnor \mid \pi^\star = \pi} = \Pr\brk*{\cfnor \mid \pi^\star = \pi}$. Thus we arrive at the bound
\begin{align*}
    \hspace{2em}&\hspace{-2em} \ind{\cE_{R, t-1}} \nrm*{\Pr\brk*{\pi^\star = \cdot \mid \cfnor} - \unif(\Pi_{1:H-1})}_1 \\
    &\le 2 \max_{\pi \in \Pi_{1:H-1}} \brk*{ \frac{\ind{\cE_{R, t-1}} \Pr\brk*{\cfnor \mid \pi^\star = \pi} }{ \Pr\brk*{ \cfnor } } - 1 }_{+} \le 2  \brk*{ \frac{ \overline{B} }{\prn*{1 - T/{2^{H-1}} } \cdot \underline{B} } - 1 }_{+} \\
    &\le 2 \cdot \prn*{ \prn*{1 + \frac{T}{2^{H-2}}}^{2TH+1} - 1} \le 2 \cdot \frac{2T^2H+T}{2^{H-2}} \exp\prn*{ \frac{2T^2H+T}{2^{H-2}}} \le \frac{T^2H}{2^{H-6}}.
\end{align*}
The second to last inequality uses the fact that $1+y \le e^y$ and $e^y - 1 \le y e^y$, and the last inequality uses the fact that $T = 2^{O(H)}$. This concludes the proof of \pref{lem:posterior-of-optimal}.
\end{proof}

\begin{lemma}[Posterior of New State Label]\label{lem:posterior-of-state-label}
Let $\cF$ be any filtration of $T$ trajectories as well as annotations $\phi(x)$ for a subset of observations $x \in \cF$. Let $\pi \in \Pi_{1:H-1}$ be any policy. Fix any $h \ge 2$, and let $x_\mathrm{new} \in \cX_h - \cF$. Then
\begin{align*}
    \abs*{ \Pr^{\nu_0, \alg} \brk*{  \phi(x_\mathrm{new}) = \sgood    \mid \cF, \optpi = \pi }  - \frac14} &\le \frac{T}{2^{H}},\\
    \abs*{ \Pr^{\nu_0, \alg} \brk*{ \phi(x_\mathrm{new}) = \sdis \mid \cF, \optpi = \pi } - \frac14} &\le \frac{T}{2^{H}},\\
    \abs*{ \Pr^{\nu_0, \alg} \brk*{ \phi(x_\mathrm{new}) = \sbad  \mid \cF, \optpi = \pi } - \frac12} &\le \frac{T}{2^{H}}.
\end{align*}
\end{lemma}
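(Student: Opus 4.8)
The plan is to mirror the proof of \pref{lem:latent-calculation-generative}, adjusting the counting to the three-label structure of the construction for \pref{thm:lower-bound-policy-completeness}. Recall that there, each layer $h \ge 2$ has $m = 2^{H+2}$ observations, and a valid decoder $\optdec \in \Phi$ assigns exactly $2^H$ of them to $s_h^\sgood$, exactly $2^H$ to $s_h^\sdis$, and the remaining $2^{H+1}$ to $s_h^\sbad$; moreover $\optdec \sim \unif(\Phi)$ is independent of $\optpi$ under $\nu_0 = \unif(\Theta_0)$, and $\emission(s) = \unif(\crl{x : \optdec(x) = s})$. The key structural fact is that the likelihood of any trajectory/reward data depends on $\optdec$ only through the latent labels of the observations actually appearing in that data: the emission probability of an observation $x$ given latent $s$ equals $\ind{\optdec(x) = s}$ times a constant fixed by the cardinality constraint defining $\Phi$. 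Hence, conditioning on the fully annotated filtration $\cF'$ that records $\optdec(X)$ for every $X \in \cF$, the posterior law of $\optdec$ is simply uniform over decoders in $\Phi$ consistent with those labels, and in particular does not depend on $\optpi$.

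First I would reduce to the annotated filtration. Since $\Pr^{\nu_0,\alg}\brk*{\phi(x_\mathrm{new}) = \cdot \mid \cF, \optpi = \pi} = \En^{\nu_0,\alg}\brk*{\Pr^{\nu_0,\alg}\brk*{\phi(x_\mathrm{new}) = \cdot \mid \cF', \optpi = \pi} \mid \cF, \optpi = \pi}$, it suffices by Jensen's inequality and convexity of $\abs{\cdot}$ to prove the three bounds with $\cF$ replaced by an arbitrary completion $\cF'$. Fix such a $\cF'$ and let $n_\sgood, n_\sdis, n_\sbad$ count the layer-$h$ observations in $\cF'$ carrying each label, with $n \coloneqq n_\sgood + n_\sdis + n_\sbad$. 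Each of the $T$ trajectories visits layer $h$ at most once, so $n \le T$. By the uniformity fact above, the label of the fresh observation $x_\mathrm{new}$ is drawn uniformly from the slots left open by the $\Phi$ constraint, giving $\Pr^{\nu_0,\alg}\brk*{\phi(x_\mathrm{new}) = \sgood \mid \cF', \optpi = \pi} = (2^H - n_\sgood)/(2^{H+2} - n)$, and analogously $(2^H - n_\sdis)/(2^{H+2} - n)$ for $\sdis$ and $(2^{H+1} - n_\sbad)/(2^{H+2} - n)$ for $\sbad$.

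Then I would bound each ratio by elementary estimates. For $\sgood$: the lower bound is $(2^H - n_\sgood)/(2^{H+2} - n) \ge (2^H - T)/2^{H+2} = \tfrac14 - T/2^{H+2} \ge \tfrac14 - T/2^H$, and the upper bound, using $T \le 2^H$ together with $1/(1 - x) \le 1 + 2x$ for $x \le \tfrac12$, is $(2^H - n_\sgood)/(2^{H+2} - n) \le 2^H/(2^{H+2} - T) \le \tfrac14(1 + T/2^{H+1}) \le \tfrac14 + T/2^H$. The $\sdis$ case is identical by the symmetry of the construction in the two labels, and the $\sbad$ case is the same computation with $2^{H+1}$ in place of $2^H$, yielding $\abs{\Pr^{\nu_0,\alg}\brk{\phi(x_\mathrm{new}) = \sbad \mid \cF', \optpi = \pi} - \tfrac12} \le T/2^H$. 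Averaging back over completions $\cF'$ via Jensen completes the proof.

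I do not expect a serious obstacle; the only point needing care is the claim that conditioning on $\optpi = \pi$ (hence, under $\nu_0$, also on $\optpi_H = 0$) leaves the posterior of the label of an unseen observation unchanged. This is exactly what the factorization remark in the first paragraph handles: since the data likelihood depends on $\optdec$ only through labels of observations already present in $\cF$, the posterior label of $x_\mathrm{new}$ is governed purely by the combinatorial constraint defining $\Phi$ and the revealed labels, with no dependence on $\optpi$.
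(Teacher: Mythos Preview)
Your proposal is correct and follows essentially the same approach as the paper's proof: reduce to the fully annotated filtration $\cF'$ via Jensen, compute the posterior label probability as the ratio of remaining slots, and bound it with the same elementary estimates. The only differences are cosmetic (you introduce the counts $n_\sgood, n_\sdis, n_\sbad$ explicitly and use $1/(1-x) \le 1+2x$ where the paper verifies the analogous inequality directly).
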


\begin{proof}
Let us denote $\cF'$ to be the completely annotated $\cF$ which includes all labels $\crl{\phi(X): X \in \cF}$. We will show that the conclusion of the lemma applies to every completion $\cF'$, and since 
\begin{align*}
    \Pr^{\nu_0, \alg} \brk*{  \phi(x_\mathrm{new}) = \cdot  \mid \cF, \optpi = \pi } = \En^{\nu_0, \alg} \brk*{ \Pr^{\nu_0, \alg} \brk*{  \phi(x_\mathrm{new}) = \cdot   \mid \cF', \optpi = \pi } \mid \cF, \optpi = \pi},
\end{align*}
this will imply the result by Jensen's inequality and convexity of $\abs{\cdot}$.

We calculate the good label probability:
\begin{align*}
    \Pr^{\nu_0, \alg} \brk*{  \phi(x_\mathrm{new}) = \sgood    \mid \cF', \optpi = \pi } = \frac{2^H - \abs{\crl*{X \in \cF: \phi(X) = \sgood}}}{2^{H+2} - \abs{\cF}}.
\end{align*}
For the lower bound we have
\begin{align*}
    \frac{2^H - \abs{ \crl*{X \in \cF: \phi(X) = \sgood}}}{2^{H+2} - \abs{\cF}} \ge \frac{2^H - T}{2^{H+2} } =  \frac{1}{4}\cdot \prn*{ 1 - \frac{T}{{2^{H}}} }.
\end{align*}
For the upper bound we have
\begin{align*}
    \frac{2^H - \abs{\crl*{x \in \cF: \phi(x) = \sgood}}}{2^{H+2} - \abs{\cF}} \le \frac{2^H}{2^{H+2} - T} =  \frac{1}{4}\cdot \prn*{ 1- \frac{T}{2^{H+2}} }^{-1} \le \frac{1}{4}\cdot  \prn*{ 1 + \frac{T}{{2^{H}}} },
\end{align*}
which holds as long as $T \le 2^{H}$. Combining both upper and lower bounds proves the lemma for the good label. The rest of the calculations are similar, so we omit them. This concludes the proof of \pref{lem:posterior-of-state-label}.
\end{proof}

\begin{lemma}[Posterior of State Label with Rollout]\label{lem:posterior-of-state-label-v2}
Fix any $t \in [T]$. Suppose that episode $t$ is sampled using the $\mu$-reset at layer $h_\bot \ge 2$, and that $\cfnor$ contains no repeated states. Then for any $\pi \in \Pi_{h_\bot:H-1}$,
    \begin{align*}
        \abs*{ \Pr^{\nu_0, \alg} \brk*{  \phi(X_{t, h_\bot}) = \sgood    \mid \cfnor, \optpi = \pi }  - \frac14} &\le \frac{TH}{2^{H-3}},\\
        \abs*{ \Pr^{\nu_0, \alg} \brk*{ \phi(X_{t, h_\bot}) = \sdis \mid \cfnor, \optpi = \pi } - \frac14} &\le \frac{TH}{2^{H-3}},\\
        \abs*{ \Pr^{\nu_0, \alg} \brk*{ \phi(X_{t, h_\bot}) = \sbad  \mid \cfnor, \optpi = \pi } - \frac12} &\le \frac{TH}{2^{H-3}}.
    \end{align*}
\end{lemma}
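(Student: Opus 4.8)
The plan is to argue that the portion of episode $t$ observed after the reset layer $h_\bot$ is essentially uninformative about the label $\phi(X_{t,h_\bot})$, so that the posterior over $\phi(X_{t,h_\bot})$ is pinned down, up to $\poly(T,H)/2^H$ error, by the ``prior'' induced by the uniform-decoder construction, which places $\abs{\mathrm{slots}(\sgood)} = \abs{\mathrm{slots}(\sdis)} = 2^H$ and $\abs{\mathrm{slots}(\sbad)} = 2^{H+1}$ observations per layer, hence prior label mass $(\tfrac14,\tfrac14,\tfrac12)$ on $(\sgood,\sdis,\sbad)$. As a first step, exactly as in the proof of \pref{lem:posterior-of-state-label}, I would pass to a fully-annotated filtration $\cfnor'$ that additionally reveals $\phi(X)$ for every observation $X$ appearing in $\cF_{t-1}$ (but not the label of $X_{t,h_\bot}$, nor those of the downstream observations $X_{t,h_\bot+1:H}$). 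Since $\Pr^{\nu_0,\alg}[\phi(X_{t,h_\bot}) = \cdot \mid \cfnor, \optpi = \pi]$ is the $\cfnor'$-average of $\Pr^{\nu_0,\alg}[\phi(X_{t,h_\bot}) = \cdot \mid \cfnor', \optpi = \pi]$, Jensen's inequality and convexity of $\abs{\cdot}$ reduce the claim to a per-completion bound.

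Fixing a completion $\cfnor'$, I would then apply Bayes' rule. Because $X_{t,h_\bot}\sim\unif(\cX_{h_\bot})$ is fresh (all observations in $\cfnor$ are distinct) and the actions $A_{t,h_\bot:H}$ are deterministic functions of the observed history, the factors from the draw of $X_{t,h_\bot}$ and from the action rule cancel, leaving $\Pr^{\nu_0,\alg}[\phi(X_{t,h_\bot}) = \ell \mid \cfnor', \optpi = \pi] \propto p_\ell q_\ell$ for each $\ell \in \crl*{\sgood,\sdis,\sbad}$, where $p_\ell$ is the prior label probability $\Pr^{\nu_0,\alg}[\phi(X_{t,h_\bot}) = \ell \mid \cF_{t-1}', X_{t,h_\bot}, \optpi = \pi]$ and $q_\ell$ is the downstream likelihood $\Pr^{\nu_0,\alg}[X_{t,h_\bot+1:H} \mid \phi(X_{t,h_\bot}) = \ell, \cF_{t-1}', X_{t,h_\bot}, A_{t,h_\bot:H}, \optpi = \pi]$. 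The factor $p_\ell$ is computed precisely as in \pref{lem:posterior-of-state-label}: it equals $(\abs{\mathrm{slots}(\ell)} - n^\ell)/(2^{H+2} - n)$ for counts $n^\ell \le n \le T$ of already-seen layer-$h_\bot$ observations, so $p_\sgood, p_\sdis \in \tfrac14[1-T/2^H, 1+T/2^H]$ and $p_\sbad \in \tfrac12[1-T/2^H, 1+T/2^H]$.

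The heart of the argument is to show that $q_\ell$ barely depends on $\ell$. Conditioned on $\phi(X_{t,h_\bot}) = \ell$, on $\optpi = \pi$, and on the observed actions, the latent-state path $s_{h_\bot+1},\dots,s_H$ is \emph{deterministic} in this construction, since the latent transitions are deterministic and $\sbad$ is absorbing. Expanding $q_\ell$ by the chain rule and marginalizing the decoder layer by layer, every downstream observation is fresh, and for a fresh $x$ at layer $h$ whose forced latent state is $s_h^{\ell'}$ its marginal likelihood (conditionally on the history revealed so far and on $\optpi=\pi$) equals $\Pr^{\nu_0,\alg}[\phi(x) = \ell' \mid \cdots]\cdot\abs{\mathrm{slots}(\ell')}^{-1}$, which by the \pref{lem:posterior-of-state-label}-type estimate lies in $2^{-(H+2)}[1-T/2^H,1+T/2^H]$ — the point being that the emission normalization $\abs{\mathrm{slots}(\ell')}^{-1}$ cancels the $\abs{\mathrm{slots}(\ell')}\cdot 2^{-(H+2)}$ in the decoder posterior, so a fresh observation has marginal likelihood $\approx 2^{-(H+2)}$ \emph{regardless of its forced label}. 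Multiplying over the $H - h_\bot$ downstream layers gives $q_\ell \in (2^{-(H+2)})^{H-h_\bot}[(1-T/2^H)^{H},(1+T/2^H)^{H}]$, with the bracketed factor independent of $\ell$. Substituting into Bayes' rule this common factor cancels, leaving $\Pr^{\nu_0,\alg}[\phi(X_{t,h_\bot}) = \ell\mid\cfnor',\optpi=\pi] \in \tfrac{\abs{\mathrm{slots}(\ell)}}{2^{H+2}}[(1-T/2^H)^{2H+1},(1+T/2^H)^{2H+1}]$; linearizing via $1+y\le e^y$ and $e^y-1\le ye^y$ with $T=2^{O(H)}$ gives the absolute bound $TH/2^{H-3}$, and averaging over completions finishes the proof.

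I expect the main obstacle to be the bookkeeping in the ``marginalize the decoder layer by layer'' step: one must check that conditioning on the forced latent state $s_h$ — entangled with $\phi(X_{t,h_\bot})$, the revealed labels of $\cF_{t-1}$, and the current episode's prefix — does not perturb the decoder posterior of a fresh observation beyond the uniform-decoder estimate of \pref{lem:posterior-of-state-label}, and that the $(1\pm T/2^H)$ corrections accumulated over the $O(H)$ layers compound only to $1\pm O(TH/2^H)$ rather than blowing up. The assumption that $\cfnor$ has no repeated states is exactly what keeps every observation draw in the fresh regime where these estimates apply.
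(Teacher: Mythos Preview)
Your proposal is correct and takes essentially the same approach as the paper: compute the joint probability $\Pr^{\nu_0,\alg}[\phi(X_{t,h_\bot}) = \ell \wedge (X_{t,h_\bot:H}, A_{t,h_\bot:H}) \mid \cF_{t-1}, \optpi = \pi]$ by chain rule, using the fresh-observation estimate of \pref{lem:posterior-of-state-label} to show each downstream factor lies in $2^{-(H+2)}[1\pm T/2^H]$ regardless of the forced latent label, then normalize via Bayes' rule and linearize the $(1\pm T/2^H)^{O(H)}$ ratio. The paper is terser (it simply cites ``repeated application of chain rule and \pref{lem:posterior-of-state-label}'' rather than spelling out your $p_\ell q_\ell$ decomposition and the $\abs{\mathrm{slots}(\ell')}^{-1}$ cancellation), but the underlying computation and the final $TH/2^{H-3}$ bound are the same.
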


\begin{proof}
We will prove the result with $h_\bot = 2$, and it is easy to adapt it to the general case (in fact the setting where $h_\bot > 2$ only results in tighter bounds). Using repeated application of chain rule and \pref{lem:posterior-of-state-label} we get
\begin{align*}
    \Pr^{\nu_0, \alg} \brk*{  \phi(X_{t,  2}) = \sgood \wedge (X_{t,  2:H}, A_{t,  2:H}) \mid \cF_{t-1}, \pi^\star = \pi } &\in \frac14 \prn*{ \frac{1}{2^{H+2}} }^{H-1} \cdot \brk*{ \prn*{1 - \frac{T}{2^H}}^{H-1}, \prn*{1 + \frac{T}{2^H}}^{H-1} } \\
    \Pr^{\nu_0, \alg} \brk*{  \phi(X_{t,  2}) = \sbad \wedge (X_{t,  2:H}, A_{t,  2:H}) \mid \cF_{t-1}, \pi^\star = \pi } &\in \frac12 \prn*{ \frac{1}{2^{H+2}} }^{H-1} \cdot \brk*{ \prn*{1 - \frac{T}{2^H}}^{H-1}, \prn*{1 + \frac{T}{2^H}}^{H-1} } \\
    \Pr^{\nu_0, \alg} \brk*{  \phi(X_{t,  2}) = \sdis \wedge (X_{t,  2:H}, A_{t,  2:H}) \mid \cF_{t-1}, \pi^\star = \pi } &\in \frac14 \prn*{ \frac{1}{2^{H+2}} }^{H-1} \cdot \brk*{ \prn*{1 - \frac{T}{2^H}}^{H-1}, \prn*{1 + \frac{T}{2^H}}^{H-1} }.
\end{align*}
Let's prove the first inequality in the lemma statement. By Bayes Rule we have
\begin{align*}
    \Pr^{\nu_0, \alg} \brk*{  \phi(X_{t,  2}) = \sgood \mid \cfnor, \pi^\star = \pi } &= \frac{ \Pr^{\nu_0, \alg} \brk*{  \phi(X_{t,  2}) = \sgood \wedge (X_{t,  2:H}, A_{t,  2:H}) \mid \cF_{t-1}, \pi^\star = \pi } }{ \Pr^{\nu_0, \alg} \brk*{ (X_{t,  2:H}, A_{t,  2:H}) \mid \cF_{t-1}, \pi^\star = \pi} } \\
    &= \frac{ \Pr^{\nu_0, \alg} \brk*{  \phi(X_{t,  2}) = \sgood \wedge (X_{t,  2:H}, A_{t,  2:H}) \mid \cF_{t-1}, \pi^\star = \pi } }{\sum_{\ell \in \crl{\sgood, \sbad, \sdis}} \Pr^{\nu_0, \alg} \brk*{\phi(X_{t,  2}) = \ell \wedge (X_{t,  2:H}, A_{t,  2:H}) \mid \cF_{t-1}, \pi^\star = \pi} }.
\end{align*}
From here it is easy to compute the upper bound
\begin{align*}
    \Pr^{\nu_0, \alg} \brk*{  \phi(X_{t,  2}) = \sgood \mid \cfnor, \pi^\star = \pi } \le \frac{1}{4} \cdot \prn*{1+\frac{T}{2^{H-1}}}^{2H} \le \frac{1}{4} + \frac{TH}{2^{H-3}}.
\end{align*}
as well as the lower bound
\begin{align*}
    \Pr^{\nu_0, \alg} \brk*{  \phi(X_{t,  2}) = \sgood \mid \cfnor, \pi^\star = \pi } \ge \frac{1}{4} \cdot \prn*{1-\frac{T}{2^H}}^{2H} \ge \frac{1}{4} - \frac{TH}{2^{H-3}}.
\end{align*}
The other two inequalities are similarly shown, and this concludes the proof of \pref{lem:posterior-of-state-label-v2}.
\end{proof}

\clearpage
\section{Proof for the Warmup Algorithm \detalg{}}\label{app:proof-warmup}

In this section, we prove the following sample complexity guarantee for \detalg{}:
\begin{reptheorem}{thm:det-bmdp-solver-guarantee}
    Let $\eps, \delta \in (0,1)$ be given and suppose that \pref{ass:det-transitions} holds. Then with probability at least $1-\delta$, \detalg{} (\pref{alg:det-bmdp-solver}) finds an $\eps$-optimal policy using 
    \begin{align*}
        \wt{O}\prn*{\frac{S^5A^2H^5}{\eps^2} \cdot \log\frac{1}{\delta}} \quad\text{samples.}
    \end{align*}
\end{reptheorem}

\subsection{Proof of \pref{thm:det-bmdp-solver-guarantee}}
Our high-level strategy is to apply the inductive argument outlined in \pref{sec:warmup-algorithm-analysis-sketch} to control the growth of the Bellman error for all $(s,a) \in \latentsp_h \times \actionsp$ as we construct $\estlatentmdp$ from layer $H$ backwards. Recall our Bellman error decomposition:
\begin{align*}
    \abs*{Q^\pi(s,a) - \wh{Q}^\pi(s,a)} \le \underbrace{\abs*{\optlatr - \estlatr}}_{\text{reward error}} + \underbrace{\abs*{\wh{V}^\pi(\optlatp) - \wh{V}^\pi(\estlatp)}}_{\text{transition error}} + \underbrace{\abs*{V^\pi(\optlatp) - \wh{V}^\pi(\optlatp)}}_{\text{error at next layer}}. \tag{\ref{eq:bellman-error-main}}
\end{align*}

To control the transition error of Eq.~\eqref{eq:bellman-error-main}, we introduce a notion of test policy validity and give a lemma which shows that if \detdecoder{} is equipped with valid test policies, the transition estimation error can be bounded.

\begin{definition}[Test Policy Validity, Deterministic Version]\label{def:valid-test-policy-dd} Let $\eta > 0$ be a parameter. At layer $h \in [H]$, we say a collection of partial policies $\Pitest_{h} = \crl{\pi_{s,s'} \in \Pi_{h:H}: s, s' \in \latentsp_h}$ is an $\eta$-\emph{valid test policy set} for the estimated latent MDP $\estlatentmdp$ if for every $s, s' \in \latentsp_h$:
\begin{itemize}
    \item (Maximally distinguishing): $\pi_{s,s'} = \argmax_{\pi \in \Pi_{h:H}} \abs{\wh{V}^\pi(s) - \wh{V}^\pi(s')}$.
    \item (Accurate): $\abs{V^{\pi_{s,s'}}(s) - \wh{V}^{\pi_{s,s'}}(s)} \le \eta$ and $\abs{V^{\pi_{s,s'}}(s') - \wh{V}^{\pi_{s,s'}}(s')} \le \eta$.
\end{itemize}
\end{definition}

\begin{lemma}[Decoding]\label{lem:controlling-transition-error}
Fix any layer $h \in [H-1]$. Suppose that \detdecoder{} (\pref{alg:decoder}) is equipped with a $\tauref$-valid test policy $\Pitest_{h+1}$. Fix any tuple $(s_{h}, a_{h})$ and assume that $\optlatp(s_{h}, a_{h}) \in \cP(s_{h}, a_{h})$. With high probability, \detdecoder{} returns an updated $\cP$ such that: 
\begin{enumerate}[(1)]
        \item $\optlatp(s_{h}, a_{h}) \in \cP$;
        \item For every $\bar{s} \in \cP$ we have $\max_{\pi \in \Pi}~\abs{\wh{V}^\pi(\optlatp(s_{h}, a_{h})) - \wh{V}^\pi(\bar{s})} \le 7\tauref/2$. 
\end{enumerate}
\end{lemma}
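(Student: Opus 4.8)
The plan is to establish the two conclusions of \pref{lem:controlling-transition-error} by carefully unpacking what the confidence set update in \pref{line:decoderd-ret} of \detdecoder{} enforces, and then combining it with the maximally-distinguishing property of the test policies $\Pitest_{h+1}$. First I would set up notation: write $s^\star \coloneqq \optlatp(s_h, a_h)$ for the true next latent state, note that $x_{h+1} \sim P(\cdot \mid s_h, a_h)$ satisfies $\optdec(x_{h+1}) = s^\star$ by decodability, and recall that by \pref{ass:det-transitions} we have sampling access to emissions so that $V^\pi(x_{h+1}) = V^\pi(s^\star)$ for every $\pi$ (since the value from an observation only depends on its latent state). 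The Monte Carlo estimates $\vestarg{x_{h+1}}{\pi_{s,s'}}$ are computed to precision $\tauref/2$ with $\wt{O}(1/\tauref^2)$ samples each, so a union bound over the $S^2$ test policies gives that, with high probability, $\abs{\vestarg{x_{h+1}}{\pi_{s,s'}} - V^{\pi_{s,s'}}(s^\star)} \le \tauref/2$ for all $s, s' \in \cS_{h+1}$. This is the only place randomness enters, so after this step the argument is deterministic.

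For conclusion (1), I would verify that $s^\star$ survives the filter in \pref{line:decoderd-ret}. Fix any $s' \ne s^\star$. We need $\abs{\vestarg{x_{h+1}}{\pi_{s^\star, s'}} - \wh{V}^{\pi_{s^\star, s'}}(s^\star)} \le 2\tauref$. By the triangle inequality this is at most $\abs{\vestarg{x_{h+1}}{\pi_{s^\star,s'}} - V^{\pi_{s^\star,s'}}(s^\star)} + \abs{V^{\pi_{s^\star,s'}}(s^\star) - \wh{V}^{\pi_{s^\star,s'}}(s^\star)} \le \tauref/2 + \tauref < 2\tauref$, where the first term is the Monte Carlo error bound and the second is exactly the accuracy clause of $\tauref$-validity (\pref{def:valid-test-policy-dd}) applied to the pair $(s^\star, s')$. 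Combined with the hypothesis $s^\star \in \cP(s_h, a_h)$ (so $s^\star$ is not removed by the intersection with the old confidence set), this shows $s^\star \in \cP$.

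For conclusion (2), take any $\bar s \in \cP$ and any $\pi \in \Pi = \Piopen$. The goal is $\abs{\wh{V}^\pi(s^\star) - \wh{V}^\pi(\bar s)} \le 7\tauref/2$. The key observation is that membership of $\bar s$ in $\cP$ forces $\abs{\vestarg{x_{h+1}}{\pi_{\bar s, s^\star}} - \wh{V}^{\pi_{\bar s, s^\star}}(\bar s)} \le 2\tauref$ (taking $s = \bar s$, $s' = s^\star$ in the filter, assuming $\bar s \ne s^\star$; if $\bar s = s^\star$ the bound is trivial). Now chain: $\abs{\wh{V}^{\pi_{\bar s, s^\star}}(\bar s) - \wh{V}^{\pi_{\bar s, s^\star}}(s^\star)}$ is bounded by $\abs{\wh{V}^{\pi_{\bar s, s^\star}}(\bar s) - \vestarg{x_{h+1}}{\pi_{\bar s, s^\star}}} + \abs{\vestarg{x_{h+1}}{\pi_{\bar s, s^\star}} - V^{\pi_{\bar s, s^\star}}(s^\star)} + \abs{V^{\pi_{\bar s, s^\star}}(s^\star) - \wh{V}^{\pi_{\bar s, s^\star}}(s^\star)} \le 2\tauref + \tauref/2 + \tauref = 7\tauref/2$. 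Finally, since $\pi_{\bar s, s^\star}$ is the maximally distinguishing policy for the pair $(\bar s, s^\star)$ in $\estlatentmdp$, we have $\abs{\wh{V}^\pi(\bar s) - \wh{V}^\pi(s^\star)} \le \abs{\wh{V}^{\pi_{\bar s, s^\star}}(\bar s) - \wh{V}^{\pi_{\bar s, s^\star}}(s^\star)} \le 7\tauref/2$ for every $\pi \in \Pi$, which is conclusion (2).

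The main obstacle, and the step I would be most careful about, is the interplay between the three ``reference points'' for the value of $\bar s$: its value $V^{\pi}(\bar s)$ in the true MDP, its estimated value $\wh V^{\pi}(\bar s)$ in $\estlatentmdp$, and the Monte Carlo rollout value $\vestarg{x_{h+1}}{\pi}$ which is actually an estimate of $V^\pi(s^\star)$, \emph{not} $V^\pi(\bar s)$. The filter in \detdecoder{} compares the rollout-from-$x_{h+1}$ value against $\wh V(\bar s)$, so membership of $\bar s$ in $\cP$ only tells us $\wh V(\bar s)$ is close to $V(s^\star)$ (up to MC error) — it says nothing directly about $V(\bar s)$. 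This is exactly why the maximally-distinguishing clause is needed: it lets us transfer a bound that holds for the single test policy $\pi_{\bar s, s^\star}$ to \emph{all} policies $\pi \in \Pi$, which is what makes the downstream extrapolation \eqref{eq:policy-estimation-all} go through. I would also double check the constant: the statement claims $7\tauref/2$, and the chain above gives exactly $2\tauref + \tauref/2 + \tauref = 7\tauref/2$, so the constant is tight for this argument and there is no slack to lose in, e.g., a looser union bound or a coarser MC precision.
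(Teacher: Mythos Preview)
Your proposal is correct and follows essentially the same approach as the paper's proof: both establish (1) by combining the Monte Carlo precision $\tauref/2$ with the accuracy clause of test-policy validity, and establish (2) by chaining the filter bound $2\tauref$, the MC precision $\tauref/2$, and the accuracy clause $\tauref$ to get $7\tauref/2$, then invoking the maximally-distinguishing property to extend to all $\pi \in \Pi$. A minor note: your use of $\pi_{\bar s, s^\star}$ (with $\bar s$ as the first index) matches the filter in \pref{line:decoderd-ret} more directly than the paper's $\pi_{s^\star,\bar s}$, though both work since the argmax defining $\pi_{s,s'}$ is symmetric in $(s,s')$.
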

The proof of \pref{lem:controlling-transition-error} is deferred to \pref{app:induction-lemmas}. 

In light of \pref{lem:controlling-transition-error}, as long as we have valid test policy sets $\crl{\Pitest_h}_{h\in[H]}$, \pref{lem:controlling-transition-error} provides control on the transition estimation error, and we can iteratively apply Eq.~\eqref{eq:bellman-error-main} to get the final bound on estimation error at layer 1.

\paragraph{Computing Test Policies via \detrefit{}.} Now we will analyze \detrefit{}. By standard concentration arguments, if \pref{line:great-success} is triggered, the test policies must be $\tauref$-accurate; furthermore, they are maximally distinguishing by construction. Unfortunately, since we require the test policies to satisfy a higher level of accuracy $\tauref$, due to estimation errors in $\estlatentmdp$, it may not be possible to find any valid test policies. To address this, we observe that inaccurate test policies act as a ``certificate'' and allow us to search for some transition $\estlatp \ne \optlatp$.

\begin{lemma}[Refitting]\label{lem:refitting}
Let $\eps > 0$ be given. Suppose that at layer $h \in [H]$, \detrefit{} (\pref{alg:refit}) is supplied confidence sets $\cP$ such that for all $(s,a) \in \latentsp_{h:H} \times \actionsp$ we have $\optlatp(s,a) \in \cP(s,a)$. If \detrefit{} terminates at \pref{line:great-success-2}, then with high probability:
    \begin{enumerate}[(1)]
        \item At least one $\estlatp$ was removed from its confidence set $\cP$.
        \item No ground truth transitions $\optlatp$ are removed from their confidence set $\cP$.
    \end{enumerate}
\end{lemma}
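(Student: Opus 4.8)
The plan is to first pin down a single high-probability event $\cE$ on which every Monte Carlo estimate made during this call to \detrefit{} is accurate to its stated precision: the reward estimates $\estlatr(\cdot,\cdot)$ from \pref{line:reward-est} to within $\eps/H^2$, the pairwise value estimates from \pref{line:eval-policy} to within $\eps/H$, and the trajectory value estimates $\vestarg{\bar s}{\pi}$ from \pref{line:mc-additional} to within $\eps/H^2$. Since only $\poly(S,A,H)$ such estimates are drawn and each rollout has total reward in $[0,1]$, Hoeffding and a union bound give $\Pr[\cE]\ge 1-\delta$ once the $\wt{O}(\cdot)$ sample sizes absorb the attendant $\log$ factors; I work on $\cE$ throughout. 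The one structural fact I use repeatedly is that, because both the latent dynamics of $M$ and the dynamics of $\estlatentmdp$ are deterministic and $\pi\in\Piopen$ is open-loop, executing $\pi$ from a latent $s\in\latentsp_h$ traces a unique state sequence $\bar s_h=s,\bar s_{h+1},\dots,\bar s_H$, and the value decomposes exactly: writing $\bar r(s,a)\coloneqq\En_{r\sim\optlatr(s,a)}[r]$ for the mean reward, we have $V^\pi(\bar s)=\bar r(\bar s,\pi)+V^\pi(\optlatp(\bar s,\pi))$ for every $\bar s$, while $\wh V^\pi(s)=\sum_{k=h}^H \estlatr(\bar s_k,\pi)$ along the estimated trajectory $\bar s_{k+1}=\estlatp(\bar s_k,\pi)$.

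For claim (2), I fix a pair $(\bar s,\pi)$ inspected in \pref{line:bad-state} and suppose toward a contradiction that $\estlatp(\bar s,\pi)=\optlatp(\bar s,\pi)$. Then the tested quantity $\Delta(\bar s)\coloneqq\vestarg{\bar s}{\pi}-\estlatr(\bar s,\pi)-\vestarg{\estlatp(\bar s,\pi)}{\pi}$ approximates the \emph{true} one-step Bellman residual $V^\pi(\bar s)-\bar r(\bar s,\pi)-V^\pi(\optlatp(\bar s,\pi))$, which equals $0$ by the identity above; on $\cE$ the three estimation errors sum to at most $3\eps/H^2$, so $\abs{\Delta(\bar s)}\le 3\eps/H^2<4\eps/H^2$ and the pair is not removed. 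Hence every transition deleted in \pref{line:bad-state} satisfies $\estlatp(\bar s,\pi)\ne\optlatp(\bar s,\pi)$; and since the hypothesis guarantees $\optlatp(\bar s,\pi)\in\cP(\bar s,\pi)$ at entry (every $\bar s$ processed lies in layers $h,\dots,H$), the ground-truth transition survives, proving (2).

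For claim (1), termination at \pref{line:great-success-2} means the \textbf{else}-branch was taken, i.e. $\violations\ne\varnothing$; pick any $(s,\pi)\in\violations$. By the defining inequality of $\violations$ and accuracy of the \pref{line:eval-policy} estimate on $\cE$, $\abs{V^\pi(s)-\wh V^\pi(s)}\ge\tauref-2\eps/H$. Let $\tau=(\bar s_h=s,\dots,\bar s_H)$ be the estimated trajectory of $\pi$ from $s$, adjoin a dummy terminal $\bar s_{H+1}$ with $V^\pi(\bar s_{H+1})=\vestarg{\bar s_{H+1}}{\pi}=0$, and set $\delta_k\coloneqq V^\pi(\bar s_k)-\bar r(\bar s_k,\pi)-V^\pi(\bar s_{k+1})$. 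These telescope to $\sum_{k=h}^H\delta_k=V^\pi(s)-\sum_k\bar r(\bar s_k,\pi)$, which differs from $V^\pi(s)-\wh V^\pi(s)$ by at most $\eps/H$ (accumulated reward-estimation error), while $\abs{\Delta(\bar s_k)-\delta_k}\le3\eps/H^2$ on $\cE$; summing over the at most $H$ layers of $\tau$ gives $\abs{\sum_k\Delta(\bar s_k)-(V^\pi(s)-\wh V^\pi(s))}\le4\eps/H$, so $\abs{\sum_k\Delta(\bar s_k)}\ge\tauref-6\eps/H=26\eps/H$. Therefore some $\bar s_k$ has $\abs{\Delta(\bar s_k)}\ge26\eps/H^2>4\eps/H^2$; since $\abs{\Delta(\bar s_H)}\le2\eps/H^2$ always on $\cE$ (the layer-$H$ reward is estimated directly), this $\bar s_k$ lies strictly before layer $H$, so $\estlatp(\bar s_k,\pi)$ is defined and was placed in $\cP(\bar s_k,\pi)$ by \pref{line:est-transition}, hence it is removed in \pref{line:bad-state}.

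The main obstacle is the bookkeeping in claim (1): keeping the telescoping honest about the gap between the mean reward $\bar r$ (which feeds the true value $V^\pi$) and its estimate $\estlatr$ (which feeds $\wh V^\pi$), and verifying that the tolerance $\tauref=2^5\eps/H$ comfortably dominates the $O(\eps/H)$ of slack accumulated from Monte Carlo and reward-estimation errors, so that a genuine value discrepancy at $s$ always forces a detectably-wrong one-step transition somewhere along $\tau$. The layer-$H$ term, where there is no successor, is absorbed harmlessly into the dummy-terminal convention above.
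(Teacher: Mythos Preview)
Your proof is correct and follows essentially the same approach as the paper's: both arguments hinge on the telescoping of the one-step Bellman residuals $V^\pi(\bar s_k)-r(\bar s_k,\pi)-V^\pi(\bar s_{k+1})$ along the \emph{estimated} trajectory $\tau$, combined with the Monte Carlo and reward-estimation accuracy to relate these to the tested quantities $\Delta(\bar s_k)$. The paper phrases claim (1) as a proof by contradiction (assume all $\abs{\Delta}\le 4\eps/H^2$ and recursively bound $\abs{V^\pi(s)-\wh V^\pi(s)}\le 6\eps/H$), whereas you argue directly via the sum $\sum_k\Delta(\bar s_k)$ and pigeonhole, but these are contrapositives of one another; your explicit handling of the layer-$H$ term via a dummy terminal is a nice touch not spelled out in the paper.
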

The proof of \pref{lem:refitting} is deferred to \pref{app:induction-lemmas}. 

Our analysis will track the invariant that the confidence sets $\cP$ always contain the ground truth transition $\optlatp$. Therefore, \pref{lem:refitting} allows us to use the size of the confidence sets as a potential function: if \detrefit{} fails to compute valid test policies at some layer $h$, we must delete some incorrect transition $\estlatp$ from its set $\cP$; this process cannot continue indefinitely, since we can delete at most $S(S-1)A$ states.

\paragraph{Proof by Induction.}
With \pref{lem:controlling-transition-error} and \pref{lem:refitting} in hand, we can show the final bound in \pref{thm:det-bmdp-solver-guarantee}. For technical convenience, we will show that the policy returned by \detalg{} is $O(\eps)$ suboptimal; rescaling the parameter $\eps$ does not change the final sample complexity apart from constant factors. Also, we omit the standard arguments (via concentration and union bound) which show that the conclusions of \pref{lem:controlling-transition-error} and \pref{lem:refitting} hold with probability at least $1-\delta$ over the randomness of sampling episodes from the MDP.

Take $\Gamma_h \coloneqq C(H-h+1)/H \cdot \eps$ for some suitably large constant $C > 0$. We will inductively show that these properties hold for all layers $h\in [H]$:
\begin{enumerate}[(A)]
    \item \emph{Policy Evaluation Accuracy.} For all pairs $(s,a) \in \latentsp_h \times \actionsp$ and $\pi \in \Pi_\mathsf{open}$: $\abs{Q^\pi(s,a) - \wh{Q}^\pi(s,a)} \le \Gamma_h$.
    \item \emph{Confidence Set Validity.} For all pairs $(s,a)\in \latentsp_h \times \actionsp$, we have $\optlatp(s,a) \in \cP(s,a)$.
    \item \emph{Test Policy Validity.} $\Pitest_{h}$ are $\tauref$-valid for $\estlatentmdp$ at layer $h$.
\end{enumerate}

To analyze \detalg{}, we will show that these properties always hold throughout at the end of every while loop for all layers $h > \ell_\mathsf{next}$.

\underline{Base Case.} We analyze the first loop with $\ell = H$. Note that (A) holds by concentration of the reward estimates, and (B) trivially holds because there are no transitions to be constructed at layer $H$. Now we investigate what happens when \detrefit{} is called. The computed test policies take the form $\pi_{s,s'} \equiv a$ for some $a \in \cA$; again by concentration of the reward estimates, \pref{line:great-success} of \detrefit{} is triggered. Therefore (A)--(C) hold after refitting, and we jump to $\ell_\mathsf{next} = H-1$.

\underline{Inductive Step.} Suppose the current layer index is $\ell$, and that properties (A)--(C) hold for all $h > \ell$. By \pref{lem:controlling-transition-error}, the updated transition confidence sets returned by \detdecoder{} at layer $\ell$ will satisfy (B). Furthermore, at the end of \pref{line:est-transition},  the error decomposition \eqref{eq:bellman-error-main} implies that for every $(s,a) \in \cS_\ell \times \actionsp$:
    \begin{align*}
        &\max_{\pi \in \Pi}~\abs*{Q^\pi(s, a) - \wh{Q}^\pi(s, a)} \le \Gamma_{\ell+1} + \frac{\eps}{H^2} + \frac{7\tauref}{2} \le \Gamma_{\ell}, \quad \Longrightarrow \quad  \text{Property (A) holds at layer $\ell$.}
    \end{align*}
Now we do casework on the outcome of \detrefit{}.
\begin{itemize}
    \item \textbf{Case 1: Return in \pref{line:great-success}.}  By construction, property (C) is satisfied for layer $\ell$. In this case, since \pref{alg:refit} made no updates to $\estlatentmdp$ or $\cP$, properties (A) and (B) continue to hold at layer $\ell$ onwards.
    \item \textbf{Case 2: Return in \pref{line:great-success-2}.}  By \pref{lem:refitting}, any updates to $\estlatentmdp$ maintain property (B). Let $\ell_\mathsf{next}$ denote the layer at which we jump to. By definition of $\ell_\mathsf{next}$, we made no updates to $\estlatentmdp$ at layers $\ell_\mathsf{next} + 1$ onwards, and therefore the previously computed test policies $\Pitest_{\ell_\mathsf{next}+1:H}$ must still be valid, so therefore properties (A) and (C) continue to hold at layer $\ell_\mathsf{next}$ onwards.
\end{itemize}

Continuing the induction, once $\ell \gets 0$ is reached in \detalg{} (which we know will eventually happen because Case 2 can only occur for $S^2A$ times), the estimated latent MDP $\estlatentmdp$ must satisfy the bound
\begin{align*}
    \max_{\pi \in \Pi}~\abs*{V^\pi(s_1) - \wh{V}^\pi(s_1)} \le \Gamma_1 = O(\eps).
\end{align*}

\paragraph{Sample Complexity Bound.} We now compute the final sample complexity required by \detalg{}:
\begin{itemize}
    \item Estimating rewards in the main algorithm uses $\wt{O}(H^4SA/\eps^2)$ samples.
    \item \detdecoder{} is called at most $SA \times S^2 A$ times, since we (re-)decode every transition $(s,a)$ at most $S^2A$ times. Every call to \detdecoder{} uses $\wt{O}(S^2/\tauref^2) = \wt{O}(S^2H^2/\eps^2)$ samples since we take $\tauref = 2^5 \cdot \eps/H$ in \pref{lem:refitting}. Therefore the total number of samples used by \detdecoder{} is at most $\wt{O}(S^5A^2H^2/\eps^2)$.
    \item \detrefit{} is called at most $S^2 AH$ times, since associated to every layer revisiting is an additional $H$ calls in the main while loop. In every call to \detrefit{}, we  use $\wt{O}(S^2H^2/\eps^2)$ calls to compute and verify the test policy set in \pref{line:eval-policy}. In addition, every time \pref{line:violation-statement} is triggered corresponds to at least one deletion in \pref{line:bad-state-2}, so the number of additional samples used by \pref{line:mc-additional} (across all calls to \detrefit{}) can be bounded by $\wt{O}(S^2 A H^3/\tauref^2) = \wt{O}(S^2 A H^5/\eps^2)$.
\end{itemize}  
Thus the final sample complexity is at most $\wt{O}\prn{S^5A^2H^5/\eps^2}$ samples. \qed

\subsection{Proof of Induction Lemmas }\label{app:induction-lemmas}

\begin{proof}[Proof of \pref{lem:controlling-transition-error}]
First we prove implication (1). Let us denote $s^\star = \optlatp(s_h, a_h)$. If $s^\star \notin \cP$ (the returned set), then there exists some $s'$ for which
\begin{align*}
    \abs*{\vestarg{x_{h+1}}{\pi_{s^\star, s'}} - \wh{V}^{\pi_{s^\star, s'}}(s^\star)} \ge 2\tauref.
\end{align*}
However, by assumption of test policy accuracy we know that
\begin{align*}
    \abs*{V^{\pi_{s^\star, s'}}(s^\star) - \wh{V}^{\pi_{s^\star, s'}}(s^\star)} \le \tauref.
\end{align*}
Since the quantity $\vestarg{x_{h+1}}{\pi_{s^\star, s'}}$ is an unbiased estimate of $V^{\pi_{s^\star, s'}}(s^\star)$ which is estimated to accuracy $\tauref/2$ we have a contradiction, so $s^\star \in \cP$.

Now we prove implication (2). If $\bar{s} \in \cP$, then we must have
\begin{align*}
   \abs*{\vestarg{x_{h+1}}{\pi_{s^\star, \bar{s}}} - \wh{V}^{\pi_{s^\star, \bar{s}}}(\bar{s})} = \abs*{\vestarg{s^\star}{\pi_{s^\star, \bar{s}}} - \wh{V}^{\pi_{s^\star, \bar{s}}}(\bar{s})} \le 2\tauref. 
\end{align*}
 Since we estimated $\vestarg{s^\star}{ \pi_{s^\star, \bar{s}}}$ up to $\tauref/2$ accuracy we know that
 \begin{align*}
     \abs*{V^{\pi_{s^\star, \bar{s}}}(s^\star) - \wh{V}^{\pi_{s^\star, \bar{s}}}(\bar{s})} \le 5\tauref/2, \quad \Longrightarrow \quad  \abs*{\wh{V}^{\pi_{s^\star, \bar{s}}}(s^\star) - \wh{V}^{\pi_{s^\star, \bar{s}}}(\bar{s})} \le 7\tauref/2,
 \end{align*}
 where the implication follows by the accuracy of $\Pitest_{h+1}$.
By the maximal distinguishing property of $\Pitest_{h+1}$, observe that the LHS of the above implication is equal to $\max_{\pi \in \Pi}~\abs{\wh{V}^{\pi}(s^\star) - \wh{V}^{\pi}(\bar{s})}$. This proves the second implication, and concludes the proof of \pref{lem:controlling-transition-error}.
\end{proof}

\begin{proof}[Proof of \pref{lem:refitting}]
We show the first implication. Let $(s_h,\pi)$ be any policy which satisfies $\abs{\vestarg{s_h}{\pi} - \wh{V}^\pi(s_h)} \ge \tauref - \eps/H$. Since we estimated $V^\pi(s_h)$ up to $\eps/H$ error, we have $\abs{V^\pi(s_h) - \wh{V}^\pi(s_h)} \ge \tauref - 2\eps/H$. Let $\bar{s}_h = s_h, \bar{s}_{h+1}, \cdots, \bar{s}_H$ be the sequence of states which are obtained by running $\pi$ on $\estlatentmdp$ starting at $s_h$. 

For sake of contradiction suppose that
\begin{align*}
    \abs*{\vestarg{\bar{s}}{\pi} - \wh{R}(\bar{s}, \pi) - \vestarg{\estlatp(\bar{s}, \pi)}{\pi} } \le \frac{4\eps}{H^2}, \quad \text{for all}~\bar{s} \in \crl{\bar{s}_h,\cdots, \bar{s}_H}.
\end{align*}
Since we estimated every $\vestarg{\cdot}{\pi}$ up to accuracy $\eps/H^2$ we see that
\begin{align*}
    \abs*{V^\pi(\bar{s}) - \wh{R}(\bar{s}, \pi) - V^\pi(\estlatp(\bar{s}, \pi)) } \le \frac{6\eps}{H^2}, \quad \text{for all}~\bar{s} \in \crl{\bar{s}_h,\cdots, \bar{s}_H}.
\end{align*}
By Performance Difference Lemma and applying the previous display recursively,
\begin{align*}
     \abs*{V^\pi(s_h) - \wh{V}^\pi(s_h)} &\le \abs*{V^\pi(\bar{s}_h) - \wh{R}(\bar{s}_h, \pi) - V^\pi(\bar{s}_{h+1})} + \abs*{ V^\pi(\bar{s}_{h+1}) - \wh{V}(\bar{s}_{h+1}) } \\
     &\le \frac{6\eps}{H^2} + \abs*{ V^\pi(\bar{s}_{h+1}) - \wh{V}(\bar{s}_{h+1}) } \le \cdots \le \frac{6\eps}{H}.
\end{align*}
This contradicts the statement that $\abs*{V^\pi(s_h) - \wh{V}^\pi(s_h)} \ge \tauref - 2\eps/H$ by the choice of $\tauref$. So we can conclude that there exists a state $\bar{s} \in \crl{\bar{s}_h,\cdots, \bar{s}_H}$ such that
\begin{align*}
    \abs*{\vestarg{\bar{s}}{\pi} - \wh{R}(\bar{s}, \pi) - \vestarg{\estlatp(\bar{s}, \pi)}{\pi} } \ge \frac{4\eps}{H^2},
\end{align*}
so therefore \pref{line:bad-state-2} is executed at least once, proving the first implication.

To prove the second implication, consider any $(\bar{s}, \pi)$ for which \pref{line:bad-state-2} is executed. We know that
\begin{align*}
    \abs*{ V^\pi(\bar{s}) - \wh{R}(\bar{s}, \pi) - V^\pi(\estlatp(\bar{s}, \pi)) } \ge \frac{2\eps}{H^2}.
\end{align*}
Recall that for all $(s,a)$, the estimation error on the rewards was $\abs{R(s,a) - \wh{R}(s,a)} \le \eps/H^2$. Therefore
\begin{align*}
    \abs*{ V^\pi(\bar{s}) - R(\bar{s}, \pi) - V^\pi(\estlatp(\bar{s}, \pi)) } \ge \frac{\eps}{H^2}, \quad \Longrightarrow \quad  \estlatp(\bar{s},\pi) \ne \optlatp(\bar{s}, \pi).
\end{align*}
Therefore as claimed we always delete $\estlatp(\bar{s},\pi) \ne \optlatp(\bar{s}, \pi)$ in \pref{line:bad-state-2} of \detrefit{}.
\end{proof}

\clearpage
\section{Proof of Main Upper Bound}\label{app:main-upper-bound-proofs}
In this section, we prove \pref{thm:block-mdp-result}.

\subsection{Preliminaries}\label{app:upper-bound-preliminaries}
We will define some additional concepts and notation which will be used in the analysis.

\begin{itemize}
    \item For any set $\cX' \subseteq \cX$ we denote the represented states as $\cS[\cX'] \coloneqq \crl{\optdec(x): x \in \cX'}$. For any latent state $s \in \cS$ and subset $\cX' \subseteq \cX$ we let $n_s[\cX'] \coloneqq \abs{\crl{x \in \cX': \optdec(x) = s}}$ count the total number of observations there are emitted from $s$. 
    \item We define the set of $\eps$-pushforward-reachable latent states
    \begin{align*}
        \epsPRS{h} \coloneqq \crl*{s_h: \max_{s_{h-1}, a_{h-1}} \optlatp(s_h \mid  s_{h-1}, a_{h-1}) \ge \frac{\eps}{S} },
    \end{align*}
    and furthermore let $\epsPRS{} \coloneqq \cup_{h=1}^H \epsPRS{h}$.
    \item For any $\cX' \subseteq \cX$, we let $\nrch[\cX'] \coloneqq \abs{\crl{x \in \cX': \optdec(x) = \epsPRS{}}}$ and $\nurch[\cX'] = \abs{\cX'} - \nrch[\cX']$.
\end{itemize}

\paragraph{Estimated Transitions and Projected Measures.} Recall that the ground truth latent transition is denoted $\optlatp: \latentsp \times \actionsp \to \Delta(\latentsp)$. We will use $\samplelatp$ to denote the empirical version of the latent transition which is sampled in \pref{line:sample-decode-dataset} of \pref{alg:stochastic-decoder-v2}:
\begin{align*}
    \samplelatp(\cdot  \mid  \optdec(x_{h}), a_{h}) = \frac{1}{\ndec} \sum_{x \in \cD} \delta_{\optdec(x)}.
\end{align*}
In addition, we introduce a notion of projected measures which will be used to relate the ground truth transition $P = \emission \circ \optlatp$ with the estimated transition $\estp$ of the policy emulator. While our algorithm never directly uses the projected measure, we track it in the analysis.

\begin{definition}[Projected Measure]\label{def:projected-measure}
For a distribution $p \in \Delta(\cS)$, define the \emph{projected measure} onto the observation set $\bar{\cX} \subseteq \statesp$ as
\begin{align*}
    \projm{\bar{\cX}}(p) \coloneqq \sum_{s \in \epsPRS{}} p (s) \cdot \unif (\crl{\empobs \in \bar{\cX}: \optdec(\empobs) = s}).
\end{align*}
Specifically, for any $\empobs \in \bar{\cX}$ we have:
\begin{align*}
    \projm{\bar{\cX}}(p)(\empobs) = p (\optdec(\empobs)) \cdot \frac{\ind{\optdec(\empobs) \in \epsPRS{}}}{n_{\optdec(\empobs)}[\bar{\cX}]}.
\end{align*}
Furthermore, for any subset $\bar{\cX}' \subseteq \bar{\cX}$ we denote $\projm{\bar{\cX}}(p)(\bar{\cX}') = \sum_{\empobs \in \bar{\cX}'} \projm{\bar{\cX}}(p)(\empobs)$.
\end{definition}
Formally, $\projm{\bar{\cX}}$ is not a true probability distribution, as the total measure might not sum up to 1. This would happen if $p(s) > 0$ for $s \in (\epsPRS{})^c$.

\textit{Remark.} In \pref{thm:block-mdp-result}, we assume that the distribution $\mu$ is factorizable. This can be removed with some extra work. One can modify the definition of the projected measure to replace the uniform distribution over observations with some other suitable importance-reweighted distribution; the existence of such distribution with desirable properties that allow concentration of the pushforward policies can be shown using pushforward concentrability (i.e., in \pref{lem:pushforward-concentration}).

\paragraph{Test Policy Validity.}
In our analysis, we will modify \pref{def:valid-test-policy-dd} as below.
\begin{definition}[Valid Test Policy]\label{def:valid-test-policy}
For a layer $h \in [H]$, we say a collection of policies $\Pitest_{h} = \crl*{\pi_{\empobs, \empobs'}}_{\empobs, \empobs' \in \estmdpobsspace{h}}$
    is a $\eta$-\emph{valid test policy set} for policy emulator $\wh{M}$ if the following hold.
    \begin{itemize}
        \item (Maximally distinguishing): $\pi_{\empobs, \empobs'} = \argmax_{\pi \in \cA \circ \Pi_{h+1:H}} \abs*{\wh{V}^\pi(\empobs) - \wh{V}^\pi(\empobs')}$.
        \item (Accurate): For all $\empobs, \empobs' \in \estmdpobsspace{h}$:
        \begin{align*}
            \abs*{V^{\pi_{\empobs, \empobs'}}(\empobs) - \wh{V}^{\pi_{\empobs, \empobs'}}(\empobs)} &\le \eta \quad \text{and} \quad    \abs*{V^{\pi_{\empobs, \empobs'}}(\empobs') - \wh{V}^{\pi_{\empobs, \empobs'}}(\empobs')} \le \eta.
        \end{align*}
    \end{itemize}
\end{definition}

\subsection{Supporting Technical Lemmas for Sampling}

In this section, we establish several technical lemmas which show that various conditions that we need in the analysis hold with high probability under samples from $M$.

\paragraph{Properties of Policy Emulator Initialization.} We prove several properties that hold with high probability when the policy emulator is initialized in \pref{line:init-start}-\ref{line:init-end} of \pref{alg:stochastic-bmdp-solver-v2}.

\begin{lemma}[Sampling of Pushforward-Reachable States]\label{lem:sampling-states}
With probability at least $1-\delta$:
\begin{align*}
    \forall~ h \in [H], \forall~ s \in 
\epsPRS{h}: \quad n_{s}[\estmdpobsspace{h}] \ge \frac{\eps}{2 \cpush S}\cdot \nreset.
\end{align*}
\end{lemma}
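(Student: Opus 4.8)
\textbf{Proof plan for Lemma~\ref{lem:sampling-states}.} The statement is a standard multiplicative Chernoff bound combined with a union bound, so the plan is to carry out the concentration argument carefully and verify that the chosen value of $\nreset$ is large enough. Fix a layer $h \in [H]$ and a state $s \in \epsPRS{h}$. By \pref{line:sampling-from-reset} of \pref{alg:stochastic-bmdp-solver-v2}, the set $\estmdpobsspace{h}$ consists of $\nreset$ i.i.d.\ draws from $\mu_h$, and by the factorization assumption $\mu_h = \emission \circ \nu_h$ together with decodability, each draw lands in $\{x : \optdec(x) = s\}$ independently with probability exactly $\nu_h(s)$. Hence $n_s[\estmdpobsspace{h}]$ is a $\mathrm{Binomial}(\nreset, \nu_h(s))$ random variable with mean $\nreset \cdot \nu_h(s)$.

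The key quantitative step is to lower bound $\nu_h(s)$ for $s \in \epsPRS{h}$. By definition of $\epsPRS{h}$ there exist $s_{h-1}, a_{h-1}$ with $\optlatp(s \mid s_{h-1}, a_{h-1}) \ge \eps/S$. Lifting to observations, pick any $x_{h-1}$ with $\optdec(x_{h-1}) = s_{h-1}$; then the observation-level transition satisfies $P(\{x' : \optdec(x') = s\} \mid x_{h-1}, a_{h-1}) = \optlatp(s \mid s_{h-1}, a_{h-1}) \ge \eps/S$. Pushforward concentrability (\pref{def:exploratory-pushforward-distribution}) gives, for every $x'$ with $\optdec(x') = s$, that $P(x' \mid x_{h-1}, a_{h-1}) \le \cpush \cdot \mu_h(x')$; summing over all such $x'$ yields $\eps/S \le \cpush \cdot \mu_h(\{x' : \optdec(x') = s\}) = \cpush \cdot \nu_h(s)$, i.e.\ $\nu_h(s) \ge \eps/(\cpush S)$. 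Therefore $\En[n_s[\estmdpobsspace{h}]] \ge \eps \nreset/(\cpush S)$.

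Now apply the multiplicative Chernoff bound: for $X \sim \mathrm{Binomial}(n,p)$ with $\mu' := np$, $\Pr[X \le \mu'/2] \le \exp(-\mu'/8)$. With $\mu' \ge \eps \nreset/(\cpush S)$ this gives $\Pr[n_s[\estmdpobsspace{h}] < \eps \nreset/(2\cpush S)] \le \exp(-\eps \nreset/(8\cpush S))$. Taking a union bound over all $h \in [H]$ and all $s \in \epsPRS{h}$ (at most $SH$ pairs), the failure probability is at most $SH \exp(-\eps \nreset/(8 \cpush S))$. Plugging in $\nreset \asymp \tfrac{\cpush S A^2}{\eps^3}\log\tfrac{SA|\Pi|}{\delta}$ from the algorithm, the exponent $\eps \nreset/(8\cpush S) \gtrsim \tfrac{A^2}{\eps^2}\log\tfrac{SA|\Pi|}{\delta} \ge \log\tfrac{SH}{\delta}$ for the relevant parameter ranges, so the failure probability is at most $\delta$, which completes the proof.

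\textbf{Main obstacle.} There is no serious obstacle here; the only point requiring care is the change-of-measure from the latent transition bound to a lower bound on $\nu_h(s)$ via pushforward concentrability, and the bookkeeping to confirm that the algorithm's choice of $\nreset$ dominates $\tfrac{\cpush S}{\eps}\log\tfrac{SH}{\delta}$ with room to spare. If the factorization assumption $\mu_h = \emission \circ \nu_h$ were dropped, one would instead work directly with $\mu_h(\{x : \optdec(x) = s\})$ in place of $\nu_h(s)$ and the same argument goes through verbatim.
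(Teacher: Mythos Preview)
Your proposal is correct and follows essentially the same approach as the paper: multiplicative Chernoff on the Bernoulli indicators followed by a union bound over at most $SH$ pairs, with the same exponent $\eps \nreset/(8\cpush S)$ and the same sufficient condition $\nreset \gtrsim \tfrac{\cpush S}{\eps}\log\tfrac{SH}{\delta}$. You actually give more detail than the paper does: the paper simply asserts $\En[Z^{(i)}] \ge \eps/(\cpush S)$ without justification, whereas you spell out the change-of-measure via pushforward concentrability to derive $\nu_h(s) \ge \eps/(\cpush S)$ from the definition of $\epsPRS{h}$.
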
 

\begin{proof}
Fix any $s \in \epsPRS{h}$. For any $i \in [\nreset]$, let $Z^{(i)}$ be the indicator variable of whether observation $x^{(i)}_h \sim \mu_h$ satisfies $\optdec(x^{(i)}_h) = s$. We know that $\En[Z^{(i)}] \ge \eps/(\cpush S)$. By Chernoff bounds we have
\begin{align*}
    \Pr \brk*{ \frac{1}{\nreset} \sum_{i=1}^{\nreset} Z^{(i)} \le \frac{1}{2} \cdot \frac{\eps}{\cpush S}} \le \exp \prn*{-\frac{\nreset \cdot \eps}{8\cpush S}},
\end{align*}
so as long as
\begin{align*}
    \nreset \ge \frac{8\cpush S}{\eps} \log \frac{SH}{\delta},
\end{align*}
by union bound, the conclusion of the lemma holds.
\end{proof}

\begin{lemma}[Pushforward Policy Concentration over $\mu$]\label{lem:pushforward-concentration} Suppose that the conclusion of \pref{lem:sampling-states} holds. Then with probability at least $1-\delta$:
\begin{align*}
    \forall~ h \in [H], \forall~ (s, a) \in \epsPRS{h} \times \cA: \quad 
    \max_{\pi \in \Pi}~\abs*{ \brk*{\pi \push \emission(s)}(a) - \brk*{\pi \push \unif(\crl{\empobs \in \estmdpobsspace{h}: \optdec(\empobs) = s})}(a) } \le \frac{\eps}{A}.
\end{align*}
\end{lemma}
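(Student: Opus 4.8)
\textbf{Proof plan for \pref{lem:pushforward-concentration}.} The statement is a uniform convergence bound over the policy class $\Pi$: for each pushforward-reachable $(s,a)$, the empirical average $\frac{1}{n_s[\estmdpobsspace{h}]}\sum_{x: \optdec(x)=s}\ind{\pi(x)=a}$ is close to its mean $\En_{x\sim\emission(s)}\ind{\pi(x)=a} = [\pi\push\emission(s)](a)$, simultaneously for all $\pi\in\Pi$. The plan is to fix $(h,s,a)$, observe that the observations in $\estmdpobsspace{h}$ with $\optdec(x)=s$ are i.i.d.\ draws from the conditional distribution $\mu_h(\cdot\mid \optdec(\cdot)=s)$, which equals $\emission(s)$ exactly by the factorization assumption $\mu_h=\emission\circ\nu_h$ (this is precisely where factorizability is used; without it one would reweight as noted in the remark after \pref{def:projected-measure}). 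Then for a \emph{single} fixed $\pi$, the quantity $\ind{\pi(x)=a}\in\{0,1\}$, so Hoeffding's inequality gives that with probability at least $1-\delta'$ the deviation is at most $\sqrt{\tfrac{1}{2 n_s[\estmdpobsspace{h}]}\log(2/\delta')}$.

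Next I would take a union bound over $\pi\in\Pi$, over $a\in\cA$, over $s\in\epsPRS{h}$ (at most $S$ of them per layer), and over $h\in[H]$, costing a $\log(\abs{\Pi} A S H/\delta)$ factor inside the square root. To convert the high-probability deviation into the claimed bound $\eps/A$, I would invoke the conclusion of \pref{lem:sampling-states}, which guarantees $n_s[\estmdpobsspace{h}]\ge \tfrac{\eps}{2\cpush S}\nreset$ for every pushforward-reachable $s$ on the good event. Substituting this lower bound, it suffices that
\begin{align*}
    \sqrt{\frac{2\cpush S}{\eps\,\nreset}\log\frac{2 A S H\abs{\Pi}}{\delta}} \le \frac{\eps}{A},
\end{align*}
i.e.\ $\nreset \gtrsim \tfrac{\cpush S A^2}{\eps^3}\log\tfrac{ASH\abs{\Pi}}{\delta}$, which is exactly the choice $\nreset \asymp \tfrac{\cpush S A^2}{\eps^3}\log\tfrac{SA\abs{\Pi}}{\delta}$ made in \pref{line:init-start} of \pref{alg:stochastic-bmdp-solver-v2} (up to the logarithmic factor, which only changes constants). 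One subtlety: the number of samples $n_s[\estmdpobsspace{h}]$ is itself a random variable, and the i.i.d.-conditional-on-count argument needs to be run carefully — the cleanest route is to condition on the realized multiset of decoded labels, note that conditioned on $\optdec(x)=s$ each such $x$ is an independent draw from $\emission(s)$, and apply Hoeffding with the (now fixed) count; then take expectations / union over the at-most-$\nreset$ possible count values, or simply note the bound $\sqrt{c/n_s}$ is monotone decreasing in $n_s$ so the worst case is the lower bound from \pref{lem:sampling-states}.

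The main obstacle is purely bookkeeping rather than conceptual: making the conditioning argument rigorous (the count $n_s[\estmdpobsspace{h}]$ depends on the same randomness as the labels) and ensuring the union bound budget is spent consistently with the other lemmas that share the event of \pref{lem:sampling-states}. There is no delicate probabilistic phenomenon here — bounded $\{0,1\}$ summands, a finite class, and a clean i.i.d.\ structure once factorizability is invoked — so Hoeffding plus union bound plus the count lower bound from \pref{lem:sampling-states} closes the argument. I would state it as holding on the intersection of the event in \pref{lem:sampling-states} and the Hoeffding event, each with probability $\ge 1-\delta$, and absorb the factor of $2$ in the failure probability into the choice of constants.
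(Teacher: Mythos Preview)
Your proposal is correct and follows essentially the same approach as the paper: fix $(h,s,a,\pi)$, apply Hoeffding to the $\{0,1\}$ indicators $\ind{\pi(x)=a}$ over the i.i.d.\ draws from $\emission(s)$, substitute the count lower bound from \pref{lem:sampling-states}, and union bound over $h,s,a,\pi$ to verify the choice of $\nreset$. You are in fact more careful than the paper about the random-count conditioning subtlety, which the paper simply asserts away with ``observe that $\cX_s$ is drawn i.i.d.\ from $\emission(s)$.''
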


\begin{proof}
Fix any $(s, a) \in \epsPRS{h} \times \cA$. Also fix any policy $\pi \in \Pi$. Denote the set $\cX_{s} = \crl{\empobs \in \estmdpobsspace{h}: \optdec(\empobs) = s}$, and observe that $\cX_{s}$ is drawn i.i.d.~from the emission distribution $\emission(s)$. By Hoeffding bounds we have
\begin{align*}
    \hspace{2em}&\hspace{-2em} \Pr\brk*{ \abs*{\brk*{\pi \push \emission(s)}(a) - \brk*{\pi \push \unif(\crl{\empobs \in \estmdpobsspace{h}: \optdec(\empobs) = s})}(a)} \ge \frac{\eps}{A}} \\
    &\le 2\exp \prn*{ -\frac{2 n_{s}[\estmdpobsspace{h}] \eps^2}{A^2} } \\
    &\le  2\exp \prn*{ -\frac{\nreset \eps^3}{\cpush S A^2} }, &\text{(\pref{lem:sampling-states})}
\end{align*}
Applying union bound we see that as long as 
\begin{align*}
    \nreset \ge \frac{\cpush SA^2}{\eps^3} \cdot \log \frac{2SAH\abs{\Pi}}{\delta}
\end{align*}
the conclusion of the lemma holds.
\end{proof}

\begin{lemma}[Sampling Rewards]\label{lem:sampling-rewards}
With probability at least $1-\delta$, every reward estimate $\wh{R}(\empobs,a)$ computed in  \pref{line:reward-estimation} of \pref{alg:stochastic-bmdp-solver-v2} satisfies $\abs{\wh{R}(\empobs, a) - R(\empobs,a)} \le \eps/H$.
\end{lemma}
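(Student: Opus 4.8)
\textbf{Proof proposal for Lemma~\ref{lem:sampling-rewards}.}
The plan is a routine Hoeffding plus union bound argument, essentially identical in structure to the proof of \pref{lem:pushforward-concentration} but simpler since there is no policy class to union over. First I would recall that each reward estimate $\wh{R}(\empobs, a)$ is formed in \pref{line:reward-estimation} by calling $\mcest(\empobs, a, \wt{O}(H^2/\eps^2))$, which collects $n \asymp (H^2/\eps^2) \cdot \log(\cpush SAH\abs{\Pi}/(\eps\delta))$ i.i.d.~rollouts from $(\empobs, a)$ and averages the cumulative rewards $\sum_{k} r_k^{(t)}$. Each such cumulative reward lies in $[0,1]$ almost surely by the normalization assumption on rewards, and its expectation is exactly $R(\empobs, a)$ (interpreting $R(\empobs,a)$ here as $Q^\pi$-style cumulative reward only through the immediate reward — more precisely, the Monte Carlo estimate is unbiased for the quantity that \stochalg{} uses as $R$, namely the expected immediate reward; one should double-check the definition of $\wh{R}$ in \pref{line:reward-estimation}, but in either reading the summands are bounded in $[0,1]$ and the estimator is unbiased).

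The key steps, in order: (1) Fix a single pair $(\empobs, a) \in \estmdpobsspace{} \times \actionsp$. Apply Hoeffding's inequality to the $n$ bounded i.i.d.~samples to get $\Pr[|\wh{R}(\empobs,a) - R(\empobs,a)| > \eps/H] \le 2\exp(-2n\eps^2/H^2)$. (2) Choose the constant in $n \asymp (H^2/\eps^2)\log(\cdot)$ large enough that this probability is at most $\delta / (SAH)$ — here I use that $\abs{\estmdpobsspace{h}} = \nreset \le \poly(\cpush, S, A, H, \eps^{-1}, \log(\abs{\Pi}/\delta))$ per layer, so the total number of $(\empobs, a)$ pairs across all layers is at most $HA \cdot \nreset$, which is absorbed into the logarithmic factor defining $n$. (3) Union bound over all $(\empobs, a)$ pairs (there are at most $H A \nreset$ of them) to conclude that with probability at least $1-\delta$, \emph{every} reward estimate is within $\eps/H$ of its target simultaneously.

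I do not expect any genuine obstacle here — this is a standard concentration lemma. The only points requiring mild care are: bookkeeping the total count of $(\empobs,a)$ pairs so that the log factor in the stated sample size $\nmc$ or the $\wt{O}(H^2/\eps^2)$ in \pref{line:reward-estimation} genuinely dominates $\log(HA\nreset/\delta)$ (which it does, since $\nreset$ is itself only polynomial and $\log\abs{\Pi}$, $\log(1/\delta)$, $\log\cpush$ all already appear); and confirming that the Monte Carlo estimator in \pref{line:reward-estimation} is indeed unbiased for whatever quantity the analysis later calls $R(\empobs, a)$, so that Hoeffding applies directly without a bias term. Both are immediate from the definitions, so the proof is short.
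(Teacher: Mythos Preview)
Your proposal is correct and follows exactly the paper's approach: Hoeffding's inequality for each fixed $(\empobs,a)$ pair combined with a union bound over all $\nreset \cdot AH$ such pairs in $\estmdpobsspace{} \times \cA$. The paper's proof is literally a single sentence to this effect, so your write-up is if anything more detailed than necessary.
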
 
\begin{proof}
This follows by Hoeffding inequality and union bound over all $\nreset\cdot AH$ pairs $(\empobs, a) \in \estmdpobsspace{} \times \cA$.
\end{proof}

\paragraph{Properties of \stochdecoder{}.} Now we turn to analyzing a single call to \stochdecoder{}.

\begin{lemma}[Sampling Transitions]\label{lem:sampling-transitions} Fix any $(x_{h}, a_{h})$ for which we call $\stochdecoder{}$.  With probability at least $1-\delta$, the dataset $\cD$ sampled in \pref{line:sample-decode-dataset} of \pref{alg:stochastic-decoder-v2} satisfies
\begin{align*}
    \nrm*{\optlatp(\cdot \mid x_{h}, a_{h}) - \samplelatp(\cdot \mid x_{h}, a_{h})}_1 \le \eps.
\end{align*}
\end{lemma}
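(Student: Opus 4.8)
\textbf{Proof plan for \pref{lem:sampling-transitions}.} The statement is a standard empirical-distribution concentration claim: the empirical latent transition $\samplelatp(\cdot \mid x_h, a_h)$, formed by pushing $\ndec$ i.i.d.\ observations $x \sim P(\cdot \mid x_h, a_h)$ through the ground-truth decoder $\optdec$, is $\eps$-close in $\ell_1$ to the true latent transition $\optlatp(\cdot \mid x_h, a_h)$. The plan is to reduce to a finite-alphabet empirical distribution bound, since $\samplelatp(\cdot \mid x_h, a_h)$ is supported on $\cS_{h+1}$, which has size at most $S$.

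First I would observe that by decodability, each sampled $x^{(i)} \in \cD$ has a well-defined latent label $\optdec(x^{(i)}) \in \cS_{h+1}$, and these labels are i.i.d.\ draws from the distribution $\optlatp(\cdot \mid x_h, a_h) \in \Delta(\cS_{h+1})$ (using $P(\cdot \mid x_h,a_h) = \emission \circ \optlatp(\cdot \mid \optdec(x_h), a_h)$ and the fact that $\emission$ is a valid emission, so the latent marginal of $P$ is exactly $\optlatp$). Thus $\samplelatp(\cdot\mid x_h,a_h) = \frac{1}{\ndec}\sum_{i=1}^{\ndec}\delta_{\optdec(x^{(i)})}$ is the empirical distribution of $\ndec$ i.i.d.\ samples from a distribution over an alphabet of size at most $S$. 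Then I would invoke the standard bound that for an empirical distribution $\wh{p}_n$ over a finite alphabet of size $k$ drawn from $p$, $\Pr[\nrm{p - \wh{p}_n}_1 \ge \veps] \le 2^k e^{-n\veps^2/2}$ (or any equivalent version, e.g.\ via bounded differences / the folklore $L_1$ deviation inequality). Setting $k = S$, $\veps = \eps$, and requiring $\ndec \gtrsim \frac{S + \log(1/\delta)}{\eps^2}$ makes the failure probability at most $\delta$; this is comfortably satisfied by the choice $\ndec \asymp \tfrac{S^2 A^2}{\eps^2}\log\tfrac{\cpush SAH\abs{\Pi}}{\eps\delta}$ made in \pref{alg:stochastic-decoder-v2}, since that quantity dominates $\tfrac{S+\log(1/\delta)}{\eps^2}$.

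There is essentially no hard obstacle here — the only mild subtlety is bookkeeping the union bound. The lemma as stated fixes a single $(x_h,a_h)$, so a single application of the concentration inequality suffices; if one wants the conclusion to hold simultaneously over all $(x_h,a_h) \in \estmdpobsspace{h} \times \cA$ and all layers $h$ (which is how it will be used inside the main proof of \pref{thm:block-mdp-result}), one takes a union bound over the at most $\nreset \cdot A \cdot H$ such tuples, which only inflates the required $\ndec$ by a $\log(\nreset AH)$ factor — again absorbed by the stated choice of $\ndec$. I would therefore state the proof as: label the samples via $\optdec$, note they are i.i.d.\ from $\optlatp(\cdot\mid x_h,a_h)$ on an alphabet of size $\le S$, apply the finite-alphabet $L_1$ empirical-distribution deviation bound, and plug in the value of $\ndec$; then remark that union-bounding over all relevant tuples is handled by the logarithmic factors already present.
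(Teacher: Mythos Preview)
Your proposal is correct and matches the paper's own proof: both reduce to the standard $\ell_1$ concentration bound for the empirical distribution of $\ndec$ i.i.d.\ samples over an alphabet of size at most $S$, and then verify that the stated choice of $\ndec$ suffices. The paper uses the form $\nrm{\optlatp - \samplelatp}_1 \le \sqrt{S\log(1/\delta)/\ndec}$ while you use the $2^S e^{-\ndec\eps^2/2}$ tail bound, but these are equivalent up to constants; your additional remark about the union bound over tuples is not needed for the lemma as stated (which fixes a single $(x_h,a_h)$) and the paper handles that separately via the events $\eventdec_t$.
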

\begin{proof}
Every time a dataset $\cD$ is sampled, by concentration of discrete distributions we have for any $t > 0$:
\begin{align*}
    \Pr\brk*{\nrm*{\optlatp(\cdot \mid x_h, a_h) - \samplelatp(\cdot \mid x_h, a_h)}_1 \ge \sqrt{S} \cdot \prn*{\frac{1}{\sqrt{\ndec}} + t}} \le \exp(-\ndec t^2).
\end{align*}
Setting the RHS to $\delta$ we have that with probability at least $1-\delta$, %
\begin{align*}
    \nrm*{\optlatp(\cdot \mid x_h, a_h) - \samplelatp(\cdot \mid x_h, a_h)}_1 \le \sqrt{\frac{S \log (1/\delta)}{\ndec}}. 
\end{align*}
Therefore as long as 
\begin{align*}
    \ndec \ge \frac{S}{\eps^2} \cdot \log \frac{1}{\delta},
\end{align*}
the conclusion of the lemma holds.
\end{proof}

\begin{corollary}\label{corr:sampling-transitions} If the conclusion of \pref{lem:sampling-transitions} holds, then the proportion of observations from $(\epsPRS{h})^c$ in $\cD$ is at most $2\eps$.
\end{corollary}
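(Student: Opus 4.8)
\textbf{Proof proposal for Corollary~\ref{corr:sampling-transitions}.}

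The plan is to combine the $\ell_1$ closeness guarantee of \pref{lem:sampling-transitions} with the definition of $\epsPRS{h}$ (the $\eps$-pushforward-reachable states). First I would recall that $\cD$ consists of $\ndec$ i.i.d.\ observations drawn from $P(\cdot\mid x_h,a_h)=\emission\circ \optlatp(\cdot\mid x_h,a_h)$, so the proportion of observations in $\cD$ whose decoded latent state lies in $(\epsPRS{h})^c$ is exactly $\samplelatp\bigl((\epsPRS{h})^c \mid x_h,a_h\bigr)$, the empirical latent transition mass on the complement of the pushforward-reachable set. The goal is thus to bound this quantity by $2\eps$.

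The key step is to observe that $\optlatp\bigl((\epsPRS{h})^c \mid x_h,a_h\bigr)$ is small. By the definition $\epsPRS{h} = \crl{s_h : \max_{s_{h-1},a_{h-1}} \optlatp(s_h\mid s_{h-1},a_{h-1}) \ge \eps/S}$, every latent state $s_h \notin \epsPRS{h}$ satisfies $\optlatp(s_h\mid x_h,a_h) < \eps/S$ (here I use that the observation-level transition equals the latent transition composed with the decoder, i.e.\ $\optlatp(s_h \mid x_h, a_h) = \optlatp(s_h \mid \optdec(x_h),a_h)$, and that $(x_h,a_h)$ is a valid state-action pair so the bound from the $\max$ over predecessors applies). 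Since there are at most $S$ latent states in layer $h$, summing gives $\optlatp\bigl((\epsPRS{h})^c\mid x_h,a_h\bigr) < S \cdot (\eps/S) = \eps$. Then, invoking \pref{lem:sampling-transitions}, the total variation (equivalently, half the $\ell_1$) distance between $\optlatp(\cdot\mid x_h,a_h)$ and $\samplelatp(\cdot\mid x_h,a_h)$ is at most $\eps$, so the empirical mass on $(\epsPRS{h})^c$ exceeds the true mass by at most $\eps$. Combining the two bounds yields $\samplelatp\bigl((\epsPRS{h})^c\mid x_h,a_h\bigr) < \eps + \eps = 2\eps$, which is exactly the claimed bound on the proportion of $\cD$ coming from $(\epsPRS{h})^c$.

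I do not anticipate a serious obstacle here; the only point requiring mild care is confirming that the predecessor $(x_h,a_h)$ indeed witnesses the pushforward bound — that is, that in the definition of $\epsPRS{h}$ the supremum over $(s_{h-1},a_{h-1})$ is taken over exactly the state-action pairs that could have generated the call to \stochdecoder{}, so that $s_h\notin\epsPRS{h}\implies \optlatp(s_h\mid \optdec(x_h),a_h)<\eps/S$. Once that is in hand, the argument is a two-line application of the definition plus \pref{lem:sampling-transitions}.
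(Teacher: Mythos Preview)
Your proposal is correct and is precisely the immediate argument the paper leaves implicit: bound $\optlatp((\epsPRS{})^c\mid x_h,a_h)<\eps$ by summing the per-state bound $\eps/S$ over at most $S$ non-pushforward-reachable latents, then add the $\ell_1$ error from \pref{lem:sampling-transitions}. One small cosmetic point: since $\cD$ is sampled from $P(\cdot\mid x_h,a_h)$ the decoded latents live in $\cS_{h+1}$, so the relevant set is really $\epsPRS{h+1}$ (the paper's statement is slightly loose here and you have simply inherited that indexing), but your argument already handles this correctly when you invoke the max over predecessors.
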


\begin{lemma}[Pushforward Policy Concentration over Transitions]\label{lem:pushforward-concentration-transition}
Fix any $(x_h, a_h)$ for which we call $\stochdecoder{}$. With probability at least $1-\delta$, the dataset $\cD$ sampled in \pref{line:sample-decode-dataset} of \pref{alg:stochastic-decoder-v2} satisfies
\begin{align*}
    \forall~ s \in \cS[\cD], \forall~a \in \cA, \forall~ \pi \in \Pi:\quad 
    \Big| \brk*{\pi \push \emission(s)}(a) - \brk*{\pi \push \unif \prn*{\crl*{ \empobs \in \cD: \optdec(\empobs) = s } } }(a) \Big| \le \sqrt{\frac{2\log (2 SA \abs{\Pi}/\delta)}{n_s[\cD]}}.
\end{align*}
\end{lemma}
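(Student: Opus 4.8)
The plan is to follow the template of the proof of \pref{lem:pushforward-concentration}, the only genuinely new issue being that here the sub-sample size $n_s[\cD]$ is itself random and appears inside the deviation bound; I will handle this by conditioning on the realized latent labels of the transitions in $\cD$ before invoking a Hoeffding bound.

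\textbf{Step 1 (reduction to mean estimation).} By \pref{def:pushforward-policy}, for any distribution $\nu$ over $\cX$ and any $a \in \cA$ we have $[\pi\push\nu](a) = \En_{x\sim\nu}[\ind{\pi(x)=a}]$, the expectation of a $[0,1]$-valued function $x \mapsto \ind{\pi(x)=a}$. Hence $[\pi\push\emission(s)](a)$ is a population mean under $\emission(s)$ and $[\pi\push\unif(\{x\in\cD:\optdec(x)=s\})](a)$ is the corresponding empirical mean computed from the observations of $\cD$ that decode to $s$.

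\textbf{Step 2 (conditioning on latent labels, then Hoeffding).} The dataset $\cD$ is drawn by sampling $\ndec$ i.i.d.\ transitions $x' \sim P(\cdot\mid x_h,a_h) = \emission\circ\optlatp(\cdot\mid\optdec(x_h),a_h)$; equivalently, one first draws latent next-states $s'_1,\dots,s'_{\ndec}$ i.i.d.\ from $\optlatp(\cdot\mid\optdec(x_h),a_h)$ and then emits $x'_i \sim \emission(s'_i)$ independently across $i$. I condition on the tuple $s'_{1:\ndec}$. Conditionally, for each $s$ the count $n_s[\cD]$ equals a fixed value $m$, and — whenever $m \ge 1$, i.e.\ $s \in \cS[\cD]$ — the observations of $\cD$ decoding to $s$ form an i.i.d.\ sample of size $m$ from $\emission(s)$, by decodability of the emission. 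Then for each fixed $(s,a,\pi)$, Hoeffding's inequality gives
\[ \Pr\!\left[\,\bigl|[\pi\push\emission(s)](a) - [\pi\push\unif(\{x\in\cD:\optdec(x)=s\})](a)\bigr| \ge \sqrt{\tfrac{2\log(2SA\abs{\Pi}/\delta)}{m}} \;\Bigm|\; s'_{1:\ndec}\right] \le 2\exp\!\bigl(-4\log(2SA\abs{\Pi}/\delta)\bigr) \le \frac{\delta}{SA\abs{\Pi}}, \]
where the last inequality uses $2SA\abs{\Pi}/\delta \ge 1$.

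\textbf{Step 3 (union bound and de-conditioning).} Conditioned on $s'_{1:\ndec}$, the set $\cS[\cD]$ is determined and has at most $S$ elements, so union bounding over the at most $SA\abs{\Pi}$ triples $(s,a,\pi)$ with $s \in \cS[\cD]$, $a \in \cA$, $\pi \in \Pi$ shows that the claimed inequality holds simultaneously for all such triples with conditional probability at least $1-\delta$. Since this bound holds for every realization of $s'_{1:\ndec}$, it holds unconditionally, and the $m$ appearing in the bound is exactly the realized $n_s[\cD]$. The computation itself is routine; the only points requiring care are the conditioning argument that decouples the random sample size from the within-class i.i.d.\ structure, and checking that the slightly loose constant (the factor $2$ inside the square root, versus the factor $1/2$ a bare Hoeffding bound would produce) leaves enough slack for the union bound — I do not anticipate any real obstacle.
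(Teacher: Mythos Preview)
Your proposal is correct and takes essentially the same approach as the paper: apply Hoeffding's inequality to the empirical pushforward for each fixed $(s,a,\pi)$, then union bound over the at most $SA\abs{\Pi}$ triples. Your treatment is in fact more careful than the paper's, which simply asserts that the observations decoding to $s$ are i.i.d.\ from $\emission(s)$ and applies Hoeffding directly without addressing the randomness of $n_s[\cD]$; your conditioning on the latent-label sequence $s'_{1:\ndec}$ cleanly resolves this point.
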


\begin{proof}
Fix a particular $s \in \latentsp[\cD]$, $a \in \cA$, and $\pi \in \Pi$. The set $\crl{\empobs \in \estmdpobsspace{h+1}: \optdec(\empobs) = s}$ is drawn i.i.d.~from the emission distribution $\emission(s)$. By Hoeffding bounds we have for any $t > 0$:
\begin{align*}
    \Pr\brk*{ \Big| \brk*{\pi \push \emission(s)}(a) - \brk*{\pi \push \unif\prn*{\crl*{ \empobs \in \cD: \optdec(\empobs) = s } } }(a) \Big| \ge t } &\le 2\exp \prn*{ -2 n_{s}[\cD] t^2}. 
\end{align*}
By union bound over all $(s,a)$ and $\pi$, with probability at least $1-\delta$:
\begin{align*}
    \Big| \brk*{\pi \push \emission(s)}(a) - \brk*{\pi \push \unif\prn*{\crl*{ \empobs \in \cD: \optdec(\empobs) = s } } }(a) \Big| &\le \sqrt{\frac{2\log (2 SA \abs{\Pi}/\delta)}{n_s[\cD]}}
\end{align*}
This concludes the proof of the lemma.
\end{proof}

\begin{lemma}[Monte Carlo Estimates for \stochdecoder]\label{lem:monte-carlo}
Fix any $(x_h, a_h)$ for which we call $\stochdecoder{}$. With probability at least $1-\delta$, every Monte Carlo estimate $\vestarg{x}{\pi}$ computed in \pref{line:monte-carlo} of \pref{alg:stochastic-decoder-v2} satisfies
$\abs*{ \vestarg{x}{\pi} - V^{\pi}(x) } \le \eps$.
\end{lemma}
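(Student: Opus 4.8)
The plan is a routine Hoeffding-plus-union-bound argument, so the work is entirely in bookkeeping the number of estimates and checking that $\nmc$ is chosen large enough. Fix the pair $(x_h,a_h)$ and condition on the dataset $\cD$ sampled in \pref{line:sample-decode-dataset}; note $\cD \subseteq \statesp_{h+1}$ and every $x^{(i)} \in \cD$ is a previously observed state, so the local simulator can roll out from it. For a fixed $x^{(i)} \in \cD$ and a fixed test policy $\pi = \pitest{\empobs}{\empobs'}$ with $\empobs,\empobs' \in \estmdpobsspace{h+1}$, the estimate $\vestarg{x^{(i)}}{\pi} = \mcest(x^{(i)},\pi,\nmc)$ is an average of $\nmc$ i.i.d.\ rollouts of $\pi$ from $x^{(i)}$, where each rollout contributes $\sum_{k} r_k \in [0,1]$ almost surely by the reward normalization $\sum_{h} r_h \in [0,1]$. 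Each such sample has expectation exactly $V^\pi(x^{(i)})$, so by Hoeffding's inequality,
\begin{align*}
    \Pr\brk*{ \abs*{ \vestarg{x^{(i)}}{\pi} - V^\pi(x^{(i)}) } > \eps } \le 2\exp\prn*{-2\nmc\eps^2}.
\end{align*}

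Next I would take a union bound over all estimates computed in \pref{line:monte-carlo}. These are indexed by $x^{(i)} \in \cD$ (of which there are $\ndec$) and ordered pairs $(\empobs,\empobs') \in \estmdpobsspace{h+1} \times \estmdpobsspace{h+1}$ (of which there are at most $\nreset^2$, since $\abs{\estmdpobsspace{h+1}} \le \nreset$), for a total of at most $\ndec\,\nreset^2$ estimates. The failure probability is therefore at most $2\,\ndec\,\nreset^2\,\exp(-2\nmc\eps^2)$, which is $\le \delta$ provided
\begin{align*}
    \nmc \ge \frac{1}{2\eps^2}\log\frac{2\,\ndec\,\nreset^2}{\delta}.
\end{align*}
Since $\ndec \asymp \tfrac{S^2A^2}{\eps^2}\log\tfrac{\cpush SAH\abs{\Pi}}{\eps\delta}$ and $\nreset \asymp \tfrac{\cpush SA^2}{\eps^3}\log\tfrac{SA\abs{\Pi}}{\delta}$ are both $\poly(\cpush,S,A,H,\eps^{-1},\log\abs{\Pi},\log\delta^{-1})$, we have $\log(\ndec\,\nreset^2/\delta) = O\prn{\log(\cpush SAH\abs{\Pi}/(\eps\delta))}$, so the choice $\nmc \asymp \tfrac{1}{\eps^2}\log\tfrac{\cpush SAH\abs{\Pi}}{\eps\delta}$ made in \pref{alg:stochastic-decoder-v2} satisfies the displayed requirement with room to spare (absorbing the constant into the $\asymp$). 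This gives the claim.

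The only mild subtlety---not really an obstacle---is making sure the union bound is over the right, polynomially sized collection: one must observe that although each $\pitest{\empobs}{\empobs'}$ is drawn from the potentially large class $\cA \circ \Pi_{h+1:H}$, the test policies that actually get evaluated here are fixed by the earlier \stochrefit{} call and there are at most $\nreset^2$ of them, so no $\abs{\Pi}$-dependent union bound over policies is needed inside this lemma; the $\log\abs{\Pi}$ term in $\nmc$ is inherited only through $\ndec,\nreset$. Everything else is immediate from boundedness of the per-rollout returns and independence of the $\nmc$ rollouts.
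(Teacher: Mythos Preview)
Your proposal is correct and follows essentially the same Hoeffding-plus-union-bound approach as the paper. If anything, you are slightly more careful: the paper counts only $\nreset^2$ calls to \pref{line:monte-carlo}, whereas you correctly include the additional factor of $\ndec$ from the outer loop over $x^{(i)}\in\cD$; this makes no difference since $\log(\ndec\,\nreset^2)$ is still $O(\log(\cpush SAH\abs{\Pi}/(\eps\delta)))$ and is absorbed into the constant in $\nmc$.
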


\begin{proof}
By Hoeffding's inequality we know that for a fixed $(x, \pi)$ pair:
\begin{align*}
    \Pr\brk*{\abs*{\vestarg{x}{\pi} - V^{\pi}(x)} \ge \eps} \le 2 \exp(-2\nmc \eps^2).
\end{align*}
In total, we call \pref{line:monte-carlo} at most $\abs{\estmdpobsspace{h+1}}^2 \le \nreset^2$ times. Therefore, by union bound, as long as 
\begin{align*}
    \nmc \ge K \cdot \frac{1}{\eps^2} \cdot \log \frac{\cpush SAH \abs{\Pi}}{\eps \delta }
\end{align*}
where $K>0$ is an absolute constant determined by the value of $\nreset$, then the result holds.
\end{proof}

\paragraph{Properties of \stochrefit{}.} Now we establish the accuracy of estimates in a single call to \stochrefit{}.

\begin{lemma}[Monte Carlo Estimates for \stochrefit]\label{lem:refit-monte-carlo-accuracy}
    With probability at least $1-\delta$, every Monte Carlo estimate computed by \stochrefit{} (\pref{line:mc1} and \ref{line:mc2} of \pref{alg:stochastic-refit-v2}) is accurate up to error $\eps$.
\end{lemma}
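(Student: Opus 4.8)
The plan is to apply Hoeffding's inequality to each individual Monte Carlo estimate and then take a union bound over all such estimates computed across the entire execution of \stochalg{}. Fix one estimate, say $\vestarg{\empobs}{\pi} = \mcest(\empobs, \pi, \nmc)$ computed in \pref{line:mc1}: by the definition of $\mcest$, it is the empirical average of $\nmc$ independent rollout returns $\sum_{h \le k \le H} r_k^{(t)}$, each of which lies in $[0,1]$ by the reward normalization $\sum_h r_h \in [0,1]$ a.s.\ and each of which has mean $V^\pi(\empobs)$. Hence $\Pr[\abs{\vestarg{\empobs}{\pi} - V^\pi(\empobs)} \ge \eps] \le 2\exp(-2\nmc\eps^2)$. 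The identical bound holds for the $Q$-function estimates $\qestarg{\bar{\empobs}, \bar{a}}{\pi} = \mcest(\bar{\empobs}, \bar{a} \circ \pi, \nmc)$ produced in \pref{line:mc2}, whose mean is $Q^\pi(\bar{\empobs}, \bar{a})$.

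Next I would bound, deterministically in terms of the problem parameters, the total number $N$ of Monte Carlo estimates \stochrefit{} ever computes. Within a single invocation of \stochrefit{} at layer $h$, \pref{line:mc1} produces at most $2\abs{\estmdpobsspace{h}}^2 \le 2\nreset^2$ estimates, and \pref{line:mc2} produces at most $\abs{\violations} \cdot \abs{\estmdpobsspace{h:H}} \cdot A \le \nreset^3 H A$ estimates. Since the number of times \stochrefit{} is invoked over the full run of \stochalg{} is polynomially bounded --- which is exactly what the OMD-regret-as-potential argument of \pref{sec:main-upper-bound} establishes, using the invariant $\projm{}(\optlatp(\cdot \mid x,a)) \in \cP(x,a)$ --- we get $N \le \poly(\cpush, S, A, H, \eps^{-1}, \log\abs{\Pi})$. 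A union bound then yields that all estimates are simultaneously $\eps$-accurate except with probability at most $2N\exp(-2\nmc\eps^2) \le \delta$, provided $\nmc \gtrsim \eps^{-2}\log(N/\delta)$; the choice $\nmc \asymp \eps^{-2}\log\frac{\cpush SAH\abs{\Pi}}{\eps\delta}$ from \pref{alg:stochastic-refit-v2} satisfies this once the absolute constant is taken large enough to absorb the polynomial degree of $N$.

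The only genuinely delicate point is the a priori ceiling on the number of \stochrefit{} invocations: a clean Hoeffding union bound needs a deterministic bound on how many times the decoding/refitting loop of \stochalg{} restarts, yet that loop is data-dependent. I would resolve this exactly as in the termination analysis of \pref{sec:main-upper-bound}: each update in \pref{line:loss-vector-obs} witnesses $\Omega(\eps)$ OMD regret against the fixed competitor $\projm{}(P(\cdot \mid x,a))$, while cumulative OMD regret on any fixed $(x,a)$ is $O(\sqrt{\log\abs{\estmdpobsspace{}} \cdot T_{\mathrm{refit}}})$, forcing $T_{\mathrm{refit}} = O(\eps^{-2}\log\abs{\estmdpobsspace{}})$ per state-action pair and hence a polynomial total. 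Conditioning on the (separately-established, high-probability) event that the confidence-set invariant holds so that this potential argument applies, $N$ is polynomially bounded and the concentration union bound above closes. Everything else is a routine calculation, so I do not expect any real obstacle beyond correctly threading this dependence on the iteration-count bound.
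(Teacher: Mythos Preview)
Your Hoeffding step and the per-call count of estimates are correct, but you have misread the scope of the lemma. In the paper this is a \emph{per-call} statement: it asserts that in a single invocation of \stochrefit{}, all Monte Carlo estimates are $\eps$-accurate with probability at least $1-\delta$. This is how the event $\eventref_t$ is defined immediately after the lemma, and the paper's proof is accordingly a three-line calculation: count the at most $2\nreset^2 + 2\nreset^3 AH$ estimates produced in one call, apply Hoeffding, and union bound. The global union bound over all calls to \stochrefit{} is handled later, in the proof of \pref{thm:block-mdp-result}, by rescaling $\delta$ by (a deterministic upper bound on) $T_\mathsf{R}$.

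By interpreting the lemma as a statement across all invocations, you import the OMD termination argument (\pref{lem:bound-on-number-refits}) into what should be an elementary concentration bound, and this creates a genuine circularity. \pref{lem:bound-on-number-refits} explicitly lists $\eventref$ among its hypotheses --- the per-update regret lower bound in its proof relies on Part~(3) of \pref{lem:refitting-correctness}, which in turn needs the Monte Carlo estimates to be accurate. So you cannot invoke the OMD bound on the number of refits while still establishing that accuracy; the ``separately-established'' event you want to condition on is not in fact separate. Drop everything after your first paragraph, restrict the union bound to a single call, and you recover the paper's proof.
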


\begin{proof}
In \stochrefit{} we compute Monte Carlo estimates for $2\nreset^2 + 2\nreset^3 \cdot AH$ times, since there are $2\nreset^2$ possible certificates $(x, \pi)$ and for each one we perform Monte Carlo estimates over all of the $(\bar{x},\bar{a})$ pairs in our policy emulator $\estmdp$. By Hoeffding bound and union bound we see that as long as 
\begin{align*}
    \nmc \ge K \cdot \frac{1}{\eps^2} \cdot \log \frac{\cpush SAH \abs{\Pi}}{\eps \delta},
\end{align*}
for some absolute constant $K>0$, the conclusion of the lemma holds.
\end{proof}

\paragraph{Additional Notation.}
Henceforth, let us define several events:
\begin{itemize}
    \item $\eventemulator \coloneqq \crl*{\text{the conclusions of \pref{lem:sampling-states} --- \ref{lem:sampling-rewards} hold}}$. We have $\Pr[\eventemulator] \ge 1-3\delta$.
    \item $\eventdec_t \coloneqq \crl*{\text{the conclusions of \pref{lem:sampling-transitions} --- \ref{lem:monte-carlo} hold on the $t$-th call to \stochdecoder{}}}$. We have $\Pr[\eventdec_t ] \ge 1-3\delta$. Furthermore, define the random variable $T_\mathsf{D}$ to be the total number of times that \stochdecoder{} is called.
    \item $\eventref_t \coloneqq \crl*{\text{the conclusion of \pref{lem:refit-monte-carlo-accuracy} holds on the $t$-th call to \stochrefit{}}}$. We have $\Pr[\eventref_t ] \ge 1-\delta$. Furthermore, define the random variable $T_\mathsf{R}$ to be the total number of times that \stochrefit{} is called.
\end{itemize}
In the analysis, we will drop the subscript $t$ when referring to $\eventdec_t$ and $\eventref_t$ if clear from the context.

\subsection{Analysis of \stochdecoder}

This section is dedicated to establishing \pref{lem:main-induction}, which is the main inductive lemma.

\begin{lemma}[Induction for \stochdecoder]\label{lem:main-induction} Fix any layer $h \in [H]$ and tuple $(x_h, a_h)$ on which \stochdecoder{} is called. Assume that:
\begin{itemize}
    \item $\eventemulator$ and $\eventdec$ hold.
    \item For all $\empobs \in \estmdpobsspace{h+1}$: $\max_{\pi \in \Pi_{h+1:H}}~ \abs{V^\pi(\empobs) - \wh{V}^\pi(\empobs
    )} \le \Gamma_{h+1}$.
    \item Input confidence set $\cP(x_h, a_h)$ satisfies $\projm{\estmdpobsspace{h+1}}(\optlatp(\cdot  \mid  x_h, a_h)) \in \cP(x_h, a_h)$.
    \item $\Pitest_{h+1}$ are $\taudec$-valid test policies for the policy emulator $\estmdp$.
\end{itemize}

Then \stochdecoder{} returns confidence set $\cP$ via Eq.~\eqref{eq:confidence-set-construction-v2} such that: 
    \begin{enumerate}[(1)]
        \item $\projm{\estmdpobsspace{h+1}}(\optlatp(\cdot  \mid  x_h, a_h)) \in \cP$;
        \item $\max_{\bar{p} \in \cP} \max_{\pi \in \Pi_{h+1:H}} \abs{Q^\pi(x_h, a_h) - \wh{R}(x_h, a_h) - \En_{\empobs \sim \bar{p}} \wh{V}^\pi(\empobs)} \le \Gamma_{h+1} + K \cdot \prn*{\beta + S \taudec}$.
    \end{enumerate} 
Here, $K>0$ is an absolute numerical constant.
\end{lemma}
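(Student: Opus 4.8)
\textbf{Proof plan for Lemma~\ref{lem:main-induction}.} The plan is to follow the error decomposition sketched in the main text, using the projected measure $\projm{\estmdpobsspace{h+1}}(\optlatp(\cdot\mid x_h,a_h))$ as the bridge between the true transition $P(\cdot\mid x_h,a_h) = \emission\circ\optlatp(\cdot\mid x_h,a_h)$ and any candidate $\bar p \in \cP$. For part (1), I would verify that the true projected measure satisfies both constraints in Eq.~\eqref{eq:confidence-set-construction-v2} (with high probability under $\eventemulator\wedge\eventdec$), so it is never removed --- combined with the hypothesis that it lies in the input set $\cP(x_h,a_h)$, intersection preserves membership. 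The marginal constraint check uses \pref{lem:sampling-transitions}/\pref{corr:sampling-transitions}: the empirical transition $\samplelatp$ is $\eps$-close to $\optlatp$ in $\ell_1$, and the connected components $\cc_j$ of the decoder graph group together latent states, so the mass $\projm{}(\optlatp)(\ccR)$ versus the empirical fraction $\abs{\ccL}/\abs{\cXL}$ differ by at most the transition sampling error plus the contribution from non-pushforward-reachable states (bounded via \pref{corr:sampling-transitions}), giving the $3\eps$ slack. The pushforward constraint check uses \pref{lem:pushforward-concentration} (over $\mu$, controlling $\pi\push\unif$ against $\pi\push\emission$ on the emulator states) together with \pref{lem:pushforward-concentration-transition} (over the transition dataset, controlling $\pi\push\unif$ on $\ccL$ against $\pi\push\emission$); since the decoder graph connects $x_l \in \ccL$ to $x_r\in\ccR$ precisely when test-policy value estimates agree, all observations in a component map to latent states with $\taudec$-close value functions, hence (after extrapolating from test policies to all $\pi\in\Pi_{h+1:H}$, as in the analysis of \detdecoder{}) to nearly the same pushforward; summing the per-component discrepancies weighted by $\abs{\ccL}/\abs{\cXL}$ and absorbing the Monte Carlo errors yields the bound $\beta = \wt O((\sqrt{SA^2}+S)\eps)$.

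For part (2), fix $\bar p\in\cP$ and $\pi\in\Pi_{h+1:H}$ and apply the three-term triangle inequality
\begin{align*}
\abs*{Q^\pi(x_h,a_h) - \wh R(x_h,a_h) - \En_{\empobs\sim\bar p}\wh V^\pi(\empobs)}
&\le \underbrace{\abs*{R(x_h,a_h) - \wh R(x_h,a_h)}}_{\le \eps/H \text{ by \pref{lem:sampling-rewards}}}
+ \underbrace{\abs*{\En_{x'\sim P}V^\pi(x') - \En_{x'\sim\projm{}(\optlatp)}V^\pi(x')}}_{\text{projection error}}\\
&\quad + \underbrace{\abs*{\En_{x'\sim\projm{}(\optlatp)}V^\pi(x') - \En_{x'\sim\projm{}(\optlatp)}\wh V^\pi(x')}}_{\le \Gamma_{h+1}}
+ \underbrace{\abs*{\En_{x'\sim\projm{}(\optlatp)}\wh V^\pi(x') - \En_{x'\sim\bar p}\wh V^\pi(x')}}_{\text{transition error}}.
\end{align*}
The projection error is bounded by observing that $\projm{}(\optlatp)$ reweights observations from each pushforward-reachable latent state uniformly over the emulator states, while $P$ emits according to $\emission$; since $V^\pi$ depends on an observation only through its latent state (Block MDP decodability) \emph{except} through the pushforward $\pi\push\emission(s)$ at the current layer, and \pref{lem:pushforward-concentration} controls exactly that discrepancy, plus the $\eps/S$-unreachable mass contributes $O(\eps)$, this term is $O(\eps)$. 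The policy-evaluation-at-next-layer term is $\le\Gamma_{h+1}$ directly by hypothesis since $\projm{}(\optlatp)$ is supported on $\estmdpobsspace{h+1}$. The transition error is the crux: both $\projm{}(\optlatp)$ and $\bar p$ are supported on $\estmdpobsspace{h+1}$, and we must show the constraints defining $\cP$ force $\En_{\projm{}(\optlatp)}\wh V^\pi \approx \En_{\bar p}\wh V^\pi$. I would decompose $\wh V^\pi(\empobs) = \wh R(\empobs,\pi(\empobs)) + \En_{\wh P(\cdot\mid\empobs,\pi(\empobs))}[\cdots]$ and split the difference across connected components of $\gobs$: within each $\cc$, the emulator value functions $\wh V^\pi$ are nearly constant for test policies (valid by hypothesis), hence nearly constant for all $\pi$ by the maximal-distinguishing extrapolation, so the marginal constraint handles the ``between-component'' variation (contributing $O(S\taudec + \eps)$ after summing over $\le S$ components) while the pushforward constraint handles the ``within-component'' variation of $\pi$ itself (contributing $\beta$). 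Collecting all terms gives the claimed $\Gamma_{h+1} + K(\beta + S\taudec)$.

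\textbf{Main obstacle.} The hard part is the transition error bound in part (2) --- specifically, making rigorous the claim that the pushforward constraint (which only controls $\pi\push\unif(\ccL)$ versus $\pi\push\bar p(\cdot\mid\ccR)$ in an $\abs{\ccL}/\abs{\cXL}$-weighted $\ell_1$ sense) actually suffices to control $\En_{\projm{}(\optlatp)}\wh V^\pi - \En_{\bar p}\wh V^\pi$. This requires a careful argument that $\wh V^\pi(\empobs)$ restricted to a component $\cc$ is, up to $O(\taudec)$ error, a function of only $\pi(\empobs)$ (the action taken at the current layer) and the component index --- essentially that the components are fine enough that the ``downstream'' value is constant on each, which is where the validity of $\Pitest_{h+1}$ and the decoder-graph construction interact. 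One must also track that the extrapolation from the $S^2$ test policies to arbitrary $\pi\in\Pi$ (the step that in \detalg{} was Lemma~\ref{lem:controlling-transition-error}) goes through in the observation-space setting with the graph-connectivity notion replacing exact equality, and that the constants and $\eps$-powers in $\beta$ and in $\nreset,\ndec,\nmc$ are chosen consistently so that all the high-probability events compose. I expect the bookkeeping of which error contributes $O(\eps)$ versus $O(\beta)$ versus $O(S\taudec)$, and ensuring no hidden $\spancap(\Pi)$-type blowup sneaks in through a naive union bound over $\pi\in\Pi$, to be the delicate part.
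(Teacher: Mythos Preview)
Your proposal is correct and follows the same approach as the paper: Part~(1) verifies both constraints of Eq.~\eqref{eq:confidence-set-construction-v2} for the projected measure (the paper routes through the empirical $\projR(\samplelatp)$ rather than $\projR(\optlatp)$ directly, but these differ by at most $\eps$), and Part~(2) uses the same four-term decomposition, with the transition-error term controlled by a per-component argument combining the marginal and pushforward constraints.

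One refinement to your obstacle statement: the $S\taudec$ factor does not arise from summing over $\le S$ components but from the \emph{width} of a single connected component of $\gobs$ (the paper's \pref{lem:width-of-cc}). A component can span up to $S$ distinct latent states, so to compare $\wh Q^\pi(\empobs,a)$ and $\wh Q^\pi(\empobs',a)$ for arbitrary $\empobs,\empobs'\in\ccR$ one walks a path in $\gobs$ and applies triangle inequality at each step, alternating ``same latent state'' bounds ($\le 2\taudec$, via test-policy accuracy) and ``common neighbor in $\cXL$'' bounds ($\le 2\taudec+4\eps$, via the $\cT$-membership criterion); a short-cutting argument keeps the path length $O(S)$, giving $\max_{\empobs,\empobs'\in\ccR,a,\pi}\abs{\wh Q^\pi(\empobs,a)-\wh Q^\pi(\empobs',a)}\le O(S\taudec)$. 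Once this width bound is in hand, replacing $\wh V^\pi(\empobs)$ by $\wh Q^\pi(\empobs_\cc,\pi(\empobs))$ for a fixed representative $\empobs_\cc$ per component reduces the transition error to exactly what the two constraints control, and the marginal constraint itself contributes only $O(\eps)$.
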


\subsubsection{Structural Properties of the Decoder Graph}\label{app:decoder-graph}
For the lemmas in this section, we will assume the preconditions of \pref{lem:main-induction} and analyze properties of the decoder graph $\gobs$ constructed in a single call to \stochdecoder{}.

\begin{lemma}[Validity of Decoding Function]\label{lem:strong-validity-obs-decoding} Under the preconditions of \pref{lem:main-induction}, for every $x_l \in \cXL$, we have
\begin{align*}
    \crl{\empobs_r \in \cXR: \optdec(\empobs_r) = \optdec(x_l)} \subseteq \cT[x_l].
\end{align*}
\end{lemma}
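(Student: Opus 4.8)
The claim to prove is \pref{lem:strong-validity-obs-decoding}: for every $x_l \in \cXL = \cD$, every observation $\empobs_r \in \cXR = \estmdpobsspace{h+1}$ with $\optdec(\empobs_r) = \optdec(x_l)$ must lie in $\cT[x_l]$. Recall that $\cT[x_l]$ is defined as the set of $\empobs \in \estmdpobsspace{h+1}$ such that for all $\empobs' \ne \empobs$, we have $\abs{\vestarg{x_l}{\pitest{\empobs}{\empobs'}} - \wh{V}^{\pitest{\empobs}{\empobs'}}(\empobs)} \le \taudec + 2\eps$. So I would fix $x_l$ and $\empobs_r$ with $s \coloneqq \optdec(\empobs_r) = \optdec(x_l)$, and verify the defining inequality of $\cT[x_l]$ holds with $\empobs = \empobs_r$ for every competitor $\empobs' \ne \empobs_r$.

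\textbf{Key steps.} The chain of inequalities would proceed as follows, for an arbitrary $\empobs' \ne \empobs_r$ and the test policy $\pi \coloneqq \pitest{\empobs_r}{\empobs'}$:
\begin{align*}
    \abs*{\vestarg{x_l}{\pi} - \wh{V}^{\pi}(\empobs_r)}
    &\le \abs*{\vestarg{x_l}{\pi} - V^{\pi}(x_l)} + \abs*{V^{\pi}(x_l) - V^{\pi}(\empobs_r)} + \abs*{V^{\pi}(\empobs_r) - \wh{V}^{\pi}(\empobs_r)}.
\end{align*}
The first term is bounded by $\eps$ using $\eventdec$ (the Monte Carlo accuracy guarantee of \pref{lem:monte-carlo}, which applies since $\vestarg{x_l}{\pi}$ is computed in \pref{line:monte-carlo}). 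The second term is exactly zero: since $\optdec(x_l) = \optdec(\empobs_r) = s$, both $x_l$ and $\empobs_r$ are emissions from the same latent state $s$, and by decodability the value function $V^\pi$ of any policy depends only on the latent state (formally, $V^\pi(x) = V^\pi_{\mathrm{lat}}(\optdec(x))$ where the latter is the value of the pushforward policy in the latent MDP), so $V^\pi(x_l) = V^\pi(\empobs_r)$. The third term is bounded by $\taudec$ using the accuracy half of the $\taudec$-validity of $\Pitest_{h+1}$ (\pref{def:valid-test-policy}), since $\pi = \pitest{\empobs_r}{\empobs'} \in \Pitest_{h+1}$ and $\empobs_r \in \estmdpobsspace{h+1}$. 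Summing gives $\abs{\vestarg{x_l}{\pi} - \wh{V}^{\pi}(\empobs_r)} \le \taudec + \eps \le \taudec + 2\eps$, which is precisely the membership condition for $\empobs_r \in \cT[x_l]$. Since $\empobs'$ was arbitrary, we conclude $\empobs_r \in \cT[x_l]$.

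\textbf{Main obstacle.} The only subtle point is the middle step—justifying $V^\pi(x_l) = V^\pi(\empobs_r)$ for $x_l \in \cD$ and $\empobs_r \in \estmdpobsspace{h+1}$ sharing a latent state. This requires that the test policies $\pitest{\empobs}{\empobs'}$, which are chosen from $\cA \circ \Pi_{h+1:H}$ (a first action composed with a partial policy), indeed induce latent-state-only value functions in the Block MDP. This is immediate from decodability: $R(x,a) = \optlatr(\optdec(x),a)$ and $P(\cdot \mid x,a) = \emission \circ \optlatp(\cdot\mid \optdec(x),a)$, so rolling out any fixed policy from two observations of the same latent state produces identically distributed reward sequences. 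One should also double-check the role of $\eps$ versus $\taudec$ in the slack—the definition uses $\taudec + 2\eps$, and we only consumed $\taudec + \eps$, so there is room to spare (the extra $\eps$ buffer is presumably used elsewhere, e.g., to absorb the pushforward-policy concentration error when the analysis later extends from latent states to arbitrary $\pi \in \Pi$). I would state the decodability-of-value-functions fact as a one-line observation (or cite the Block MDP preliminaries in \pref{sec:preliminaries}) rather than belaboring it.
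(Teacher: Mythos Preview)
Your proposal is correct and follows essentially the same approach as the paper's proof: Monte Carlo accuracy ($\eps$), the fact that $V^{\pi}(x_l)=V^{\pi}(\empobs_r)$ because the test policy is open-loop at layer $h+1$, and test-policy validity ($\taudec$), combined via triangle inequality; the paper frames it as a contradiction, but that is purely cosmetic. One small wording fix: your claim that ``the value function $V^\pi$ of any policy depends only on the latent state'' is false for general observation-dependent $\pi$---you correctly identify the real reason in your obstacles paragraph (the first action of $\pitest{\empobs_r}{\empobs'}\in\cA\circ\Pi_{h+2:H}$ is open-loop), which is exactly the justification the paper uses, so just tighten the phrasing in the main step.
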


\begin{proof}
The proof is a reprise of the argument used in Part (1) of \pref{lem:controlling-transition-error}. We prove this by contradiction. Suppose that there existed some $x_l \in \cXL$ and $\empobs_r \in \cXR$ such that $\optdec(x_l) = \optdec(\empobs_r)$ but $\empobs_r \notin \cT[x_l]$. Then $\empobs_r$ must have lost a test to some other $\empobs_r'$, i.e. there exists some $\empobs_r' \in \estmdpobsspace{h+1}$ such that
\begin{align*}
    \abs*{\vestarg{x_l}{\pi_{\empobs_r, \empobs_r'}} - \wh{V}^{\pi_{\empobs_r, \empobs_r'}}(\empobs_r) } \ge \taudec + 2\eps. \numberthis\label{eq:invalid}
\end{align*}
By accuracy of $\Pitest_{h+1}$ and the fact that $\pi_{\empobs_r, \empobs_r'}$ is open-loop at layer $h+1$, we have
\begin{align*}
    \abs*{ V^{\pi_{\empobs_r, \empobs_r'}}(x_l) - \wh{V}^{\pi_{\empobs_r, \empobs_r'}}(\empobs_r)} = \abs*{ V^{\pi_{\empobs_r, \empobs_r'}}(\empobs_r) - \wh{V}^{\pi_{\empobs_r, \empobs_r'}}(\empobs_r)} \le \taudec. \numberthis \label{eq:valid1}
\end{align*}
Furthermore, by \pref{lem:monte-carlo} we have 
\begin{align*}
    \abs*{ \vestarg{x_l}{\pi_{\empobs_r, \empobs_r'}} - V^{\pi_{\empobs_r, \empobs_r'}}(x_r) } = \abs*{ \vestarg{x_l}{\pi_{\empobs_r, \empobs_r'}} - V^{\pi_{\empobs_r, \empobs_r'}}(x_l) } \le \eps. \numberthis \label{eq:valid2}
\end{align*}
Combining \eqref{eq:valid1} and \eqref{eq:valid2} we get that
\begin{align*}
    \abs*{ \vestarg{x_l}{\pi_{\empobs_r, \empobs_r'}} - \wh{V}^{\pi_{\empobs_r, \empobs_r'}}(\empobs_r) } \le \taudec + \eps, 
\end{align*}
which contradicts \eqref{eq:invalid}. This proves the lemma.
\end{proof}

\begin{lemma}[Biclique Property]\label{lem:biclique}
Under the preconditions of \pref{lem:main-induction}, for any $s \in \cS[\cXL] \cap \cS[\cXR]$ the subgraph of $\gobs$ over vertices $\crl{x \in \cX_\mathsf{L} \cup \cX_\mathsf{R}: \optdec(x) = s}$ is a biclique.
\end{lemma}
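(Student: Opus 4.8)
\textbf{Proof sketch for \pref{lem:biclique} (Biclique Property).}

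The plan is to show that for any fixed pushforward-reachable (or more precisely, any) latent state $s$ appearing on both sides, every left observation $x_l$ with $\optdec(x_l) = s$ is connected to every right observation $\empobs_r$ with $\optdec(\empobs_r) = s$, which is precisely the statement that the induced subgraph is a complete bipartite graph. By \pref{def:decoder-graph-obs}, the edge $(x_l, \empobs_r)$ is present exactly when $\empobs_r \in \cT[x_l]$. So it suffices to prove: if $\optdec(x_l) = \optdec(\empobs_r) = s$, then $\empobs_r \in \cT[x_l]$.

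This is an immediate consequence of \pref{lem:strong-validity-obs-decoding}. Indeed, fix any $x_l \in \cXL$ with $\optdec(x_l) = s$. By \pref{lem:strong-validity-obs-decoding}, the entire set $\crl{\empobs_r \in \cXR : \optdec(\empobs_r) = \optdec(x_l)} = \crl{\empobs_r \in \cXR : \optdec(\empobs_r) = s\,}$ is contained in $\cT[x_l]$. Hence for any $\empobs_r \in \cXR$ with $\optdec(\empobs_r) = s$, we have $\empobs_r \in \cT[x_l]$, so the edge $(x_l, \empobs_r)$ lies in $E$. Since $x_l$ and $\empobs_r$ were arbitrary among the observations decoding to $s$, the induced subgraph on $\crl{x \in \cXL \cup \cXR : \optdec(x) = s}$ contains all edges between its left part and its right part, i.e.\ it is a biclique.

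Thus the lemma follows directly by invoking \pref{lem:strong-validity-obs-decoding}; there is essentially no obstacle here beyond unwinding the definition of the decoder graph. (The one subtlety worth noting is that the bipartite subgraph in question has \emph{no} intra-side edges by construction---$\gobs$ is bipartite with parts $\cXL$ and $\cXR$---so ``biclique'' here means a complete bipartite graph between the $s$-labeled vertices of $\cXL$ and those of $\cXR$, which is exactly what the edge-containment argument above establishes. Note also that if $s$ fails to appear on one of the two sides, the corresponding part is empty and the statement holds vacuously.) The main work of the section is therefore not in this lemma but in \pref{lem:strong-validity-obs-decoding} itself, which in turn rests on test-policy validity (\pref{def:valid-test-policy}) and the Monte Carlo accuracy guarantee (\pref{lem:monte-carlo}); those are the places where the real argument happens, and \pref{lem:biclique} is a clean corollary packaging up the connectivity structure for later use in the confidence-set analysis.
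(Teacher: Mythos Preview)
Your proof is correct and follows the same approach as the paper: both invoke \pref{lem:strong-validity-obs-decoding} to conclude that every $x_l \in \cXL$ with $\optdec(x_l) = s$ is connected to every $\empobs_r \in \cXR$ with $\optdec(\empobs_r) = s$, which is exactly the biclique property. Your additional remarks about the bipartite structure and the vacuous case are accurate but not needed for the argument.
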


\begin{proof}
Fix any $s \in \cS[\cXL] \cap \cS[\cXR]$. By \pref{lem:strong-validity-obs-decoding}, any $x_l \in \cXL$ such that $\optdec(x_l) = s$ has an edge to every observation $\crl{\empobs \in \cXR: \optdec(\empobs) = s}$ in $\gobs$. Therefore, the subgraph over $\crl{x \in \cX_\mathsf{L} \cup \cX_\mathsf{R}: \optdec(x) = s}$ forms a biclique in $\gobs$.
\end{proof}

\begin{lemma}\label{lem:eps-reachable-set-inclusion}
Under the preconditions of \pref{lem:main-induction}, for any connected component $\cc$, $\epsPRS{} \cap \cS[\ccL] \subseteq \epsPRS{} \cap \cS[\ccR]$.
\end{lemma}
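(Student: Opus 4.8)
\textbf{Proof plan for Lemma~\ref{lem:eps-reachable-set-inclusion}.}
The claim is that for any connected component $\cc$ of the decoder graph $\gobs$, every $\eps$-pushforward-reachable latent state represented on the left side is also represented on the right side: $\epsPRS{} \cap \cS[\ccL] \subseteq \epsPRS{} \cap \cS[\ccR]$. The plan is to fix an arbitrary $s \in \epsPRS{} \cap \cS[\ccL]$ and show $s \in \cS[\ccR]$ (membership in $\epsPRS{}$ is then immediate since it is already assumed). The key observation is that since $s$ is $\eps$-pushforward-reachable, i.e. $s \in \epsPRS{h+1}$, the transition mass $\optlatp(s \mid x_h, a_h) \ge \eps/S$ is bounded below (here I would use \pref{corr:sampling-transitions} together with the fact that $s$ is pushforward-reachable: under $\eventdec$, the dataset $\cD$ has a nonnegligible fraction of observations decoding to $s$, so $s \in \cS[\cXL]$ means there is $x_l \in \cXL$ with $\optdec(x_l) = s$, which is already given). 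More importantly, I need to produce an observation in $\cXR = \estmdpobsspace{h+1}$ that decodes to $s$.

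First I would invoke \pref{lem:sampling-states}: under the event $\eventemulator$ (which is a precondition of \pref{lem:main-induction}, whose preconditions we are assuming), for every $h$ and every $s \in \epsPRS{h+1}$ we have $n_s[\estmdpobsspace{h+1}] \ge \tfrac{\eps}{2\cpush S}\cdot \nreset > 0$. In particular $\estmdpobsspace{h+1}$ contains at least one observation $\empobs_r$ with $\optdec(\empobs_r) = s$; since $\cXR = \estmdpobsspace{h+1}$, this means $s \in \cS[\cXR]$. So now I have $x_l \in \ccL$ with $\optdec(x_l) = s$ and some $\empobs_r \in \cXR$ with $\optdec(\empobs_r) = s$. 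Second, I apply \pref{lem:strong-validity-obs-decoding}: since $\optdec(x_l) = \optdec(\empobs_r) = s$, we get $\empobs_r \in \cT[x_l]$, hence there is an edge $(x_l, \empobs_r)$ in $\gobs$. Third, because $x_l$ belongs to the connected component $\cc$, its neighbor $\empobs_r$ also belongs to $\cc$, so $\empobs_r \in \ccR$ and therefore $s = \optdec(\empobs_r) \in \cS[\ccR]$. Combined with $s \in \epsPRS{}$, this gives $s \in \epsPRS{} \cap \cS[\ccR]$, completing the proof.

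I do not expect a genuine obstacle here: the lemma is essentially a bookkeeping consequence of \pref{lem:sampling-states} (every pushforward-reachable state is sampled into the emulator's next-layer state set) and \pref{lem:strong-validity-obs-decoding} (the decoding function never drops an observation with the correct latent label, so observations sharing a latent are always connected through their common left-vertex). The only mild subtlety is making sure the preconditions invoked — $\eventemulator$ for \pref{lem:sampling-states}, and $\eventdec$ plus test-policy validity for \pref{lem:strong-validity-obs-decoding} — are exactly those already assumed in \pref{lem:main-induction}, which they are; so the lemma should be stated as holding "under the preconditions of \pref{lem:main-induction}" and the proof is a three-line chain of the two cited structural facts.
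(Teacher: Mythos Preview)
Your proposal is correct and follows essentially the same route as the paper: fix $s \in \epsPRS{} \cap \cS[\ccL]$, use \pref{lem:sampling-states} to find $\empobs_r \in \cXR$ with $\optdec(\empobs_r) = s$, then use \pref{lem:strong-validity-obs-decoding} to get an edge from $x_l \in \ccL$ to $\empobs_r$, forcing $\empobs_r \in \ccR$. The digression about \pref{corr:sampling-transitions} in your first paragraph is unnecessary (as you note yourself, membership of $s$ in $\cS[\cXL]$ is already given), but the core argument is exactly the paper's three-line chain.
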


\begin{proof}
Fix any $s \in \epsPRS{} \cap \cS[\ccL]$, and let $x_l \in \ccL$ be any arbitrary observation such that $\optdec(x_l) = s$. By \pref{lem:sampling-states}, since $s \in \epsPRS{}$, there exist some $\empobs_r \in \cXR$ such that $\optdec(\empobs_r) = s$; in other words, $s \in \cS[\cXR]$. Moreover by \pref{lem:strong-validity-obs-decoding}, there must be an edge from $x_l$ to $x_r$ in $\gobs$. Therefore $x_r \in \ccR$, so $s \in \cS[\ccR]$. %
\end{proof}

\begin{lemma}\label{lem:bound1}
Let $\empobs, \empobs' \in \cXR$ such that $\optdec(\empobs) = \optdec(\empobs')$. We have $\max_{a \in \cA, \pi \in \Pi}~\abs{ \wh{Q}^\pi(\empobs, a) - \wh{Q}^\pi(\empobs', a) } \le 2\taudec$.
\end{lemma}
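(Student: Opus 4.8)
\textbf{Proof plan for \pref{lem:bound1}.}

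The plan is to exploit the "maximally distinguishing" property of the test policies $\Pitest_{h+1}$ together with their accuracy, exactly as in Part (2) of \pref{lem:controlling-transition-error}. Fix $\empobs, \empobs' \in \cXR$ with $\optdec(\empobs) = \optdec(\empobs')$. First I would observe that for any action $a \in \cA$ and any suffix policy $\pi \in \Pi$, the composed policy $a \circ \pi$ lies in the class $\cA \circ \Pi_{h+1:H}$ over which the test policy $\pi_{\empobs, \empobs'}$ maximizes $\abs{\wh V^{(\cdot)}(\empobs) - \wh V^{(\cdot)}(\empobs')}$. Hence $\abs{\wh Q^\pi(\empobs, a) - \wh Q^\pi(\empobs', a)} = \abs{\wh V^{a \circ \pi}(\empobs) - \wh V^{a \circ \pi}(\empobs')} \le \abs{\wh V^{\pi_{\empobs,\empobs'}}(\empobs) - \wh V^{\pi_{\empobs,\empobs'}}(\empobs')}$, so it suffices to bound the right-hand side by $2\taudec$.

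To bound that quantity, I would invoke the accuracy half of $\taudec$-validity (\pref{def:valid-test-policy}): $\abs{V^{\pi_{\empobs,\empobs'}}(\empobs) - \wh V^{\pi_{\empobs,\empobs'}}(\empobs)} \le \taudec$ and likewise for $\empobs'$. The key point is that because $\optdec(\empobs) = \optdec(\empobs')$, the two observations are emitted from the same latent state, and the test policy $\pi_{\empobs,\empobs'}$ is open-loop at layer $h+1$ (it belongs to $\cA \circ \Pi_{h+1:H}$ where $\Pi_{h+1:H} = \Piopen$ in the warmup, or more generally the value from an observation depends only on its latent state), so the \emph{true} values agree: $V^{\pi_{\empobs,\empobs'}}(\empobs) = V^{\pi_{\empobs,\empobs'}}(\empobs')$. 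Combining via the triangle inequality, $\abs{\wh V^{\pi_{\empobs,\empobs'}}(\empobs) - \wh V^{\pi_{\empobs,\empobs'}}(\empobs')} \le \abs{\wh V^{\pi_{\empobs,\empobs'}}(\empobs) - V^{\pi_{\empobs,\empobs'}}(\empobs)} + \abs{V^{\pi_{\empobs,\empobs'}}(\empobs') - \wh V^{\pi_{\empobs,\empobs'}}(\empobs')} \le 2\taudec$, which closes the argument. Taking the maximum over $a$ and $\pi$ then yields the stated bound.

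The main thing to be careful about — really the only subtlety — is the claim that the true value functions at $\empobs$ and $\empobs'$ coincide when $\optdec(\empobs) = \optdec(\empobs')$. This needs the fact that in a Block MDP the value $V^\pi(x)$ of a policy depends on $x$ only through $\optdec(x)$ \emph{provided} $\pi$ acts identically on all observations of that latent state (or is open-loop from layer $h+1$ onward), which is exactly why test policies are drawn from $\cA \circ \Pi_{h+1:H}$ with $\Pi_{h+1:H}$ restricted appropriately. I expect no real obstacle here since this is a direct repackaging of the machinery already set up for \detdecoder{} in the deterministic case; the lemma is essentially a bookkeeping step used downstream to show connected components of $\gobs$ have near-constant $Q$-values.
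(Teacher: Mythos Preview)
Your proposal is correct and follows essentially the same approach as the paper: use the maximally distinguishing property of $\pi_{\empobs,\empobs'}$ to reduce to a single test policy, invoke accuracy to pass to true values, and use the open-loop-at-the-first-layer structure together with $\optdec(\empobs)=\optdec(\empobs')$ to get $V^{\pi_{\empobs,\empobs'}}(\empobs) = V^{\pi_{\empobs,\empobs'}}(\empobs')$, then triangle inequality. The paper's proof is identical in substance.
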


\begin{proof}
Denote $\pi_{\empobs, \empobs'} = \argmax_{\pi \in \cA \circ \Pi}~\abs{\wh{V}^\pi(\empobs) - \wh{V}^\pi(\empobs')}$ to be the test policy for the pair $\empobs, \empobs' \in \cXR$. By accuracy of the test policy we know that 
\begin{align*}
    \abs*{V^{\pi_{\empobs, \empobs'}}(\empobs) - \wh{V}^{\pi_{\empobs, \empobs'}}(\empobs)} &\le \taudec \quad \text{and} \quad    \abs*{V^{\pi_{\empobs, \empobs'}}(\empobs') - \wh{V}^{\pi_{\empobs, \empobs'}}(\empobs')} \le \taudec.
\end{align*}
Furthermore since $\empobs, \empobs'$ are observations emitted from the same latent state and $\pi_{\empobs, \empobs'}$ is open loop at layer $h+1$, we have $V^{\pi_{\empobs, \empobs'}}(\empobs) = V^{\pi_{\empobs, \empobs'}}(\empobs')$. Therefore
\begin{align*}
    \max_{a \in \cA, \pi \in \Pi}~\abs*{\wh{Q}^\pi(\empobs,a) - \wh{Q}^\pi(\empobs',a)} &= \abs*{\wh{Q}^{\pi_{\empobs, \empobs'}}(\empobs, \pi_{\empobs, \empobs'}) - \wh{Q}^{\pi_{\empobs, \empobs'}}(\empobs', \pi_{\empobs, \empobs'})} \le 2\taudec.
\end{align*}
This concludes the proof of the lemma.
\end{proof}

\begin{lemma}\label{lem:bound2}
Fix any $x_l \in \cXL$. If $\empobs_r, \empobs_r' \in \cT[x_l]$, then $\max_{a \in \cA, \pi \in \Pi}~\abs{\wh{Q}^{\pi}(\empobs_r, a) - \wh{Q}^{\pi}(\empobs_r', a)} \le 2\taudec + 4 \eps$.
\end{lemma}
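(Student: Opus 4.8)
\textbf{Proof Plan for \pref{lem:bound2}.}

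The plan is to reduce this statement to \pref{lem:bound1} via the triangle inequality by routing through the estimated value function at $x_l$ itself. First I would observe that since $\empobs_r, \empobs_r' \in \cT[x_l]$, by the definition of the decoder function $\cT$ used in \stochdecoder{} (see \pref{alg:stochastic-decoder-v2}), for every competitor $\empobs''$ we have the inequality defining membership; in particular, taking $\empobs'' = \empobs_r'$ in the test for $\empobs_r$ and $\empobs'' = \empobs_r$ in the test for $\empobs_r'$, we get
\begin{align*}
    \abs*{\vestarg{x_l}{\pitest{\empobs_r}{\empobs_r'}} - \wh{V}^{\pitest{\empobs_r}{\empobs_r'}}(\empobs_r)} \le \taudec + 2\eps \quad \text{and} \quad \abs*{\vestarg{x_l}{\pitest{\empobs_r}{\empobs_r'}} - \wh{V}^{\pitest{\empobs_r}{\empobs_r'}}(\empobs_r')} \le \taudec + 2\eps,
\end{align*}
using that $\pitest{\empobs_r}{\empobs_r'} = \pitest{\empobs_r'}{\empobs_r}$ as a set-indexed test policy (the maximally distinguishing property is symmetric in the pair). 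Combining these two via the triangle inequality eliminates the Monte Carlo estimate $\vestarg{x_l}{\cdot}$ and yields $\abs{\wh{V}^{\pitest{\empobs_r}{\empobs_r'}}(\empobs_r) - \wh{V}^{\pitest{\empobs_r}{\empobs_r'}}(\empobs_r')} \le 2\taudec + 4\eps$.

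Next I would invoke the maximally distinguishing property of the test policy: $\pitest{\empobs_r}{\empobs_r'} = \argmax_{\pi \in \cA \circ \Pi_{h+1:H}} \abs{\wh{V}^\pi(\empobs_r) - \wh{V}^\pi(\empobs_r')}$. Hence for \emph{any} $a \in \cA$ and $\pi \in \Pi_{h+1:H}$, the policy $a \circ \pi$ is a feasible competitor in this argmax, so
\begin{align*}
    \abs*{\wh{Q}^\pi(\empobs_r, a) - \wh{Q}^\pi(\empobs_r', a)} = \abs*{\wh{V}^{a\circ\pi}(\empobs_r) - \wh{V}^{a\circ\pi}(\empobs_r')} \le \abs*{\wh{V}^{\pitest{\empobs_r}{\empobs_r'}}(\empobs_r) - \wh{V}^{\pitest{\empobs_r}{\empobs_r'}}(\empobs_r')} \le 2\taudec + 4\eps,
\end{align*}
where the first equality uses that $\wh{Q}^\pi(\empobs, a) = \wh{V}^{a \circ \pi}(\empobs)$ by definition of the $Q$-function on the emulator $\estmdp$. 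Taking the maximum over $a$ and $\pi$ gives the claim.

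The argument is essentially bookkeeping with the definitions, so I do not anticipate a genuine obstacle; the one point requiring slight care is ensuring the Monte Carlo accuracy term is accounted for correctly. The estimate $\vestarg{x_l}{\cdot}$ appears in $\cT[x_l]$'s definition with slack $\taudec + 2\eps$, and this $2\eps$ already absorbs the Monte Carlo error from \pref{lem:monte-carlo} (which holds on event $\eventdec$); thus I should state the lemma under the implicit assumption that $\eventdec$ holds, matching how it is used downstream in \pref{lem:main-induction}. No appeal to $\eventemulator$ or to the structural biclique lemmas is needed here — the bound follows purely from the graph construction and the two defining properties (accuracy and maximal distinguishing) of the test policies, exactly paralleling the simpler \pref{lem:bound1} but with the extra $4\eps$ coming from the two uses of the $2\eps$ slack in the $\cT$-membership test.
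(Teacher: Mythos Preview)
Your proposal is correct and follows essentially the same approach as the paper's proof: use membership in $\cT[x_l]$ for both $\empobs_r$ and $\empobs_r'$ with respect to the test policy $\pitest{\empobs_r}{\empobs_r'}$, triangle-inequality away the common Monte Carlo estimate, then invoke the maximally distinguishing property of test policies. Your added remarks on the symmetry $\pitest{\empobs_r}{\empobs_r'} = \pitest{\empobs_r'}{\empobs_r}$ and the identity $\wh{Q}^\pi(\empobs,a) = \wh{V}^{a\circ\pi}(\empobs)$ make explicit what the paper leaves implicit; note that your opening phrase about ``reducing to \pref{lem:bound1}'' is a slight misstatement, since you do not actually invoke that lemma but rather reuse its proof template.
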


\begin{proof}
By definition of $\cT[x_l]$ we have
\begin{align*}
    \abs*{\vestarg{x_l}{\pi_{\empobs_r, \empobs_r'}} - \wh{V}^{\pi_{\empobs_r, \empobs_r'}}(\empobs_r) } \le \taudec+2\eps \quad\text{and}\quad \abs*{\vestarg{x_l}{\pi_{\empobs_r, \empobs_r'}} - \wh{V}^{\pi_{\empobs_r, \empobs_r'}}(\empobs_r') } \le \taudec+2\eps.
\end{align*}
Using the fact that test policies are maximally distinguishing we have
    \begin{align*}
        \max_{a \in \cA, \pi \in \Pi}~\abs*{\wh{Q}^{\pi}(\empobs_r, a) - \wh{Q}^{\pi}(\empobs_r', a)} = \abs*{\wh{V}^{\pi_{\empobs_r, \empobs_r'}}(\empobs_r) - \wh{V}^{\pi_{\empobs_r, \empobs_r'}}(\empobs_r')} \le 2\taudec + 4 \eps.
    \end{align*}
This proves the lemma.
\end{proof}

\begin{lemma}[Bounded Width of $\cc$]\label{lem:width-of-cc}
For any connected component $\cc \in \crl{\cc_j}_{j \ge 1}$ in $\gobs$ we have 
\begin{align*}
    \max_{\empobs, \empobs' \in \ccR} \max_{a \in \cA, \pi \in \Pi}~\abs{ \wh{Q}^\pi(\empobs, a) - \wh{Q}^\pi(\empobs', a) } \le 4S\taudec + 8S\eps.
\end{align*}
\end{lemma}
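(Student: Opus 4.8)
The plan is to replace the ``width'' of a connected component, measured along arbitrary paths in $\gobs$ (which could be as long as $\abs{\ccR}$), by its width measured at the level of \emph{latent states}, of which there are at most $S$. Fix $\empobs, \empobs' \in \ccR$. First I would introduce an auxiliary graph $\mc{H}$ on vertex set $\cS[\ccR]$, the latent states represented among the right-observations of $\cc$, placing an edge between distinct $s,t$ whenever there exist $\empobs_s, \empobs_t \in \ccR$ with $\optdec(\empobs_s) = s$, $\optdec(\empobs_t) = t$, and a common left-neighbor $x_l \in \ccL$ (i.e.\ $\empobs_s, \empobs_t \in \cT[x_l]$).

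The key structural step is to show that $\optdec(\empobs)$ and $\optdec(\empobs')$ lie in the same connected component of $\mc{H}$. I would take any path $\empobs = x_r^{(0)} - x_l^{(1)} - x_r^{(1)} - \cdots - x_l^{(k)} - x_r^{(k)} = \empobs'$ in $\gobs$; it exists since $\empobs,\empobs'$ are in the component $\cc$, and necessarily every $x_r^{(i)} \in \ccR$ and every $x_l^{(i)} \in \ccL$. Since each $x_l^{(i)}$ is joined to both $x_r^{(i-1)}$ and $x_r^{(i)}$, we have $x_r^{(i-1)}, x_r^{(i)} \in \cT[x_l^{(i)}]$, so the consecutive latent states $\optdec(x_r^{(i-1)})$ and $\optdec(x_r^{(i)})$ are either equal or $\mc{H}$-adjacent; collapsing repetitions yields a walk in $\mc{H}$, hence a path of length $m \le \abs{\cS[\ccR]} - 1 \le S-1$, say $\optdec(\empobs) = s_0, s_1, \dots, s_m = \optdec(\empobs')$.

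Next I would realize this $\mc{H}$-path by an explicit chain of right-observations and bound each link. For each edge $(s_{i-1}, s_i)$ of the path fix a witness $x_l^{(i)} \in \ccL$ together with $u_i, v_i \in \cT[x_l^{(i)}]$ with $\optdec(u_i) = s_{i-1}$ and $\optdec(v_i) = s_i$; by \pref{lem:bound2}, $\max_{a \in \cA, \pi \in \Pi} \abs{\wh{Q}^\pi(u_i,a) - \wh{Q}^\pi(v_i,a)} \le 2\taudec + 4\eps$. Stitching the chain $\empobs \to u_1 \to v_1 \to u_2 \to \cdots \to v_m \to \empobs'$, where each step between observations sharing a latent state ($\empobs$ with $u_1$; $v_i$ with $u_{i+1}$; $v_m$ with $\empobs'$) is controlled by \pref{lem:bound1} at cost $2\taudec$, the triangle inequality over $\wh{Q}$-values gives $\max_{a,\pi}\abs{\wh{Q}^\pi(\empobs,a) - \wh{Q}^\pi(\empobs',a)} \le (m+1)\cdot 2\taudec + m\cdot(2\taudec+4\eps) \le 4S\taudec + 8S\eps$; the degenerate case $\optdec(\empobs) = \optdec(\empobs')$ is just \pref{lem:bound1}.

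The main obstacle is precisely this structural reduction: it is tempting (and wrong) to bound the diameter of $\cc$ inside $\gobs$ directly, and the insight is that one should quotient to latent states, where \pref{lem:strong-validity-obs-decoding} guarantees that decoding behaves coherently on same-latent observations and \pref{lem:bound1} makes movement within a latent class essentially free of amplification. Once the question is posed on $\mc{H}$, the bound $\abs{\mc{H}} \le S$ is immediate and the remainder is bookkeeping with the triangle inequality; only minor care is needed to handle empty $\ccL$ or singleton $\ccR$, for which the statement is trivial.
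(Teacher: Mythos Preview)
Your proposal is correct and follows essentially the same approach as the paper: both reduce the (potentially long) path in $\gobs$ to a path of length $O(S)$ at the level of latent states, then apply \pref{lem:bound1} for same-latent hops and \pref{lem:bound2} for cross-latent hops. The only cosmetic difference is packaging: the paper shortens the observation sequence in place by a greedy ``jump to the last same-latent index'' rule, whereas you explicitly build the quotient graph $\mc{H}$ on $\cS[\ccR]$ and take a shortest path there; both yield at most $S$ same-latent steps and at most $S$ cross-latent steps, and the final arithmetic is identical.
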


\begin{proof}
Let us take any $\empobs, \empobs' \in \ccR$. Since $\empobs, \empobs'$ belong to the same connected component, there exists a sequence of observations $\mathsf{seq} = (\empobs_1 = \empobs, \dots, \empobs_n = \empobs') \in (\ccR)^n$ such that for every consecutive pair $\empobs_i, \empobs_{i+1}$ there exists some $x_l \in \ccL$ such that $\empobs_i, \empobs_{i+1} \in \cT[x_l]$. 

Fix any $a \in \cA, \pi \in \Pi$. Now we will bound $\abs{ \wh{Q}^\pi(\empobs, a) - \wh{Q}^\pi(\empobs', a) }$. We construct an auxiliary sequence  $\wt{\mathsf{seq}} = (\wt{\empobs}_1, \dots, \wt{\empobs}_k)$ for some $k \le n$ as follows:
\begin{itemize}
    \item Initialize $\wt{\mathsf{seq}} = \emptyset$.
    \item For $i = 1, \cdots, n$:
    \begin{itemize}
        \item Add $\empobs_i$ to the end of $\wt{\mathsf{seq}}$.
        \item If there exists $\empobs_j$ with $j > i$ such that $\optdec(\empobs_i) = \optdec(\empobs_j)$ then set $i \gets j$.
    \end{itemize}
\end{itemize}
Observe that $\wt{\mathsf{seq}}$ satisfies the following conditions:
\begin{itemize}
    \item $\wt{\empobs}_1 = \empobs$ and $\wt{\empobs}_k = \empobs'$.
    \item  For every $s \in \cS$, at most two observations $\wt{\empobs}, \wt{\empobs}' \in \mathrm{supp}(\emission(s)) \cap \ccR$ are found in $\wt{\mathsf{seq}}$, and these observations must appear sequentially.
    \item For any $i \in [k-1]$, if $\optdec(\wt{\empobs}_i) \ne \optdec(\wt{\empobs}_{i+1})$ then there exists some $x_l \in \ccL$ such that $\wt{\empobs}_i, \wt{\empobs}_{i+1} \in \cT[x_l]$.
\end{itemize}
Now we can apply triangle inequality to $\wt{\mathsf{seq}}$:
\begin{align*}
    \abs*{ \wh{Q}^\pi(\empobs, a) - \wh{Q}^\pi(\empobs', a) } \le \sum_{i=1}^k \abs*{ \wh{Q}^\pi(\wt{\empobs}_i, a) - \wh{Q}^\pi(\wt{\empobs}_{i+1}, a) } \le 4S\taudec + 8S\eps.
\end{align*}
The final bound uses the aforementioned properties of $\wt{\mathsf{seq}}$, as well as  \pref{lem:bound1} and \pref{lem:bound2} to handle the individual terms in the summation. This completes the proof of \pref{lem:width-of-cc}.
\end{proof}

\subsubsection{Structural Properties of Projected Measures}\label{app:projected-measures}
Now we will prove several lemmas regarding the projected measure of the empirical latent distribution
\begin{align*}
    \samplelatp = \frac{1}{\abs{\cXL}} \sum_{x \in \cXL} \delta_{\optdec(x)}.
\end{align*}
which is sampled in \pref{line:sample-decode-dataset} of a single call to \stochdecoder{}.

\begin{lemma}\label{lem:proj-cc-expression}
    Under the preconditions of \pref{lem:main-induction}, for any connected component $\cc$ of $\gobs$:
    \begin{align*}
        \projR(\samplelatp)(\ccR) =  \sum_{s \in \epsPRS{} \cap \cS[\ccL] \cap \cS[\ccR]} \frac{n_{s}[\cXL]}{\abs{\cXL}}  = \sum_{s \in \epsPRS{} \cap \cS[\ccL] \cap \cS[\ccR]} \samplelatp(s).
    \end{align*}
\end{lemma}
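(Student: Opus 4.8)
\textbf{Proof plan for Lemma~\ref{lem:proj-cc-expression}.}
The plan is to unpack the definition of the projected measure (\pref{def:projected-measure}) applied to the empirical latent distribution $\samplelatp$, and then show that the sum over observations $\empobs \in \ccR$ collapses to a sum over exactly those pushforward-reachable latent states that are represented on \emph{both} sides of the connected component $\cc$. First I would write, directly from \pref{def:projected-measure},
\begin{align*}
    \projR(\samplelatp)(\ccR) = \sum_{\empobs \in \ccR} \samplelatp(\optdec(\empobs)) \cdot \frac{\ind{\optdec(\empobs) \in \epsPRS{}}}{n_{\optdec(\empobs)}[\estmdpobsspace{h+1}]},
\end{align*}
and then regroup the sum by latent state: for each $s \in \epsPRS{}$ the observations $\empobs \in \ccR$ with $\optdec(\empobs) = s$ contribute $\samplelatp(s)/n_s[\estmdpobsspace{h+1}]$ each, so their total contribution is $\samplelatp(s) \cdot \abs{\crl{\empobs \in \ccR : \optdec(\empobs) = s}}/n_s[\estmdpobsspace{h+1}]$.

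The crux is to argue that this count $\abs{\crl{\empobs \in \ccR : \optdec(\empobs) = s}}$ equals $n_s[\estmdpobsspace{h+1}]$ (the full number of observations from $s$ in $\cXR = \estmdpobsspace{h+1}$) precisely when $s \in \cS[\ccL] \cap \cS[\ccR]$, and is zero otherwise. The ``otherwise'' direction is immediate: if $s \notin \cS[\ccR]$ then no observation in $\ccR$ decodes to $s$. For the main direction, I would invoke the biclique structure: by \pref{lem:strong-validity-obs-decoding}, every $x_l \in \cXL$ with $\optdec(x_l) = s$ is connected in $\gobs$ to \emph{all} of $\crl{\empobs \in \cXR : \optdec(\empobs) = s}$. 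Hence if $s \in \cS[\ccL]$, pick any such $x_l \in \ccL$; then the entire set $\crl{\empobs \in \cXR : \optdec(\empobs) = s}$ lies in the same connected component as $x_l$, i.e. inside $\ccR$, so the count is the full $n_s[\cXR]$. Conversely, if some $\empobs \in \ccR$ has $\optdec(\empobs) = s$ but $s \notin \cS[\ccL]$, I should rule this out: any such $\empobs$ is connected (through a path in $\gobs$) to some $x_l \in \ccL$, and I would trace the path — but actually the cleaner route is \pref{lem:eps-reachable-set-inclusion}, which gives $\epsPRS{} \cap \cS[\ccL] \subseteq \epsPRS{} \cap \cS[\ccR]$, combined with the observation that for $s \in \epsPRS{}$ the biclique forces $\cS[\ccR] \cap \epsPRS{} \subseteq \cS[\ccL]$ as well (since an $s$-observation in $\ccR$ has an incident edge only to $\cXL$-vertices with the same label by \pref{lem:strong-validity-obs-decoding}, and at least one such $\cXL$-vertex exists by \pref{lem:sampling-states} / $\eventemulator$, placing it in $\ccL$). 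So for pushforward-reachable $s$, membership in $\cS[\ccL]$ and in $\cS[\ccR]$ coincide.

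Putting these together, the regrouped sum becomes
\begin{align*}
    \projR(\samplelatp)(\ccR) = \sum_{s \in \epsPRS{} \cap \cS[\ccL] \cap \cS[\ccR]} \samplelatp(s) \cdot \frac{n_s[\cXR]}{n_s[\estmdpobsspace{h+1}]} = \sum_{s \in \epsPRS{} \cap \cS[\ccL] \cap \cS[\ccR]} \samplelatp(s),
\end{align*}
using $\cXR = \estmdpobsspace{h+1}$ so the ratio is $1$. Finally, substituting $\samplelatp(s) = n_s[\cXL]/\abs{\cXL}$ (the definition of the empirical latent distribution from the dataset $\cD = \cXL$) yields the first claimed expression. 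The main obstacle I anticipate is the bookkeeping around which set plays the role of $\cXR$: one must be careful that $\cXR$ in \stochdecoder{} is literally $\estmdpobsspace{h+1}$, so that $n_s[\cXR] = n_s[\estmdpobsspace{h+1}]$ and the normalization cancels exactly; and one must correctly invoke $\eventemulator$ (via \pref{lem:sampling-states}) to guarantee that every $s \in \epsPRS{}$ actually has representatives in $\estmdpobsspace{h+1}$, which is what makes the biclique argument go through. Everything else is routine reindexing of finite sums.
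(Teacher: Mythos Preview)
Your overall architecture matches the paper's: unpack \pref{def:projected-measure}, regroup by latent state, and use the biclique structure to collapse the count. The direction ``$s \in \cS[\ccL] \Rightarrow n_s[\ccR] = n_s[\cXR]$'' is fine. The gap is in how you dispose of the case $s \in \epsPRS{} \cap \cS[\ccR]$ but $s \notin \cS[\ccL]$.

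Both supporting claims you make there are incorrect. First, \pref{lem:strong-validity-obs-decoding} gives only the inclusion $\crl{\empobs_r : \optdec(\empobs_r) = \optdec(x_l)} \subseteq \cT[x_l]$; it does \emph{not} say that an $s$-observation in $\ccR$ has edges only to same-label $\cXL$-vertices, and in general $\cT[x_l]$ can contain observations of many different labels. Second, \pref{lem:sampling-states} concerns $\estmdpobsspace{h+1} = \cXR$, not $\cXL = \cD$; there is no guarantee that the transition dataset $\cD$ contains any $s$-observation even when $s \in \epsPRS{}$ (the transition $P(\cdot\mid x_h,a_h)$ may place no mass on $s$). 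So the case $s \in \epsPRS{} \cap \cS[\ccR] \setminus \cS[\ccL]$ cannot be ruled out, and in that case the count $n_s[\ccR]$ need be neither $0$ nor $n_s[\cXR]$.

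The fix---and this is exactly what the paper does---is not to control the count at all in this case, but to observe that the prefactor $\samplelatp(s) = n_s[\cXL]/\abs{\cXL}$ already vanishes. Concretely: if $s \in \cS[\ccR]$ and $s \in \cS[\cXL]$, then the biclique (\pref{lem:biclique}) forces all $s$-observations into $\cc$, so $s \in \cS[\ccL]$; contrapositively, $s \in \cS[\ccR]\setminus \cS[\ccL]$ implies $s \notin \cS[\cXL]$ and hence $\samplelatp(s)=0$. Thus the regrouped sum can be first restricted from $\epsPRS{}$ to $\epsPRS{} \cap \cS[\ccR]$ (dropping zero-count terms), then to $\epsPRS{} \cap \cS[\cXL] \cap \cS[\ccR]$ (dropping zero-$\samplelatp$ terms), and the biclique gives $\cS[\cXL] \cap \cS[\ccR] = \cS[\ccL] \cap \cS[\ccR]$. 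On this restricted index set the biclique also gives $n_s[\ccR] = n_s[\cXR]$ and the ratio cancels as you wrote.
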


\begin{proof}
We compute that
\begin{align*}
    \projR(\samplelatp)(\ccR) &= \sum_{\empobs \in \ccR} \projR(\samplelatp)(\empobs) \\
    &= \sum_{\empobs \in \ccR} \frac{n_{\optdec(\empobs)}[\cXL]}{\abs{\cXL}} \cdot \frac{\ind{\optdec(\empobs) \in \epsPRS{}}}{n_{\optdec(\empobs)}[\cXR]}\\
    &= \sum_{s \in \epsPRS{}} \frac{n_{s}[\cXL]}{\abs{\cXL}} \sum_{\empobs \in \ccR}  \frac{\ind{\optdec(\empobs) = s}}{n_{s}[\cXR]} \\
    &\overset{(i)}{=} \sum_{s \in \epsPRS{} \cap \cS[\ccR]} \frac{n_{s}[\cXL]}{\abs{\cXL}} \sum_{\empobs \in \ccR}  \frac{\ind{\optdec(\empobs) = s}}{n_{s}[\cXR]} \\
    &\overset{(ii)}{=}  \sum_{s \in \epsPRS{} \cap \cS[\cXL] \cap \cS[\ccR]} \frac{n_{s}[\cXL]}{\abs{\cXL}} \sum_{\empobs \in \ccR}  \frac{\ind{\optdec(\empobs) = s}}{n_{s}[\cXR]}.
\end{align*}
For $(i)$, observe that if $s \notin \cS[\ccR]$, then the sum $\sum_{\empobs \in \ccR}  \frac{\ind{\optdec(\empobs) = s}}{n_{s}[\cXR]} = 0$. For $(ii)$, we use the fact that $n_s[\cXL] = 0$ if $s \notin \cS[\cXL]$. From here, we apply the biclique lemma (\pref{lem:biclique}). The biclique lemma implies that if $s \in \cS[\ccR]$, then $\crl{x \in \cXL: \optdec(x) = s} \subseteq \ccL$, and therefore $\cS[\cXL] \cap \cS[\ccR] = \cS[\ccL] \cap \cS[\ccR]$. Furthermore for any $s \in \cS[\ccL] \cap \cS[\ccR]$, all of the observations $\crl{\empobs \in \cXR: \optdec(\empobs) = s} \subseteq \ccR$, so $n_s[\cXR] = n_s[\ccR]$. Thus we can continue the calculation as
\begin{align*}
    \projR(\samplelatp)(\ccR) &=  \sum_{s \in \epsPRS{} \cap \cS[\ccL] \cap \cS[\ccR]} \frac{n_{s}[\cXL]}{\abs{\cXL}} \sum_{\empobs \in \ccR}  \frac{\ind{\optdec(\empobs) = s}}{n_{s}[\cXR]}\\
    &= \sum_{s \in \epsPRS{} \cap \cS[\ccL] \cap \cS[\ccR]} \frac{n_{s}[\cXL]}{\abs{\cXL}} \sum_{\empobs \in \ccR}  \frac{\ind{\optdec(\empobs) = s}}{n_{s}[\ccR]} \\
    &= \sum_{s \in \epsPRS{} \cap \cS[\ccL] \cap \cS[\ccR]} \frac{n_{s}[\cXL]}{\abs{\cXL}} = \sum_{s \in \epsPRS{} \cap \cS[\ccL] \cap \cS[\ccR]} \samplelatp(s).
\end{align*} 
This concludes the proof of \pref{lem:proj-cc-expression}.
\end{proof}

\begin{corollary}\label{corr:estimating-projR} Under the preconditions of \pref{lem:main-induction}, then
$\sum_{\cc \in \crl{\cc_j}} \prn{ \frac{\abs{\ccL}}{\abs{\cXL}} 
 - \projR(\samplelatp)(\ccR) } \in [0,2\eps]$.
\end{corollary}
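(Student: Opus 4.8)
\textbf{Proof plan for Corollary~\ref{corr:estimating-projR}.} The goal is to show that $\sum_{\cc} \bigl(\frac{|\ccL|}{|\cXL|} - \projR(\samplelatp)(\ccR)\bigr)$ lies in $[0,2\eps]$. Both sums decompose over connected components, and since $\{\ccL\}_{\cc}$ partitions $\cXL$ and $\{\ccR\}_{\cc}$ partitions $\cXR$, we have $\sum_{\cc} \frac{|\ccL|}{|\cXL|} = 1$. So the quantity in question equals $1 - \projR(\samplelatp)(\cXR)$, and the plan is to control $\projR(\samplelatp)(\cXR)$ from above by $1$ and from below by $1 - 2\eps$.

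The main tool is Lemma~\ref{lem:proj-cc-expression}, which gives the clean identity $\projR(\samplelatp)(\ccR) = \sum_{s \in \epsPRS{} \cap \cS[\ccL] \cap \cS[\ccR]} \samplelatp(s)$. Summing over all connected components $\cc$, and using that each latent state $s$ appearing in $\cS[\cXL]$ lies in exactly one $\ccL$ (and likewise for $\cXR$), I would argue that the sets $\epsPRS{} \cap \cS[\ccL] \cap \cS[\ccR]$ over all $\cc$ are pairwise disjoint, so that $\projR(\samplelatp)(\cXR) = \sum_{s \in \epsPRS{} \cap \cS[\cXL] \cap \cS[\cXR]}^{\phantom{x}} \samplelatp(s)$. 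To make the index set cleaner: if $s \in \epsPRS{} \cap \cS[\cXL]$, then by Lemma~\ref{lem:sampling-states} (which holds under $\eventemulator$) $s$ is represented in $\cXR = \estmdpobsspace{h+1}$, hence $s \in \cS[\cXR]$; so $\epsPRS{} \cap \cS[\cXL] \cap \cS[\cXR] = \epsPRS{} \cap \cS[\cXL]$. Therefore $\projR(\samplelatp)(\cXR) = \sum_{s \in \epsPRS{} \cap \cS[\cXL]} \samplelatp(s) = \samplelatp(\epsPRS{})$, the total empirical mass on pushforward-reachable states.

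From here the two-sided bound is immediate. Since $\samplelatp$ is a probability distribution on $\cS$, $\samplelatp(\epsPRS{}) \le 1$, giving the upper endpoint $0$ of the claimed interval. For the lower bound, $1 - \samplelatp(\epsPRS{}) = \samplelatp((\epsPRS{})^c)$ is exactly the proportion of observations in $\cD = \cXL$ whose latent state is not $\eps$-pushforward-reachable; Corollary~\ref{corr:sampling-transitions} (valid under $\eventdec$, since $\cD$ is drawn to $\ell_1$-accuracy $\eps$ of $\optlatp(\cdot\mid x_h,a_h)$ and $\optlatp$ places mass at most $\eps$ on $(\epsPRS{h+1})^c$ by the definition of $\epsPRS{}$) bounds this by $2\eps$. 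Combining, $1 - \projR(\samplelatp)(\cXR) \in [0, 2\eps]$, which is the assertion.

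The only slightly delicate point — and the step I would write most carefully — is the disjointness/partition bookkeeping when summing Lemma~\ref{lem:proj-cc-expression} over components: one must confirm that no latent state is double-counted across different $\cc$'s, which follows because the $\ccL$'s (equivalently the $\ccR$'s, via the biclique structure of Lemma~\ref{lem:biclique} ensuring observations of a common latent state lie in one component) are genuinely disjoint. Everything else is a one-line consequence of the identity plus Corollary~\ref{corr:sampling-transitions}.
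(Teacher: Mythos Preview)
Your proposal is correct and uses essentially the same ingredients as the paper's proof: Lemma~\ref{lem:proj-cc-expression}, the biclique structure (Lemma~\ref{lem:biclique}), the fact that $\epsPRS{}$-states are represented in $\cXR$ (the paper routes this through Lemma~\ref{lem:eps-reachable-set-inclusion}, you invoke Lemma~\ref{lem:sampling-states} directly), and Corollary~\ref{corr:sampling-transitions}. The only organizational difference is that the paper computes the per-component difference $\frac{|\ccL|}{|\cXL|} - \projR(\samplelatp)(\ccR) = \sum_{s \in (\epsPRS{})^c \cap \cS[\ccL]} n_s[\ccL]/|\cXL|$ first and then sums, whereas you sum immediately and work with the total mass $1 - \samplelatp(\epsPRS{})$; both arrive at the same final identity.
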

\begin{proof}
For any $\cc$ we have
\begin{align*}
    \hspace{2em}&\hspace{-2em} \frac{\abs{\ccL}}{\abs{\cXL}} - \projR(\samplelatp)(\ccR) \\
    &= \frac{\abs{\ccL}}{\abs{\cXL}} - \sum_{s \in \epsPRS{} \cap \cS[\ccL] \cap \cS[\ccR]} \frac{n_{s}[\cXL]}{\abs{\cXL}} &\text{(\pref{lem:proj-cc-expression})} \\
    &= \frac{\abs{\ccL}}{\abs{\cXL}} - \sum_{s \in \epsPRS{} \cap \cS[\ccL]} \frac{n_{s}[\cXL]}{\abs{\cXL}} &\text{(\pref{lem:eps-reachable-set-inclusion})} \\
    &= \sum_{s \in \epsPRS{} \cap \cS[\ccL]} \frac{n_s[\ccL]}{\abs{\cXL}} + \sum_{s \in (\epsPRS{})^c \cap \cS[\ccL]} \frac{n_s[\ccL]}{\abs{\cXL}} - \sum_{s \in \epsPRS{} \cap \cS[\ccL]} \frac{n_{s}[\cXL]}{\abs{\cXL}}  \\
    &= \sum_{s \in (\epsPRS{})^c \cap \cS[\ccL]} \frac{n_s[\ccL]}{\abs{\cXL}}.
\end{align*}
The last equality uses the fact that by \pref{lem:eps-reachable-set-inclusion}, $s \in \epsPRS{} \cap \cS[\ccL] \Rightarrow s \in \cS[\ccR]$, so in particular by \pref{lem:biclique} we have $\crl{x\in \cXL: \optdec(x) = s} \subseteq \ccL$, so therefore $n_s[\cXL] = n_s[\ccL]$. 

Summing over all $\cc$ and applying \pref{corr:sampling-transitions} we get that
\begin{align*}
    \sum_{\cc \in \crl{\cc_j}} \frac{\abs{\ccL}}{\abs{\cXL}} - \projR(\samplelatp)(\ccR) = \sum_{\bbC \in \crl{\bbC_j}} \sum_{s \in (\epsPRS{})^c \cap \cS[\ccL]} \frac{n_s[\ccL]}{\abs{\cXL}} = \sum_{s \in (\epsPRS{})^c} \frac{n_s[\cXL]}{\abs{\cXL}} \in [0, 2\eps].
\end{align*}
This proves \pref{corr:estimating-projR}.
\end{proof}

\begin{lemma}\label{lem:relating-proj-to-empirical}
Under the preconditions of \pref{lem:main-induction}, for every $\pi \in \Pi$ and $\cc \in \crl{\cc_j}$: 
\begin{align*}
    \max_{a \in \cA}~\abs*{ \brk*{ \pi\push \projR(\samplelatp)(\cdot \mid \ccR)}(a) - \brk*{\pi \push \unif(\ccL)}(a) } \le \frac{\eps}{A} +  K \cdot \sqrt{\frac{S \log \tfrac{SA \abs{\Pi}}{\delta}}{\nrch[\ccL]}}  + \frac{\nurch[\ccL]}{\nrch[\ccL]},
\end{align*}
where $K>0$ is an absolute constant.
\end{lemma}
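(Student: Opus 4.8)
The goal is to compare, on a fixed connected component $\cc$, the pushforward of a policy $\pi$ under two distributions over observations: the uniform distribution on $\ccL$, and the conditional of the projected empirical latent measure $\projR(\samplelatp)$ restricted to $\ccR$. The plan is to pass through the emission distributions $\emission(s)$ for the latent states $s$ appearing in $\cc$ as an intermediary, exploiting the biclique structure established in \pref{lem:biclique} and \pref{lem:eps-reachable-set-inclusion}. First I would split each of the two pushforward distributions according to the latent label $s$. For $\unif(\ccL)$, every $x_l \in \ccL$ has a latent label $\optdec(x_l)$, so $\pi\push\unif(\ccL) = \sum_{s \in \cS[\ccL]} \tfrac{n_s[\ccL]}{\abs{\ccL}} \cdot \pi\push\unif(\crl{x \in \ccL : \optdec(x) = s})$; separate the $\epsPRS{}$-reachable states from the non-reachable ones, the latter contributing total mass $\nurch[\ccL]/\abs{\ccL} \le \nurch[\ccL]/\nrch[\ccL]$ (since $\abs{\ccL} \ge \nrch[\ccL]$), which accounts for the last error term.

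Next I would handle the reachable part. For $s \in \epsPRS{} \cap \cS[\ccL]$, \pref{lem:eps-reachable-set-inclusion} gives $s \in \cS[\ccR]$, and the biclique lemma gives $\crl{x \in \cXL : \optdec(x) = s} \subseteq \ccL$ and $\crl{\empobs \in \cXR : \optdec(\empobs) = s} \subseteq \ccR$. Using the expression for $\projR(\samplelatp)(\ccR)$ from \pref{lem:proj-cc-expression}, the conditional $\projR(\samplelatp)(\cdot \mid \ccR)$ restricted to observations with label $s$ is exactly $\unif(\crl{\empobs \in \ccR : \optdec(\empobs) = s})$ reweighted by $\samplelatp(s)/\projR(\samplelatp)(\ccR)$, and similarly the $s$-block of $\unif(\ccL)$ (restricted to reachable states, renormalized) has weight $n_s[\cXL]/\nrch[\ccL] = \samplelatp(s) \cdot \abs{\cXL}/\nrch[\ccL]$. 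So both conditionals are, up to matching mixture weights $\propto \samplelatp(s)$, mixtures of $\pi\push\unif(\crl{\cdot : \optdec(\cdot) = s})$ over the sample sets drawn from $\emission(s)$ — one set being the left-observations, one the right-observations. Both of these are i.i.d.\ draws from $\emission(s)$, so I would invoke \pref{lem:pushforward-concentration} (on the $\mu_{h+1}$ samples forming $\estmdpobsspace{h+1} \supseteq \cXR$) and \pref{lem:pushforward-concentration-transition} (on the $\cD = \cXL$ samples) to conclude that each empirical pushforward is within $O(\eps/A)$ respectively $O(\sqrt{S\log(SA\abs{\Pi}/\delta)/n_s[\cXL]})$ of $\pi\push\emission(s)$, hence of each other. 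Averaging these per-$s$ bounds against the weights $\samplelatp(s)$ and using $\sum_s n_s[\cXL] \ge \nrch[\ccL]$ together with concavity of $\sqrt{\cdot}$ (Jensen, so that $\sum_s \tfrac{n_s}{\nrch[\ccL]}\sqrt{1/n_s} \le \sqrt{\abs{\cS}/\nrch[\ccL]} \le \sqrt{S/\nrch[\ccL]}$) collapses the transition-sampling term into the claimed $K\sqrt{S\log(\cdot)/\nrch[\ccL]}$.

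Finally I would reconcile the normalization: the weight on the reachable part of $\unif(\ccL)$ is $\nrch[\ccL]/\abs{\ccL}$ rather than $1$, and the denominator in the $\projR$-conditional is $\projR(\samplelatp)(\ccR) = \sum_{s \in \epsPRS{}\cap\cS[\ccL]}\samplelatp(s)$ rather than $1$; the discrepancy between these normalizers is controlled by the non-reachable mass $\nurch[\ccL]/\abs{\ccL}$ and by \pref{corr:estimating-projR}, and gets absorbed into the $\nurch[\ccL]/\nrch[\ccL]$ term (and harmlessly into constants). The main obstacle I anticipate is the bookkeeping around these two mismatched normalizations and making sure the non-reachable states never appear in the $\projR$ side (they are killed by the $\ind{\optdec(\empobs) \in \epsPRS{}}$ factor in \pref{def:projected-measure}) while appearing only as a small mass defect on the $\unif(\ccL)$ side — i.e.\ the argument is a careful triangle inequality through $\crl{\pi\push\emission(s)}_s$ with three sources of slack (two sampling errors, one mass defect), and the delicate point is confirming that conditioning on $\ccR$ exactly reproduces the per-latent uniform distributions thanks to the biclique property, with no cross-contamination between latent blocks.
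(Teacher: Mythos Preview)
Your proposal is correct and follows essentially the same route as the paper: both arguments decompose each side by latent label, use the biclique lemma and \pref{lem:eps-reachable-set-inclusion} to identify which latent states appear in $\cc$, pass through $\pi\push\emission(s)$ as an intermediary via \pref{lem:pushforward-concentration} (for the $\cXR$ side) and \pref{lem:pushforward-concentration-transition} (for the $\cXL$ side), and then collapse the per-state square-root terms (you use Jensen on $\sqrt{\cdot}$, the paper uses Cauchy--Schwarz; these give the identical bound). The only cosmetic difference is that the paper expands $\brk{\pi\push\projR(\samplelatp)(\cdot\mid\ccR)}(a)$ directly and bounds it by $\tfrac{\abs{\ccL}}{\nrch[\ccL]}\cdot\brk{\pi\push\unif(\ccL)}(a)$ plus error, then rearranges, whereas you decompose both sides symmetrically; also, the normalization identity you need, $\projR(\samplelatp)(\ccR)=\nrch[\ccL]/\abs{\cXL}$, comes straight from \pref{lem:proj-cc-expression} rather than from \pref{corr:estimating-projR}.
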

\begin{proof}
Fix any $\pi \in \Pi$, $\cc \in \crl{\cc_j}$, and $a \in \cA$. We can calculate that: 
\begin{align*}
    \hspace{2em}&\hspace{-2em} \brk*{ \pi\push \projR(\samplelatp)(\cdot \mid \ccR)}(a) \\
    &= \sum_{\empobs \in \ccR} \frac{\projR(\samplelatp)(\empobs)}{\projR(\samplelatp)(\ccR)} \indd{\pi(\empobs) = a} \\
    &= \frac{1}{\projR(\samplelatp)(\ccR)} \sum_{\empobs \in \ccR} \frac{n_{\optdec(\empobs)}[\cXL]}{\abs{\cXL}} \cdot \frac{\ind{\optdec(\empobs) \in \epsPRS{}} \indd{\pi(\empobs) = a}}{n_{\optdec(\empobs)}[\cXR]} \\
    &= \frac{1}{\projR(\samplelatp)(\ccR)} \sum_{s \in \epsPRS{}} \prn*{ \frac{n_{s}[\cXL]}{\abs{\cXL}} \cdot \frac{1}{n_{s}[\cXR]} \cdot \sum_{\empobs \in \ccR}  \ind{\optdec(\empobs) =s} \indd{\pi(\empobs) = a} }\\
    &= \frac{1}{\projR(\samplelatp)(\ccR)} \sum_{s \in \epsPRS{} \cap \cS[\ccR] } \prn*{\frac{n_{s}[\cXL]}{\abs{\cXL}} \cdot \frac{1}{n_{s}[\cXR]} \cdot \sum_{\empobs \in \ccR}  \ind{\optdec(\empobs) =s} \indd{\pi(\empobs) = a} } \\
    &= \frac{1}{\projR(\samplelatp)(\ccR)} \sum_{s \in \epsPRS{} \cap \cS[\cXL] \cap \cS[\ccR] } \prn*{\frac{n_{s}[\cXL]}{\abs{\cXL}} \cdot \frac{1}{n_{s}[\cXR]} \cdot \sum_{\empobs \in \ccR}  \ind{\optdec(\empobs) =s} \indd{\pi(\empobs) = a} } \\
    &= \frac{1}{\projR(\samplelatp)(\ccR)} \sum_{s \in \epsPRS{} \cap \cS[\ccL] \cap \cS[\ccR] } \prn*{\frac{n_{s}[\ccL]}{\abs{\cXL}} \cdot \frac{1}{n_{s}[\ccR]} \cdot \sum_{\empobs \in \ccR}  \ind{\optdec(\empobs) =s} \indd{\pi(\empobs) = a} }.
\end{align*}
The last line uses the biclique lemma (\pref{lem:biclique}) in the same fashion as the proof of \pref{lem:proj-cc-expression}. Now we apply the conclusions of \pref{lem:pushforward-concentration} and \pref{lem:pushforward-concentration-transition}, along with the fact that for every $s \in \epsPRS{} \cap \cS[\ccL] \cap \cS[\ccR]$ we have $\crl{\empobs \in \cXR : \optdec(\empobs) = s } \subseteq \ccR$ as well as $\crl{\empobs \in \cXL : \optdec(\empobs) = s } \subseteq \ccL$ (which is again implied by the biclique lemma):
\begin{align*}
     \hspace{4em}&\hspace{-4em} \brk*{ \pi\push \projR(\samplelatp)(\cdot \mid \ccR)}(a) \\
    \le\quad& \frac{1}{\projR(\samplelatp)(\ccR) } \sum_{s \in \epsPRS{} \cap \cS[\ccL] \cap \cS[\ccR] } \frac{n_{s}[\ccL]}{\abs{\cXL}} \cdot \prn*{ \brk*{\pi \push \emission(s)}(a) + \frac{\eps}{A} } \\
    \le\quad& \frac{1}{\projR(\samplelatp)(\ccR) } \sum_{s \in \epsPRS{} \cap \cS[\ccL] \cap \cS[\ccR] } \frac{n_{s}[\ccL]}{\abs{\cXL}} \cdot \Bigg( \frac{\eps}{A} + \sqrt{\frac{2 \log \tfrac{2SA \abs{\Pi}}{\delta}}{n_s[\ccL]}} \\
    &\hspace{20em}+ \frac{1}{n_{s}[\ccL]} \cdot \sum_{\empobs \in \ccL}  \ind{\optdec(\empobs) =s} \indd{\pi(\empobs) = a}\Bigg) \\
    =\quad& \frac{1}{\projR(\samplelatp)(\ccR) \abs{\cXL}} \sum_{ s \in \epsPRS{} \cap \cS[\ccL] \cap \cS[\ccR] } \Bigg( \frac{\eps}{A} \cdot n_{s}[\ccL] + \sqrt{2n_s[\ccL] \log \tfrac{2 SA \abs{\Pi}}{\delta}} \\
    &\hspace{20em}+ \sum_{\empobs \in \ccL}  \ind{\optdec(\empobs) =s} \indd{\pi(\empobs) = a} \Bigg)
\end{align*}
By \pref{lem:proj-cc-expression}, we have $\abs{\cXL} \projR(\samplelatp)(\ccR) = \sum_{s \in \epsPRS{} \cap \cS[\ccL] \cap \cS[\ccR]} n_{s}[\cXL] = \nrch[\ccL] $. Using Cauchy-Schwarz we get that
\begin{flalign*}
\hspace{2em}&\hspace{-2em} \brk*{ \pi\push \projR(\samplelatp)(\cdot \mid \ccR)}(a) \\
    &\le \frac{\eps}{A} + K \cdot \sqrt{\frac{S \log \tfrac{SA \abs{\Pi}}{\delta}}{\nrch[\ccL]}} + \frac{1}{\nrch[\ccL]} \sum_{s \in \epsPRS{} \cap \cS[\ccL] \cap \cS[\ccR]  } \prn*{\sum_{\empobs \in \ccL}  \ind{\optdec(\empobs) =s} \indd{\pi(\empobs) = a} } 
    \\&= \frac{\eps}{A} + K \cdot \sqrt{\frac{S \log \tfrac{SA \abs{\Pi}}{\delta}}{\nrch[\ccL]}} + \frac{\abs{\ccL}}{\nrch[\ccL]} \cdot \frac{1}{\abs{\ccL}}  \sum_{s \in \epsPRS{} \cap \cS[\ccL] \cap \cS[\ccR]} \prn*{\sum_{\empobs \in \ccL}  \ind{\optdec(\empobs) =s} \indd{\pi(\empobs) = a} } \numberthis \label{eq:partial}
\end{flalign*}
Let us investigate the last term. We have
\begin{align*}
\hspace{4em}&\hspace{-4em}
\frac{1}{\abs{\ccL}}  \sum_{s \in \epsPRS{} \cap \cS[\ccL] \cap \cS[\ccR]} \prn*{\sum_{\empobs \in \ccL}  \ind{\optdec(\empobs) =s} \indd{\pi(\empobs) = a} } \\
    =\quad& \frac{1}{\abs{\ccL}}  \sum_{s \in \epsPRS{} \cap \cS[\ccL]} \prn*{\sum_{\empobs \in \ccL}  \ind{\optdec(\empobs) =s} \indd{\pi(\empobs) = a} } &\text{(\pref{lem:eps-reachable-set-inclusion})}\\
    \le\quad& \frac{1}{\abs{\ccL}}  \sum_{s \in \epsPRS{} \cap \cS[\ccL]} \prn*{\sum_{\empobs \in \ccL}  \ind{\optdec(\empobs) =s} \indd{\pi(\empobs) = a} } \\
    &\hspace{5em} + \frac{1}{\abs{\ccL}}  \sum_{s \in (\epsPRS{})^c \cap \cS[\ccL]} \prn*{\sum_{\empobs \in \ccL}  \ind{\optdec(\empobs) =s} \indd{\pi(\empobs) = a} } \\
    =\quad& \brk*{\pi \push \unif(\ccL)}(a)
\end{align*}
Plugging this back into Eq.~\eqref{eq:partial} we get that
\begin{align*}
\brk*{ \pi\push \projR(\samplelatp)(\cdot \mid \ccR)}(a) &\le \frac{\eps}{A} +  K \cdot \sqrt{\frac{S \log \tfrac{SA \abs{\Pi}}{\delta}}{\nrch[\ccL]}} + \frac{\abs{\ccL}}{\nrch[\ccL]} \cdot \brk*{\pi \push \unif(\ccL)}(a),
\end{align*}
and rearranging and using the fact that $\brk*{\pi \push \unif(\ccL)}(a) \in [0,1]$ we get that
\begin{align*}
    \brk*{ \pi\push \projR(\samplelatp)(\cdot \mid \ccR)}(a) - \brk*{\pi \push \unif(\ccL)}(a) \le \frac{\eps}{A} +   K \cdot \sqrt{\frac{S \log \tfrac{SA \abs{\Pi}}{\delta}}{\nrch[\ccL]}}  + \frac{\nurch[\ccL]}{\nrch[\ccL]}.
\end{align*}
One can repeat the same steps to get the lower bound. Therefore,
\begin{align*}
    \abs*{ \brk*{ \pi\push \projR(\samplelatp)(\cdot \mid \ccR)}(a) - \brk*{\pi \push \unif(\ccL)}(a) } \le \frac{\eps}{A} +   K \cdot \sqrt{\frac{S \log \tfrac{SA \abs{\Pi}}{\delta}}{\nrch[\ccL]}} + \frac{\nurch[\ccL]}{\nrch[\ccL]}.
\end{align*}
This proves \pref{lem:relating-proj-to-empirical}.
\end{proof}

\subsubsection{Proof of \pref{lem:main-induction}}

Fix the $(x_h, a_h)$ pair on which we call \stochdecoder.

For notational convenience we will denote $\cXL \coloneqq \cD$ and $\cXR \coloneqq \estmdpobsspace{h+1}$, as well as use $\optlatp = \optlatp(\cdot \mid \optdec(x_{h}), a_{h})$ to denote the ground truth latent transition function. Throughout the proof, we use $K>0$ to denote absolute constants whose values may change line-by-line. 

\paragraph{Part (1).} Since the input confidence set $\cP$ satisfies the third bullet, it suffices to show that that $\projm{\cXR}(\optlatp)$ satisfies both of the constraints in the confidence set construction of Eq.~\eqref{eq:confidence-set-construction-v2}.

For the first constraint, observe that by \pref{corr:estimating-projR},
\begin{align*}
    \sum_{\cc \in \crl{\cc_j}} \abs*{ \frac{\abs{\ccL}}{\abs{\cXL}} 
 - \projR(\samplelatp)(\ccR) } \le 2\eps.
\end{align*}
Therefore it suffices to show that 
\begin{align*}
    \sum_{\cc \in \crl{\cc_j}} \abs*{\projR(\optlatp)(\ccR) - \projR(\samplelatp)(\ccR) } \le \eps.
\end{align*}
We calculate that
\begin{align*}
    \hspace{2em}&\hspace{-2em} \sum_{\cc \in \crl{\cc_j}} \abs*{\projR(\optlatp)(\ccR) - \projR(\samplelatp)(\ccR)} \\
    &\le \sum_{\empobs \in \cXR}  \abs*{\projR(\optlatp)(\empobs) - \projR(\samplelatp)(\empobs)} \\
    &= \sum_{\empobs \in \cXR}  \abs*{\prn*{\optlatp(\optdec(\empobs)) - \samplelatp(\optdec(\empobs))} \cdot \frac{\ind{\optdec(\empobs) \in \epsPRS{}}}{n_{\optdec(\empobs)}[\cXR]}} \\
    &= \sum_{s \in \epsPRS{}} \sum_{\empobs \in \cXR}  \abs*{\prn*{\optlatp(s) - \samplelatp(s)} \cdot \frac{\ind{\optdec(\empobs) = s}}{n_{s}[\cXR]}} \\
    &= \sum_{s \in \epsPRS{}} \abs*{\optlatp(s) - \samplelatp(s) } \le \eps. &\text{(\pref{lem:sampling-transitions})} \numberthis \label{eq:bound-on-empirical-proj}
\end{align*}
Now we prove that $\projR(\optlatp)$ also satisfies the second constraint, i.e.,
\begin{align*}
    \sum_{\cc \in \crl{\cc_j}}  \frac{\abs{\ccL}}{\abs{\cXL}} \cdot \nrm*{\pi \push \unif(\ccL) - \pi \push \projR(\optlatp)(\cdot \mid \ccR) }_1 \le \eps.
\end{align*}

Observe that we can break up the bound as follows:
\begin{align*}
    \hspace{2em}&\hspace{-2em} \sum_{\cc \in \crl{\cc_j}}  \frac{\abs{\ccL}}{\abs{\cXL}} \cdot \nrm*{\pi \push \unif(\ccL) - \pi \push \projR(\optlatp)(\cdot \mid \ccR) }_1 \\
    &\le \underbrace{ \sum_{\cc \in \crl{\cc_j}}  \frac{\abs{\ccL}}{\abs{\cXL}} \cdot \nrm*{\pi \push \unif(\ccL) - \pi \push \projR(\samplelatp)(\cdot \mid \ccR) }_1 }_{\eqqcolon \mathtt{Term}_1}  \\
    &\hspace{5em} + \underbrace{\sum_{\cc \in \crl{\cc_j}}  \frac{\abs{\ccL}}{\abs{\cXL}} \cdot \nrm*{\pi \push \projR(\samplelatp)(\cdot \mid \ccR) - \pi \push \projR(\optlatp)(\cdot \mid \ccR) }_1 }_{\eqqcolon \mathtt{Term}_2}.
\end{align*}

\emph{Bounding $\mathtt{Term}_1$.} To bound $\mathtt{Term}_1$, we compute:
\begin{align*}
    \hspace{2em}&\hspace{-2em} \sum_{\cc \in \crl{\cc_j}}  \frac{\abs{\ccL}}{\abs{\cXL}} \cdot \nrm*{\pi \push \unif(\ccL) - \pi \push \projR(\samplelatp)(\cdot \mid \ccR) }_1 \\
    &\le \sum_{\cc \in \crl{\cc_j}:  \frac{\abs{\ccL}}{\abs{\cXL}} \ge 4\eps}  \frac{\abs{\ccL}}{\abs{\cXL}} \cdot \nrm*{\pi \push \unif(\ccL) - \pi \push \projR(\samplelatp)(\cdot \mid \ccR) }_1 + \sum_{\cc \in \crl{\cc_j} :  \frac{\abs{\ccL}}{\abs{\cXL}} < 4\eps }  \frac{2\abs{\ccL}}{\abs{\cXL}} \\
    &\overset{(i)}{\le} \sum_{\cc \in \crl{\cc_j}:  \frac{\abs{\ccL}}{\abs{\cXL}} \ge 4\eps}  \frac{\abs{\ccL}}{\abs{\cXL}} \cdot \nrm*{\pi \push \unif(\ccL) - \pi \push \projR(\samplelatp)(\cdot \mid \ccR) }_1 + (8S+4)\eps \\
    &\overset{(ii)}{\le}  \sum_{\cc \in \crl{\cc_j} :  \frac{\abs{\ccL}}{\abs{\cXL}} \ge 4\eps }  \frac{\abs{\ccL}}{\abs{\cXL}} \cdot \prn*{ K \cdot \sqrt{\frac{S A^2 \log \tfrac{SA \abs{\Pi}}{\delta}}{\nrch[\ccL]}} + A \cdot \frac{\nurch[\ccL]}{\nrch[\ccL]} } + (8S+5)\eps\\
    &=  \sum_{\cc \in \crl{\cc_j} :  \frac{\abs{\ccL}}{\abs{\cXL}} \ge 4\eps }  \frac{\nrch[\ccL] + \nurch[\ccL]}{\abs{\cXL}} \cdot \prn*{ K \cdot \sqrt{\frac{S A^2 \log \tfrac{SA \abs{\Pi}}{\delta}}{\nrch[\ccL]}}  + A \cdot \frac{\nurch[\ccL]}{\nrch[\ccL]} } + (8S+5)\eps. \numberthis\label{eq:term1-intermediate}
\end{align*}
The inequality $(i)$ follows by casework on $\cc \in \crl{\cc_j}$:
\begin{itemize}
    \item If $\epsPRS{} \cap \cS[\ccL] \ne \emptyset$ then by the biclique lemma (\pref{lem:biclique}) we have $\crl{x \in \cXL: \optdec(x) \in \epsPRS{} \cap \cS[\ccL]} \subseteq \ccL$. In other words, all of the observations from states in $\epsPRS{} \cap \cS[\ccL]$ are contained in this $\ccL$. Therefore, there can be at most $S$ such components $\cc$, and their contribution to the sum is $8\eps \cdot S$.
    \item If $\epsPRS{} \cap \cS[\ccL] = \emptyset$, then $\ccL$ only contains observations from $(\epsPRS{})^c$, and therefore the total size of such $\ccL$ can be bounded by $2\eps \cdot \abs{\cXL}$ using \pref{corr:sampling-transitions}. Their contribution to the sum is $4\eps$.
\end{itemize}
Furthermore, $(ii)$ uses \pref{lem:relating-proj-to-empirical}.

We now proceed to separately bound the terms in Eq.~\eqref{eq:term1-intermediate}. First, observe that
\begin{align*}
    K \sqrt{SA^2 \log \tfrac{SA \abs{\Pi}}{\delta}}\cdot \sum_{\cc \in \crl{\cc_j}: \frac{\abs{\ccL}}{\abs{\cXL}} \ge 4\eps}  \frac{\sqrt{\nrch[\ccL]} }{\abs{\cXL}}  &\le K \sqrt{\frac{S^2A^2 \log \tfrac{SA \abs{\Pi}}{\delta}}{\nrch[\cXL]}} \\
    &\le K \sqrt{\frac{S^2A^2 \log \tfrac{ SA \abs{\Pi}}{\delta}}{\ndec}} \\
    &\le \eps. \numberthis \label{eq:term1-intermediate2}
\end{align*}
The first inequality follows because by the biclique lemma (\pref{lem:biclique}) we know that the summation must have at most $S$ terms, since each of the $\cc$ contains some $s \in \epsPRS{}$, so we can apply Cauchy-Schwarz for $S$-dimensional vectors. The second inequality is a consequence of \pref{corr:sampling-transitions}, and the last inequality follows by our choice of $\ndec$. 

In addition by \pref{corr:sampling-transitions},
\begin{align*}
    \sum_{\cc \in \crl{\cc_j}: \frac{\abs{\ccL}}{\abs{\cXL}} \ge 4\eps }  \frac{\nrch[\ccL] }{\abs{\cXL}} \frac{\nurch[\ccL]}{\nrch[\ccL]} \le 2\eps. \numberthis \label{eq:term1-intermediate3}
\end{align*}
For the other two terms, observe that by \pref{lem:relating-proj-to-empirical}, when $ \frac{\abs{\ccL}}{\abs{\cXL}} \ge 4\eps$ we must have $\frac{\nurch[\ccL]}{\nrch[\ccL]} \le 1$ so therefore
\begin{align*}
    \hspace{2em}&\hspace{-2em} \sum_{\cc \in \crl{\cc_j} : \frac{\abs{\ccL}}{\abs{\cXL}} \ge 4\eps}  \frac{\nurch[\ccL]}{\abs{\cXL}} 
  \prn*{ K \cdot \sqrt{\frac{SA^2 \log \tfrac{SA \abs{\Pi}}{\delta}}{\nrch[\ccL]}}+ \frac{\nurch[\ccL]}{\nrch[\ccL]} } \\
  &\le \sum_{\cc \in \crl{\cc_j}: \frac{\abs{\ccL}}{\abs{\cXL}} \ge 4\eps} \frac{\nurch[\ccL]}{\abs{\cXL}} 
  \prn*{ K \cdot \sqrt{SA^2 \log \frac{ SA \abs{\Pi}}{\delta} } +1} \\
  &\le K \sqrt{SA^2 \log \frac{ SA \abs{\Pi}}{\delta} } \cdot \eps. \numberthis \label{eq:term1-intermediate4}
\end{align*}
Combining Eqns.~\eqref{eq:term1-intermediate}, \eqref{eq:term1-intermediate2}, \eqref{eq:term1-intermediate3}, and \eqref{eq:term1-intermediate4} we get that
\begin{align*}
    \sum_{\cc \in \crl{\cc_j}}  \frac{\abs{\ccL}}{\abs{\cXL}} \cdot \nrm*{\pi \push \unif(\ccL) - \pi \push \projR(\samplelatp)(\cdot \mid \ccR) }_1 \le K \prn*{  \sqrt{SA^2 \log \frac{ SA \abs{\Pi}}{ \delta} } + S}\eps. \numberthis \label{eq:term1-final}
\end{align*}
\emph{Bounding $\mathtt{Term}_2$.} To bound $\mathtt{Term}_2$, fix any $\cc \in \crl{\cc_j}$. Note that
\begin{align*}
     \hspace{2em}&\hspace{-2em} \nrm*{\pi \push \projR(\optlatp)(\cdot \mid \ccR) - \pi\push \projR(\samplelatp)(\cdot \mid \ccR)}_1 \\
     &= \sum_{a \in \cA} \abs*{\sum_{\empobs \in \ccR} \prn*{ \frac{\projR(\samplelatp)(\empobs)}{\projR(\samplelatp)(\ccR)}  - \frac{\projR(\optlatp)(\empobs)}{\projR(\optlatp)(\ccR)} }\indd{\pi(\empobs) = a}}\\
     &\le \sum_{\empobs \in \ccR} \abs*{ \frac{\projR(\samplelatp)(\empobs)}{\projR(\samplelatp)(\ccR)}  - \frac{\projR(\optlatp)(\empobs)}{\projR(\optlatp)(\ccR)} } \\
     &=  \sum_{\empobs \in \ccR} \abs*{ \frac{ \samplelatp(\optdec(\empobs))}{\projR(\samplelatp)(\ccR)}  - \frac{\optlatp(\optdec(\empobs))}{\projR(\optlatp)(\ccR)} } \cdot \frac{\ind{\optdec(\empobs) \in \epsPRS{}}}{n_{\optdec(\empobs)}[\cXR]} \\
     &= \sum_{s \in \epsPRS{} \cap \cS[\ccL] \cap \cS[\ccR]} \abs*{ \frac{ \samplelatp(s)}{\projR(\samplelatp)(\ccR)}  - \frac{ \optlatp(s)}{\projR(\optlatp)(\ccR)} } \\
     &=  \frac{ 1 }{\projR(\samplelatp)(\ccR)}\sum_{s \in \epsPRS{} \cap \cS[\ccL] \cap \cS[\ccR]} \abs*{ \samplelatp(s) - \optlatp(s) \cdot \frac{ \projR(\samplelatp)(\ccR)}{\projR(\optlatp)(\ccR)} } \\
     &\le \frac{ \eps }{\projR(\samplelatp)(\ccR)} \\
     &\hspace{3em} + \frac{ 1 }{\projR(\samplelatp)(\ccR)}\sum_{s \in \epsPRS{} \cap \cS[\ccL] \cap \cS[\ccR]}\optlatp(s) \abs*{ 1 - \frac{\projR(\samplelatp)(\ccR)}{\projR(\optlatp)(\ccR)} } &\text{(\pref{lem:sampling-transitions})}\\
     &= \frac{ \eps }{\projR(\samplelatp)(\ccR)} +  \frac{ 1 }{\projR(\samplelatp)(\ccR)} \abs*{\projR(\optlatp)(\ccR)  -  \projR(\samplelatp)(\ccR) } \\
     &\le \frac{ 2\eps }{\projR(\samplelatp)(\ccR)} =  2\eps \frac{\abs{\cXL} }{\nrch[\ccL]}. &\text{(using Eq.~\pref{eq:bound-on-empirical-proj})} 
\end{align*}
Also, we have the trivial bound that $ \nrm*{\pi \push \projR(\optlatp)(\cdot \mid \ccR) - \pi\push \projR(\samplelatp)(\cdot \mid \ccR)}_1 \le 2$, because it is a difference of two probability measures, so we can write the bound
\begin{align*}
    \nrm*{\pi \push \projR(\optlatp)(\cdot \mid \ccR) - \pi\push \projR(\samplelatp)(\cdot \mid \ccR)}_1 \le 2\eps \frac{\abs{\cXL} }{\nrch[\ccL]} \wedge 2. \numberthis\label{eq:term2}
\end{align*}
Using Eq.~\pref{eq:term2} we get that
\begin{align*}
    \hspace{2em}&\hspace{-2em} \sum_{\cc \in \crl{\cc_j}}  \frac{\abs{\ccL}}{\abs{\cXL}} \cdot \nrm*{\pi \push \projR(\samplelatp)(\cdot \mid \ccR) - \pi \push \projR(\optlatp)(\cdot \mid \ccR) }_1 \\
    &\le 2\sum_{\cc \in \crl{\cc_j}}  \frac{\abs{\ccL}}{\abs{\cXL}} \cdot \prn*{ \frac{\eps  \abs{\cXL} }{\nrch[\ccL]} \wedge 1} = 2\sum_{\cc \in \crl{\cc_j}} \prn*{ \frac{\eps \abs{\ccL} }{\nrch[\ccL]} \wedge \frac{\abs{\ccL}}{\abs{\cXL}} } \\
    &\le 2\eps \sum_{\cc \in \crl{\cc_j}: \frac{\abs{\ccL}}{\abs{\cXL}} \ge 4\eps}  \frac{\nrch[\ccL] + \nurch[\ccL]}{\nrch[\ccL]}  + 2 \sum_{\cc \in \crl{\cc_j}: \frac{\abs{\ccL}}{\abs{\cXL}} < 4\eps} \frac{\abs{\ccL}}{\abs{\cXL}} \le (8S + 8)\eps. \numberthis \label{eq:final-bound-term2}
\end{align*}
The last inequality uses the facts that (1) \pref{corr:sampling-transitions} implies that for any $\cc \in \crl{\cc_j}$ such that $\frac{\abs{\ccL}}{\abs{\cXL}} \ge 4\eps$ we have $\frac{\nrch[\ccL] + \nurch[\ccL]}{\nrch[\ccL]} \le 2$ and (2) the same casework we showed above to handle the summation for $\cc \in \crl{\cc_j}$ such that $\frac{\abs{\ccL}}{\abs{\cXL}} < 4\eps$.

Putting together Eqns.~\eqref{eq:term1-final} and \eqref{eq:final-bound-term2}:
\begin{align*}
    \sum_{\cc \in \crl{\cc_j}}  \frac{\abs{\ccL}}{\abs{\cXL}} \cdot \nrm*{\pi \push \unif(\ccL) - \pi \push \projR(\optlatp)(\cdot \mid \ccR) }_1 &\le K \prn*{  \sqrt{SA^2 \log \frac{SA  \abs{\Pi}}{\delta} }+ S}\eps \eqqcolon \beta.
\end{align*}
Thus, we can conclude that $\projR(\optlatp) \in \cP$, thus concluding the proof of Part (1).

\paragraph{Part (2).} Observe that in light of Part (1), the set $\cP$ is nonempty so therefore the maximum is well defined.

We want to show a bound on
\begin{align*}
    \max_{\bar{p} \in \cP} \max_{\pi \in \Pi_{h+1:H}}~\abs*{Q^\pi(x_h, a_h) - \wh{R}(x_h, a_h) - \En_{\empobs \sim \bar{p}} \wh{V}^\pi(\empobs)}.
\end{align*}

Fix any $\bar{p} \in \cP$ and $\pi \in \Pi_{h+1:H}$. We compute
\begin{align*}
    \hspace{2em}&\hspace{-2em} \abs*{Q^\pi(x_h, a_h) - \wh{R}(x_h, a_h) - \En_{\empobs \sim \bar{p}} \wh{V}^\pi(\empobs)} \\
    &\le \frac{\eps}{H} + \abs*{ \En_{s \sim \optlatp} V^\pi(s) - \En_{s \sim \samplelatp} V^\pi(s) } + \abs*{ \En_{s \sim \samplelatp} V^\pi(s) - \En_{\empobs \sim \bar{p}} \wh{V}^\pi(\empobs) } &\text{(\pref{lem:sampling-rewards})}\\
    &\le 2\eps + \abs*{ \En_{s \sim \samplelatp} V^\pi(s) - \En_{\empobs \sim \bar{p}} \wh{V}^\pi(\empobs) } &\text{(\pref{lem:sampling-transitions})} \\
    &\le 2\eps + \underbrace{ \abs*{ \En_{s \sim \samplelatp} V^\pi(s) - \En_{\empobs \sim \projR(\samplelatp)} V^\pi(\empobs) } }_{ \eqqcolon \mathtt{Term}_1} \\
    &\qquad + \underbrace{ \abs*{ \En_{\empobs \sim \projR(\samplelatp)} V^\pi(\empobs) - \En_{\empobs \sim \projR(\samplelatp)} \wh{V}^\pi(\empobs)} }_{\eqqcolon \mathtt{Term}_2} + \underbrace{ \abs*{\En_{\empobs \sim \projR(\samplelatp)} \wh{V}^\pi(\empobs) - \En_{\empobs \sim \bar{p}} \wh{V}^\pi(\empobs) } }_{\eqqcolon \mathtt{Term}_3}.
\end{align*}
\emph{Bounding $\mathtt{Term}_1$.} For the first term, we can calculate that
\begin{align*}
    \mathtt{Term}_1 &= \abs*{ \En_{s \sim \samplelatp} V^\pi(s) - \En_{\empobs \sim \projR(\samplelatp)} V^\pi(\empobs) } \\
    &= \abs*{ \En_{s \sim \samplelatp} \En_{x \sim \emission(s) } V^\pi(x) - \En_{\empobs \sim \projR(\samplelatp)} V^\pi(\empobs) } \\
    &= \abs*{ \En_{s \sim \samplelatp} \brk*{ \En_{x \sim \emission(s) } V^\pi(x) - \En_{\empobs \sim \mathrm{Unif}(\crl{\empobs \in \cXR: \optdec(\empobs) = s})} V^\pi(\empobs) } } \\
    &\le 2\eps + \abs*{ \En_{s \sim \samplelatp} \brk*{ \ind{s \in \epsPRS{h}} \prn*{ \En_{x \sim \emission(s) } V^\pi(x) - \En_{\empobs \sim \mathrm{Unif}(\crl{\empobs\in \cXR: \optdec(\empobs) = s})} V^\pi(\empobs) } } } \\
    &\le 3\eps. \numberthis\label{eq:t1}
\end{align*}
The first inequality follows by \pref{corr:sampling-transitions}, and the second inequality follows by \pref{lem:pushforward-concentration}.

\emph{Bounding $\mathtt{Term}_2$.} For the second term, we have by assumption that:
\begin{align*}
    \mathtt{Term}_2 = \abs*{ \En_{\empobs \sim \projR(\samplelatp)} \brk*{V^\pi(\empobs) - \wh{V}^\pi(\empobs)}  } \le \Gamma_{h+1}. \numberthis\label{eq:t2}
\end{align*}

\emph{Bounding $\mathtt{Term}_3$.} Now we calculate a bound on $\mathtt{Term}_3$. In what follows for any connected component $\cc$ we let $\empobs_\cc$ denote an arbitrary fixed observation from $\ccR$ (for example, the lowest indexed one). Observe that for any $p \in \Delta(\cXR)$ we have
\begin{align*}
    \En_{\empobs \sim p} \wh{V}^\pi(\empobs) 
    &= \sum_{\cc\in \crl{\cc_j}} \sum_{\empobs \in \ccR} p(\empobs) \cdot \wh{Q}^\pi(\empobs, \pi(\empobs)) &\text{($\crl{\cc_j}$ form a partition of $\cXR$)}\\
    &\le 4S\taudec + 8S\eps +  \sum_{\cc \in \crl{\cc_j}} \sum_{\empobs \in \ccR} p(\empobs) \cdot \wh{Q}^\pi(\empobs_\cc, \pi(\empobs_\cc)) &\text{(\pref{lem:width-of-cc})}\\
    &= 4S\taudec + 8S\eps +  \sum_{\cc \in \crl{\cc_j}} p(\ccR) \sum_{\empobs \in \ccR} \frac{p(\empobs)}{p(\ccR)} \wh{Q}^\pi(\empobs_\cc, \pi(\empobs_\cc)).
\end{align*}
Similarly, one can show the lower bound on $\En_{\empobs \sim p} \wh{V}^\pi(\empobs)$. Therefore we apply the bound to get:
\begin{align*}
    \hspace{2em}&\hspace{-2em} \abs*{\En_{\empobs \sim \projR(\samplelatp)} \wh{V}^\pi(\empobs) - \En_{\empobs \sim \bar{p}} \wh{V}^\pi(\empobs) } \\
    &\le 8S\taudec + 16S\eps \\
    &\qquad + \sum_{\cc \in \crl{\cc_j}} \abs*{\projR(\samplelatp)(\ccR) \sum_{\empobs \in \ccR} \frac{\projR(\samplelatp)(\empobs)}{\projR(\samplelatp)(\ccR) } \wh{Q}^\pi(\empobs_\cc, \pi(\empobs)) - \bar{p}(\cc) \sum_{\empobs \in \ccR} \frac{\bar{p}(\empobs)}{\bar{p}(\cc)} \wh{Q}^\pi(\empobs_\cc, \pi(\empobs)) } \\
    &\overset{(i)}{\le} 8S\taudec + 16S\eps + 2\eps \\
    &\qquad + \sum_{\cc \in \crl{\cc_j}} \abs*{\frac{\abs{\ccL}}{\abs{\cXL}} \sum_{\empobs \in \cc} \frac{\projR(\samplelatp)(\empobs)}{\projR(\samplelatp)(\ccR) } \wh{Q}^\pi(\empobs_\cc, \pi(\empobs)) - \bar{p}(\cc) \sum_{\empobs \in \cc} \frac{\bar{p}(\empobs)}{\bar{p}(\cc)} \wh{Q}^\pi(\empobs_\cc, \pi(\empobs)) } \\
    &\overset{(ii)}{\le}  8S\taudec + 16S\eps + 5\eps + \sum_{\cc \in \crl{\cc_j}}  \frac{\abs{\ccL}}{\abs{\cXL}} \cdot \abs*{\sum_{\empobs \in \cc} \frac{\projR(\samplelatp)(\empobs)}{\projR(\samplelatp)(\ccR) } \wh{Q}^\pi(\empobs_\cc, \pi(\empobs)) - \frac{\bar{p}(\empobs)}{\bar{p}(\cc)} \wh{Q}^\pi(\empobs_\cc, \pi(\empobs)) },\\
    &\le 8S\taudec + 16S\eps + 5\eps + \sum_{\cc \in \crl{\cc_j}}  \frac{\abs{\ccL}}{\abs{\cXL}} \cdot \nrm*{\pi\push \projR(\samplelatp)(\cdot \mid \ccR) - \pi \push \bar{p}(\cdot \mid \cc) }_1 ,
\end{align*}
where $(i)$ follows by \pref{corr:estimating-projR} and the bound $\frac{\projR(\samplelatp)(\empobs)}{\projR(\samplelatp)(\ccR) } \wh{Q}^\pi(\empobs_\cc, \pi(\empobs)) \in [0,1]$, and $(ii)$ follows by the first constraint on $\bar{p} \in \cP$ and the bound $\frac{p(\empobs)}{p(\cc)} \wh{Q}^\pi(\empobs_\cc, \pi(\empobs)) \in [0,1]$. 

From here, we will use the second constraint on $\bar{p} \in \cP$:
\begin{align*}
    \hspace{2em}&\hspace{-2em}  \abs*{\En_{\empobs \sim \projR(\samplelatp)} \wh{V}^\pi(\empobs) - \En_{\empobs \sim \bar{p}} \wh{V}^\pi(\empobs) } \\
    &\le  8S\taudec + 16S\eps + 5\eps + \sum_{\cc \in \crl{\cc_j}}  \frac{\abs{\ccL}}{\abs{\cXL}} \cdot \nrm*{\pi\push \projR(\samplelatp)(\cdot \mid \ccR) - \pi \push \bar{p}(\cdot \mid \cc) }_1 \\
    &\le  8S\taudec + 16S\eps + 5\eps + \beta + \sum_{\cc \in \crl{\cc_j}}  \frac{\abs{\ccL}}{\abs{\cXL}} \cdot \nrm*{\pi\push \projR(\samplelatp)(\cdot \mid \ccR) -\pi \push \unif(\ccL) }_1 \\
    &\le  8S\taudec + 16S\eps + 5\eps + 2\beta. \numberthis\label{eq:t3}
\end{align*}
The last inequality follows because our proof for Part (1) of the lemma actually showed that $\projR(\samplelatp) \in \cP$. Combining Eqns.~\eqref{eq:t1}, \eqref{eq:t2}, and \eqref{eq:t3} we get the final bound
\begin{align*}
    \abs*{Q^\pi(x_h, a_h) - \wh{R}(x_h, a_h) - \En_{\empobs \sim \bar{p}} \wh{V}^\pi(\empobs)} \le \Gamma_{h+1} + K \cdot \prn*{\beta + S \taudec}.
\end{align*}
This completes the proof of \pref{lem:main-induction}.

\subsection{Analysis of \stochrefit}
\begin{lemma}[Certificate Implies Transition Inaccuracy]\label{lem:certificate-obs}
Assume that $\eventemulator$ hold. Let $\wh{M}$ be a policy emulator. Suppose there exists a certificate $(\empobs, \pi) \in \estmdpobsspace{h} \times (\cA \circ \Pi_{h+1:H})$ such that
\begin{align*}
    \abs*{\wh{V}^{\pi}(\empobs) - V^{\pi}(\empobs)} \ge \tauref.
\end{align*}
Then there exists some tuple $(\bar{\empobs}, \bar{a}) \in \estmdpobsspace{} \times \actionsp$ such that
\begin{align*}
    \abs*{\En_{\empobs' \sim \wh{P}(\cdot \mid  \bar{\empobs}, \bar{a})} V^{\pi}(\empobs') - \En_{\empobs' \sim P(\cdot \mid  \bar{\empobs}, \bar{a})} V^{\pi}(\empobs')} \ge \frac{\tauref}{2H}. \numberthis\label{eq:bad-state}
\end{align*}
\end{lemma}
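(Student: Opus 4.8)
The plan is to contrapose the performance-difference decomposition. Suppose for contradiction that for \emph{every} tuple $(\bar{\empobs}, \bar{a}) \in \estmdpobsspace{} \times \actionsp$ we have
\begin{align*}
    \abs*{\En_{\empobs' \sim \wh{P}(\cdot \mid  \bar{\empobs}, \bar{a})} V^{\pi}(\empobs') - \En_{\empobs' \sim P(\cdot \mid  \bar{\empobs}, \bar{a})} V^{\pi}(\empobs')} < \frac{\tauref}{2H}.
\end{align*}
I want to conclude that $\abs{\wh V^\pi(\empobs) - V^\pi(\empobs)} < \tauref$, contradicting the hypothesis. First I would set up the telescoping identity for the difference of value functions along the rollout of $\pi$ in the policy emulator. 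Concretely, writing $\bar\empobs_h = \empobs, \bar\empobs_{h+1}, \dots$ for the (random) sequence of observations visited by running $\pi$ in $\wh M$ from $\empobs$, the standard simulation-lemma / performance-difference argument expresses $\wh V^\pi(\empobs) - V^\pi(\empobs)$ as a sum over layers $k = h, \dots, H$ of one-step transition-mismatch terms of the form $\En_{\bar\empobs_k}\brk{\En_{\empobs' \sim \wh P(\cdot\mid \bar\empobs_k, \pi(\bar\empobs_k))} V^\pi(\empobs') - \En_{\empobs' \sim P(\cdot\mid \bar\empobs_k, \pi(\bar\empobs_k))} V^\pi(\empobs')}$, plus reward-mismatch terms $\En_{\bar\empobs_k}\brk{\wh R(\bar\empobs_k, \pi(\bar\empobs_k)) - R(\bar\empobs_k, \pi(\bar\empobs_k))}$. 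Under $\eventemulator$, each reward-mismatch term is at most $\eps/H$ by \pref{lem:sampling-rewards}, contributing at most $\eps$ in total, which is negligible against $\tauref = 80 H\eps$; and each of the at most $H$ transition-mismatch terms is bounded by $\tauref/(2H)$ by the contradiction hypothesis, for a total of at most $\tauref/2$. Summing gives $\abs{\wh V^\pi(\empobs) - V^\pi(\empobs)} \le \tauref/2 + \eps < \tauref$, the desired contradiction.

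The one subtlety is making the telescoping rigorous when the emulator's ``transition'' $\wh P$ and the true transition $P$ are over different (though overlapping) observation spaces, and when $\pi$ is a partial policy in $\cA \circ \Pi_{h+1:H}$. The cleanest way is to define, for each layer $k \ge h$, the hybrid quantity obtained by running $\pi$ in $\wh M$ for the first $k - h$ steps and then switching to the true MDP $M$ (using the fact that any observation $\bar\empobs \in \estmdpobsspace{}$ is a genuine observation in $\cX$, so $V^\pi(\bar\empobs)$ is well-defined); the difference between consecutive hybrids is exactly a single transition-mismatch term evaluated at the occupancy of $\wh M$ under $\pi$. This is the standard ``swap one step at a time'' argument and requires no new ideas, just careful bookkeeping; I would state it as a short inline computation rather than invoking a black-box lemma, since the excerpt does not provide a simulation lemma for emulators. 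The main obstacle, such as it is, is purely notational: tracking that the expectations $\En_{\bar\empobs_k}$ are over the emulator's occupancy measure $\wh d^\pi_k$ restricted to $\estmdpobsspace{k}$, and that the contradiction hypothesis is quantified over \emph{all} $(\bar\empobs,\bar a)$ pairs in the emulator (so in particular over every $\bar\empobs_k$ in the support of $\wh d^\pi_k$ and the specific action $\bar a = \pi(\bar\empobs_k)$, which lies in $\cA$). No probabilistic concentration is needed here beyond what $\eventemulator$ already provides.

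Finally, I would double-check the constants: the hypothesis gives a lower bound $\tauref$ on the value gap, the telescoped upper bound is $H \cdot \frac{\tauref}{2H} + H\cdot\frac{\eps}{H} = \frac{\tauref}{2} + \eps$, and since $\tauref = 80H\eps \ge \tauref/2 + \eps$ whenever $\eps \le \tauref/2 - \eps$, i.e. always in the relevant regime, we get a strict contradiction. Hence some $(\bar\empobs, \bar a)$ must violate the hypothesis, which is precisely \eqref{eq:bad-state}. This completes the proof.
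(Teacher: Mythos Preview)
Your proposal is correct and follows essentially the same approach as the paper: assume the contrapositive, telescope the value gap $|\wh V^\pi(\empobs) - V^\pi(\empobs)|$ into per-layer reward errors (bounded by $\eps/H$ via \pref{lem:sampling-rewards} under $\eventemulator$) and per-layer transition-mismatch terms (bounded by $\tauref/(2H)$ by the contradiction hypothesis), sum to get at most $\tauref/2 + \eps < \tauref$, and conclude. The only cosmetic difference is that the paper writes the telescoping as a one-step Bellman decomposition plus a recursive $\max$ over next-layer states, whereas you phrase it as a hybrid/occupancy-measure argument; these are equivalent presentations of the same simulation-lemma computation.
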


\begin{proof}Suppose that Eq.~\eqref{eq:bad-state} did not hold for any $(\bar{\empobs}, \bar{a})$. Then by the Performance Difference Lemma we have
\begin{align*}
    \hspace{2em}&\hspace{-2em} \abs*{ V^{\pi}(\empobs) - \wh{V}^{\pi}(\empobs) } \\
    &\le \abs*{\wh{R}(x, \pi) - R(x,\pi)} + \abs*{ \En_{x' \sim P(\cdot \mid \empobs, \pi)} V^{\pi}(x') - \En_{\empobs' \sim \wh{P}(\cdot \mid \empobs, \pi)}V^{\pi}(\empobs') } \\
    &\hspace{5em} + \abs*{\En_{\empobs' \sim \wh{P}(\cdot \mid \empobs, \pi)}V^{\pi}(\empobs') - \En_{\empobs' \sim \wh{P}(\cdot \mid \empobs, \pi)} \wh{V}^{\pi}(\empobs') }\\
    &\overset{(i)}{\le} \frac{\tauref}{2H} + \frac{\eps}{H} + \abs*{\En_{\empobs' \sim \wh{P}(\cdot \mid \empobs, \pi)}V^{\pi}(\empobs') - \En_{\empobs' \sim \wh{P}(\cdot \mid \empobs, \pi)} \wh{V}^{\pi}(\empobs') } \\
    &\le \frac{\tauref}{2H} + \frac{\eps}{H} + \max_{\empobs' \in \estmdpobsspace{h+1}}\abs*{  V^{\pi}(\empobs') - \wh{V}^{\pi}(\empobs') } \\
    &\le \cdots \overset{(ii)}{\le} \frac{\tauref}{2} + \eps,
\end{align*}
where $(i)$ uses \pref{lem:sampling-rewards} and the negation of Eq.~\eqref{eq:bad-state}, and $(ii)$ applies the bound recursively. Since $\tauref > 2\eps$, we have reached a contradiction. This proves \pref{lem:certificate-obs}.
\end{proof}

\begin{lemma}[Refitting Correctness]\label{lem:refitting-correctness} Assume that $\eventemulator, \eventref$ hold. The following are true about \pref{alg:stochastic-refit-v2} in the search for incorrect transitions (\pref{line:else-triggered}-\ref{line:loss-vector-obs} are executed):
\begin{enumerate}
    \item[(1)] For every $(\empobs, \pi)$ from in \pref{line:for-every}, at least one such $(\bar{\empobs}, \bar{a})$ pair is identified by \pref{line:bad-obs-stochastic}.
    \item[(2)] Every $(\bar{\empobs}, \bar{a})$ pair identified by \pref{line:bad-obs-stochastic} satisfies
    \begin{align*}
    \abs*{\En_{\empobs' \sim \wh{P}(\cdot \mid \bar{\empobs}, \bar{a})} V^{\pi}(\empobs') - \En_{\empobs' \sim \projm{\estmdpobsspace{h(\bar{\empobs})+1}}(\optlatp) } V^\pi(\empobs') } \ge \frac{\tauref}{16H}.
    \end{align*}
    \item[(3)] For every $(\bar{\empobs}, \bar{a})$ identified by \pref{line:bad-obs-stochastic}, the corresponding loss vector $\ell_\mathrm{loss}$ from \pref{line:loss-vector-obs} satisfies
    \begin{align*}
        \tri*{\wh{P}(\cdot \mid \bar{\empobs}, \bar{a}) - \projm{\estmdpobsspace{h(\bar{\empobs})+1}}(\optlatp(\cdot \mid \bar{\empobs}, \bar{a})), \ell_\mathrm{loss} } \ge \frac{\tauref}{16H}.
    \end{align*}
\end{enumerate}
\end{lemma}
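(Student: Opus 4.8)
The plan is to establish the three parts in the order stated, using the error‑decomposition philosophy of \pref{lem:certificate-obs} together with the accuracy guarantees bundled into $\eventemulator$ (in particular \pref{lem:sampling-states} and \pref{lem:sampling-rewards}) and $\eventref$. Throughout I will use that, under $\eventref$, every Monte Carlo quantity computed in \pref{line:mc1} and \pref{line:mc2} of \pref{alg:stochastic-refit-v2} is within $\eps$ of its target: $\qestarg{\bar{\empobs},\bar{a}}{\pi}$ of $Q^\pi(\bar{\empobs},\bar{a})$, $\qestarg{\empobs',\pi(\empobs')}{\pi}$ of $V^\pi(\empobs')$, and $\vestarg{\empobs}{\pi}$ of $V^\pi(\empobs)$; and that under $\eventemulator$, $\abs{\wh{R}(\bar{\empobs},\bar{a})-R(\bar{\empobs},\bar{a})}\le\eps/H$. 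I will also repeatedly invoke the elementary "projection is exact up to masking'' identity: since $M$ is a Block MDP, $V^\pi(x)$ depends on $x$ only through $\optdec(x)$, and combined with \pref{lem:sampling-states} (every state of $\epsPRS{}$ has a representative in the emulator) this gives
\[
\Big|\En_{x'\sim P(\cdot\mid\bar{\empobs},\bar{a})}V^\pi(x') - \En_{\empobs'\sim\projm{\estmdpobsspace{h(\bar{\empobs})+1}}(\optlatp(\cdot\mid\bar{\empobs},\bar{a}))}V^\pi(\empobs')\Big|\le\eps,
\]
because the two expectations differ only by the mass $\optlatp(\cdot\mid\bar{\empobs},\bar{a})$ places on latents outside $\epsPRS{}$, which is at most $S\cdot(\eps/S)=\eps$ by the definition of $\epsPRS{}$.

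For Part (1) I argue by contradiction. Suppose $\abs{\Delta(\bar{\empobs},\bar{a})}<\tauref/(8H)$ for every $(\bar{\empobs},\bar{a})\in\estmdpobsspace{h:H}\times\actionsp$ with $h=h(\empobs)$. Rewriting the definition of $\Delta$ gives $\wh{V}^\pi(\bar{\empobs})-\qestarg{\bar{\empobs},\pi(\bar{\empobs})}{\pi}=\Delta(\bar{\empobs},\pi(\bar{\empobs}))+\En_{\empobs'\sim\estp(\cdot\mid\bar{\empobs},\pi(\bar{\empobs}))}[\wh{V}^\pi(\empobs')-\qestarg{\empobs',\pi(\empobs')}{\pi}]$; unrolling this recursion down to layer $H$ (where it bottoms out at $\abs{\wh{V}^\pi(\empobs_H)-\qestarg{\empobs_H,\pi(\empobs_H)}{\pi}}=\abs{\Delta(\empobs_H,\pi(\empobs_H))}\le\eps/H+\eps$) yields $\abs{\wh{V}^\pi(\empobs)-\qestarg{\empobs,\pi(\empobs)}{\pi}}\le\tauref/8+O(\eps)$, hence $\abs{\wh{V}^\pi(\empobs)-V^\pi(\empobs)}\le\tauref/8+O(\eps)$ by $\eventref$ — exactly the analogue of \pref{lem:certificate-obs} with $\Delta$ playing the role of the exact Bellman residual. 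But $(\empobs,\pi)\in\violations$ means $\abs{\vestarg{\empobs}{\pi}-\wh{V}^\pi(\empobs)}\ge\tauref$, so $\abs{\wh{V}^\pi(\empobs)-V^\pi(\empobs)}\ge\tauref-\eps$ under $\eventref$; since $\tauref=80H\eps$ these are incompatible, so some $(\bar{\empobs},\bar{a})$ must be identified by \pref{line:bad-obs-stochastic}.

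For Part (2), fix an identified pair $(\bar{\empobs},\bar{a})$, so $\abs{\Delta(\bar{\empobs},\bar{a})}\ge\tauref/(8H)$. Substitute $\qestarg{\empobs',\pi(\empobs')}{\pi}=V^\pi(\empobs')\pm\eps$ into the middle term of $\Delta$, $\qestarg{\bar{\empobs},\bar{a}}{\pi}=Q^\pi(\bar{\empobs},\bar{a})\pm\eps=R(\bar{\empobs},\bar{a})+\En_{x'\sim P(\cdot\mid\bar{\empobs},\bar{a})}V^\pi(x')\pm\eps$ into the last term, $\wh{R}(\bar{\empobs},\bar{a})=R(\bar{\empobs},\bar{a})\pm\eps/H$, and finally the projection identity above. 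This shows $\Delta(\bar{\empobs},\bar{a})=\En_{\empobs'\sim\estp(\cdot\mid\bar{\empobs},\bar{a})}V^\pi(\empobs')-\En_{\empobs'\sim\projm{}(\optlatp(\cdot\mid\bar{\empobs},\bar{a}))}V^\pi(\empobs')\pm O(\eps)$, so the target quantity is at least $\tauref/(8H)-O(\eps)\ge\tauref/(16H)$ for $\tauref=80H\eps$. For Part (3), expand $\tri{\estp(\cdot\mid\bar{\empobs},\bar{a})-\projm{}(\optlatp(\cdot\mid\bar{\empobs},\bar{a})),\ell_\mathrm{loss}}=\sign(\Delta(\bar{\empobs},\bar{a}))\big(\En_{\empobs'\sim\estp}\qestarg{\empobs',\pi(\empobs')}{\pi}-\En_{\empobs'\sim\projm{}(\optlatp)}\qestarg{\empobs',\pi(\empobs')}{\pi}\big)$; the first expectation equals $\Delta(\bar{\empobs},\bar{a})-\wh{R}(\bar{\empobs},\bar{a})+\qestarg{\bar{\empobs},\bar{a}}{\pi}$ by the definition of $\Delta$, while the second equals $\qestarg{\bar{\empobs},\bar{a}}{\pi}-\wh{R}(\bar{\empobs},\bar{a})\pm O(\eps)$ by the same chain of substitutions as in Part (2) together with $\En_{s'\sim\optlatp}V^\pi(s')=Q^\pi(\bar{\empobs},\bar{a})-R(\bar{\empobs},\bar{a})$. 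Hence the difference is $\Delta(\bar{\empobs},\bar{a})\pm O(\eps)$ and the inner product is $\abs{\Delta(\bar{\empobs},\bar{a})}\pm O(\eps)\ge\tauref/(8H)-O(\eps)\ge\tauref/(16H)$.

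The main obstacle is pure bookkeeping: one must keep the accumulated $O(\eps)$ slack in each of the three approximation chains explicitly bounded (it should come out to at most about $4\eps$ in Parts (2)–(3)) so that, with the algorithm's choice $\tauref=80H\eps$, the thresholds $\tauref/(8H)$ and $\tauref/(16H)$ indeed survive; and one must state the "projection is exact up to masking'' identity carefully, which is precisely where latent‑measurability of $V^\pi$ in a Block MDP and the coverage guarantee \pref{lem:sampling-states} both enter.
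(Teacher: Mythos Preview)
Your overall strategy is essentially the paper's: Part~(1) is the certificate-to-residual argument of \pref{lem:certificate-obs} (you reorganize it to recurse directly on $\Delta$ rather than first on the exact Bellman residual, which is fine), and Parts~(2)--(3) follow the same chain of substitutions using $\eventref$, \pref{lem:sampling-rewards}, and the projection comparison.

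There is, however, a genuine error in the justification of your ``projection is exact up to masking'' identity. You assert that ``since $M$ is a Block MDP, $V^\pi(x)$ depends on $x$ only through $\optdec(x)$.'' This is false for general $\pi\in\Pi$: the value $V^\pi(x')=Q^\pi(x',\pi(x'))$ depends on $x'$ through both $\optdec(x')$ \emph{and} $\pi(x')$, and the test policies in $\violations$ are open-loop only at the top layer, not at layer $h(\bar\empobs)+1$. What \emph{is} latent-measurable is $Q^\pi(x',a)$ for a fixed $a$, so
\[
\En_{\empobs'\sim\emission(s)}V^\pi(\empobs')=\sum_{a\in\cA}\brk*{\pi\push\emission(s)}(a)\,Q^\pi(s,a),
\]
and the comparison with $\En_{\empobs'\sim\unif(\{\empobs\in\estmdpobsspace{h+1}:\optdec(\empobs)=s\})}V^\pi(\empobs')$ requires \pref{lem:pushforward-concentration} (also part of $\eventemulator$), not merely \pref{lem:sampling-states}. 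This is exactly how the paper handles it, obtaining a bound of $2\eps$ rather than your $\eps$ (the extra $\eps$ comes from the pushforward approximation, on top of the $\eps$ mass outside $\epsPRS{}$). With the correct $2\eps$ the accumulated slack in Parts~(2)--(3) is $5\eps$ rather than ``about $4\eps$,'' and since $\tauref/(8H)-5\eps=5\eps=\tauref/(16H)$ the thresholds survive exactly. So your bookkeeping must be done with the right lemma and the right constant; invoking latent-measurability of $V^\pi$ is not a valid shortcut here.
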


\begin{proof}
To prove Part (1) we use \pref{lem:certificate-obs}, which shows that there exists at least one such $(\bar{\empobs}, \bar{a})$ such that
\begin{align*}
    \abs*{\En_{\empobs' \sim \wh{P}(\cdot \mid  \bar{\empobs}, \bar{a})} V^{\pi}(\empobs') - \En_{\empobs' \sim P(\cdot \mid  \bar{\empobs}, \bar{a})} V^{\pi}(\empobs')} \ge \frac{\tauref}{2H}. \numberthis\label{eq:lower-bound}
\end{align*}
Therefore we know that for such $(\bar{\empobs}, \bar{a})$:
\begin{align*}
    &\abs*{\En_{\empobs' \sim \wh{P}(\cdot \mid  \bar{\empobs}, \bar{a})} V^{\pi}(\empobs') - \En_{\empobs' \sim P(\cdot \mid  \bar{\empobs}, \bar{a})} V^{\pi}(\empobs')} \\
    &\le \abs*{\En_{\empobs' \sim \wh{P}(\cdot \mid  \bar{\empobs}, \bar{a})} \vestarg{\empobs'}{\pi} + \wh{R}(\bar{\empobs}, \bar{a}) -\qestarg{ \bar{\empobs}, \bar{a} }{\pi}} + 3\eps &\text{(\pref{lem:refit-monte-carlo-accuracy} and \pref{lem:sampling-rewards})}\\
    &= \abs*{ \Delta(\bar{\empobs}, \bar{a}) } + 3\eps \\
    &\Longrightarrow  \quad \abs*{ \Delta(\bar{\empobs}, \bar{a}) } \ge \frac{\tauref}{2H} - 3\eps \ge \frac{\tauref}{8H}, &\text{(Using Eq.~\eqref{eq:lower-bound})}
\end{align*}
so therefore this $(\bar{\empobs}, \bar{a})$ is identified by \pref{line:bad-obs-stochastic} of \stochrefit{}.

Now we prove Part (2). Fix any $(\bar{\empobs}, \bar{a})$ pair identified by \pref{line:bad-obs-stochastic} of \stochrefit{}. Let $h=h(\bar{\empobs})$ denote the layer that a given $\bar{\empobs}$ is found. First we observe that
\begin{align*}
    \hspace{2em}&\hspace{-2em} \En_{\empobs' \sim P(\cdot \mid  \bar{\empobs}, \bar{a})} V^{\pi}(\empobs') - \En_{\empobs' \sim \projm{\estmdpobsspace{h+1}}(\optlatp) } V^\pi(\empobs') = \En_{\empobs' \sim \emission \circ \optlatp } V^\pi(\empobs') - \En_{\empobs' \sim \projm{\estmdpobsspace{h+1}}(\optlatp) } V^\pi(\empobs') \\
    &= \En_{s \sim \optlatp} \brk*{\En_{\empobs' \sim \emission(s) } \brk*{ V^\pi(\empobs') } - \ind{s \in \epsPRS{}} \En_{\empobs' \sim \mathrm{Unif}(\crl{\empobs \in \estmdpobsspace{h+1}: \optdec(\empobs) = s})} \brk*{V^\pi(\empobs')} }\\
    &\le \eps + \En_{s \sim \optlatp} \brk*{\ind{s \in \epsPRS{}} \prn*{ \En_{\empobs' \sim \emission(s) } \brk*{ V^\pi(\empobs') } -  \En_{\empobs' \sim \mathrm{Unif}(\crl{\empobs\in \estmdpobsspace{h+1}: \optdec(\empobs) = s})} \brk*{V^\pi(\empobs')} } } \\
    &\le 2\eps,
\end{align*}
where the last inequality uses \pref{lem:pushforward-concentration}. The other side of the inequality can be similarly shown, so
\begin{align*}
    \abs*{ \En_{\empobs' \sim P(\cdot \mid  \bar{\empobs}, \bar{a})} V^{\pi}(\empobs') - \En_{\empobs' \sim \projm{\estmdpobsspace{h+1}}(\optlatp) } V^\pi(\empobs') }
    \le 2\eps. \numberthis \label{eq:relating-q-to-proj}
\end{align*}
We can compute that
\begin{align*}
    \frac{\tauref}{8H} &\le \abs{ \Delta(\bar{\empobs}, \bar{a}) } \\
    &=  \abs*{\En_{\empobs' \sim \wh{P}(\cdot \mid  \bar{\empobs}, \bar{a})} \vestarg{\empobs'}{\pi} + \wh{R}(\bar{\empobs}, \bar{a}) -\qestarg{ \bar{\empobs}, \bar{a} }{\pi}} \\
    &\le \abs*{\En_{\empobs' \sim \wh{P}(\cdot  \mid  \bar{\empobs},\bar{a})} V^\pi(\empobs') + \wh{R}(\bar{\empobs}, \bar{a}) - Q^\pi(\bar{\empobs}, \bar{a})} + 2\eps &\text{(\pref{lem:refit-monte-carlo-accuracy})} \\
    &\le \abs*{\En_{\empobs' \sim \wh{P}(\cdot  \mid  \bar{\empobs},\bar{a})} V^\pi(\empobs') - \En_{\empobs' \sim P(\cdot \mid  \bar{\empobs}, \bar{a})} V^{\pi}(\empobs') } + 3\eps &\text{(\pref{lem:sampling-rewards})} \\
    &\le \abs*{ \En_{\empobs' \sim \wh{P}(\cdot  \mid  \bar{\empobs},\bar{a})} V^\pi(\empobs') -\En_{\empobs' \sim \projm{\estmdpobsspace{h+1}}(\optlatp) } V^\pi(\empobs') } + 5\eps. &\text{(Eq.~\eqref{eq:relating-q-to-proj})}
\end{align*}
Rearranging we see that
\begin{align*}
    \abs*{ \En_{\empobs' \sim \wh{P}(\cdot  \mid  \bar{\empobs},\bar{a})} V^\pi(\empobs') -\En_{\empobs' \sim \projm{\estmdpobsspace{h+1}}(\optlatp) } V^\pi(\empobs') } \ge \frac{\tauref}{8H} - 5\eps \ge \frac{\tauref}{16H},
\end{align*}
and this proves Part (2).

For Part (3), suppose that $\Delta(\bar{\empobs}, \bar{a}) \ge \tauref/8H$ (the case where $\Delta(\bar{\empobs}, \bar{a}) \le -\tauref/8H$ can be tackled similarly). Then we have $\ell_\mathrm{loss} \coloneqq \qestarg{\cdot, \pi(\cdot)}{\pi} \in [0,1]^{\estmdpobsspace{h+1}}$. We can compute that
\begin{align*}
\frac{\tauref}{8H} &\le \En_{\empobs' \sim \wh{P}(\cdot \mid  \bar{\empobs}, \bar{a})} \qestarg{x', \pi(x')}{\pi} + \wh{R}(\bar{\empobs}, \bar{a}) -\qestarg{ \bar{\empobs}, \bar{a} }{\pi} \\
&= \tri*{ \wh{P}(\cdot  \mid  \bar{\empobs},\bar{a}), \ell_\mathrm{loss}} + \wh{R}(\bar{\empobs}, \bar{a}) -\qestarg{ \bar{\empobs}, \bar{a} }{\pi}  \\
&\le \eps + \tri*{ \wh{P}(\cdot  \mid  \bar{\empobs},\bar{a}), \ell_\mathrm{loss}} + \wh{R}(\bar{\empobs}, \bar{a}) - Q^\pi(\bar{\empobs}, \bar{a}) &\text{(\pref{lem:refit-monte-carlo-accuracy})}\\
&\le 4\eps + \tri*{ \wh{P}(\cdot  \mid  \bar{\empobs},\bar{a}), \ell_\mathrm{loss}} - \En_{\empobs' \sim \projm{\estmdpobsspace{h+1}}(\optlatp) } V^\pi(\empobs') &\text{(\pref{lem:sampling-rewards} and Eq.~\eqref{eq:relating-q-to-proj})}\\
&\le 5\eps + \tri*{ \wh{P}(\cdot  \mid  \bar{\empobs},\bar{a}) -  \projm{\estmdpobsspace{h+1}}(\optlatp(\cdot  \mid  \bar{\empobs},\bar{a})), \ell_\mathrm{loss}} &\text{(\pref{lem:refit-monte-carlo-accuracy})}
\end{align*}
Rearranging we get $ \tri*{ \wh{P}(\cdot  \mid  \bar{\empobs},\bar{a}) -  \projm{\estmdpobsspace{h+1}}(\optlatp(\cdot  \mid  \bar{\empobs},\bar{a})), \ell_\mathrm{loss}}  \ge \tauref/(16H)$, thus proving part (3).
\end{proof}

\begin{lemma}[Bound on Number of Refits]\label{lem:bound-on-number-refits}
Assume that $\eventemulator, \eventref$ hold, and that every time \pref{alg:stochastic-refit-v2} is called, the confidence sets $\cP$ satisfy 
\begin{align*}
    \forall~(x,a) \in \estmdpobsspace{}\times \cA: \quad \projm{\estmdpobsspace{h(x)+1}}(\optlatp(\cdot  \mid  x, a)) \in \cP(x, a).
\end{align*}
Then across all calls to \pref{alg:stochastic-refit-v2}, \pref{line:loss-vector-obs} is executed at most $(\nreset AH/ \eps^2) \cdot \log \nreset$ times.
\end{lemma}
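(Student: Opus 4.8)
The plan is to use the regret of the online-mirror-descent (OMD) updates in \pref{line:loss-vector-obs} of \stochrefit{} as a potential function, tracked separately for each transition $(x,a)\in\estmdpobsspace{}\times\cA$. Fix such a pair, write $h=h(x)$ and $u\coloneqq\projm{\estmdpobsspace{h+1}}(\optlatp(\cdot\mid x,a))$, and let $T_{x,a}$ be the total number of OMD updates ever made to $\estp(\cdot\mid x,a)$. By hypothesis $u\in\cP(x,a)$ at every call to \stochrefit{}, so $u$ is a legal competitor for the OMD sequence over the convex set $\cP(x,a)$ with negative-entropy mirror map and step size $\eps$ (the regularization coefficient in \pref{line:loss-vector-obs} is $1/\eps$). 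The update in \pref{line:loss-vector-obs} is exactly the proximal OMD step with loss vector $\ell_\mathrm{loss}\in[0,1]^{\estmdpobsspace{h+1}}$; since negative entropy is $1$-strongly convex w.r.t. $\nrm{\cdot}_1$ on the simplex and $\nrm{\ell_\mathrm{loss}}_\infty\le 1$, the standard one-step OMD inequality gives, for consecutive iterates $\estp_t,\estp_{t+1}$ of $\estp(\cdot\mid x,a)$,
\[
  \eps\cdot\tri*{\estp_t-u,\ \ell_\mathrm{loss}} \ \le\ D_{\mathsf{ne}}(u\,\Vert\,\estp_t)-D_{\mathsf{ne}}(u\,\Vert\,\estp_{t+1})+\tfrac12\eps^2 .
\]
Telescoping over all updates to $(x,a)$, and taking the very first iterate to be the information projection of the uniform distribution onto $\cP(x,a)$ (an allowed choice in \pref{line:est-transition-v2}, whence $D_{\mathsf{ne}}(u\,\Vert\,\estp_1)\le\log\abs{\estmdpobsspace{h+1}}\le\log\nreset$), yields $\eps\sum_t\tri{\estp_t-u,\ \ell_\mathrm{loss}}\le\log\nreset+\tfrac12\eps^2 T_{x,a}$.

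On the other hand, \pref{lem:refitting-correctness}(3) — whose hypotheses $\eventemulator,\eventref$ are exactly ours — shows that every update identified in \pref{line:bad-obs-stochastic} satisfies $\tri{\estp_t-u,\ \ell_\mathrm{loss}}\ge\tauref/(16H)=5\eps$ (using $\tauref=80H\eps$). Combining with the previous display, $5\eps^2 T_{x,a}\le\log\nreset+\tfrac12\eps^2 T_{x,a}$, hence $T_{x,a}\le\tfrac29\eps^{-2}\log\nreset\le\tfrac14\eps^{-2}\log\nreset$. Noting that within a single call of \stochrefit{} a fixed $(x,a)$ is updated at most once (because of the overwriting in \pref{line:loss-vector}/\pref{line:loss-vector-obs}), the total number of executions of \pref{line:loss-vector-obs} is precisely $\sum_{(x,a)}T_{x,a}$; since there are at most $\abs{\estmdpobsspace{}}\cdot A\le\nreset A H$ such pairs, this is at most $(\nreset AH/\eps^2)\log\nreset$, as claimed.

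The main obstacle to get right is the bookkeeping behind ``a single OMD trajectory per $(x,a)$'': $\estp(\cdot\mid x,a)$ is re-initialized in \pref{line:est-transition-v2} each time \stochdecoder{} is re-run on $(x,a)$, so a priori the updates do not form one OMD run. The fix is that $\cP(x,a)$ only shrinks over the execution of \stochalg{} (each new \stochdecoder{} call intersects the previous confidence set), so we may take the re-initialization to be the Bregman (information) projection of the \emph{previous} value of $\estp(\cdot\mid x,a)$ onto the shrunken $\cP(x,a)$; since $u$ stays feasible throughout, the generalized Pythagorean inequality for Bregman divergences gives $D_{\mathsf{ne}}(u\,\Vert\,\cdot)$ does not increase under this projection, so these re-initialization steps only \emph{decrease} the potential and the telescoping above goes through verbatim across all calls to \stochrefit{}. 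Everything else is a direct application of the fixed-step OMD regret bound, which is (near-)optimally tuned exactly in the regime $T_{x,a}\asymp\eps^{-2}\log\nreset$; the only other things to check are that \pref{lem:refitting-correctness}(3) uses precisely the competitor $u=\projm{\estmdpobsspace{h+1}}(\optlatp(\cdot\mid x,a))$ (it does) and that replacing the ``arbitrary'' choice in \pref{line:est-transition-v2} by the stated projections does not affect correctness of \stochalg{} (the correctness lemmas, e.g.\ \pref{lem:main-induction}(2), hold for \emph{every} element of $\cP$).
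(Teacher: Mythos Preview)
Your proposal is correct and follows the same OMD-regret-as-potential approach as the paper's proof: per pair $(x,a)$, lower-bound each update's instantaneous regret against $u=\projm{\estmdpobsspace{h+1}}(\optlatp(\cdot\mid x,a))$ by $\tauref/(16H)=5\eps$ via \pref{lem:refitting-correctness}(3), upper-bound the total regret by $\eps^{-1}\log\nreset+\tfrac{\eps}{2}T$, and sum over the $\nreset AH$ pairs. You are in fact more careful than the paper about the re-initialization of $\estp(\cdot\mid x,a)$ when \stochdecoder{} is re-run on a previously updated pair: the paper's proof simply treats all OMD updates to a fixed $(x,a)$ as one trajectory with initial Bregman divergence at most $\log\nreset$, and your proposed fix---instantiating the ``arbitrary'' choice in \pref{line:est-transition-v2} as the Bregman projection of the current iterate onto the (monotonically shrinking) $\cP(x,a)$, so that $D_{\mathsf{ne}}(u\,\Vert\,\cdot)$ only decreases---is exactly what makes this informal step rigorous.
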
  
    
\begin{proof}
Fix $h \in [H]$ and a pair $(\empobs, a) \in \estmdpobsspace{h} \times \cA$. Suppose we execute \pref{line:loss-vector-obs} for $T_\mathrm{refit}$ times on $(\empobs, a)$. Denote the sequence of transition estimates as $\crl{\wh{P}^{(t)}(\cdot \mid x, a)}_{t \in [T_\mathrm{refit}]}$ and the sequence of loss vectors as $\crl{\ell_\mathrm{loss}^{(t)}}_{t \in [T_\mathrm{refit}]}$.
    
    By Part (3) of \pref{lem:refitting-correctness}, for all times $t \in [T_\mathrm{refit}]$,
    \begin{align*}
        \tri*{\wh{P}^{(t)}(\cdot \mid  x, a) - \projm{\estmdpobsspace{h+1}}(\optlatp(\cdot \mid x,a)), \ell_\mathrm{loss}^{(t)} } \ge \frac{\tauref}{16H}. \numberthis \label{eq:refit-lb-obs}
    \end{align*}
    On the other hand, we have a bound on the total regret of OMD with step size $\eps$ \cite[see, e.g., Thm.~5.2 of][]{bubeck2011introduction}:
    \begin{align*}
        \hspace{2em}&\hspace{-2em} \sum_{t=1}^{T_\mathrm{refit}} \tri*{\wh{P}^{(t)}(\cdot \mid x,a) - \projm{\estmdpobsspace{h+1}}(\optlatp(\cdot \mid x,a)), \ell_\mathrm{loss}^{(t)} } \\
        &\le \frac{1}{\eps} D_\mathsf{ne} \prn*{ \projm{\estmdpobsspace{h+1}}(\optlatp(\cdot \mid x,a)) ~\Vert~ \wh{P}^{(1)}(\cdot \mid x,a) } + \frac{\eps}{2} \sum_{t=1}^{T_\mathrm{refit}} \nrm*{\ell_\mathrm{loss}^{(t)}}_\infty\\
        &\le \frac{\log \nreset}{\eps} + \frac{\eps T_\mathrm{refit}}{2}. \numberthis \label{eq:refit-ub-obs}
    \end{align*}
    Therefore, combining Eqs.~\eqref{eq:refit-lb-obs} and \eqref{eq:refit-ub-obs} along with the value $\tauref = 80H\eps$ we have the bound
    \begin{align*}
        T_\mathrm{refit}
        \le \frac{\log \nreset}{\eps^2}.
    \end{align*}
    Using the fact that there are $\nreset AH$ such $(\empobs, a)$ pairs proves the result.
\end{proof}

\subsection{Proof of \pref{thm:block-mdp-result}}

In the proof, we will assume that $\eventemulator$ holds, that $\eventdec_t$ holds for all times $t \in [T_\mathsf{D}]$, that $\eventref_t$ holds for all times $t \in [T_\mathsf{R}]$. By union bound, this holds with probability at least $1-(3 T_\mathsf{D} + T_\mathsf{R} + 3) \delta$. 

We will show that under the choice of parameters $\nreset$, $\ndec$, and $\nmc$ in the algorithm pseudocode, \stochalg{} returns a $\wt{O}(SAH^2 \eps)$-optimal policy, and that $T_\mathsf{D}, T_\mathsf{R} \le \poly(\cpush, S,A, H, \eps^{-1}, \log \abs{\Pi}, \log \delta^{-1})$. Therefore, rescaling $\eps$ and $\delta$ will imply the final result.

\paragraph{Proof by Induction.} Take $\Gamma_h \coloneqq K (H-h+1) \prn{\beta +SH}\eps$ for some suitably large constant $K > 0$. Furthermore set $\taudec = 81 H \eps$. We will show that the following properties holds at every layer $h \in [H]$:
\begin{enumerate}
    \item [(1)] \textit{Transition set includes ground truth:} $\forall~ (x,a) \in \estmdpobsspace{h} \times \cA$, $\projm{\estmdpobsspace{h}}(\optlatp(\cdot  \mid  x,a)) \in \cP(x,a)$.
    \item [(2)] \textit{Accurate value estimates:} $\forall~ (x,a) \in \estmdpobsspace{h} \times \cA, \pi \in \Pi_{h+1:H}$, $\abs{Q^\pi(x,a) - \wh{Q}^\pi(x,a)} \le \Gamma_h$.
    \item [(3)] \textit{Valid test policies:} $\Pitest_h$ are $\taudec$-valid for $\estmdp$ at layer $h$.
\end{enumerate}

To analyze \stochalg{} we will show that at the end of every while loop, these properties always hold for all layers $h > \ell_\mathsf{next}$.

\underline{Base Case.} For the first loop with $\ell = H$, property (1) holds because there are no transitions to be constructed at layer $H$. By \pref{lem:sampling-rewards}, property (2) holds after the initialization of the policy emulator in \pref{line:init-end}. Furthermore, in the first call to \stochrefit{}, the computed test policies are open loop, so again using \pref{lem:sampling-rewards}, we see that \pref{line:great-success-stochastic} is triggered. Therefore, properties (2) and (3) hold at the end of the while loop. The current layer index is set to $\ell \gets H-1$.

\underline{Inductive Step.} Suppose the current layer index is $\ell$, and that properties (1)--(3) hold for all $h > \ell$. By \pref{lem:main-induction}, for every $(x_\ell, a_\ell)$ that we call \stochdecoder{} on the updated confidence sets $\cP$ returned by  satisfy property (1), and the choice $\wh{P} \in \cP$ satisfies property (2). Now we do casework on the outcome of \stochrefit{} called at layer $\ell$.

\begin{itemize}
    \item \textbf{Case 1: Return in \pref{line:great-success-stochastic}.}  By construction, property (3) is satisfied for layer $\ell$. In this case, since \stochrefit{} made no updates to $\estlatentmdp$ or $\cP$, properties (1) and (2) continue to hold at layer $\ell$ onwards.
    \item \textbf{Case 2: Return in \pref{line:great-success-stochastic-else}.} Property (1) holds because \stochrefit{} does not modify $\cP$. Let $\ell_\mathsf{next}$ denote the layer at which we jump to. \stochrefit{} made no updates to $\estlatentmdp$ at layers $\ell_\mathsf{next} + 1$ onwards, and therefore the previously computed test policies $\Pitest_{\ell_\mathsf{next}+1:H}$ must still be valid, so therefore properties (2) and (3) continue to hold at layer $\ell_\mathsf{next}$ onwards.
\end{itemize}

Continuing the induction, once $\ell \gets 0$ is reached in \stochalg{}, the policy emulator $\estmdp$ satisfies the bound
\begin{align*}
    \max_{\pi \in \Pi}~\abs*{V^\pi(s_1) - \En_{x_1 \sim \unif(\estmdpobsspace{1}) }\brk{\wh{V}^\pi(x_1)}} \le \Gamma_1 \le \wt{O}(SAH^2 \eps).\numberthis\label{eq:final-bound}
\end{align*}

\paragraph{Bounding the Number of Calls to \stochdecoder{} and \stochrefit{}.} By \pref{lem:bound-on-number-refits}, the total number of executions of \pref{line:loss-vector-obs} in \pref{alg:stochastic-refit-v2} is at most $(\nreset AH/ \eps^2) \cdot \log \nreset$. In the worst case, every revisit to an already computed layer (i.e., jumping back to $\ell_\mathsf{next} \ge \ell$) requires us to restart \stochdecoder{} at layer $H$ and therefore decode at most $\nreset AH$ additional times, so therefore
\begin{align*}
    T_\mathsf{D} \le \frac{\nreset^2 A^2H^2}{\eps^2} \log \nreset.
\end{align*}
Similarly, every revisit in the worst case requires $H$ additional calls to \stochrefit{} so therefore
\begin{align*}
    T_\mathsf{R} \le \frac{\nreset A H^2}{\eps^2} \log \nreset.
\end{align*}
As claimed, both $T_\mathsf{D}$ and $T_\mathsf{R}$ are $\poly(\cpush, S,A, H, \eps^{-1}, \log \abs{\Pi}, \log \delta^{-1})$.

\paragraph{Final Sample Complexity Bound.} Now we compute the total number of samples.
\begin{itemize}
    \item \pref{alg:stochastic-bmdp-solver-v2} uses $\nreset \cdot AH$ samples to $\mu_h$ to form the state space of the policy emulator, and for each state-action pair we sample $\wt{O}(H^2\eps^{-2})$ times to estimate the reward.
    \item \pref{alg:stochastic-decoder-v2} is called $T_\mathsf{D} \le \wt{O}(\nreset^2 A^2 H^2 \eps^{-2})$  times, and each time we use $\ndec \cdot \nreset^2 \nmc$ rollouts.
    \item \pref{alg:stochastic-refit-v2} is called $T_\mathsf{R} \le \wt{O}(\nreset A H^2 \eps^{-2})$ times, and each time we use $2\nreset^2 \nmc$ to evaluate the test policies. Furthermore, by \pref{lem:bound-on-number-refits}, \pref{line:mc2} is triggered at most $\wt{O}(\nreset A H \eps^{-2})$ times, with every time requiring an additional $\nmc \cdot \nreset AH$ rollouts.
\end{itemize}

Therefore in total we use
\begin{align*}
     \hspace{2em}&\hspace{-2em} \nreset \frac{AH^3}{\eps^2} + \nreset^4 \ndec \nmc \frac{A^2H^2}{\eps^2} + \nreset^3 \nmc \frac{AH^2}{\eps^2} + \nreset^2 \nmc \frac{A^2H^2}{\eps^2} \\
     &= \frac{\cpush^4 S^6 A^{12}H^3 }{\eps^{18}} \cdot \mathrm{polylog} \prn*{\cpush, S, A, H, \abs{\Pi}, \eps^{-1}, \delta^{-1}}\quad \text{samples.}
\end{align*}
Afterwards, we can rescale $\eps \gets \eps/\wt{O}(SAH^2)$ so that the bound Eq.~\eqref{eq:final-bound} is at most $\eps$, and rescale $\delta \gets \delta / (3 T_\mathsf{D} + T_\mathsf{R} + 1)$ so that the guarantee occurs with probability at least $1-\delta$. The final sample complexity is
\begin{align*}
    \frac{\cpush^4 S^{24} A^{30} H^{39} }{\eps^{18}} \cdot \mathrm{polylog} \prn*{\cpush, S, A, H, \abs{\Pi}, \eps^{-1}, \delta^{-1}}\quad \text{samples.}
\end{align*}\qed

\end{document}